\documentclass[11pt]{article}

\usepackage[utf8]{inputenc}

\usepackage{mathrsfs}
\usepackage{amsmath,amssymb,amsthm,mathtools}
\usepackage{bm,dsfont}
\usepackage{natbib}

\usepackage{microtype}
\usepackage{graphicx}
\usepackage{wrapfig}
\usepackage{subcaption}

\usepackage{booktabs,array,multirow,threeparttable,longtable,makecell}
\usepackage{float}
\usepackage{capt-of}
\usepackage{enumitem}
\usepackage{nicefrac,pifont}
\usepackage{tikz}
\usetikzlibrary{arrows.meta}
\usepackage{eso-pic}
\usepackage{xcolor}
\usepackage{indentfirst}
\usepackage{setspace}
\usepackage[left=1in,right=1in,top=1in,bottom=1in]{geometry}

\usepackage[
  colorlinks=true,
  linkcolor=red,
  anchorcolor=blue,
  citecolor=blue,
  urlcolor=blue
]{hyperref}
\usepackage[capitalize,noabbrev,nameinlink]{cleveref}
\crefname{equation}{Eq.}{Eqs.}
\Crefname{equation}{Eq.}{Eqs.}

\crefname{appendix}{Appendix}{Appendices}
\Crefname{appendix}{Appendix}{Appendices}

\usepackage{paper_macros}

\newcommand{\disableaddcontentsline}{%
  \let\savedaddcontentsline\addcontentsline
  \renewcommand{\addcontentsline}[3]{}%
}
\newcommand{\enableaddcontentsline}{%
  \let\addcontentsline\savedaddcontentsline
}

\title{\huge From Spectra to Joint Schedules in LLM Pre-training:\\
\(3+3(+2)\) Scaling-Law Regimes}

\author{
Yichen Wang\thanks{Department of Computer Sciences, University of Wisconsin--Madison, USA. E-mails:
\href{mailto:yichen.wang@wisc.edu}{\texttt{yichen.wang@wisc.edu}} and
\href{mailto:yudongchen@cs.wisc.edu}{\texttt{yudongchen@cs.wisc.edu}}}
~~~
Fanghui Liu\thanks{School of Mathematical Sciences, Institute of Natural Sciences and MOE-LSC, Shanghai Jiao Tong University, China. E-mail:
\href{mailto:fanghui.liu@sjtu.edu.cn}{\texttt{fanghui.liu@sjtu.edu.cn}} (Corresponding author)}
~~~
Yudong Chen\footnotemark[1]
}
\date{}

\hypersetup{
  pdftitle={From Spectra to Joint Schedules in LLM Pre-training: 3+3(+2) Scaling-Law Regimes},
  pdfauthor={Yichen Wang, Fanghui Liu, Yudong Chen}
}

\begin{document}
\disableaddcontentsline
\maketitle

\begin{abstract}
Power-law learning curves are often treated as fixed properties of a model and
its data, although learning-rate and batch-size schedules can change the
observed loss.  We study this dependence in noisy online SGD with linear
random features. Conditional on the representation, an exact Volterra
equation separates two response components: a forcing term that propagates
unresolved target error and a memory kernel that propagates stochastic-error
injections. We prove that either component follows a power law if and only if
its cumulative weighted spectral mass has the corresponding low-spectrum
scaling; individual eigenvalues and target coefficients need not obey
coordinatewise power laws. Under a joint schedule, intrinsic time
\(T_t=\sum_{s<t}\eta_s\) controls optimization progress, while
\(r_t=B_t/\eta_t\) controls noise injection. Their interaction yields sharp
conditions under which a schedule preserves, changes, or destroys the clean
power law, together with a memory ceiling on noise reduction. The power-law
random-feature model realizes this mechanism in \(3+3(+2)\) propagation
regimes with phase-dependent compute rates. Controlled nanoGPT experiments show that (1) learning-rate and batch-size schedules with matched \(B/\eta\) paths are
nearly equivalent in intrinsic time, (2) a forcing-memory surrogate accurately
predicts loss across schedules, and (3) its fitted exponents across real-world datasets identify the regime of LLMs in \(3+3(+2)\) map.
The source code for reproducing our experiments is available on \url{https://github.com/yichenblue/spectra-to-schedules-in-pretraining}.

\end{abstract}

\section{Introduction}

Power-law learning curves in scaling laws are widely used to forecast training progress and allocate model size, data, and compute in large language models (LLMs) pre-training
\citep{hestness2017deep,kaplan2020scaling,hoffmann2022training}.
Such extrapolation is powerful as the fitted law can transfer beyond
the scale and training configuration on which it was measured. 

Yet it is often unclear what is expected to transfer: \emph{the existence of a power law}, \emph{its exponent}, or \emph{the mechanism that generates it}. These notions need not coincide.  An apparent power law may terminate when a finite range of slow modes is exhausted; and even with the representation and target fixed, changing the learning-rate or batch-size schedule (from LLM pre-training) can change both optimization progress and the accumulation of stochastic error.  A scaling exponent is therefore not, in general, a fixed property of a model and its data. Recent schedule-aware scaling laws make explicit that the learning curve can depend on the training schedule rather than being a fixed attribute of the model--data pair \citep{tissue2024scaling,luo2025multi,qiu2025scaling}. Functional scaling laws (FSL) \citep{li2025functional} formalize this dependence as an intrinsic-time functional under learning-rate and batch-size schedules, separately.

Rather than treating {\bf universality} as a single yes-no question, we specify
two distinct problems studied in this paper. The first is one of \emph{origin}: when do the underlying learning dynamics produce power-law response components? The second is one of \emph{transfer}: once such components exist, when does a joint learning-rate/batch-size schedule preserve, change, or destroy their law in the observed loss? 

It appears technically impossible to quantitatively study the above two problems in a trained language model. The representation learning, target alignment,
stochasticity, and scheduling evolve together, so their separate roles are
difficult to identify from the loss curve alone. We therefore study a \emph{proxy} model, linear random features trained by noisy online stochastic gradient descent (SGD)
\citep{rahimi2007random,liu2021random,mei2022generalization} to answer the above two questions. 

This model is deliberately simple. The value of the proxy lies not in its literal similarity to an LLM, but in whether the response coordinates it reveals remain predictive outside the proxy. We therefore test two consequences in controlled 300M Nano-GPT pre-training \citep{Karpathy2022}: i) whether the loss can be fully characterized by the theory-derived coordinates, i.e., the intrinsic time and the ratio between batch-size and learning rate; 2) whether a theory-derived forcing-memory surrogate fitted on one schedule predicts held-out schedules in LLM pre-training.
\Cref{fig:llm_response_summary} verifies these tests.

\begin{figure}[!t]
  \setlength{\abovecaptionskip}{2pt}
  \centering

  \begin{subfigure}[b]{0.28\linewidth}
    \centering
    \includegraphics[width=\linewidth]{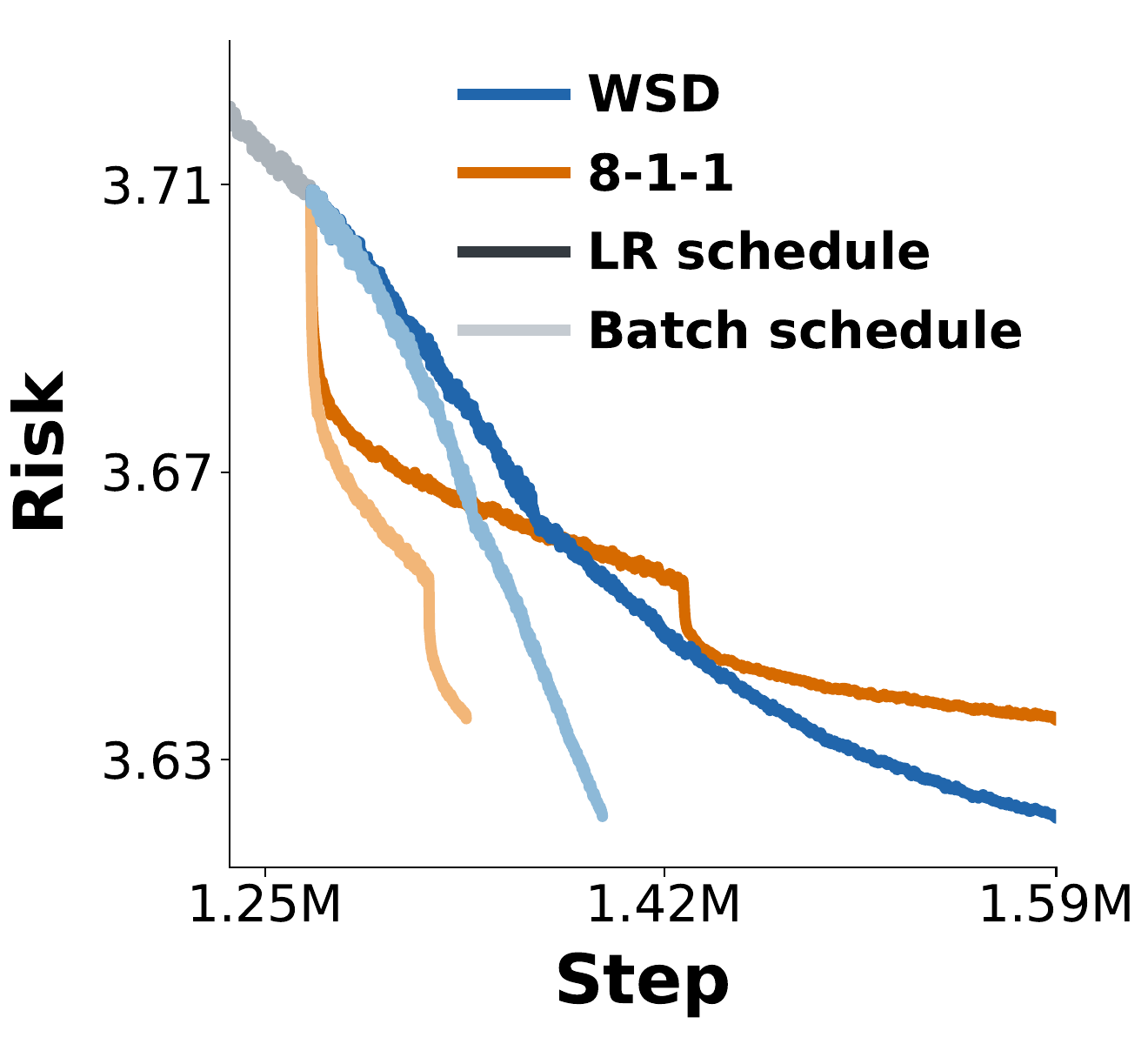}
\caption{Loss vs. step}    \label{fig:llm_response_summary_step}
  \end{subfigure}%
  \hspace*{0.004\linewidth}%
  \begin{subfigure}[b]{0.36\linewidth}
    \centering
    \includegraphics[width=\linewidth]{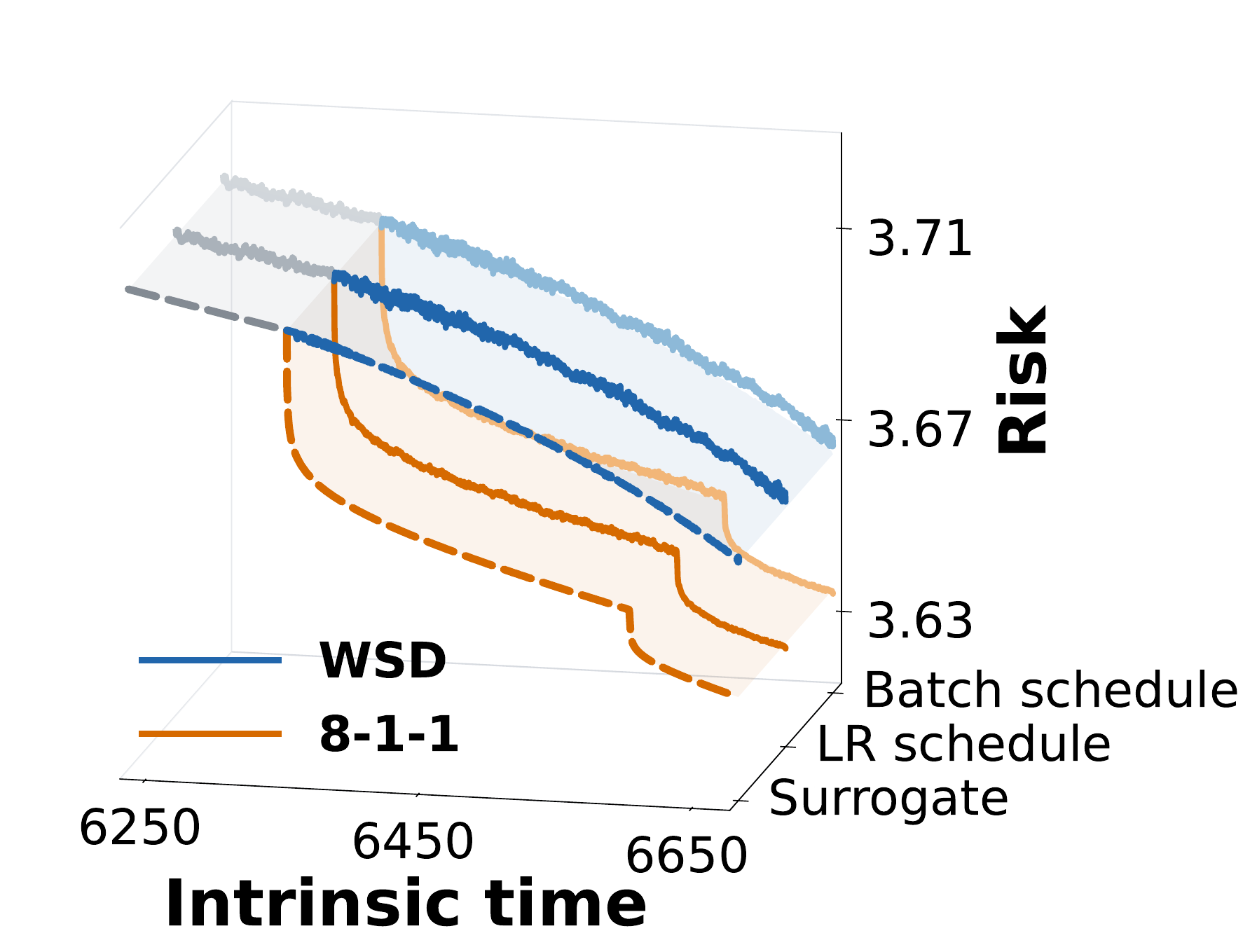}
    \caption{Loss vs. intrinsic time}
    \label{fig:llm_response_summary_intrinsic_time}
  \end{subfigure}%
  \hspace*{-0.03\linewidth}%
  \begin{subfigure}[b]{0.35\linewidth}
    \centering
    \raisebox{5pt}{%
      \resizebox{0.90\linewidth}{!}{%
        \begin{tikzpicture}[
          x=2.0cm,
          y=1.25cm,
          font=\scriptsize,
          line cap=round,
          line join=round,
          >=Latex,
          subregime label/.style={
            font=\scriptsize\bfseries,
            inner xsep=1.4pt,
            inner ysep=.55pt
          }
        ]
          % Finite-bulk regimes.
          \fill[purple!27] (0,-.66) rectangle (1,-.18);
          \fill[purple!11] (1,-.66) rectangle (2.05,-.18);
          \draw[black,line width=.82pt]
            (0,-.66) rectangle (2.05,-.18);

          % Long-memory and integrable-memory regimes.
          \fill[orange!34] (0,0)--(0,1)--(1,1)--cycle;
          \fill[orange!22] (0,0)--(1,0)--(1,1)--cycle;
          \fill[orange!10] (1,0) rectangle (2.05,1);
          \fill[teal!31] (0,1) rectangle (1,2);
          \fill[teal!19] (1,1)--(1,2)--(2,2)--cycle;
          \fill[teal!8]
            (1,1)--(2.05,1)--(2.05,2)--(2,2)--cycle;

          % Empirical LLM response regime.
          \fill[red!65,opacity=.48]
            (0,.88) rectangle (1.0,1.12);

          % Regime boundaries.
          \draw[black,line width=1.18pt] (0,1)--(2.05,1);
          \draw[black,line width=.72pt,densely dashed]
            (1,-.66)--(1,-.18);
          \draw[black,line width=.72pt,densely dashed]
            (1,0)--(1,2);
          \draw[black,line width=.72pt,densely dashed]
            (0,0)--(2,2);

          % Axes.
          \draw[->,line width=.82pt] (0,0)--(2.12,0)
            node[right,font=\scriptsize\bfseries]
            {$\bm{q}_{\mathcal F}$};
          \draw[->,line width=.82pt] (0,0)--(0,2.12)
            node[above=.5pt,font=\scriptsize\bfseries]
            {$\bm{q}_{\mathcal K}$};

          \draw[line width=.7pt] (.98,-.035)--(1.02,.035)
            node[
              above=2pt,
              xshift=3pt,
              font=\scriptsize\bfseries
            ] {$\bm 1$};

          \draw[line width=.7pt] (-.025,1)--(.025,1);
          \node[
            left=3pt,
            font=\scriptsize\bfseries,
            inner sep=.4pt
          ] at (0,1) {$\bm 1$};

          \draw[line width=.7pt] (-.025,2)--(.025,2)
            node[left=3pt,font=\scriptsize\bfseries]
            {$\bm 2$};

          % Regime-family labels.
          \node[
            font=\scriptsize\bfseries,
            text=purple!72!black
          ] at (2.22,-.42) {$\bm{\mathrm{FB}}$};

          \node[
            font=\scriptsize\bfseries,
            text=orange!80!black
          ] at (2.22,.50) {$\bm{\mathrm{LM}}$};

          \node[
            font=\scriptsize\bfseries,
            text=teal!72!black
          ] at (2.22,1.50) {$\bm{\mathrm{IM}}$};

          % Subregime labels.
          \node[subregime label]
            at (.50,-.42) {$\bm{\mathrm{FB}}_1$};
          \node[subregime label]
            at (1.52,-.42) {$\bm{\mathrm{FB}}_2$};

          \node[subregime label]
            at (.30,.68) {$\bm{\mathrm{LM}}_1$};
          \node[subregime label]
            at (.72,.34) {$\bm{\mathrm{LM}}_2$};
          \node[subregime label]
            at (1.52,.50) {$\bm{\mathrm{LM}}_3$};

          \node[subregime label]
            at (.50,1.50) {$\bm{\mathrm{IM}}_1$};
          \node[subregime label]
            at (1.29,1.66) {$\bm{\mathrm{IM}}_2$};
          \node[subregime label]
            at (1.74,1.36) {$\bm{\mathrm{IM}}_3$};

          \node[
            font=\scriptsize\bfseries,
            text=red!70!black
          ] at (.50,1.17) {LLM regime};
        \end{tikzpicture}%
      }%
    }

    \par\vspace{3pt}
    \caption{\(3+3(+2)\) map}
    \label{fig:llm_response_summary_phase}
  \end{subfigure}%
  \caption{
    \textbf{Proxy-derived response coordinates in 300M nanoGPT training.}
    (a) We control the same ratio path \(r_t=B_t/\eta_t\) by changing batch-size and learning rate schedules equivalently, e.g., the WSD \citep{hu2024minicpm} and 8-1-1 \citep{bi2024deepseek}, leading to different trajectories against optimizer step.
(b) After reparameterization by intrinsic time, each trajectory coincides. A surrogate fitted only to the 8-1-1 trajectory also predicts the held-out WSD trajectory without refitting.
(c) The fitted exponent satisfies
\(q_{\mathcal K}\approx1\) and \(q_{\mathcal F} < 1\), placing the LLM response near the \(\mathrm{IM}_1/\mathrm{LM}_1\) boundary of our \(3+3(+2)\) scaling law regimes.}
  \label{fig:llm_response_summary}
\end{figure}

\subsection{Contributions and findings}

The prediction risk of our proxy model can be precisely characterized by Volterra recursion \citep{paquette20244+}. Its forcing term \(F\) propagates unresolved target error
without stochastic feedback, whereas its memory kernel \(K\) describes how
long the effect of one stochastic-error injection survives.  The resulting
mechanism has two successive stages:
\[
\underbrace{\text{weighted low-spectrum mass}}_{\text{spectral origin}}
\quad\longrightarrow\quad
\underbrace{(F,K)}_{\text{forcing and memory}}
\quad\xrightarrow[\;{\color{red}T_t:=\sum_{s<t}\eta_s},\;{\color{blue}r_t:=B_t/\eta_t}\;]{\text{joint schedule}}\quad
\underbrace{R_{\sigma,t}}_{\text{observed loss}} \,,
\]
where $(T_t,r_t)$ are the two natural schedule coordinates.  Intrinsic time \(T_t\) measures optimization progress, motivated by \cite{li2025functional}; while \(r_t:=B_t/\eta_t\) under the batch size $B_t$ and the learning rate $\eta_t$ controls stochastic-error injection per unit progress.  The weighted low spectrum determines whether \(F\) and \(K\) follow power laws; the joint schedule then accumulates the memory response along the training trajectory.  Consequently, the observed curve of the loss is not determined by componentwise power laws separately, but governed by the competition between clean target-error decay and the accumulated stochastic response. Our contributions are summarized as below.

\begin{itemize}[leftmargin=2em,itemsep=.25em,topsep=.3em]

\item \textbf{Spectral origin of component power laws.}
We prove separate if-and-only-if criteria for power laws in learnable forcing
and one-injection memory.  The decisive quantities are their cumulative
weighted spectral masses near zero, rather than coordinatewise power laws for
individual eigenvalues or target coefficients.  Irregular spectra or targets
can therefore produce canonical power-law responses, while power-law data are neither sufficient nor necessary.

\item \textbf{Concrete \(3+3(+2)\) scaling law regimes.}
We consider specific power-law eigenvalues and target coefficients, formulating a 3+3(+2)-regime map: 3 long-memory (LM), 3 integrable-memory (IM), and 2 finite-bulk propagation regimes. We derive an exact conditional formula for the
finite-width noisy--clean gap in the finite-bulk regimes, asymptotic formulas
for the full deterministic-equivalent risk across the phase map, and
phase-dependent optimal rates under data and feature-compute budgets.

\item \textbf{Transformation under joint learning-rate-batch-size schedules.}
We show that the noisy-clean gap is obtained by accumulating
schedule-dependent stochastic injections through the memory kernel.  This
gives sharp boundaries between schedules that preserve, change, or destroy
the clean power law, together with a memory ceiling beyond which reducing
only late-stage noise cannot improve the decay rate.  Below this ceiling, the
gap identifies the asymptotic behavior of the ratio
\(r_t=B_t/\eta_t\), though not learning rate and batch size separately.

\item \textbf{Proxy for LLM pre-training prediction.}
Controlled nanoGPT (124M and 300M) experiments provide three tests of the theory:
the loss is determined by the intrinsic time and the ratio path \(r_t:=B_t/\eta_t\), which provides a possible way to tune learning-rate or batch-size schedules under a fixed ratio $r_t$. Besides, following the FSL fit-then-transfer protocol of
\citet{li2025functional}, a forcing--memory surrogate fitted on one schedule (e.g., 8-1-1) predicts held-out schedules without refitting (e.g., WSD). Across OpenWebText and FineWeb and across model scales, the fitted LLM response remains near \(q_{\mathcal K}\approx1\) and
\(q_{\mathcal F}<1\).

\end{itemize}

\subsection{Related work}

\noindent{\bf Power-law existence and exponent universality.}
Recent work argues for shared exponents or derives a universal \(1/3\) time
law
\citep{liu2026neuraluniversality,liu2026onethird}.
Other studies show that scaling laws can change with the observation
scale, terminate when a finite spectral range is exhausted, or vary across
spectral phases
\citep{xiao2024rethinking,maloney2022solvable,
bahri2024explaining,lin2024scaling,paquette20244+}.
These results motivate separating the existence of a component power law from
the invariance of the exponent observed in the total loss.

\noindent{\bf Spectral models and random-feature dynamics.}
Solvable spectral theories commonly prescribe power laws for eigenvalues or
target coefficients and then derive the resulting learning curve
\citep{maloney2022solvable,bordelon2024dynamical,
lin2024scaling,paquette20244+}.
Our componentwise criteria instead characterize power laws through cumulative
weighted spectral mass and therefore do not require coordinatewise
power-law sequences.  Random-feature models provide a setting in which
spectral, target, finite-width, and stochastic effects can be separated
\citep{mei2022generalization,xiao2022precise,bach2024high,
ba2022high,moniri2023theory}, while high-dimensional SGD limits provide the
dynamical foundation for the exact recursion used here
\citep{paquette2021sgd,paquette2025homogenization,atanasov2025two}.
The closest PLRF result is the constant-schedule, clean-label \(4+3\)
compute-optimal classification of \citet{paquette20244+}.  We introduce label
noise and joint schedules; our \(3+3(+2)\) map classifies forcing--memory
propagation rather than compute-optimal bottlenecks.

\noindent{\bf Schedules and transferable response laws.}
Schedule optimization has been studied from complementary starting points.
\Citet{bordelon2026theory} assume spectral and source power laws, whereas
functional-scaling-law analyses begin from a specified response law
\citep{li2025functional,li2026optimal,wang2026fast}.
In particular, \citet{li2025functional} fit one learning-rate schedule at
fixed batch size and predict other schedules without refitting.  
We follow this setting but extend the response analysis to jointly varying learning rate and batch size. Our LLM experiments correspondingly test both changes in the ratio path \(B_t/\eta_t\) and different learning-rate-batch-size
factorizations of the same path \citep{meterez2026quadratic,karkada2026predicting,hu2022universality}.

\noindent{\bf From a proxy mechanism to LLM response.}
A tractable proxy becomes useful beyond its literal assumptions only when the
coordinates it reveals make falsifiable predictions outside the proxy.
Previous results on local quadratic models \citep{meterez2026quadratic}, kernel-based functional scaling laws \citep{li2025functional}, and Gaussian universality \citep{karkada2026predicting,hu2022universality} demonstrate their potential in prediction of large complex models. Such mechanism-first viewpoint motivates us to test it directly in controlled plain-SGD LLM training, supported by the following two reasons. 

First, whether a proxy model or an LLM is trained with plain SGD, the dynamics can always be cast as signal--noise learning related to forcing and memory terms. Second, paired learning-rate and batch-size factorizations
with matched \(B_t/\eta_t\) paths follow different trajectories in optimizer
step but nearly coincide at equal intrinsic time.

\section{Problem setup and exact loss dynamics}
\label{iclr:sec:model_dynamics}

The model follows the linear random-feature setting of
\citet{paquette20244+}, trained by online SGD, but differs in two ways motivated
by LLM-pretraining practice
\citep{brown2020language,hoffmann2022training,li2025functional,wang2026fast}: it
introduces label noise as a tractable source of persistent stochastic error; it
allows the learning rate and batch size to vary jointly over time.

\subsection{Linear random features trained by online SGD}

We consider Gaussian data $\bm x=\bm\Lambda^{1/2}\bm z \in \mathbb{R}^d$, with $\bm z\sim\mathcal N(\bm 0,\bm I_d)$ and $\bm\Lambda=\operatorname{diag}(\lambda_1,\ldots,\lambda_d)
\succ\bm 0$. It can be naturally extended to kernel
feature maps under the hypercontractivity condition of
\citet{mei2022generalizationkernel}. The data generation process is
\begin{equation*}
y=f_\star(\bm x)+\varepsilon,
\qquad
f_\star(\bm x):=\langle\bm x,\bm\theta^\star\rangle,
\qquad
\varepsilon\perp\!\!\!\perp\bm x,
\qquad
\mathbb E[\varepsilon]=0,
\qquad
\mathbb E[\varepsilon^2]=\sigma^2\,,
\end{equation*}
where \(\bm\theta^\star\in\mathbb R^d\) is the target and the label noise $\varepsilon$ is with zero-mean and bounded variance $\sigma^2$. We use a linear random feature model of width \(m\), $f_{\bm a}(\bm x)
:=\langle\bm W^{\!\top}\bm x,\bm a\rangle$, where
\(\bm W\in\mathbb{R}^{d\times m}\) has i.i.d.\
\(\mathcal N(0,1/m)\) entries and is fixed; only
\(\bm a\in\mathbb{R}^m\) is trained by SGD.
Starting from \(\bm a_0=\bm0\), iteration \(t\) draws a fresh mini-batch of
size \(B_t\) and updates with learning rate \(\eta_t\):
\begin{equation}
\label{eq:joint_sgd_schedule_coordinates}
\bm a_{t+1}=\bm a_t-
\frac{\eta_t}{B_t}\sum_{i=1}^{B_t}
\bm W^{\!\top}\bm x_t^i
\bigl(f_{\bm a_t}(\bm x_t^i)-y_t^i\bigr),
\qquad T_t:=\sum_{s<t}\eta_s,
\qquad r_t:=\frac{B_t}{\eta_t}.
\end{equation}
Batches and label noises are independent across iterations.  Here \(T_t\) is
the optimization clock, while \(1/r_t\) is the variance injected
per unit intrinsic time.
Conditional on \(\bm W\), the excess risk is defined as
\begin{equation*}
R_{\sigma,t}
:=\mathbb E\!\left[
\bigl(f_{\bm a_t}(\bm x)-f_\star(\bm x)\bigr)^2
\,\middle|\,\bm W\right]
=\mathbb E\!\left[
\|\bm\Lambda^{1/2}(\bm W\bm a_t-\bm\theta^\star)\|_2^2
\,\middle|\,\bm W\right].
\end{equation*}
The expectation averages all training mini-batches and label noises and an
independent test covariate, conditioning on \(\bm W\). All definitions and exact recursions in this section hold for finite
\(d\) and \(m\).  The asymptotic results later take
\(m\to\infty\) and \(d=d_m\to\infty\) jointly, with \(d/m\) bounded
below by a constant strictly larger than one; see more details in \cref{sec:noisy_label_scaling_laws}.

\subsection{Forcing, memory, and the exact Volterra equation}
\label{sec:volterra-fixed}

To identify the spectral quantities first, temporarily fix the schedule: \(\eta_t\equiv\eta\), \(B_t\equiv B\), and
\(T:=\eta t\).  The frozen representation induces
\(\widehat{\bm H}:=\bm\Lambda^{1/2}\bm W\bm W^{\!\top}\bm\Lambda^{1/2}\) with its eigenpairs \((\widehat\lambda_j,\widehat{\bm u}_j)\).
The initial energy aligned with the empirical mode $j$ is
\(\lvert\langle\widehat{\bm u}_j,
\bm\Lambda^{1/2}\bm\theta^\star\rangle\rvert^2\). For Gaussian covariates, the corresponding second-moment mode has one-step survival factor
\begin{equation}\label{eq:qeta}
q_\eta(\lambda)
:=1-2\eta\lambda+
\left(1+1/B\right)\eta^2\lambda^2.
\end{equation}
Its power \(q_\eta(\lambda)^t\) is the spectral filter that determines how
much squared error in mode \(\lambda\) survives \(t\) steps: on the
long-time scale \(\lambda=O(T^{-1})\),
\(q_\eta(\lambda)^t\approx e^{-2\eta\lambda t}=e^{-2T\lambda}\). Summing the exact mode recursions gives the exact risk recursion
\begin{equation}
R_{\sigma,t}
=F_{\bm W}(t)+
\sum_{s=0}^{t-1}K_{\bm W}(t-1-s)(R_{\sigma,s}+\sigma^2)\,.
\label{iclr:eq:exact_volterra}
\end{equation}
Here the forcing term $F_{\bm W}$ and the memory kernel $K_{\bm W}$ are defined as
\begin{align*}
F_{\bm W}(t)
&:=\sum_j|\langle\widehat{\bm u}_j,
\bm\Lambda^{1/2}\bm\theta^\star\rangle|^2
q_\eta(\widehat\lambda_j)^t,
\qquad
K_{\bm W}(t)
:=\frac{\eta^2}{B}
\sum_j\widehat\lambda_j^2q_\eta(\widehat\lambda_j)^t \,,
\end{align*}
where \(F_{\bm W}\) is the initial target error propagated without
stochastic feedback.  The memory kernel \(K_{\bm W}\) is the impulse response
of one SGD-variance injection. When \(\sigma=0\),
\cref{iclr:eq:exact_volterra} reduces to the clean-label recursion of
\citet{paquette20244+}.
The zero-mode forcing is the finite-width approximation floor
\(R_{\mathrm{app}}:=\sum_{\widehat\lambda_j=0}
|\langle\widehat{\bm u}_j,
\bm\Lambda^{1/2}\bm\theta^\star\rangle|^2\); write
\(F_{\bm W,>0}\) for the learnable part of the forcing.

\section{When do frozen spectral dynamics produce power laws?}
\label{iclr:sec:foundations}

Every fixed positive mode in \cref{eq:qeta} decays exponentially.  A power law
can nevertheless emerge after infinitely many modes with different time scales
are added.  This section makes that statement precise for \(F\) and \(K\); it
does not yet claim a power law for the total loss.

\subsection{From the moving spectral cutoff to a spectral criterion}

On the long-time scale \(T=\eta t\), modes with
\(\lambda\gg T^{-1}\) have relaxed, while modes with
\(\lambda\ll T^{-1}\) are nearly untouched: $q_\eta(\lambda)^t\approx e^{-2T\lambda}
\approx \bm 1\{\lambda\lesssim T^{-1}\}$. Hence the remaining error is governed by cumulative spectral weight below the
moving cutoff \(T^{-1}\), not by the decay of a single mode
\citep{paquette20244+,li2025functional}. Accordingly, for the eigenpairs of \(\widehat{\bm H}\), define the target-weighted and
memory-weighted empirical spectral measures
\[
\nu_{\bm W}^{\mathcal F}
:=
\sum_j
\left|\left\langle
\widehat{\bm u}_j,\bm\Lambda^{1/2}\bm\theta^\star
\right\rangle\right|^2
\delta_{\widehat\lambda_j},
\qquad
\nu_{\bm W}^{\mathcal K}
:=
\sum_j\widehat\lambda_j^2\delta_{\widehat\lambda_j}.
\]
Here \(\nu_{\bm W}^{\mathcal F}((0,x])\) is the target energy in learnable
directions slower than \(x^{-1}\), whereas
\(\nu_{\bm W}^{\mathcal K}((0,x])\) measures the strength of one variance
injection stored in those directions.  The zero atom
\(\nu_{\bm W}^{\mathcal F}(\{0\})=R_{\mathrm{app}}\) is the approximation
floor and is excluded from \(F_{\bm W,>0}\).

These measures give the exact spectral representations and their long-time
Laplace approximations:
\[
\begin{aligned}
F_{\bm W,>0}(t)
&=\int_{(0,\infty)}q_\eta(\lambda)^t\,
\nu_{\bm W}^{\mathcal F}(\mathrm d\lambda)
\approx
\int_{(0,\infty)}e^{-2T\lambda}\,
\nu_{\bm W}^{\mathcal F}(\mathrm d\lambda)
 =2T\int_0^\infty e^{-2Tx}
\nu_{\bm W}^{\mathcal F}((0,x])\,\mathrm dx,\\
\frac{B}{\eta^2}K_{\bm W}(t)
&=\int_{(0,\infty)}q_\eta(\lambda)^t\,
\nu_{\bm W}^{\mathcal K}(\mathrm d\lambda)
\approx
\int_{(0,\infty)}e^{-2T\lambda}\,
\nu_{\bm W}^{\mathcal K}(\mathrm d\lambda)
 =2T\int_0^\infty e^{-2Tx}
\nu_{\bm W}^{\mathcal K}((0,x])\,\mathrm dx.
\end{aligned}
\]

Accordingly, the following theorem characterizes exactly when either component
follows a temporal power law.  The fully quantified statement is given in
\cref{thm:noisy_43_effective_spectral_criterion}.

\begin{theorem}[Componentwise spectral criterion (informal)]
\label{iclr:thm:spectral_iff}
Fix a stable constant schedule, condition on \(\bm W\) at each width, and take
a joint width--time limit with \(m,t\to\infty\) and
\(T=\eta t\to\infty\). Under appropriate uniform spectral-window conditions, with
\(x=T^{-1}\downarrow0\), for
\(q_{\mathcal F},q_{\mathcal K}>0\),
\[
\begin{aligned}
\nu_{\bm W}^{\mathcal F}((0,x])\propto x^{q_{\mathcal F}}
\Longleftrightarrow
F_{\bm W,>0}(t)\propto T^{-q_{\mathcal F}},%\\[0.35em]
\qquad \nu_{\bm W}^{\mathcal K}((0,x])\propto x^{q_{\mathcal K}}
\Longleftrightarrow
\frac{B}{\eta^2}K_{\bm W}(t)\propto T^{-q_{\mathcal K}}.
\end{aligned}
\]
\end{theorem}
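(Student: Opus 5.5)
The proof has two stages. First, replace the exact filter $q_\eta(\lambda)^t$ by the Laplace filter $e^{-2T\lambda}$ with an error negligible relative to $T^{-q}$. Second, prove an Abelian/Tauberian equivalence between the Laplace transform of the measure and its distribution function near zero. Both components $\nu_{\bm W}^{\mathcal F}$ and $\nu_{\bm W}^{\mathcal K}$ are positive measures on $(0,\infty)$, so one argument handles both. Write $\nu$ for either, $G(x):=\nu((0,x])$, and $\mathcal L\nu(T):=\int e^{-2T\lambda}\nu(\mathrm d\lambda)=2T\int_0^\infty e^{-2Tx}G(x)\,\mathrm dx$. I read "$\propto$" as asymptotic equivalence up to a constant, $G(x)\sim c\,x^{q}$ and $\mathcal L\nu(T)\sim c\,\Gamma(q+1)(2T)^{-q}$, uniformly in the width along the joint limit. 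This uniformity is what the "uniform spectral-window conditions" are meant to supply.

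\textbf{Step 1: filter comparison.} For a stable schedule, $0<\eta\lambda\le c_0<1$ on the support (after normalizing by $\|\widehat{\bm H}\|$), and $q_\eta(\lambda)\ge0$.
\begin{itemize}
\item On the window $\lambda\le T^{-1+\epsilon}$, the expansion $\log q_\eta(\lambda)=-2\eta\lambda+O(\eta^2\lambda^2)$ gives $q_\eta(\lambda)^t=e^{-2T\lambda}(1+O(\eta T\lambda^2))$. The relative error is $O(\eta T^{-1+2\epsilon})\to0$.
\item Off the window, both filters are at most $e^{-cT^{\epsilon}}$ times the total mass $\nu((0,\infty))$, which is bounded uniformly in $m$ by the uniform conditions ($\|\bm\Lambda^{1/2}\bm\theta^\star\|^2$, or $\operatorname{tr}\widehat{\bm H}^2$ times a normalization). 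This is superpolynomially small compared to $T^{-q}$.
\end{itemize}
Hence $F_{\bm W,>0}(t)=\mathcal L\nu^{\mathcal F}(T)(1+o(1))+o(T^{-q})$, and the same holds for $K$. The power-law question therefore reduces entirely to $\mathcal L\nu$.

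\textbf{Step 2: Abelian direction ($\Rightarrow$).} Assume $G(x)=c\,x^{q}(1+o(1))$ as $x\downarrow0$, uniformly in $m$ on $[x_m,\delta]$, where $x_m\to0$ is a lower window edge with $x_m T\to0$. Substitute $u=2Tx$ in $2T\int e^{-2Tx}G(x)\,\mathrm dx$, which gives $\int_0^\infty e^{-u}G(u/2T)\,\mathrm du$. Split the range as follows.
\begin{itemize}
\item On $u\in[2Tx_m,2T\delta]$, dominated convergence applies to $(2T)^{q}G(u/2T)\to c\,u^{q}$ and yields $c\,\Gamma(q+1)$.
\item For $u<2Tx_m$, bound by $(2T)^q G(x_m)$. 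The window hypothesis makes this $o(1)$; it requires $G(x_m)=o(T^{-q})$, which I would build into the uniform condition.
\item For $u>2T\delta$, the contribution is at most $e^{-T\delta}\nu((0,\infty))$.
\end{itemize}

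\textbf{Step 3: Tauberian direction ($\Leftarrow$), the main obstacle.} Here one must recover $G$ from its Laplace transform. I would use Karamata's Tauberian theorem, adapted to a triangular array of measures indexed by $m$. Set $\mu_T(A):=(2T)^{q}\,\nu(A/(2T))$. The hypothesis says $\int e^{-u}\mu_T(\mathrm du)\to c\,\Gamma(q+1)$, and the same holds with $e^{-su}$ for every $s>0$ after rescaling $T\mapsto sT$. Assuming the Laplace asymptotic holds uniformly over $T$ in the window, the transforms of $\mu_T$ converge pointwise on $(0,\infty)$ to that of $c\,q\,u^{q-1}\,\mathrm du$. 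Take $T$ along the joint limit.
\begin{itemize}
\item The measures $\mu_T$ are locally uniformly bounded, since $\mu_T((0,a])\le e^{a}\int e^{-u}\mu_T$.
\item By Helly selection and the uniqueness of Laplace transforms, $\mu_T$ converges vaguely to $c\,q\,u^{q-1}\,\mathrm du$.
\item The limit has no atom at $1$, so $\mu_T((0,1])\to c$, that is, $G(1/2T)\sim c(2T)^{-q}$.
\end{itemize}
The delicate points are twofold. First, the limit measure's lack of mass at $0$ must be controlled, and this again uses $G(x_m)=o(T^{-q})$. Second, the statement "for every $x\downarrow0$" requires pairing each $x$ with $T=1/(2x)$ inside the admissible joint-limit region. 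Hence the uniform spectral-window condition must be two-sided in $T$, and I would state it that way.

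Combining Step 1 with Steps 2 and 3 gives both equivalences. The result for $K$ follows with the prefactor $\eta^2/B$ removed.
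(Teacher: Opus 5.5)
Your proposal is correct and follows essentially the paper's route: you represent each component as a Laplace-type transform of its positive weighted spectral measure and prove a width-uniform Karamata equivalence, with the Abelian direction by dominated convergence plus endpoint splitting and the Tauberian direction by Helly selection plus uniqueness of Laplace--Stieltjes transforms (the paper's uniform Tauberian theorem). The differences are minor. The paper absorbs the discrete filter exactly as $q_\eta(\lambda)^t=e^{-2Th_\eta(\lambda)}$ with $h_\eta=-\log q_\eta/(2\eta)$, and it controls escaping mass with one global Potter envelope instead of your window comparison, bounded total mass, and lower edge $x_m$. The lower edge $x_m$ and your worry about mass at $0$ are in fact unnecessary, since monotonicity gives $\mu_T((0,a])\to c\,a^q$ and Laplace uniqueness already excludes an atom at $0$ in the Helly limit.
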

\Cref{iclr:thm:spectral_iff} shows that the decisive quantities are the
cumulative weighted spectral masses near zero, rather than pointwise
power-law formulas for the eigenvalues or target coefficients.
It can be empirically validated by \cref{iclr:fig:spectral_iff_rapid_target}.
In
\cref{iclr:fig:spectral_iff_rapid_target_forcing,iclr:fig:spectral_iff_rapid_target_memory},
each temporal response is compared with its corresponding cumulative spectral
mass. The forcing response and forcing mass both
decay faster than every inverse power of \(T\), whereas the memory response and
memory mass both scale as \(T^{-3/4}\), illustrating the two componentwise
equivalences in \cref{iclr:thm:spectral_iff}.  Separately,
\cref{iclr:fig:spectral_iff_rapid_target_loss} shows that the minibatch-SGD
loss initially follows the exponentially decaying forcing response and then
crosses over to the \(T^{-3/4}\) power-law memory tail.

\begin{figure}[t]
\setlength{\abovecaptionskip}{2pt}
\captionsetup[subfigure]{skip=0pt}
\centering
\begin{subfigure}[t]{0.3\linewidth}
  \centering
  \includegraphics[width=\linewidth]{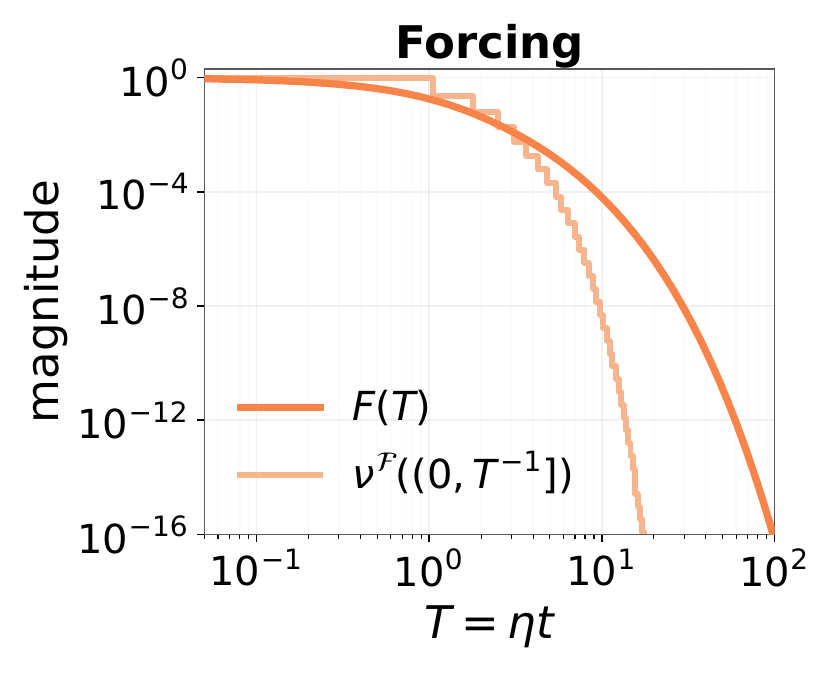}
  \caption{Forcing}
  \label{iclr:fig:spectral_iff_rapid_target_forcing}
\end{subfigure}\hfill
\begin{subfigure}[t]{0.3\linewidth}
  \centering
  \includegraphics[width=\linewidth]{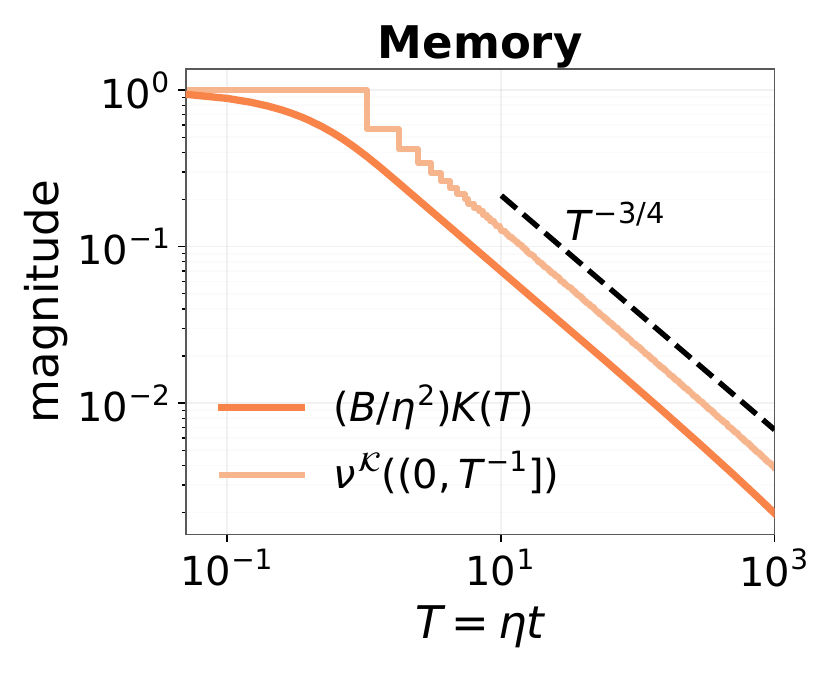}
  \caption{Memory}
  \label{iclr:fig:spectral_iff_rapid_target_memory}
\end{subfigure}\hfill
\begin{subfigure}[t]{0.3\linewidth}
  \centering
  \includegraphics[width=\linewidth]{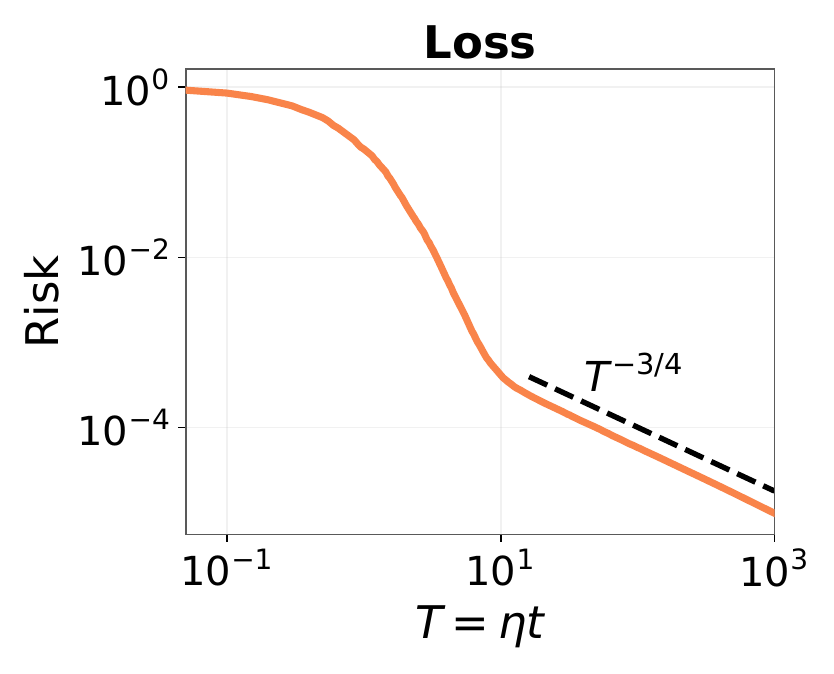}
  \caption{Loss}
  \label{iclr:fig:spectral_iff_rapid_target_loss}
\end{subfigure}
\caption{Finite-width rapid-target construction with
\(\lambda_j=j^{-0.8}\), \(\lvert\theta_j^\star\rvert^2=e^{-j}\), and
\(d=524{,}288\). (a) Normalized forcing and cumulative forcing mass
both decay faster than every inverse power of \(T\).  (b) Normalized
one-injection memory and cumulative memory mass both track \(T^{-3/4}\). (c) The minibatch-SGD loss curve averaged over 20 runs follows the
\(T^{-3/4}\) memory tail.}
\label{iclr:fig:spectral_iff_rapid_target}
\vspace*{-8pt}
\end{figure}

\subsection{When power laws appear---and when they do not}

Based on \Cref{iclr:thm:spectral_iff}, we provide several examples to see whether power laws appear or not. For instance, power-law data are neither sufficient nor necessary. See more details in \cref{app:condition_examples}.
\begin{itemize}
\item \textbf{Only power-law data are insufficient.}
    Zipf-like rank--frequency laws make power-law structure familiar in
    language data \citep{he2025pretrained,mikhaylovskiy2025zipf}. Such data-side scaling does not determine target alignment in the spectral
    dynamics studied here.  For \(\alpha>1/4\), take
    \(\lambda_j=j^{-2\alpha}\) and
    \(\lvert\theta_j^\star\rvert^2=e^{-j}\).
    The target places exponentially little energy in slow directions.  For the
    corresponding deterministic construction,
    \(-\log F_{\bm W,>0}(t)\asymp T^{1/(2\alpha+1)}\).
    The forcing therefore decays faster than every inverse power of \(T\),
    although the eigenspectrum is an exact power law.  The memory decay is
    unchanged because it does not depend on the target.
    \Cref{iclr:fig:spectral_iff_rapid_target} visualizes this deterministic
    construction.

    \item \textbf{Power-law data are not necessary either.}
    For example, the constructions
\(\lambda_j=e^{-j^\gamma}\),
\(\lvert\theta_j^\star\rvert\asymp j^{-\beta}\), \(0<\gamma<1\), and
\(\lambda_j=e^{-j}\),
\(\lambda_j\lvert\theta_j^\star\rvert^2\propto e^{-\tau j}\), both
produce power-law temporal orders on their corresponding scaling windows.
    
    \item \textbf{Irregular targets can give the canonical exponent.}
    For \(\lambda_j=j^{-2\alpha}\), with \(\alpha>1/4\) and
    \(2\alpha+2\beta>1\), write
    \(\lvert\theta_j^\star\rvert^2=j^{-2\beta}c_j\), where \(c_j\geq0\).
    If \(n^{-1}\sum_{j\leq n}c_j\to1\), then
    \(\sum_{\lambda_j\leq x}\lambda_j\lvert\theta_j^\star\rvert^2
    \propto x^{(2\alpha+2\beta-1)/(2\alpha)}\).
    Thus oscillatory, mass-compensated sparse, and frozen random-amplitude
    targets can retain the canonical forcing exponent.

    \item \textbf{A finite spectrum leads to exponential decay.}
It may display a long intermediate scaling window before entering this
asymptotic regime.
\end{itemize}

\noindent{\bf Non-identifiability: From forcing and memory to the observed loss.}
\Cref{iclr:thm:spectral_iff} is componentwise: its if-and-only-if conclusions
apply separately to \(F_{\bm W,>0}\) and \(K_{\bm W}\), not to the observed
loss.  \(F_{\bm W,>0}\) gives the learnable forcing, whereas \(K_{\bm W}\)
describes the survival of one stochastic injection.  The observed loss is
therefore not obtained by directly comparing these two raw components: the
memory contribution must first be accumulated over past injections and
propagated through feedback.
\section{PLRF propagation regimes: \texorpdfstring{\(3+3(+2)\)}{3+3(+2)} scaling laws}
\label{iclr:sec:schedules}

We now instantiate the general spectral criterion of
\cref{iclr:sec:foundations} in the canonical power-law random-feature (PLRF)
model of \citet{paquette20244+}. The resulting \(3+3(+2)\) phase map classifies
how learnable forcing and one-injection memory propagate, rather than assigning
eight universal exponents to the observed loss. The schedule must still
accumulate memory over past noise injections; \cref{sec:rv_joint_schedules}
carries out this step and determines the resulting learning curve.

Consider the following power-law setting in PLRF
\vspace*{-4pt}
\[
\lambda_j=j^{-2\alpha},
\qquad \theta_j^\star=j^{-\beta},
\qquad 2\alpha+2\beta>1 ~(\mbox{ensuring} \sum_j\lambda_j|\theta_j^\star|^2<\infty)\,,
\]
\vspace*{-4pt}
the moving cutoff
\(\lambda_{j_T}\asymp T^{-1}\) corresponds to
\(j_T\asymp T^{1/(2\alpha)}\).  Summing the unresolved tail gives
\[
F(T)\!\approx \!\! \sum_{j\gtrsim j_T}\!j^{-2(\alpha+\beta)}\!\!\propto\! T^{-q_{\mathcal F}}\!,\,\,\, {\color{red}q_{\mathcal F}\!:=\!\frac{2(\alpha\!+\!\beta)\!-\!1}{2\alpha}},\quad
\frac{B}{\eta^2}K(T)\!\approx\!\!\! \sum_{j\gtrsim j_T}\!j^{-4\alpha}\!\!\propto\! T^{-q_{\mathcal K}},\,\,\, {\color{blue}q_{\mathcal K}\!:=\!2\!-\!\frac{1}{2\alpha}}\,.
\]
This cutoff calculation gives the PLRF response exponents;
\cref{cor:noisy_43_direct_plrf_43_exponents} gives the corresponding rigorous
population-filter asymptotics.

\begin{wrapfigure}{r}{0.4\textwidth}
\vspace{-1.0\baselineskip}
\setlength{\abovecaptionskip}{2pt}
\centering
\resizebox{\linewidth}{!}{%
\begin{tikzpicture}[
  x=3.2cm,y=3.2cm,font=\scriptsize,
  line cap=round,line join=round,>=Latex,
  subregime label/.style={font=\scriptsize\bfseries,
    inner xsep=1.4pt,inner ysep=.55pt},
  subregime label tight/.style={subregime label,
    inner xsep=.55pt,inner ysep=.20pt}
]
  % Excluded non-finite-energy region.
  \fill[gray!22] (-.3,0)--(.5,0)--(-.3,.8)--cycle;

  % One hue family per propagation regime; shade separates subregimes.
  \fill[purple!27] (.5,0)--(.25,.25)--(.5,.25)--cycle;
  \fill[purple!11] (.5,0) rectangle (1.2,.25);
  \fill[orange!34] (.25,.25)--(0,.5)--(.5,.5)--cycle;
  \fill[orange!22] (.25,.25)--(.5,.5)--(.5,.25)--cycle;
  \fill[orange!10] (.5,.25) rectangle (1.2,.5);
  \fill[teal!31] (0,.5)--(-.3,.8)--(-.3,1.2)--(.5,1.2)--(.5,.5)--cycle;
  \fill[teal!19] (.5,.5)--(1.2,1.2)--(.5,1.2)--cycle;
  \fill[teal!8] (.5,.5)--(1.2,.5)--(1.2,1.2)--cycle;

  % Empirical LLM response regime, matching the band in Figure 1.
  \fill[red!65,opacity=.48] (0.0,.44) rectangle (0.5,.56);

  % Structural boundaries and the finite-energy boundary.
  \draw[black,line width=1.18pt] (.25,.25)--(1.2,.25);
  \draw[black,line width=1.18pt] (0,.5)--(1.2,.5);
  \draw[black,line width=.72pt,densely dashed] (.5,0)--(.5,1.2);
  \draw[black,line width=.72pt,densely dashed] (.25,.25)--(1.2,1.2);
  \draw[red!75!black,line width=1.05pt] (.5,0)--(-.3,.8);

  % Coordinate axes; the alpha=1 optimizer guide remains omitted.
  \draw[->,line width=.82pt] (-.33,0)--(1.27,0)
    node[right,font=\scriptsize\bfseries] {$\bm\beta$};
  \draw[->,line width=.82pt] (0,0)--(0,1.25)
    node[above,font=\scriptsize\bfseries] {$\bm\alpha$};
  % Alpha coordinates are aligned along the right edge of the regime map.
  \node[right=2pt,font=\scriptsize\bfseries] at (1.2,.25)
    {$\mathbf{1/4}$};
  \node[right=2pt,font=\scriptsize\bfseries] at (1.2,.5)
    {$\mathbf{1/2}$};
  \node[right=2pt,font=\scriptsize\bfseries] at (1.2,1)
    {$\mathbf 1$};
  \draw[line width=.7pt] (.5,-.015)--(.5,.015)
    node[below=2pt,font=\scriptsize\bfseries] {$\mathbf{1/2}$};

  % Region and boundary labels.
  \node[subregime label tight] at (.415,.173) {$\bm{\mathrm{FB}}_1$};
  \node[subregime label] at (.93,.125) {$\bm{\mathrm{FB}}_2$};
  \node[subregime label] at (.25,.405) {$\bm{\mathrm{LM}}_1$};
  \node[subregime label tight] at (.415,.33) {$\bm{\mathrm{LM}}_2$};
  \node[subregime label] at (.93,.375) {$\bm{\mathrm{LM}}_3$};
  \node[subregime label] at (.24,.82) {$\bm{\mathrm{IM}}_1$};
  \node[subregime label] at (.68,.9) {$\bm{\mathrm{IM}}_2$};
  \node[subregime label] at (1.04,.78) {$\bm{\mathrm{IM}}_3$};
  \node[font=\scriptsize\bfseries,text=red!70!black]
    at (.24,.59) {LLM regime};
\end{tikzpicture}%
}
\caption{\textbf{PLRF regimes in \((\alpha,\beta)\).}
Colors match the response-coordinate map in
\cref{fig:llm_response_summary}; the red band marks the fitted LLM regime near
\(\alpha=1/2\).}
\label{iclr:fig:plrf_propagation_regimes}
\vspace{-1\baselineskip}
\end{wrapfigure}

\noindent {\bf Why use \((q_{\mathcal F},q_{\mathcal K})\) instead of
\((\alpha,\beta)\):} For PLRF, \(q_{\mathcal F}\) and
\(q_{\mathcal K}\) are algebraic functions of the microscopic source and
capacity parameters \((\alpha,\beta)\), i.e., a change of coordinates. The
purpose of the response coordinates is not to create new regimes by
relabeling the PLRF phase diagram.  Rather, \(q_{\mathcal F}\) and
\(q_{\mathcal K}\) are respectively the decay exponents of forcing and
one-injection memory, and are the only microscopic information entering the
subsequent schedule-transfer law.  They remain well defined for irregular
spectra and targets for which no coordinatewise \((\alpha,\beta)\) power laws
exist, and models with the same response exponents have the same leading
schedule behavior.  We therefore use \((\alpha,\beta)\) to describe the PLRF
origin of a regime and \((q_{\mathcal F},q_{\mathcal K})\) to describe its
dynamical propagation.  The finite-bulk regimes are kept separate because
their memory remains explicitly coupled to width and cannot be represented by
a width-independent \(q_{\mathcal K}>0\), shown in \cref{iclr:fig:plrf_propagation_regimes}.

\textbf{Long memory (LM)} has \(0<q_{\mathcal K}<1\), so its survival kernel is
not integrable; \textbf{integrable memory (IM)} has \(q_{\mathcal K}>1\).
Within the LM and IM families, the subscripts record the position of
\(q_{\mathcal F}\) relative to \(q_{\mathcal K}\) and \(1\).

For \(0<\alpha<1/4\), the squared spectral mass
\(\sum_{j\leq m}\lambda_j^2\asymp m^{1-4\alpha}\) grows with width, so no
width-independent decaying exponent \(q_{\mathcal K}>0\) exists.  This is the
\textbf{finite-bulk (FB)} regime.
The right panel of \cref{fig:llm_response_summary} organizes the propagation
regimes in the response coordinates \((q_{\mathcal F},q_{\mathcal K})\) that
govern schedule behavior.  \Cref{iclr:fig:plrf_propagation_regimes} pulls the
same LM/IM partition back to the PLRF parameters \((\alpha,\beta)\), with
finite bulk shown separately because no width-independent
\(q_{\mathcal K}\) exists there, making direct comparison with the \(4+3\)
taxonomy possible.

Relative to the compute-optimal \(4+3\) taxonomy of
\citet{paquette20244+}, the propagation viewpoint makes two structural
changes: Phase Ib splits across the finite-bulk and forcing--memory
boundaries, while that taxonomy's IVa--IVb crossover
\(\alpha=1-1/\sqrt2\) lies inside \(\mathrm{LM}_3\), rather than on a
propagation boundary.  We use the propagation taxonomy throughout.

The fitted \(q_{\mathcal K}\approx1\) corresponds to 
\(\alpha\approx1/2\), where the \(4+3\) theory predicts the \textbf{near-square-root}
compute-optimal width scaling reported by Chinchilla
\citep{paquette20244+,hoffmann2022training}; see
\cref{app:end_to_end_lm_experiments} for empirical validation.

\noindent{\bf Phasewise forcing.} The phase map becomes concrete through the deterministic-equivalent forcing. In the proved open source range
\(\beta<1+2\alpha\), and on the window
\(1\ll T\leq C_0m^{2\alpha}\),
\vspace*{-4pt}
\begin{equation}
\mathcal F(T,m)\asymp
\begin{cases}
m^{-2\alpha q_{\mathcal F}}+T^{-q_{\mathcal F}},
&\mathrm{FB}_1,\mathrm{LM}_1,\mathrm{LM}_2,\mathrm{IM}_1,\\
m^{-2\alpha}+T^{-q_{\mathcal F}},
&\mathrm{FB}_2,\mathrm{LM}_3,\\
m^{-2\alpha}+T^{-q_{\mathcal F}}
+m^{-1}T^{-(q_{\mathcal K}-1)},
&\mathrm{IM}_2,\mathrm{IM}_3.
\end{cases}
\label{iclr:eq:phase_forcing_profile}
\end{equation}

Here \(\mathcal F(T,m)\) denotes the deterministic-equivalent approximation
to \(F_{\bm W}(t)\) at intrinsic time \(T=\eta t\). Every row contains a finite-width floor and a target-aligned source transient;
only \(\mathrm{IM}_2\) and \(\mathrm{IM}_3\) contain the additional
feature-distortion transient.  The largest term determines the forcing on the
stated width--time window.

\cref{iclr:eq:phase_forcing_profile} describes the forcing
component, not the observed total risk.  This is why the \(3+3(+2)\) map
differs from the \(4+3\) compute-optimal taxonomy of
\citet{paquette20244+}: it classifies how forcing and memory propagate before
resource optimization.  \Cref{sec:rv_joint_schedules} supplies the missing
step by accumulating one-injection memory along the schedule and comparing
the resulting noise response with clean learning.

\vspace*{-4pt}
\section{How joint schedules transform the observed scaling laws}
\label{sec:rv_joint_schedules}
\vspace*{-4pt}

For schedules, we are inspired by \citet{li2025functional} that expresses schedule dependence in intrinsic time through a forgetting-kernel
convolution via stochastic derivative questions.
In our setting, the exponent \(q_{\mathcal K}\) characterizes how the contribution of one stochastic-error injection decays with its age.  A training schedule creates a stream of such injections.  At intrinsic time \(u\), the ratio
\(r(u)=B(u)/\eta(u)\) sets how much noise is injected, while the memory
profile \(k(T-u)\) sets how much survives until time \(T\).  The observed
loss is determined by their accumulation and its competition with clean
learning.

\vspace*{-4pt}
\subsection{Schedule response and preserve--change--destroy classification}
\vspace*{-4pt}

Recall \(T_t=\sum_{s<t}\eta_s\), and interpolate
\(r(u)=r_s=B_s/\eta_s\) for \(T_s\leq u<T_{s+1}\).
For a constant schedule, define the reference survival profile
\(k(\eta t):=(B/\eta^2)K_{\bm W}(t)\).
Under a varying schedule, the constant-schedule age-only kernel
\(K_{\bm W}(t-s)\) becomes the two-time kernel \(K_{t,s}\), the exact weight
with which stochastic error injected at step \(s\) survives until step \(t\).
Subtracting the clean recursion from the noisy one gives the first relation
below.  For a fixed infinite-spectrum dynamics with \(\sigma^2>0\), we assume
the continuum approximation and uniformly subcritical feedback in the
remaining two:
\vspace*{-4pt}
\[
R_{\sigma,t}-R_{0,t}
\!=\!\sum_{s<t}\!K_{t,s}\bigl[\sigma^2+R_{\sigma,s}-R_{0,s}\bigr],
\qquad
\sum_{s<t}\!K_{t,s}\!\asymp\!
\int_0^{T_t}\!\frac{k(T_t-u)}{r(u)}\,\mathrm du,
\qquad
\sup_t\!\sum_{s<t}\!K_{t,s}<1\,.
\]
\vspace*{-4pt}
Along the intrinsic-time grid, write \(T=T_t\),
\(R_\sigma(T):=R_{\sigma,t}\), and \(R_0(T):=R_{0,t}\).
The following theorem converts power-law \(k\) and \(r\) into the noise exponent
\(q_{\mathcal N}\), whose comparison with the clean exponent \(q_0\) determines
whether the clean loss law is preserved, changed, or destroyed.

\begin{theorem}[Schedule transformation of loss]
\label{thm:rv_joint_schedule}
\label{thm:rv_preserve_change_destroy}
Suppose \(k(T)\sim c_{\mathcal K}T^{-q_{\mathcal K}}\), with
\(c_{\mathcal K}>0\) and \(q_{\mathcal K}\neq1\), and let
\(r(T)\sim c_rT^\vartheta\) be eventually monotone, with \(c_r>0\).  On every
decaying branch,
\[
R_\sigma(T)-R_0(T)\asymp T^{-q_{\mathcal N}(\vartheta)},
\qquad
q_{\mathcal N}(\vartheta):=
\begin{cases}
\vartheta+q_{\mathcal K}-1,
&\mathrm{LM},\quad 1-q_{\mathcal K}<\vartheta<1,\\
q_{\mathcal K},
&\mathrm{LM},\quad \vartheta>1,\\
\min\{\vartheta,q_{\mathcal K}\},
&\mathrm{IM},\quad \vartheta>0.
\end{cases}
\]
At \(\vartheta=1\) in LM, the gap is
\(\asymp T^{-q_{\mathcal K}}\log T\).  If, in addition,
\(R_0(T)-R_{\mathrm{app}}\sim c_0T^{-q_0}\), then
\[
R_\sigma(T)-R_{\mathrm{app}}
\asymp T^{-\min\{q_0,q_{\mathcal N}(\vartheta)\}}\,, \quad q_0 > 0\,,
\]
up to the displayed LM logarithmic correction.
\end{theorem}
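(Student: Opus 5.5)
The plan has two stages. First, compute the order of the accumulated one-injection memory $A(T):=\int_0^T k(T-u)/r(u)\,\mathrm du$ from the power laws for $k$ and $r$. Second, show that the renewal structure of the gap recursion transfers this order to $G(T):=R_\sigma(T)-R_0(T)$.

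\textbf{Stage 1: the order of $A(T)$.} Split the integral at $u=T/2$.

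On the recent-injection part $u\in[T/2,T]$, we have $r(u)\asymp T^{\vartheta}$ by regular variation and eventual monotonicity. The contribution is then $T^{-\vartheta}\int_0^{T/2}k(v)\,\mathrm dv$.
\begin{itemize}
\item In LM ($q_{\mathcal K}<1$) this is $\asymp T^{-\vartheta}T^{1-q_{\mathcal K}}$.
\item In IM ($q_{\mathcal K}>1$) it is $\asymp T^{-\vartheta}\|k\|_{L^1}$. This uses that $k$ is integrable near $0$, which holds because $k(0)=\nu^{\mathcal K}_{\bm W}((0,\infty))<\infty$.
\end{itemize}

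On the old-injection part $u\in[0,T/2]$, we have $k(T-u)\asymp T^{-q_{\mathcal K}}$. The contribution is $T^{-q_{\mathcal K}}\int_0^{T/2}r(u)^{-1}\,\mathrm du$, which behaves as follows:
\begin{itemize}
\item $\asymp T^{-q_{\mathcal K}}T^{1-\vartheta}$ for $\vartheta<1$;
\item $\asymp T^{-q_{\mathcal K}}\log T$ for $\vartheta=1$;
\item $\asymp T^{-q_{\mathcal K}}$ for $\vartheta>1$, since $1/r$ is then integrable. On the initial segment, $r$ is bounded below because of the stable schedule.
\end{itemize}

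Taking the maximum of the two parts gives the three cases of $q_{\mathcal N}$:
\begin{itemize}
\item In LM with $\vartheta<1$, both parts are $T^{1-\vartheta-q_{\mathcal K}}$, so $q_{\mathcal N}=\vartheta+q_{\mathcal K}-1$. The condition $\vartheta>1-q_{\mathcal K}$ is exactly the decaying-branch condition.
\item In LM with $\vartheta>1$, the old part dominates and gives $q_{\mathcal K}$. At $\vartheta=1$ it gives $T^{-q_{\mathcal K}}\log T$.
\item In IM, we get $T^{-\vartheta}+T^{-q_{\mathcal K}}$ when $\vartheta>1$. When $\vartheta<1$ we get $T^{-\vartheta}+T^{1-\vartheta-q_{\mathcal K}}\asymp T^{-\vartheta}$, since $q_{\mathcal K}>1$. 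Both cases give $\min\{\vartheta,q_{\mathcal K}\}$.
\end{itemize}
The assumed continuum approximation $\sum_{s<t}K_{t,s}\asymp A(T_t)$ converts these into statements about the discrete kernel sums.

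\textbf{Stage 2: from $A(T)$ to the gap $G(T)$.} Write the gap recursion as $G=\sigma^2\mathcal K[1]+\mathcal K[G]$, where $\mathcal K$ is the positive operator with kernel $K_{t,s}$.
\begin{itemize}
\item The lower bound $G\ge\sigma^2\mathcal K[1]\asymp A$ is immediate from positivity.
\item For the upper bound, iterate to get the Neumann series $G=\sigma^2\sum_{n\ge1}\mathcal K^n[1]$. The uniformly subcritical condition $\sup_t\mathcal K[1]_t\le\rho<1$ alone gives only $G\le\sigma^2\rho/(1-\rho)$, which is boundedness but not the decay rate.
\end{itemize}

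\emph{This is the main obstacle.} The plan is a bootstrap in the spirit of renewal and subexponential-kernel arguments, showing $\mathcal K[A]\le C\,\rho'A$ for large $T$ with $\rho'<1$. To do this, split $\mathcal K[A](T)$ at $u=T/2$:
\begin{itemize}
\item On $[T/2,T]$, $A(u)\asymp A(T)$ because $A$ is regularly varying. That piece is at most $\rho\,A(T)$ times a constant from regular-variation ratios. Choosing the split point $\epsilon T$ with $\epsilon$ small makes the ratio constant close to $1$.
\item On $[0,T/2]$, the kernel is $\asymp T^{-q_{\mathcal K}}r^{-1}$. The integral of $A/r$ is then computed exactly as in Stage 1 and shown to be $o(A(T))$ or $\lesssim A(T)$ case by case; the decaying-branch condition is precisely what makes these exponents work.
\item Finally, handle a finite initial time window by brute boundedness.
\end{itemize}
Together these give $G\lesssim A/(1-\rho')$, and hence $G\asymp T^{-q_{\mathcal N}}$, including the logarithmic case.

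\textbf{Final claim.} Write
\[
R_\sigma-R_{\mathrm{app}}=(R_0-R_{\mathrm{app}})+G,
\]
a sum of two nonnegative terms: $R_0-R_{\mathrm{app}}\ge0$ since the clean forcing is nonnegative, and $G\ge0$ by the first step of Stage 2. Each term has exact order, $c_0T^{-q_0}$ and $T^{-q_{\mathcal N}}$ respectively. A sum of nonnegative terms is $\asymp$ the larger one, so the sum is $\asymp T^{-\min\{q_0,q_{\mathcal N}\}}$, with the logarithm carried along at $\vartheta=1$ in LM.
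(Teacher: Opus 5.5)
Your Stage~1 endpoint split and your final positive decomposition are sound. At the level of orders they agree with the paper; in long memory the paper uses a three-region split to obtain the sharp constant $\operatorname{Beta}(1-\vartheta,1-q_{\mathcal K})$, but your two-region split suffices for $\asymp$.

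The gap is in Stage~2, and it starts from a misdiagnosis. Subcriticality does not give only boundedness. The label-noise injection and the feedback are propagated by the same kernel, so boundedness of $G$ immediately yields the rate:
\[
\sigma^2\sum_{s<t}K_{t,s}\;\le\;G_t=\sum_{s<t}K_{t,s}\bigl(\sigma^2+G_s\bigr)\;\le\;\Bigl(\sigma^2+\sup_{s}G_s\Bigr)\sum_{s<t}K_{t,s}\;\le\;\frac{\sigma^2}{1-\rho}\sum_{s<t}K_{t,s}.
\]
This is exactly the paper's argument. It appears as $N_t\le Z_t\le N_t/(1-\kappa)$ in the integrable-memory transfer theorem, and as the bound $Z_t\le N_t+(N_t/\sigma^2)\sup_sZ_s$ inside the feedback-invisibility lemma, which then sharpens it to $Z_t\sim N_t$ below the ceiling. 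Combined with your Stage~1 and the row-sum continuum approximation, it finishes every branch, including the logarithmic one.

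The bootstrap you propose instead does not close as written.

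(i) For a decreasing regularly varying $A$ of index $-q_{\mathcal N}$, $\sup_{[\epsilon T,T]}A/A(T)\to\epsilon^{-q_{\mathcal N}}$, which grows rather than approaching one as $\epsilon\downarrow0$. This slip is harmless: the recent piece is in any case $\lesssim A(T)^2=o(A(T))$, because $A\to0$.

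(ii) The serious problem is at the memory ceiling, that is, $\vartheta>1$ in LM and $\vartheta\ge q_{\mathcal K}$ in IM. There $A(T)\asymp k(T)$, and the old-injection piece of $\mathcal K[A](T)$ is $\approx k(T)\int_0^\infty A(u)\,\mathrm du/r(u)$, which is genuinely comparable to $A(T)$. Its ratio to $A(T)$ is a $1/r$-weighted average of the continuum $A$. That average is below one only up to the unspecified constants in $\sum_sK_{t,s}\asymp A(T_t)$. So ``$\lesssim A(T)$'' gives no contraction $\rho'<1$, and your Neumann series is not summed. This is precisely the branch where feedback is not negligible: in the paper's limit $Z_t\sim Ck(T_t)$, the constant $C=\sum_s\frac{T_{s+1}-T_s}{r_s}(\sigma^2+Z_s)$ contains the feedback.

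Your bootstrap also uses entrywise comparisons $K_{t,s}\asymp\frac{T_{s+1}-T_s}{r_s}k(T_t-T_{s+1})$, which the row-sum continuum approximation does not supply.

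If you run the bootstrap on the discrete row sums $a_t:=\sum_{s<t}K_{t,s}$ instead of on $A$, all of this disappears, since $\sum_{s<t}K_{t,s}a_s\le(\sup a)\,a_t\le\rho\,a_t$ with no splitting. That is the one-line bound above.
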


The theorem yields a phase classification by comparing the noise exponent \(q_{\mathcal N}(\vartheta)\) with the clean exponent \(q_0\).
\Cref{fig:rv_schedule_effects} summarizes this comparison. The case \(q_{\mathcal K}=1\) is excluded because it is the marginal LM/IM
boundary: cumulative memory grows as
\(\int_0^T k(v)\,\mathrm dv\asymp\log T\), so the gap acquires logarithmic
corrections rather than following the pure-power formulas above.

\begin{wrapfigure}{r}{0.45\textwidth}
\setlength{\abovecaptionskip}{2pt}
\vspace{-1.0\baselineskip}
\centering
\resizebox{\linewidth}{!}{%
\begin{tikzpicture}[
  x=2.8cm,y=2.4cm,font=\scriptsize,
  line cap=round,line join=round,>=Latex,
  response axis/.style={draw=black,line width=.82pt,-{Latex}},
  response boundary/.style={draw=black,line width=1.18pt},
  destroy boundary/.style={draw=red!70!black,line width=1.05pt},
  ceiling guide/.style={draw=gray!72,line width=.60pt,densely dotted},
  coordinate label/.style={font=\scriptsize\bfseries},
  boundary label/.style={font=\scriptsize\itshape,inner sep=0pt},
  ceiling label/.style={boundary label,text=gray!62!black}
]
  \fill[gray!20] (0.08,0.08) rectangle (0.35,0.92);
  \fill[red!12]
    (0.35,0.08)--(0.35,0.92)--(1.85,0.92)--(1.85,0.78)--
    (1.05,0.78)--cycle;
  \fill[blue!14]
    (0.35,0.08)--(1.05,0.78)--(1.85,0.78)--(1.85,0.08)--cycle;
  \draw[response axis] (0.08,0.08)--(1.95,0.08)
    node[right,coordinate label] {$\bm\vartheta$};
  \draw[response axis] (0.08,0.08)--(0.08,0.99)
    node[above=1pt,coordinate label] {$\bm q_0$};
  \draw[destroy boundary] (0.35,0.08)--(0.35,0.92);
  \draw[response boundary] (0.35,0.08)--(1.05,0.78)--(1.85,0.78);
  \draw[ceiling guide] (1.05,0.08)--(1.05,0.92);
  \draw[line width=.70pt] (0.35,0.055)--(0.35,0.105)
    node[below=2pt,coordinate label]
      {$\mathbf 1-\min\{\bm q_{\mathcal K},\mathbf 1\}$};
  \draw[line width=.70pt] (1.05,0.055)--(1.05,0.105)
    node[below=2pt,coordinate label]
      {$\max\{\bm q_{\mathcal K},\mathbf 1\}$};
  \draw[line width=.70pt] (0.055,0.78)--(0.105,0.78)
    node[left=2pt,coordinate label] {$\bm q_{\mathcal K}$};
  \node[rotate=90,gray!70!black,font=\footnotesize] at (0.215,0.50) {unstable};
  \node[red!60!black,font=\footnotesize] at (0.66,0.76) {changed};
  \node[blue!65!black,font=\footnotesize] at (1.48,0.34) {preserved};
  \node[ceiling label] at (1.42,0.835) {memory ceiling};
\end{tikzpicture}%
}
\caption{\textbf{LM/IM schedule response.}  The red line marks the destroy boundary, where the noisy--clean gap has no power decay; the gray region to its left is unstable.  The black
curve separates changed from preserved scaling and plateaus at the memory
ceiling.}
\label{fig:rv_schedule_effects}
\vspace{-1.0\baselineskip}
\end{wrapfigure}

Consequently, the leading clean-loss decay is \emph{preserved}, \emph{critical}, or
\emph{changed} according as \(q_{\mathcal N}(\vartheta)\) is above, equal to,
or below \(q_0\).  At \(\vartheta=1-q_{\mathcal K}\) in LM and
\(\vartheta=0\) in IM, the gap has no positive-power decay and therefore
\emph{destroys} convergence to \(R_{\mathrm{app}}\); below these edges,
uniform row stability is impossible.  Since
\(q_{\mathcal N}(\vartheta)\leq q_{\mathcal K}\), strict preservation is
possible only when \(q_0<q_{\mathcal K}\).  The complete boundary asymptotics and
tunable branches are given in
\cref{app:rv_joint_schedule_proof}.

The LLM experiment in \cref{subsec:llm_power_law_ratio_sweep} directly tests
the schedule family in \cref{thm:rv_joint_schedule}: it varies \(\vartheta\) in
\(r(T)\propto T^\vartheta\) at matched terminal intrinsic time, provides evidence for the predicted memory ceiling through the diminishing response at large \(\vartheta\).

\noindent{\bf What the loss can and cannot identify.}
Below the LM memory ceiling, a noisy--clean gap asymptotic
\(R_\sigma(T)-R_0(T)\sim cT^{-q}\), \(0<q<q_{\mathcal K}\), identifies
the ratio exponent \(\vartheta=1-q_{\mathcal K}+q\).  It identifies
\(r=B/\eta\), not learning rate and batch size separately; details are given in \cref{app:rv_joint_schedule_proof}.

\subsection{PLRF finite-bulk and full-risk asymptotics}

Here we start from the discrete-SGD memory kernel derived above, and then characterize when a joint ratio path \(r=B/\eta\) preserves, changes, or destroys a spectrally generated power law.

\Cref{thm:rv_joint_schedule} assumes a width-independent survival profile.
For \(0<\alpha<1/4\), an unresolved spectral band instead carries squared
mass \(m^{1-4\alpha}\).  On the finite-bulk window,
\begin{equation}
\label{eq:td_lrs_finite_bulk_gap_main}
R_{\sigma,t}-R_{0,t}
\asymp
\sigma^2m^{1-4\alpha}\sum_{s<t}\frac{\eta_s^2}{B_s}
=
\sigma^2m^{1-4\alpha}\int_0^{T_t}\frac{\mathrm du}{r(u)}.
\end{equation}
This is why FB is not a third branch of the LM/IM map: memory and
width remain coupled.

Let \(\mathcal F(T,m)\) be the forcing profile in
\cref{iclr:eq:phase_forcing_profile}, and write
\(\Delta T_s:=T_{s+1}-T_s\).  The PLRF deterministic-equivalent risk admits
the following forcing--memory representation:
\begin{equation}
\label{eq:td_lrs_plrf_full_risk}
\mathcal R_{\sigma,t}\asymp
\begin{cases}
\displaystyle
\mathcal F(T_t,m)
+\sum_{s<t}\frac{\Delta T_s}{r_s}
(1+T_t-T_{s+1})^{-q_{\mathcal K}}
[\mathcal F(T_s,m)+\sigma^2],
&\alpha>\frac14,\\[1ex]
\displaystyle
\mathcal F(T_t,m)
+\sigma^2m^{1-4\alpha}\sum_{s<t}\frac{\Delta T_s}{r_s},
&0<\alpha<\frac14.
\end{cases}
\end{equation}
The first term is the unresolved signal, while the second accumulates
surviving stochastic injections. In the LM/IM response,
\(\mathcal F(T_s,m)\) records mini-batch variance even with clean labels, whereas \(\sigma^2\) is the additional label-noise variance.  Precise source windows and empirical-to-DE transfer conditions are given in \cref{sec:joint_schedules}.

\noindent{\bf Forcing-memory surrogate for LLM.}
We use the \(\alpha>1/4\) branch because only in this regime does the
width-independent profile \(k(v)\asymp(1+v)^{-q_{\mathcal K}}\) exist, with
\(q_{\mathcal K}=2-1/(2\alpha)>0\). Followed by the FSL fit-then-transfer protocol \citet{li2025functional}, the \(\alpha>1/4\) branch of \cref{eq:td_lrs_plrf_full_risk} suggests a
7 parameter surrogate for LLM loss: 
\begin{equation}
\label{eq:llm_forcing_memory_surrogate}
\widehat L(T)
=L_\infty+A_{\mathcal F}(1+T)^{-q_{\mathcal F}}
+\int_0^T
\frac{A_0+A_1(1+u)^{-q_{\mathcal F}}}{r(u)}
\bigl(1+c_{\mathcal K}(T-u)\bigr)^{-q_{\mathcal K}}\,\mathrm du \,.
\end{equation}
Its seven parameters \((L_\infty,A_{\mathcal F},A_0,A_1,c_{\mathcal K},q_{\mathcal F},q_{\mathcal K})\) can be fitted to an observed loss curve.
\Cref{iclr:sec:LLM_exp,app:end_to_end_lm_experiments} fit these parameters on an 8-1-1 schedule and evaluate the resulting surrogate on a WSD schedule without refitting.

\noindent{\bf Relation to prior deterministic-equivalent and FSL.}
Starting from discrete SGD, we first derive an exact conditional Volterra recursion and then use the resolvent deterministic-equivalent approach of \citet{paquette20244+} to construct deterministic forcing and memory measures for general population spectra and targets. Their \(4+3\) analysis treats time-independent learning rate and batch size;
extending the same DE framework to time-varying joint schedules yields
\cref{eq:td_lrs_plrf_full_risk} after PLRF specialization. 

Our learning rate and batch size schedules in the discrete-SGD deterministic-equivalent route are inspired by FSL \citep{li2025functional,wang2026fast} which is built on a time-changed stochastic differential equations under an assumed population power-law spectrum. In their hard power-law regime, at the level of the response law, the correspondence is
\[
q_{\mathcal F}=s_{\mathrm{FSL}},\qquad
q_{\mathcal K}=2-\frac{1}{\beta_{\mathrm{FSL}}},\qquad
\frac{1}{r(T)}=\gamma_{\mathrm{FSL}}(T)\,.
\]
Thus, \cref{eq:llm_forcing_memory_surrogate} can be regarded as a joint-schedule parameterization of the FSL structure. 
Our additional results identify spectral conditions under which the forcing and memory terms exhibit power laws, locate these responses across the PLRF phases, and characterize how joint schedules preserve, change, or destroy the resulting learning curve.

\vspace*{-4pt}
\subsection{Optimal ratios and sharp resource rates}
\label{iclr:sec:plrf_de_design}
\vspace*{-4pt}

The forcing--memory representation also turns schedule design into
an optimization over the ratio path \(r(u)\).  In the LM/IM branches, for
fixed \(T\) and data budget \(D\), this means minimizing
\(\int_0^T(1+T-u)^{-q_{\mathcal K}}
[\mathcal F(u,m)+\sigma^2]/r(u)\,\mathrm du\), subject to
\(\int_0^T r(u)\,\mathrm du=D\).  Cauchy--Schwarz gives
\[
r^\star(u)
\propto
\sqrt{(1+T-u)^{-q_{\mathcal K}}[\mathcal F(u,m)+\sigma^2]},
\qquad
\int_0^T r^\star(u)\,\mathrm du=D.
\]
Hence more samples are allocated where an injection is both large and likely
to survive until time \(T\). This recovers the joint optimum of
\citet{bordelon2026theory} and its fixed-batch power-decay and WSD
factorizations \citep{bordelon2026theory,li2026optimal}.

\noindent{\bf A representative resource balance.}
In most LM/IM branches, the leading terms reduce to
\[
\mathcal R_\sigma(T)
\asymp T^{-p/(2\alpha)}+D^{-1}T^{1/(2\alpha)}, \quad \text{with} \, p=2\alpha+2\beta-1.
\]
Balancing signal and controlled noise gives
\[
T^\star\asymp D^{2\alpha/(1+p)}, \quad\mathcal R_\sigma^\star(D)\asymp D^{-p/(1+p)}.
\] 
Width floors, feature distortion, finite-bulk amplification, and the constraint
\(\mathfrak f\asymp mD\) produce the remaining phase-dependent compute rates.

\begin{corollary}[Phasewise optimal data and compute rates]
\label{cor:td_lrs_subregime_exponents}
Let \(p=2\alpha+2\beta-1>0\), fix \(\sigma^2>0\), and let
\(\mathcal R_\sigma^\star(D)\) and
\(\mathcal R_\sigma^\star(\mathfrak f)\) denote the optimal terminal
deterministic-equivalent risks under data and feature-compute budgets,
respectively. These optima have the orders in
\cref{tab:td_lrs_ratio_optimal_exponents}.
\end{corollary}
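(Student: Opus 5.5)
The plan is to reduce both optimizations to one-dimensional balancing problems over the terminal intrinsic time \(T\) and width \(m\), using the forcing--memory representation \cref{eq:td_lrs_plrf_full_risk}, the phasewise forcing profile \cref{iclr:eq:phase_forcing_profile}, and the Cauchy--Schwarz optimal ratio \(r^\star\). First, for fixed \((T,m)\) and data budget \(D=\int_0^T r(u)\,\mathrm du\), substitute \(r^\star\) into the memory term. This gives the optimal accumulated noise
\[
\frac{1}{D}\Bigl(\int_0^T\sqrt{(1+T-u)^{-q_{\mathcal K}}[\mathcal F(u,m)+\sigma^2]}\,\mathrm du\Bigr)^2 .
\]
Since \(\sigma^2>0\) is fixed and \(\mathcal F\) is bounded, the bracket is \(\asymp1\). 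The integral is then \(\asymp T^{1-q_{\mathcal K}/2}\) when \(q_{\mathcal K}<2\), with a logarithm at \(q_{\mathcal K}=2\) and \(\asymp1\) beyond. Using \(q_{\mathcal K}=2-1/(2\alpha)\), the first case gives exactly the \(D^{-1}T^{1/(2\alpha)}\) term of the representative balance. In the FB branch the analogous computation uses the second line of \cref{eq:td_lrs_plrf_full_risk}: the noise is \(\sigma^2m^{1-4\alpha}T^2/D\), with constant \(r\) optimal there.

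Second, add the forcing \(\mathcal F(T,m)\) from the appropriate row of \cref{iclr:eq:phase_forcing_profile}. The result is an explicit sum of monomials \(T^{a}m^{b}D^{c}\), valid on the window \(1\ll T\le C_0m^{2\alpha}\). For the data budget, take \(m\to\infty\), or large enough that width floors are negligible. Minimizing over \(T\) then yields \(T^\star\asymp D^{2\alpha/(1+p)}\) and \(\mathcal R^\star\asymp D^{-p/(1+p)}\) in the generic LM/IM cases. The exceptional phases are handled by the same monomial calculus. In \(\mathrm{IM}_2\) and \(\mathrm{IM}_3\) one checks whether the distortion term \(m^{-1}T^{-(q_{\mathcal K}-1)}\) or the saturated noise \(D^{-1}\) (for \(q_{\mathcal K}>2\)) dominates. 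In FB the balance is \(T^{-p/(2\alpha)}\) against \(m^{1-4\alpha}T^2/D\).

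For feature compute \(\mathfrak f\asymp mD\), substitute \(D=\mathfrak f/m\) and minimize jointly over \((T,m)\). The window constraint \(T\le C_0m^{2\alpha}\) will typically be active or binding. The method is to equate the leading terms pairwise: the source transient, the width floor (\(m^{-2\alpha q_{\mathcal F}}\) or \(m^{-2\alpha}\)), the noise, and, where relevant, the distortion. One then verifies that the remaining terms are of lower order at the candidate optimum. Matching upper and lower bounds follow because each term is a monotone power, so \(\min\sum\asymp\min\max\). That is a finite linear program in log-exponents, and its solution in each subregime should populate \cref{tab:td_lrs_ratio_optimal_exponents}.

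I expect the main obstacle to be the compute case. There, the optimizer's location switches between candidate balances within a single phase, for example along the \(\mathrm{LM}_3\) crossover analogous to the \(\alpha=1-1/\sqrt2\) line. One must also confirm that the minimizer respects the deterministic-equivalent validity window and that boundary and logarithmic cases (\(q_{\mathcal K}=2\), \(\vartheta\)-edges) do not change the order. I would first solve the log-exponent LP symbolically, then check feasibility of each vertex against \(T\le C_0m^{2\alpha}\). If a vertex falls outside the window, I would replace it by the constrained optimum on the boundary \(T\asymp m^{2\alpha}\).
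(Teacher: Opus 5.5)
There is a genuine gap. Your skeleton is the paper's: Cauchy--Schwarz noise \(\asymp T^{1/(2\alpha)}/D\) (or \(m^{1-4\alpha}T^2/D\) in FB), plus the phasewise forcing, then a log-exponent balance under the window \(x:=T^{1/(2\alpha)}\lesssim m\). It reproduces every row of the table except the first.

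The problem is that you treat the ratio path as an unconstrained positive function with \(\int_0^T r=D\). The representation \cref{eq:td_lrs_plrf_full_risk} is only valid on admissible schedules with uniformly subcritical feedback, and these carry a ratio floor. Concretely, \(\sup_s r_s^{-1}\le c_\star\) for \(\alpha>1/2\). This also follows directly from stability: every update uses \(B_s\ge1\) samples and stable steps have \(\eta_s\lesssim1\), so \(T=\sum_s\eta_s\lesssim\sum_sB_s=D\).

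In \(\mathrm{IM}_1\) with \(\beta<0\) one has \(p<2\alpha-1\). Your balance \(T^\star\asymp D^{2\alpha/(1+p)}\) then has \(T^\star/D=D^{-2\beta/(1+p)}\to\infty\). Some step must have \(\eta_s/B_s\to\infty\), which violates pointwise and row stability, so the forcing--memory formula cannot be evaluated there.

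The floor therefore binds, and the optimum sits at \(T\asymp D\), \(m\gtrsim D^{1/(2\alpha)}\). There the forcing \(x^{-p}\asymp D^{-p/(2\alpha)}\) dominates the noise \(x/D\). Under compute, \(x\lesssim m\) and \(x^{2\alpha}\lesssim\mathfrak f/m\) force \(x\asymp m\asymp\mathfrak f^{1/(1+2\alpha)}\), giving \(\mathfrak f^{-p/(1+2\alpha)}\). Your linear program would instead return \(D^{-p/(1+p)}\) and \(\mathfrak f^{-p/(2+p)}\). These are strictly faster than the first row of the table and are attained by no admissible schedule, so your claimed matching bounds fail on that row.

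Achievability changes there too. At \(T\asymp D\) the unclipped profile \(r^\star\) can violate the floor. You need the clipped solution of the box-constrained problem in \cref{thm:td_lrs_ratio_control}: a plateau at the floor followed by a terminal ramp. You must then check that its accumulated noise is \(o(D^{-p/(2\alpha)})\).

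Two smaller points. First, \(q_{\mathcal K}=2-1/(2\alpha)<2\) for every \(\alpha\), so the logarithmic and saturated-\(D^{-1}\) noise branches you list never occur. In \(\mathrm{IM}_{2,3}\) the comparison is only among forcing, width floor, distortion and \(x/D\). The one window-driven switch inside a family is at \(\alpha=1\) there (interior balance versus \(x\asymp m\)), not inside \(\mathrm{LM}_3\).

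Second, below trace class the floor is \(r\gtrsim m^{1-2\alpha}\). So taking \(m\to\infty\) at fixed \(D\) is not available: it forces \(T\lesssim Dm^{2\alpha-1}\to0\). This constraint turns out to be slack at the optimal FB and LM scales, but it must be checked at each vertex. You also need an integer-batch realization of the continuum profiles, for example unit-batch data-quantile schedules as in \cref{lem:td_lrs_integer_reachability}.
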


\begin{table}[t]
\centering
\caption{\textbf{Optimal risk rates across the PLRF propagation regimes.}
Each entry is an \(\asymp\)-order, with
\(p=2\alpha+2\beta-1\), data budget \(D\), and feature-compute budget
\(\mathfrak f\asymp mD\).}
\label{tab:td_lrs_ratio_optimal_exponents}
\small
\setlength{\tabcolsep}{4pt}
\renewcommand{\arraystretch}{1.12}
\begin{tabular}{@{}lll@{}}
\toprule
Regime and branch
& \(\mathcal R_\sigma^\star(D)\)
& \(\mathcal R_\sigma^\star(\mathfrak f)\)\\
\midrule
\(\mathrm{IM}_1,\ \beta<0\)
& \(D^{-p/(2\alpha)}\)
& \(\mathfrak f^{-p/(1+2\alpha)}\)\\
\(\mathrm{LM}_{1,2};\ \mathrm{IM}_1,\ \beta>0\)
& \(D^{-p/(1+p)}\)
& \(\mathfrak f^{-p/(2+p)}\)\\
\(\mathrm{LM}_3\)
& \(D^{-p/(1+p)}\)
& \(\mathfrak f^{-2\alpha p/[p+2\alpha(1+p)]}\)\\
\(\mathrm{IM}_{2,3},\ 1/2<\alpha\leq1\)
& \(D^{-p/(1+p)}\)
& \(\mathfrak f^{-p/(1+p+2\beta)}\)\\
\(\mathrm{IM}_{2,3},\ \alpha>1\)
& \(D^{-p/(1+p)}\)
& \(\mathfrak f^{-\alpha/(1+\alpha)}\)\\
\midrule
\(\mathrm{FB}_1\)
& \(D^{-p/(1+p)}\)
& \(\mathfrak f^{-p/(2+p)}\)\\
\(\mathrm{FB}_2\)
& \(D^{-2\alpha p/[p(1-2\alpha)+8\alpha^2]}\)
& \(\mathfrak f^{-2\alpha p/[p(2-2\alpha)+8\alpha^2]}\)\\
\bottomrule
\end{tabular}
\par\vspace{3pt}
\parbox{0.94\linewidth}{%
\footnotesize
\emph{Budget definitions.}
\(\mathcal R_\sigma^\star(D)\) is the optimal terminal risk subject to
\(\sum_{s<t}B_s\leq D\).
Similarly, \(\mathcal R_\sigma^\star(\mathfrak f)\) is the optimum subject to
\(m\sum_{s<t}B_s\leq\mathfrak f\), where
\(\mathfrak f\asymp mD\) is the feature-compute proxy.
}
\end{table}

Most LM/IM regimes share the data-optimal rate
\(D^{-p/(1+p)}\), but their compute-optimal rates differ because width,
feature distortion, and memory impose different bottlenecks.  The
\(\mathrm{IM}_1\), \(\beta<0\), and \(\mathrm{FB}_2\) branches are the two
exceptions already at fixed data.  The complete schedule constructions,
matching lower bounds, boundary qualifications, and the conditional
high-source \(\mathrm{FB}_2\) extension are given in
\cref{subsec:td_lrs_ratio_control_integration}.

\subsection{LLM experiments}
\label{iclr:sec:LLM_exp}

\begin{figure}[!t]
  \setlength{\abovecaptionskip}{2pt}
  \captionsetup[subfigure]{skip=0pt}
  \centering
  \begin{subfigure}[t]{0.3\linewidth}
    \centering
    \includegraphics[width=\linewidth]
      {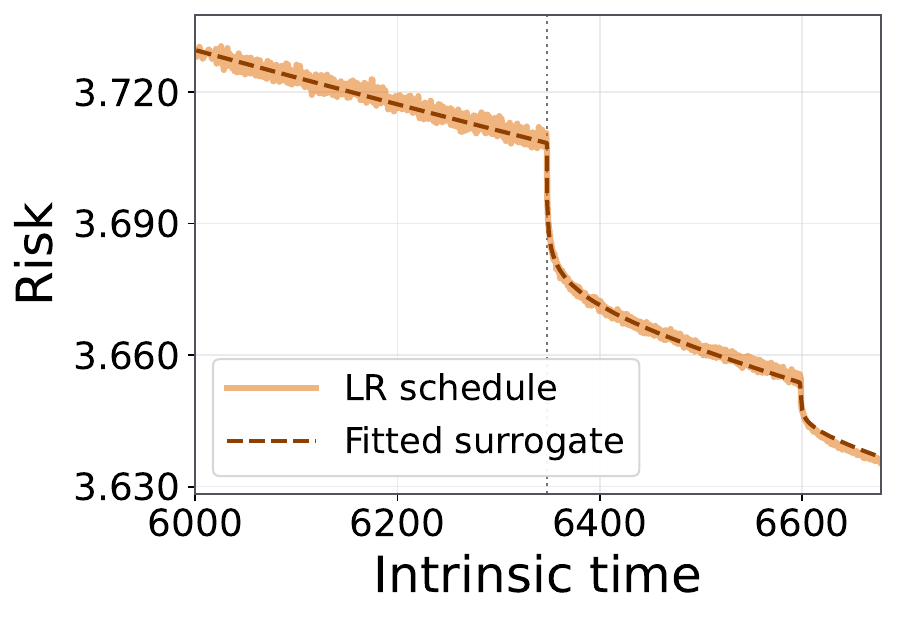}
    \caption{8-1-1 fit}
    \label{fig:llm_surrogate_811_fit}
  \end{subfigure}\hfill
  \begin{subfigure}[t]{0.3\linewidth}
    \centering
    \includegraphics[width=\linewidth]
      {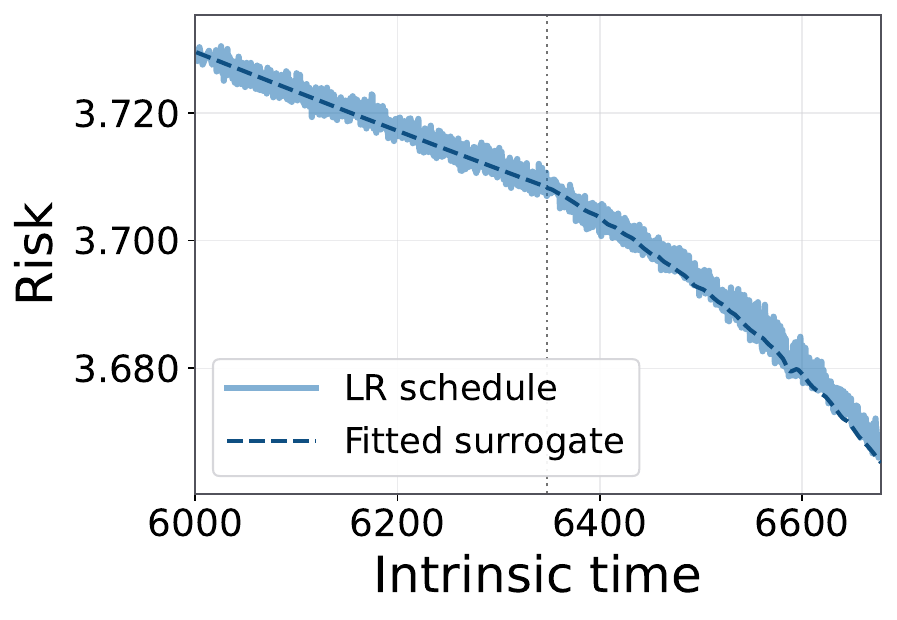}
    \caption{WSD zero-refit transfer}
    \label{fig:llm_surrogate_wsd_transfer}
  \end{subfigure}\hfill
  \begin{subfigure}[t]{0.3\linewidth}
    \centering
    \includegraphics[width=\linewidth]
      {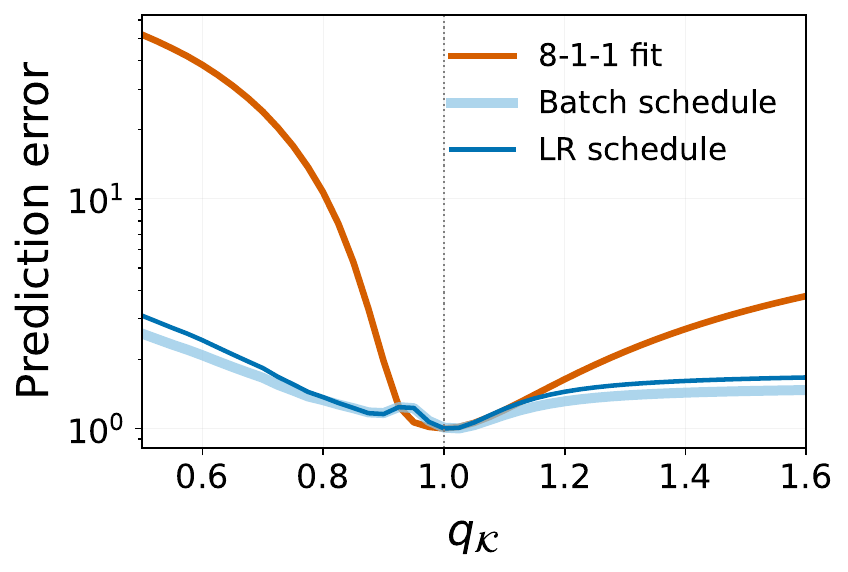}
    \caption{Prediction error}
    \label{fig:llm_qk_prediction_error}
  \end{subfigure}

  \caption{\textbf{LLM surrogate fit, zero-refit transfer, and effective \(q_{\mathcal K}\).}
  (a) For a \(300\)M SGD LLM, we fix \(q_{\mathcal K}=1\) in \cref{eq:llm_forcing_memory_surrogate} and fit the remaining 6 parameters to the 8-1-1 learning-rate schedule.
  (b) With all fitted parameters frozen, it predicts the WSD schedule without refitting.
  (c) Fixing \(q_{\mathcal K}\), fitting the remaining parameters only on
  8-1-1, and evaluating the WSD schedules without
  refitting yields prediction errors, each divided by its best
  value, that are minimized near \(q_{\mathcal K}\approx1\).}
  \label{fig:llm_surrogate_fit_transfer_profile}
\end{figure}

\Cref{fig:llm_response_summary} previews the LLM results.  We fit the 7 parameters of the surrogate
in \cref{eq:llm_forcing_memory_surrogate} only to the 8-1-1 trajectory and use
the frozen fit to predict WSD without refitting. \Cref{fig:llm_surrogate_811_fit,fig:llm_surrogate_wsd_transfer} show the fit to
8-1-1 and the zero-refit prediction of WSD, respectively.  The prediction errors in
\cref{fig:llm_qk_prediction_error} are minimized near
\(q_{\mathcal K}\approx 1\), placing the effective LLM response near the LM/IM
boundary previewed in \cref{fig:llm_response_summary_phase}.  Independent fits on OpenWebText \mbox{\citep{Gokaslan2019OpenWeb}}, FineWeb \mbox{\citep{penedo2024fineweb}}, and peS2o V2
\mbox{\citep{soldaini2023pes2o}} yield \(q_{\mathcal K}=1.017\), \(0.952\), and
\(0.995\), respectively.  \Cref{fig:nanogpt124m_cross_dataset_surrogate_transfer}
shows the independent FineWeb and peS2o V2 fits and their zero-refit schedule transfer.  Together with the OpenWebText result, their agreement near one supports a dataset-robust effective response coordinate.  Full experimental details are provided in
\cref{app:end_to_end_lm_experiments}.

\begin{figure}[!t]
  \centering
  \includegraphics[width=\linewidth]{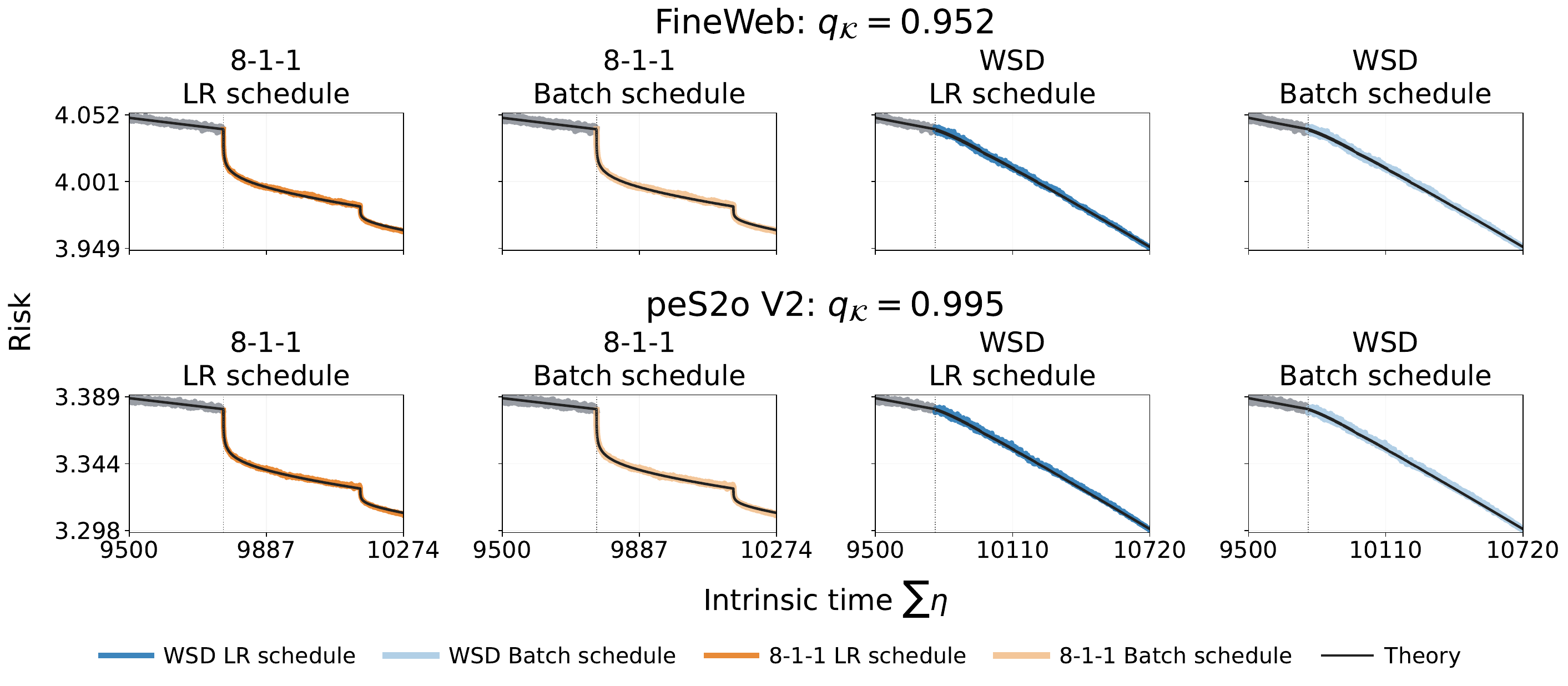}
  \caption{\textbf{Different datasets independently recover
  \(q_{\mathcal K}\approx1\).}
  Each surrogate is fitted only to fixed-batch 8-1-1 and transferred without
  refitting to the alternative factorization and both WSD trajectories.  The
  fitted \(q_{\mathcal K}\) are \(0.952\) on FineWeb and \(0.995\) on peS2o
  V2; together with the independently fitted OpenWebText value \(1.017\).}
  \label{fig:nanogpt124m_cross_dataset_surrogate_transfer}
  \vspace*{-4pt}
\end{figure}

\vspace*{-4pt}
\section{Conclusion and discussion}
\vspace*{-4pt}

Our results characterize a power-law learning curve as a dynamical response
jointly shaped by the spectrum, target alignment, noise, and schedule.
Target-weighted and squared-spectrum masses determine forcing and memory,
respectively, while \(B_t/\eta_t\) controls noise injection in intrinsic time.
We characterize when schedules preserve, change, or destroy clean power-law
decay. In the canonical PLRF model, these mechanisms yield the \(3+3(+2)\) map, an exact conditional finite-bulk gap law, and phase-dependent
deterministic-equivalent risk and resource-optimal rates.  Gaussian online-SGD
experiments support the exact conditional and DE predictions. LLM experiments show our proxy surrogate yields cross-schedule prediction without refitting, supporting the response-level
mechanism beyond frozen features. The factorization-collapse experiment
further distinguishes the optimizer coordinates: plain SGD trajectories nearly
collapse under \(B_t/\eta_t\), whereas Muon \citep{jordan2024muon,liu2025muon} trajectories nearly collapse under the coordinate \(B_t/\eta_t^2\). The fitted \(q_{\mathcal K}\approx1\) corresponds to an effective \(\alpha\approx1/2\), where the \(4+3\) theory predicts the near-square-root compute-optimal width scaling reported by Chinchilla \citep{paquette20244+,hoffmann2022training}.

\section*{Acknowledgments}
We thank Denny Wu and Lei Wu for their constructive discussions and suggestions.
Yudong Chen acknowledges support from National Science Foundation grant
CCF-2233152 and a Vilas Associates Award.

\clearpage

\bibliography{references}
\bibliographystyle{ims}

\clearpage
\appendix
\crefalias{section}{appendix}
\enableaddcontentsline
\setcounter{tocdepth}{2}
\tableofcontents
\clearpage
\section*{Organization and scope of the appendix results}

The appendices are organized around the questions needed to verify the main
text.  \Cref{sec:noisy_label_scaling_laws} records the spectral
representations and the complete formal criteria for power-law learning
curves, while \cref{app:spectral_renewal_proofs} proves the general criteria
by translating cumulative slow-mode weight into long-time decay and then accounting for repeated noise feedback.  Width-dependent finite-bulk results are kept with the exact finite-width and deterministic-equivalent risk results in \cref{app:noisy_43_finite_bulk_empirical,subsec:app_td_lrs_joint_schedules}.
\Cref{sec:joint_schedules,app:rv_joint_schedule_proof} turn to schedules in
which learning rate and batch size vary together.  They derive the resulting
two-time risk equation, show how input
power laws pass to the noisy--clean
gap, and prove when that gap determines the schedule ratio.  The formal
joint-schedule statements and their propagation, finite-bulk, and optimization
proofs are now collected together in \cref{sec:joint_schedules}.
\Cref{sec:constant_schedule_noisy_43} places each constant-schedule result
beside its calculation and optimality proof.
Within \cref{app:condition_examples}, a stretched-exponential example shows
that temporal power laws need not come from a pointwise power-law spectrum.
Numerical checks at finite width appear beside the results they are
designed to illustrate, while
\cref{app:end_to_end_lm_experiments} collects end-to-end language-model
external-validity checks separately from the theorem-facing evidence.

\paragraph{Limit regimes: what grows and in what order.}
The results below use several asymptotic regimes because they answer different
questions rather than provide interchangeable approximations to the same
limit.  Exact finite-width dynamics describe a realized feature map over a
finite training horizon, while the fixed-width terminal limit describes what
remains after all positive modes have relaxed.  A fixed-time deterministic
equivalent replaces the random empirical spectrum at a prescribed horizon.  A
fixed-limit long-time law then studies an infinite-spectrum response as
\(T\to\infty\), whereas a joint width--time limit asks whether that response
remains valid along finite-width models with \(T=T_m\to\infty\).  Resource
limits additionally allow width, horizon, and schedule to vary with the
available data or feature compute.  \Cref{tab:limit_map} summarizes the limit
hierarchy used by the statements and proofs that follow.

\begin{table}[H]
\centering
\caption{Limit regimes of the principal results.}
\label{tab:limit_map}
\small
\setlength{\tabcolsep}{4pt}
\renewcommand{\arraystretch}{1.12}
\begin{tabular}{@{}
>{\raggedright\arraybackslash}p{0.35\textwidth}
>{\raggedright\arraybackslash}p{0.59\textwidth}@{}}
\toprule
\textbf{Limit regime}
& \textbf{Results in this paper}\\
\midrule

Exact finite width:\newline
\(d,m,\bm W\) fixed, \(t<\infty\)
& Exact finite-bulk kernel and noisy--clean gap
(\cref{lem:td_lrs_fb_bulk_kernel,%
cor:td_lrs_exact_finite_bulk_gap})\\
\cmidrule(lr){1-2}

Fixed-width terminal:\newline
\(d,m,\bm W\) fixed, \(T\to\infty\)
& Stable terminal-risk transfer and the noise-induced plateau
(\cref{thm:noisy_43_aggregate_volterra_transfer,%
prop:noisy_43_discrete_trace_optimum,%
prop:noisy_43_noise_plateau})\\
\cmidrule(lr){1-2}

Fixed-time DE:\newline
\(d,m\to\infty\), \(d/m\to c\), \(T\) fixed
& Positive spectral representation and DE risk recursion for varying schedules
(\cref{lem:noisy_43_de_positive_spectral_measure,%
prop:td_lrs_joint_de_recursion,%
lem:td_lrs_propagator})\\
\cmidrule(lr){1-2}

Fixed infinite-spectrum system: \(T\to\infty\)
& Schedule response, full classification, and gap-to-ratio identification
(\cref{thm:rv_joint_schedule,%
thm:noisy_43_aggregate_noise_criterion,%
thm:rv_discrete_transfer,%
thm:rv_discrete_integrable_transfer,%
thm:rv_exact_discrete_identification,%
cor:rv_physical_regime_map})\\
\cmidrule(lr){1-2}

Joint width--time:\newline
\(m,T_m\to\infty\), \(1\ll T_m\lesssim m^{2\alpha}\)
& Spectral iff criteria, uniform transfer, and LM/IM/FB risk laws
(\cref{iclr:thm:spectral_iff,%
cor:noisy_43_direct_plrf_43_exponents,%
thm:noisy_43_uniform_tauberian,%
thm:rv_spectral_memory_transfer,%
thm:td_lrs_joint_fsl,%
cor:td_lrs_joint_de_explicit_forcing,%
iclr:prop:finite_bulk_de,%
prop:td_lrs_joint_empirical_de_transfer})\\
\cmidrule(lr){1-2}

Resource limit:\newline
\(D\to\infty\) or \(\mathfrak f\to\infty\); \(m,T,r\) vary
& Optimal ratio control and phasewise data/compute rates
(\cref{thm:td_lrs_ratio_control,%
cor:td_lrs_subregime_exponents,%
lem:td_lrs_integer_reachability})\\

\bottomrule
\end{tabular}
\end{table}
In the canonical PLRF model, the smallest positive spectral scale is
\(m^{-2\alpha}\).  At fixed width, once \(T\gg m^{2\alpha}\), all positive
modes have been resolved and the remaining transient decays exponentially:
\[
m\ \text{fixed},\qquad T\gg m^{2\alpha}
\quad\Longrightarrow\quad
\text{exponential transient}.
\]
A temporal power law instead requires a joint window in which the filter scale
\(T^{-1}\) remains above the finite-width cutoff:
\[
m\to\infty,\qquad T=T_m\to\infty,\qquad
1\ll T_m\ll m^{2\alpha}.
\]
Thus the joint width--time limit defines the phenomenon studied here rather
than serving only as a technical device.  The order of limits also matters.
Writing \(R(T,m)\) schematically for the risk at width \(m\) and intrinsic time
\(T\), the two orders can differ for some spectral families:
\[
\lim_{m\to\infty}\lim_{T\to\infty}R(T,m)
\neq
\lim_{T\to\infty}\lim_{m\to\infty}R(T,m).
\]
The left-hand side first trains each finite-width model to its terminal
behavior.  The right-hand side first produces an infinite spectrum, so
progressively slower modes remain unresolved as \(T\) grows.

\paragraph{Analytical levels: what is exact and what is approximated.}
The limit hierarchy above specifies which variables grow and in what order.
A separate distinction concerns what is proved at each level.  Conditional on
the frozen features \(\bm W\), the finite-width risk dynamics are exact and
require no spectral approximation.  A deterministic equivalent then replaces
the random empirical spectrum by deterministic spectral measures as width and
dimension grow.  This approximation is initially a fixed-horizon statement:
validity for every fixed \(T\) does not by itself imply validity along
\(T=T_m\to\infty\).  Power-law asymptotics make this additional long-time
passage, either directly for empirical spectra or through the DE.  They
therefore require uniform control near the moving spectral scale \(T^{-1}\),
expressed through the corresponding spectral-control, no-escape, and
source-window conditions.  \Cref{tab:analytical_levels} records the formal
results at each level.

\begin{table}[!t]
\centering
\caption{Analytical levels and representative formal results.}
\label{tab:analytical_levels}
\small
\setlength{\tabcolsep}{4pt}
\renewcommand{\arraystretch}{1.12}
\begin{tabular}{@{}
>{\raggedright\arraybackslash}p{0.30\textwidth}
>{\raggedright\arraybackslash}p{0.64\textwidth}@{}}
\toprule
\textbf{Analytical level}
& \textbf{Formal results}\\
\midrule
Exact conditional finite-width dynamics
& Exact finite-bulk kernel and noisy--clean gap
(\cref{lem:td_lrs_fb_bulk_kernel,%
cor:td_lrs_exact_finite_bulk_gap}).\\
\cmidrule(lr){1-2}
Deterministic equivalent
& Positive DE spectral representation, varying-schedule DE recursion, and
conditional transfer to realized SGD
(\cref{lem:noisy_43_de_positive_spectral_measure,%
prop:td_lrs_joint_de_recursion,%
prop:td_lrs_joint_empirical_de_transfer}).\\
\cmidrule(lr){1-2}
Power-law asymptotics
& Spectral iff criteria, uniform Tauberian and schedule transfer, and LM/IM/FB
risk laws
(\cref{thm:noisy_43_effective_spectral_criterion,%
thm:noisy_43_uniform_tauberian,%
thm:rv_spectral_memory_transfer,%
thm:td_lrs_joint_fsl,%
iclr:prop:finite_bulk_de}).\\
\bottomrule
\end{tabular}
\end{table}

\clearpage

\section*{Notation guide}
\makeatletter
\addtocontents{toc}{\protect\contentsline{section}{Notation guide}{\thepage}{\@currentHref}}
\makeatother
Hats on \(\widehat{\bm H},\widehat\lambda_j,\widehat{\bm u}_j\) mark
empirical spectral objects after drawing \(\bm W\); hats on scalar functions
in \cref{app:rv_joint_schedule_proof} denote Laplace transforms.  Plain \(F,K,S,R\) denote \textbf{exact
dynamical quantities}, while \(\mathcal F,\mathcal K,\mathcal S,\mathcal R\)
denote their \textbf{deterministic-equivalent} counterparts.  Where present, subscript
\(\bm W\) makes conditioning on the frozen features explicit; on risks,
subscript \(0\) marks clean labels.  Spectral measures with subscript \(m\)
below are DE objects.

\begingroup
\small
\setlength{\tabcolsep}{3pt}
\renewcommand{\arraystretch}{1.04}
\begin{longtable}{@{}
>{\centering\arraybackslash}p{0.145\linewidth}
>{\raggedright\arraybackslash}p{0.325\linewidth}
>{\centering\arraybackslash}p{0.145\linewidth}
>{\raggedright\arraybackslash}p{0.325\linewidth}@{}}
\caption{Core notation; proof-local symbols are defined at first use.}
\label{tab:notation_guide}\\
\toprule
\textbf{Symbol} & \textbf{Meaning} & \textbf{Symbol} & \textbf{Meaning}\\
\midrule
\endfirsthead
\multicolumn{4}{@{}l}{\textit{Notation guide (continued)}}\\
\toprule
\textbf{Symbol} & \textbf{Meaning} & \textbf{Symbol} & \textbf{Meaning}\\
\midrule
\endhead
\midrule
\multicolumn{4}{r@{}}{\textit{Continued on the next page}}\\
\endfoot
\bottomrule
\endlastfoot
\multicolumn{4}{@{}l}{\textit{Model and spectrum}}\\*[1pt]
\(d\) & input dimension
& \(m\) & feature width\\
\(\alpha\) & spectral-decay exponent
& \(\beta\) & target-regularity exponent\\
\(p\) & target-energy tail exponent \(2\alpha+2\beta-1\)
& \(\bm x\) & covariate\\
\(\bm\Lambda\) & population covariance
& \(\lambda_j\) & population eigenvalue\\
\(\bm\theta^\star\) & teacher coefficients
& \(f_\star\) & teacher predictor\\
\(\varepsilon\) & label noise
& \(\sigma^2\) & label-noise variance\\
\(\bm W\) & frozen feature matrix
& \(\bm a_t\) & readout at update \(t\)\\
\(\widehat{\bm H}\) & \(\bm\Lambda^{1/2}\bm W\bm W^{\!\top}\bm\Lambda^{1/2}\)
& \(\widehat\lambda_j\) & empirical eigenvalue\\
\(\widehat{\bm u}_j\) & empirical eigenvector
& \(\bm M_m(z)\) & diagonal matrix-valued DE resolvent\\
\(\mathfrak m(z)\) & scalar resolvent fixed point
& & \\
\cmidrule(lr){1-4}
\multicolumn{4}{@{}l}{\textit{Dynamics and spectral objects}}\\*[1pt]
\(\eta_t\) & learning rate
& \(B_t\) & batch size\\
\(T_t\) & \(\sum_{s<t}\eta_s\)
& \(T\) & terminal intrinsic time\\
\(r_s\) & \(B_s/\eta_s\)
& \(r(u)\) & intrinsic-time ratio path\\
\(\vartheta\) & regular-variation exponent of \(r\)
& \(L_r\) & slow factor of \(r\)\\
\(q_\eta(\lambda)\) & one-step modal retention
& \(k(v)\) & unit-injection survival at age \(v\)\\
\(F_t\) & exact varying-schedule forcing
& \(K_{t,s}\) & exact two-time kernel\\
\(F_{\bm W}\) & exact conditional forcing
& \(K_{\bm W}\) & exact conditional kernel\\
\(\nu_{\bm W}^{\mathcal F}\) & target-weighted empirical spectral measure
& \(\nu_{\bm W}^{\mathcal K}\) & squared-eigenvalue empirical memory measure\\
\(\boldsymbol\mu_m\) & positive diagonal matrix-valued DE spectral measure
& \(\nu_m^{\mathcal K}\) & \(\lambda^2\)-weighted DE memory measure\\
\(\mu_m^{\mathcal F}\) & target-weighted DE forcing measure
& \(\mu_m^{\mathcal K}\) & DE trace spectral measure\\
\(\mathcal F_{\boldsymbol\eta,\boldsymbol B}(t,m)\)
& joint-schedule DE forcing
& \(\mathcal K_{\boldsymbol\eta,\boldsymbol B}(t,s,m)\)
& joint-schedule DE kernel\\
\(F_{\bm W,0}\) & exact zero-mode forcing
& \(F_{\bm W,>0}\) & exact learnable forcing\\
\(\mathcal F_0\) & DE null-space forcing component
& \(\mathcal F_{pp}\) & DE pure-point forcing component\\
\(\mathcal F_{ac}\) & DE absolutely-continuous forcing component
& \(\mathcal K_{pp}\) & DE pure-point memory component\\
\cmidrule(lr){1-4}
\multicolumn{4}{@{}l}{\textit{Risks, scaling laws, and budgets}}\\*[1pt]
\(R_{\sigma,t}\) & exact conditional noisy risk
& \(R_{0,t}\) & exact conditional clean risk\\
\(R_\sigma(T)\) & exact fixed-limit noisy risk
& \(R_0(T)\) & exact fixed-limit clean risk\\
\(\mathcal R_{\sigma,\boldsymbol\eta,\boldsymbol B}(t,m)\)
& joint-schedule DE noisy risk
& \(\mathcal R_0\) & clean DE risk; arguments suppressed\\
\(R_{\mathrm{app}}\) & exact zero-mode floor
& \(Z_t\) & exact noisy--clean gap\\
\(\widetilde R_\sigma(w)\) & exact risk generating series
& \(\mathcal R_\sigma^\star\) & admissible-class DE optimum\\
\(\mathcal F(T,m)\) & PLRF large-clock phase-order forcing profile
& \(q_{\mathcal F}\) & general learnable-forcing exponent; target-aligned
source coordinate in the PLRF map\\
\(L_{\mathcal F}\) & forcing slow factor
& \(q_{\mathcal K}\) & general one-injection memory exponent; PLRF coordinate
on the LM and IM branches\\
\(L_{\mathcal K}\) & memory slow factor
& \(q_0\) & complete floor-centered clean-loss exponent\\
\(L_0\) & centered clean-loss slow factor
& \(q_{\mathcal N}(\vartheta)\) & schedule-induced noisy--clean-gap exponent\\
\(D_t\) & processed samples by update \(t\)
& \(D\) & data budget\\
\(\mathfrak f\) & compute proxy, \(\mathfrak f\asymp mD\)
& \(\sim\) & ratio tends to one\\
\(\asymp\) & same order
& \(\lesssim\) & upper-order bound\\
\(\ll\) & negligible ratio
& \(\kappa_{\bm W}\) & exact total kernel mass\\
\(\Delta T_s\) & intrinsic-time increment, \(\eta_s\)
& & \\
\cmidrule(lr){1-4}
\multicolumn{4}{@{}l}{\textit{Abbreviations and regimes}}\\*[1pt]
\(\mathrm{PLRF}\) & power-law random-feature model
& \(\mathrm{FSL}\) & functional scaling law\\
\(\mathrm{SGD}\) & stochastic gradient descent
& \(\mathrm{FLOP}\) & floating-point operation\\
\(\mathrm{DE}\) & deterministic equivalent
& \(\mathrm{WSD}\) & warmup--stable--decay\\
\(\mathrm{FB}\) & finite-bulk regime
& \(\mathrm{LM}\) & long-memory regime\\
\(\mathrm{IM}\) & integrable-memory regime
& & \\
\end{longtable}
\endgroup

\begingroup

\section{Spectral representations and formal power-law criteria}
\label{sec:noisy_label_scaling_laws}

This appendix records the exact and deterministic-equivalent spectral
representations used in the paper and states the formal criteria for the
learnable forcing, one-injection memory, accumulated noise, and their transfer
to the observed loss.  The corresponding spectral
and renewal proofs are collected in \cref{app:spectral_renewal_proofs}, while
the width-dependent finite-bulk results are proved in
\cref{app:noisy_43_finite_bulk_empirical,subsec:app_td_lrs_joint_schedules}.
Throughout, \(a\asymp b\) hides dimension-independent positive constants.

\subsection{From exact dynamics to deterministic-equivalent risk}

We use the model, data distribution, online SGD update, conditional prediction
risk, and PLRF parameters defined in
\cref{iclr:sec:model_dynamics,iclr:sec:schedules}.  Thus
\(\lambda_j=j^{-2\alpha}\), \(\theta_j^\star=j^{-\beta}\), and
\(2\alpha+2\beta>1\).  We take \(d\geq c_0m\), \(c_0>1\), with
\(d/m\to c\in(1,\infty)\) when \(\alpha<1/2\) and
\(d/m\to c\in(1,\infty]\) when \(\alpha>1/2\).

For the constant-schedule spectral formulas below, set
\(\eta_t\equiv\eta\), \(B_t\equiv B=\Theta(1)\), and retain
\(\bm a_0=\bm0\).  All sampling, independence, and conditioning conventions
are those of \cref{iclr:sec:model_dynamics}. For this constant schedule, the stable learning-rate scale is
\begin{equation}
\label{eq:noisy_43_learning_rate}
\eta\asymp
\begin{cases}
1,&\alpha>\frac12,\\
m^{2\alpha-1},&0<\alpha<\frac12.
\end{cases}
\end{equation}
The hidden constant is chosen strictly below the threshold in
\cref{eq:noisy_43_stability}.

We first record the exact recursion stated in
\cref{iclr:sec:model_dynamics}, then construct the positive
deterministic-equivalent measures and assemble the corresponding risk.  The
formal power-law criteria follow in the next subsection.

\subsubsection{Exact conditional forcing--memory dynamics}
The operator \(\widehat{\bm H}\), filter \(q_\eta\), forcing
\(F_{\bm W}\), memory kernel \(K_{\bm W}\), and approximation floor
\(R_{\mathrm{app}}\) are defined in \cref{iclr:sec:model_dynamics}; the two
empirical spectral measures and their integral representations are given in
\cref{iclr:sec:foundations}.  We verify here the exact recursion stated in
\cref{iclr:eq:exact_volterra}.  Let
\(\bm e_t:=\bm\Lambda^{1/2}(\bm W\bm a_t-\bm\theta^\star)\) and
\(\rho_j(t):=\langle\widehat{\bm u}_j,\bm e_t\rangle\).
Let
\[
\mathscr F_t
:=
\sigma\!\left(
\bm W,
\{(\bm x_s^i,y_s^i):0\leq s<t,\ 1\leq i\leq B\}
\right),
\qquad
\mathbb E_t[\,\cdot\,]
:=
\mathbb E[\,\cdot\mid\mathscr F_t]\,.
\]
Then \((\bm a_t,\bm e_t,\rho_j(t))\) are \(\mathscr F_t\)-measurable and the
next mini-batch is independent of \(\mathscr F_t\).  Gaussian moments give
\[
\mathbb E_t[\rho_j^2(t+1)]
=
q_\eta(\widehat\lambda_j)\rho_j^2(t)
+\frac{\eta^2}{B}\widehat\lambda_j^2(\|\bm e_t\|_2^2+\sigma^2)\,.
\]
Iterating this scalar recursion and summing over \(j\) proves
\cref{iclr:eq:exact_volterra}.

\paragraph{Terminal risk and feedback stability.}
The exact Volterra equation also determines the fixed-width terminal level.
Define the ordinary generating series, interpreted as formal power series,
\[
\widetilde R_\sigma(w):=\sum_{t\geq0}R_{\sigma,t}w^t,
\qquad
\widetilde F_{\bm W}(w)
:=\sum_{t\geq0}F_{\bm W}(t)w^t,
\qquad
\widetilde K_{\bm W}(w)
:=\sum_{t\geq0}K_{\bm W}(t)w^t.
\]
Coefficient summation in \cref{iclr:eq:exact_volterra} gives
\[
\widetilde R_\sigma(w)
=
\frac{
\widetilde F_{\bm W}(w)
+\sigma^2w\widetilde K_{\bm W}(w)/(1-w)}
{1-w\widetilde K_{\bm W}(w)}.
\]
The reciprocal denominator
\((1-w\widetilde K_{\bm W}(w))^{-1}\) sums repeated reinjections of
previous prediction error and is the transfer function of the SGD feedback.

For fixed \((d,m,\bm W)\), the terminal risk can be computed exactly.  Assume
\(\eta(1+1/B)\lambda_{\max}(\widehat{\bm H})<2\) and define the exact kernel
mass
\[
\kappa_{\bm W}
:=\sum_{t\geq0}K_{\bm W}(t)
=\frac{\eta}{B}\sum_{\widehat\lambda_j>0}
\frac{\widehat\lambda_j}
{2-(1+1/B)\eta\widehat\lambda_j}\,.
\]
Since the kernel is nonnegative, \(\kappa_{\bm W}<1\) gives
\(\lvert w\widetilde K_{\bm W}(w)\rvert<1\) for
\(\lvert w\rvert\leq1\).  The feedback is therefore stable, with
input--output gain at most \((1-\kappa_{\bm W})^{-1}\).  The positive-spectrum
forcing then vanishes as \(t\to\infty\), and the exact conditional terminal
risk is
\begin{equation}
\lim_{t\to\infty}R_{\sigma,t}
=\frac{R_{\mathrm{app}}+\sigma^2\kappa_{\bm W}}
{1-\kappa_{\bm W}}\,.
\label{iclr:eq:terminal_plateau}
\end{equation}
Thus the terminal level combines frozen-representation error with persistent
training-label noise and its feedback amplification.  The centered-transfer
result is stated in \cref{thm:noisy_43_aggregate_volterra_transfer} and proved
in \cref{app:noisy_43_aggregate_volterra_transfer_proof}.

\subsubsection{Positive spectral measures for the deterministic equivalent}
The random-matrix step preserves the same forcing--memory picture: it replaces
the realized spectral projection by deterministic positive measures.  Target
weighting will produce the forcing measure, while the trace measure, together
with the squared-eigenvalue variance weight, will produce memory.

To construct these measures, write \(\lambda_j\) for the diagonal entries of
\(\bm\Lambda\); in PLRF, \(\lambda_j=j^{-2\alpha}\).  The diagonal
matrix-valued DE resolvent is
\begin{equation*}
\bm M_m(z)
:=
\operatorname{diag}\left(
\frac{1}{\lambda_j\mathfrak m(z)-z}:1\leq j\leq d
\right),
\end{equation*}
where \(\mathfrak m(z)\) is the analytic solution of
\begin{equation*}
\mathfrak m(z)
=
\left(
1+\frac1m\sum_{j=1}^d
\frac{\lambda_j}{\lambda_j\mathfrak m(z)-z}
\right)^{-1}.
\end{equation*}
Choose the Cauchy--Stieltjes branch of
\citet[Proposition~E.1]{paquette20244+}, characterized by
\(\operatorname{Im}\mathfrak m(z)<0\) on the upper half-plane and
\(\mathfrak m(z)\to1\) as \(|z|\to\infty\).

For the matrix-valued measure \(\boldsymbol\mu_m\) whose positivity and
uniqueness are established next, the two scalar projections relevant to
learning are
\begin{equation}
\label{eq:noisy_43_de_scalar_measures}
\mu_m^{\mathcal F}(A)
:=\left\langle
\boldsymbol\mu_m(A)\bm\Lambda^{1/2}\bm\theta^\star,
\bm\Lambda^{1/2}\bm\theta^\star
\right\rangle,
\qquad
\mu_m^{\mathcal K}(A):=\operatorname{tr}\boldsymbol\mu_m(A).
\end{equation}
The squared-spectrum weighting used by every DE memory kernel is
\begin{equation}
\label{eq:noisy_43_de_weighted_memory_measure}
\nu_m^{\mathcal K}(\mathrm d\lambda)
:=\lambda^2\mu_m^{\mathcal K}(\mathrm d\lambda).
\end{equation}
Its target-weighted zero atom is the DE approximation floor
\(\mathcal F_0(m):=\mu_m^{\mathcal F}(\{0\})\).

The next lemma turns the DE resolvent into genuine positive spectral measures,
so the contour formulas used later are ordinary spectral averages rather than
only formal complex integrals.  It also records the moments and functional
calculus needed below.

\begin{lemma}[Positive spectral measures of the PLRF deterministic equivalent]
\label{lem:noisy_43_de_positive_spectral_measure}
For every finite \(m,d\), there is a unique finite positive-semidefinite
diagonal matrix-valued Borel measure \(\boldsymbol\mu_m\), compactly supported
on \([0,\infty)\), such that
\begin{equation}
\label{eq:noisy_43_de_matrix_measure}
\bm M_m(z)
=
\int_{[0,\infty)}
\frac{\boldsymbol\mu_m(\mathrm d\lambda)}{\lambda-z},
\qquad z\in\mathbb C\setminus[0,\infty).
\end{equation}
It satisfies
\begin{equation}
\label{eq:noisy_43_de_matrix_measure_moments}
\boldsymbol\mu_m([0,\infty))=\bm I_d,
\qquad
\int\lambda\,\boldsymbol\mu_m(\mathrm d\lambda)=\bm\Lambda,
\qquad
\int\lambda^2\,\boldsymbol\mu_m(\mathrm d\lambda)
=
\bm\Lambda^2+\frac{\operatorname{tr}\bm\Lambda}{m}\bm\Lambda.
\end{equation}
The scalar measures in \cref{eq:noisy_43_de_scalar_measures} are finite and
nonnegative, with
\begin{equation}
\label{eq:noisy_43_de_scalar_measure_moments}
\begin{gathered}
\mu_m^{\mathcal F}([0,\infty))
=(\bm\theta^\star)^{\!\top}\bm\Lambda\bm\theta^\star,
\qquad
\mu_m^{\mathcal K}([0,\infty))
=d,\\
\int\lambda\,\mu_m^{\mathcal F}(\mathrm d\lambda)
=(\bm\theta^\star)^{\!\top}\bm\Lambda^2\bm\theta^\star,
\quad
\int\lambda\,\mu_m^{\mathcal K}(\mathrm d\lambda)
=\operatorname{tr}\bm\Lambda,
\quad
\int\lambda^2\,\mu_m^{\mathcal K}(\mathrm d\lambda)
=\operatorname{tr}(\bm\Lambda^2)
+\frac{(\operatorname{tr}\bm\Lambda)^2}{m}.
\end{gathered}
\end{equation}
If \(h\) is analytic on a neighborhood of
\(\operatorname{supp}\mu_m^{\mathcal K}\), then
\begin{equation}
\label{eq:noisy_43_de_functional_calculus}
-\frac{1}{2\pi\mathrm i}
\oint_{\mathscr C}h(z)\bm M_m(z)\,\mathrm dz
=
\int h(\lambda)\boldsymbol\mu_m(\mathrm d\lambda).
\end{equation}
\end{lemma}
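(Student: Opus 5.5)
Each diagonal entry of \(\bm M_m(z)\) will be shown to be a Stieltjes transform of a finite positive measure. The Riesz--Herglotz representation (Stieltjes inversion) then produces \(\boldsymbol\mu_m\). Uniqueness, the moments and the contour formula all follow from this representation.

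\textbf{Step 1: the fixed point.} I take from \citet[Proposition~E.1]{paquette20244+} that \(\mathfrak m\) is analytic on \(\mathbb C\setminus[0,\infty)\), has \(\operatorname{Im}\mathfrak m(z)<0\) on the upper half-plane, and has \(\mathfrak m(z)\to1\) as \(|z|\to\infty\). For each \(j\), set \(g_j(z):=(\lambda_j\mathfrak m(z)-z)^{-1}\). Then
\[
\operatorname{Im}(\lambda_j\mathfrak m(z)-z)=\lambda_j\operatorname{Im}\mathfrak m(z)-\operatorname{Im}z<0
\]
on the upper half-plane, so \(\operatorname{Im}g_j>0\) there. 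Hence \(g_j\) is a Nevanlinna (Pick) function. Since \(\mathfrak m\to1\), we have \(-zg_j(z)\to1\) as \(|z|\to\infty\).

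\textbf{Step 2: representation and uniqueness.} The Nevanlinna representation theorem applies to \(g_j\). The decay \(g_j(z)=-1/z+O(|z|^{-2})\) forces the linear and constant terms to vanish, leaving
\[
g_j(z)=\int\mu_j(\mathrm d\lambda)/(\lambda-z)
\]
with \(\mu_j\geq0\) and \(\mu_j(\mathbb R)=1\). For the support, I would show that \(\mathfrak m\), and hence \(g_j\), extends analytically and real-valued across \((-\infty,0)\) and across \((\Lambda_{\max},\infty)\) for some finite \(\Lambda_{\max}\). A real-valued analytic extension on an interval means the inversion density vanishes there, so \(\operatorname{supp}\mu_j\subset[0,\Lambda_{\max}]\). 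For \(z<0\) real, the fixed-point equation has a positive real solution by monotonicity. For large real \(z\), \(\mathfrak m\) is real by analytic continuation from infinity, because the map defining \(\mathfrak m\) is a contraction for \(|z|\) large.

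Setting \(\boldsymbol\mu_m:=\operatorname{diag}(\mu_j)\) gives \cref{eq:noisy_43_de_matrix_measure}. Uniqueness follows from Stieltjes inversion. The functional calculus \cref{eq:noisy_43_de_functional_calculus} follows by inserting the representation, applying Fubini on a contour enclosing the compact support, and using Cauchy's formula \(-\frac1{2\pi\mathrm i}\oint h(z)/(\lambda-z)\,\mathrm dz=h(\lambda)\).

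\textbf{Step 3: moments.} Expand both sides at infinity. The representation gives
\[
g_j(z)=-\sum_k z^{-k-1}\int\lambda^k\mu_j(\mathrm d\lambda).
\]
On the other side, write \(\mathfrak m=1-a/z+O(z^{-2})\) with \(a\) to be determined. Expanding the fixed-point equation gives
\[
\frac1m\sum_i\frac{\lambda_i}{\lambda_i\mathfrak m-z}=-\frac{\operatorname{tr}\bm\Lambda}{mz}+O(z^{-2}),
\]
so \(\mathfrak m=1+\operatorname{tr}\bm\Lambda/(mz)+\dots\), that is, \(a=-\operatorname{tr}\bm\Lambda/m\). Then
\[
g_j=-\frac1z\Bigl(1-\frac{\lambda_j\mathfrak m}{z}\Bigr)^{-1}=-\frac1z-\frac{\lambda_j}{z^2}-\frac{\lambda_j^2+\lambda_j\operatorname{tr}\bm\Lambda/m}{z^3}-\cdots.
\]
Matching coefficients gives the three moments in \cref{eq:noisy_43_de_matrix_measure_moments}. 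The scalar moments \cref{eq:noisy_43_de_scalar_measure_moments} follow by taking the quadratic form against \(\bm\Lambda^{1/2}\bm\theta^\star\) and the trace. Nonnegativity of the scalar measures is inherited from \(\mu_j\geq0\).

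\textbf{Main obstacle.} I expect the hard step to be the support claim: showing that \(\mathfrak m\) stays real-analytic off a compact subset of \([0,\infty)\), and in particular ruling out mass on the negative axis. My first approach would be monotonicity of the real fixed-point map for \(z<0\) combined with the implicit function theorem, since its derivative is nonvanishing there. If that proves delicate, the fallback is to identify \(\boldsymbol\mu_m\) as the limit of expected spectral measures of \(\widehat{\bm H}\), which is PSD. That route gives support in \([0,\infty)\) and positivity directly, but only for the limiting object.
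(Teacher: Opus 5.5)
Your proposal is correct and follows essentially the same route as the paper: a Herglotz--Stieltjes representation of each diagonal entry \((\lambda_j\mathfrak m(z)-z)^{-1}\), then monotonicity of the real fixed-point equation plus the implicit function theorem to exclude mass on \((-\infty,0)\), then coefficient matching at infinity for the moments, and finally Cauchy's formula for the functional calculus. The only difference is that you obtain compact support from a self-contained large-\(|z|\) contraction argument rather than citing the far-field estimate of \citet[Proposition~E.2]{paquette20244+}; that contraction argument also justifies the Laurent expansion of \(\mathfrak m\) at infinity used for the moments.
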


The proof of \cref{lem:noisy_43_de_positive_spectral_measure} is in
\cref{app:noisy_43_positive_de_measure_proof}.

\subsubsection{Assembling the deterministic-equivalent risk}
We now return to the risk.  The DE retains the exact system's input--output
structure: its spectral transforms define forcing and memory, and stable
positive feedback keeps the noisy--clean gap within a fixed factor of
cumulative memory.

Let \(\mathscr C\) enclose the deterministic-equivalent spectral support and
no other singularities.  The DE forcing and memory are
\begin{align}
\mathcal F(t,m)
&:=
-\frac{1}{2\pi\mathrm i}
\oint_{\mathscr C}
\left\langle
\bm M_m(z)\bm\Lambda^{1/2}\bm\theta^\star,
\bm\Lambda^{1/2}\bm\theta^\star
\right\rangle
q_\eta(z)^t\,\mathrm dz ,
\label{eq:noisy_43_forcing_definition}\\
\mathcal K(t,m)
&:=
\frac{\eta^2}{B}\,
\operatorname{tr}\left[
-\frac{1}{2\pi\mathrm i}
\oint_{\mathscr C}
\bm M_m(z)z^2q_\eta(z)^t\,\mathrm dz
\right].
\label{eq:noisy_43_kernel_definition}
\end{align}
By \cref{eq:noisy_43_de_functional_calculus}, these contour expressions are
positive spectral transforms.
\begin{equation}
\label{eq:noisy_43_de_forcing_split}
\mathcal F_0(m)
=
-\underset{z=0}{\operatorname{Res}}
\left\langle
\bm M_m(z)\bm\Lambda^{1/2}\bm\theta^\star,
\bm\Lambda^{1/2}\bm\theta^\star
\right\rangle,
\qquad
\mathcal F_{>0}(t,m)
:=
\mathcal F(t,m)-\mathcal F_0(m),
\end{equation}
so that \(\mathcal F(t,m)=\mathcal F_0(m)+\mathcal F_{>0}(t,m)\).
Only the zero-versus-positive split is universal.  For general
\((\lambda_i,\theta_i^\star)\), we keep the positive-spectrum forcing intact;
the later pure-point/absolutely-continuous split is PLRF-specific.  The
temporal spectral and accumulation criteria below are first stated for exact
conditional quantities; the transfer theorem then covers both exact and DE
losses.

To assemble the components into risk, define the causal discrete convolution
for sequences \(a,b:\mathbb N_0\to\mathbb{R}\) by
\[
(a*b)(0):=0,
\qquad
(a*b)(t):=\sum_{s=0}^{t-1}a(t-1-s)b(s),
\quad t\geq1.
\]
The clean DE risk is the bounded response to the forcing under this kernel:
\begin{equation}
\label{eq:noisy_43_clean_volterra_definition}
\mathcal R_0=\mathcal F+\mathcal K*\mathcal R_0 .
\end{equation}
This is the zero-noise constant-schedule resolvent-DE surrogate for the SGD
dynamics in \cref{iclr:sec:model_dynamics}.

Its noisy counterpart adds cumulative label-noise injection:
\begin{equation}
\mathcal R_\sigma=\mathcal F+\mathcal K*\mathcal R_\sigma+\sigma^2\mathcal S,
\qquad
\mathcal S_t(m):=\sum_{s=0}^{t-1}\mathcal K(s,m),
\quad \mathcal S_0(m):=0.
\label{eq:noisy_43_clean_risk_volterra}
\end{equation}

The comparison requires two uniform stability conditions: every positive mode
contracts, and the total Volterra feedback remains below one:
\begin{equation}
\label{eq:noisy_43_stability}
\begin{aligned}
\eta\left(1+\frac1B\right)
\sup\{\lambda>0:\lambda\text{ lies in the DE spectral support of }
\bm M_m\}&\leq2-\delta,\\
\|\mathcal K\|_{\ell^1}
:=\sum_{t=0}^\infty\mathcal K(t,m)
&\leq\kappa<1.
\end{aligned}
\end{equation}
Here \(\delta>0\) and \(\kappa<1\) are fixed uniformly in \(m\).

Because the kernel is nonnegative, the direct injection gives the lower bound,
while repeated stable feedback amplifies it by at most
\((1-\kappa)^{-1}\).  Hence
\begin{equation}
\label{eq:noisy_43_scaling_reduction}
\mathcal R_0(t,m)+\sigma^2\mathcal S_t(m)
\leq
\mathcal R_\sigma(t,m)
\leq
\mathcal R_0(t,m)+\frac{\sigma^2}{1-\kappa}\mathcal S_t(m).
\end{equation}
This is the reduction used below: once the clean-risk and cumulative-memory
scales are known, the noisy risk follows up to constants fixed by stability.

\subsection{Spectral criteria for power-law learning dynamics}
\label{subsec:noisy_43_direct_spectral_criterion}

This subsection follows the temporal components to the observed loss.
Weighted empirical tails determine forcing and one-injection memory,
cumulative memory determines the effect of persistent noise, and stable
Volterra feedback transfers these components to the observed loss.

A positive function \(L\) is {\rm slowly varying} if \(L(cT)/L(T)\to1\) for every fixed
\(c>0\).

\paragraph{Forcing and memory: spectral tails determine temporal decay.}
Forcing has a power law exactly when
the target-weighted low-spectrum mass does, while one-injection memory has a
power law exactly when the squared-spectrum mass does.  A mode of size
\(\lambda\) is learned around \(\lambda^{-1}\), so at time \(T\) both component decays
are read near the moving cutoff \(T^{-1}\).

We study \(K_{\bm W}\) because persistent noise depends on its sum,
\(S_{\bm W,t}:=\sum_{u<t}K_{\bm W}(u)\), analyzed next.

Every fixed finite spectrum eventually decays exponentially, so the
equivalences use the following common joint-limit condition.  It excludes
step-boundary artifacts and moving spectral packets without presupposing a
target power law.  Proofs are in
\cref{app:noisy_43_effective_spectral_criterion_proof,app:noisy_43_effective_memory_criterion_proof}.

\begin{assumption}[Constant-schedule stability]
\label{ass:noisy_43_constant_schedule_stability}
A constant schedule \(\eta_t\equiv\eta\), \(B_t\equiv B\) satisfies
constant-schedule stability if there are constants \(\delta>0\) and
\(\kappa<1\), uniform over the width, such that:
\begin{enumerate}[label=(\alph*)]
\item \emph{Pointwise contraction.}
\[
\eta\left(1+\frac1B\right)
\lambda_{\max}(\widehat{\bm H})\leq2-\delta.
\]

\item \emph{Row stability.}
\[
\sup_{t\geq1}\sum_{s<t}K_{\bm W}(t-1-s)
=\sum_{h\geq0}K_{\bm W}(h)\leq\kappa.
\]
\end{enumerate}
For the DE recursion, pointwise contraction is imposed uniformly over
\(\lambda\in\operatorname{supp}\mu_m^{\mathcal K}\), and row stability
replaces \(K_{\bm W}(h)\) by \(\mathcal K(h,m)\).
\end{assumption}

\begin{assumption}[Uniform spectral window]
\label{ass:noisy_43_uniform_stable_spectral_window}
Retain the constant schedule, assume part \emph{(a)} of
\cref{ass:noisy_43_constant_schedule_stability}, and consider a joint
sequence with \(m\to\infty\), \(t\to\infty\), and
\(T=\eta t\to\infty\).  For the component under consideration, let
\(A_m(x)\) denote its cumulative weighted spectral mass below \(x\).  Fix
\(q>0\), a slowly varying function \(L\), and a normalization constant
\(c_A>0\), and set
\[
a_m:=c_A T^{-q}L(T).
\]
There exist \(C<\infty\) and \(\epsilon\in(0,q)\), independent of \(m,t\),
such that eventually, for every \(y>0\),
\begin{equation}
\label{eq:noisy_43_stable_spectral_window_envelope}
\frac{A_m(y/T)}{a_m}
\leq C\max\{y^{q-\epsilon},y^{q+\epsilon}\},
\qquad y>0.
\end{equation}
\end{assumption}

\begin{theorem}[Spectral criteria for power-law forcing and memory]
\label{thm:noisy_43_effective_spectral_criterion}
\label{thm:noisy_43_effective_memory_criterion}
Retain the constant schedule \(\eta_t\equiv\eta\), \(B_t\equiv B\).  For each width, let
\(\bm\Lambda=\operatorname{diag}(\lambda_1,\ldots,\lambda_d)\), with
\(\lambda_j\geq0\) and
\((\bm\theta^\star)^{\!\top}\bm\Lambda\bm\theta^\star<\infty\), so the
covariance and finite-energy target are otherwise arbitrary.  Condition
on \(\bm W\), let
\((\widehat\lambda_j,\widehat{\bm u}_j)\) be the empirical eigenpairs in
\cref{iclr:sec:model_dynamics}, and retain the empirical measures
\(\nu_{\bm W}^{\mathcal F},\nu_{\bm W}^{\mathcal K}\) above.
\begin{enumerate}
\item[\emph{Forcing.}]
Under zero initialization, the initial residual is
\(-\bm\Lambda^{1/2}\bm\theta^\star\).  Let \(q_{\mathcal F}>0\) and let
\(L_{\mathcal F}\) be slowly varying.  Assume
\cref{ass:noisy_43_uniform_stable_spectral_window} with
\[
A_m(x)=\nu_{\bm W}^{\mathcal F}((0,x]),
\qquad q=q_{\mathcal F},
\qquad L=L_{\mathcal F},
\qquad c_A=1.
\]
Then
\[
\nu_{\bm W}^{\mathcal F}((0,T^{-1}])
\sim T^{-q_{\mathcal F}}L_{\mathcal F}(T)
\]
if and only if
\[
F_{\bm W,>0}(t)
\sim
\Gamma(q_{\mathcal F}+1)(2T)^{-q_{\mathcal F}}L_{\mathcal F}(T)
\qquad (T\to\infty).
\]

\item[\emph{Memory.}]
Let \(q_{\mathcal K}>0\) and let \(L_{\mathcal K}\) be slowly varying.
Assume \cref{ass:noisy_43_uniform_stable_spectral_window} with
\[
A_m(x)=\nu_{\bm W}^{\mathcal K}((0,x]),
\qquad q=q_{\mathcal K},
\qquad L=L_{\mathcal K},
\qquad
c_A=\frac{2^{q_{\mathcal K}}}{\Gamma(q_{\mathcal K}+1)}.
\]
Then
\[
\nu_{\bm W}^{\mathcal K}((0,T^{-1}])
\sim
\frac{2^{q_{\mathcal K}}}{\Gamma(q_{\mathcal K}+1)}T^{-q_{\mathcal K}}L_{\mathcal K}(T)
\]
if and only if
\begin{equation}
\label{eq:noisy_43_effective_memory_tauberian}
K_{\bm W}(t)
\sim
\frac{\eta^2}{B}
T^{-q_{\mathcal K}}L_{\mathcal K}(T).
\end{equation}
\end{enumerate}
Each asymptotic equivalence above is understood locally uniformly under
constant-factor rescalings of \(T\).
\end{theorem}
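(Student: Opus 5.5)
The plan is to reduce both equivalences to a uniform version of Karamata's Abelian--Tauberian theorem for Laplace--Stieltjes transforms. The reduction has two steps: first replace the discrete filter \(q_\eta(\lambda)^t\) by \(e^{-2T\lambda}\), then rescale the spectral variable as \(\lambda=y/T\). Both parts share one argument, which we run for a generic nonnegative measure \(\nu_m\) on \((0,\infty)\) with cumulative mass \(A_m(x)=\nu_m((0,x])\) satisfying \cref{ass:noisy_43_uniform_stable_spectral_window}. We then specialize to \(\nu_m=\nu_{\bm W}^{\mathcal F}\) (normalization \(c_A=1\)) and to \(\nu_m=\nu_{\bm W}^{\mathcal K}\), using \((B/\eta^2)K_{\bm W}(t)=\int q_\eta^t\,\mathrm d\nu_{\bm W}^{\mathcal K}\). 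The constant \(c_A=2^{q_{\mathcal K}}/\Gamma(q_{\mathcal K}+1)\) is chosen so that the Abelian constant \(\Gamma(q+1)c_A2^{-q}\) equals \(1\).

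\textbf{Step 1 (filter replacement).} Write \(q_\eta(\lambda)=1-\eta\lambda\,(2-(1+1/B)\eta\lambda)\). Pointwise contraction, part (a) of \cref{ass:noisy_43_constant_schedule_stability}, gives \(q_\eta(\lambda)\le 1-\delta\eta\lambda\) on the support, hence \(0\le q_\eta(\lambda)^t\le e^{-\delta T\lambda}\). This is a uniform exponential envelope in the rescaled variable: \(q_\eta(y/T)^t\le e^{-\delta y}\). For fixed \(y\), expanding the logarithm gives
\[
t\log q_\eta(y/T)=-2y+O\bigl(y^2/t\bigr),
\]
because \(\eta\lambda=y/t\). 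Hence \(q_\eta(y/T)^t\to e^{-2y}\) locally uniformly in \(y\) as \(t\to\infty\).

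\textbf{Step 2 (Abelian direction).} Assume \(A_m(x/T)/a_m\to x^{q}\) locally uniformly, where \(a_m=c_AT^{-q}L(T)\); this uses slow variation of \(L\). Integrating by parts in the rescaled variable,
\[
\frac{1}{a_m}\int q_\eta(\lambda)^t\,\nu_m(\mathrm d\lambda)
=\int_0^\infty\Bigl(-\partial_y\,q_\eta(y/T)^t\Bigr)\frac{A_m(y/T)}{a_m}\,\mathrm dy .
\]
The envelope \eqref{eq:noisy_43_stable_spectral_window_envelope} together with Step 1 supplies the domination: the derivative \(-\partial_y q_\eta(y/T)^t\) is bounded by \(Ce^{-\delta' y}\), since it equals \(t\,q^{t-1}\) times \(\eta(2-2(1+1/B)\eta\lambda)/T\), which is \(O(1)\) in magnitude. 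Note that near the top of the spectrum this derivative can change sign. For that reason we do not integrate by parts all the way out to infinity. Instead we split at \(y=Y\) and bound the piece \(\lambda\ge Y/T\) directly by \(e^{-\delta Y}\nu_m((0,\infty))/a_m\), or via the envelope dyadically. Dominated convergence then yields the limit
\[
\int_0^\infty 2e^{-2y}y^{q}\,\mathrm dy=\Gamma(q+1)2^{-q}.
\]

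\textbf{Step 3 (Tauberian direction).} Assume the temporal asymptotic holds locally uniformly under \(T\mapsto cT\). By Step 1 and the envelope, this is equivalent to
\[
\int e^{-2cy}\,\frac{\mathrm dA_m(y/T)}{a_m}\;\to\;\Gamma(q+1)(2c)^{-q}c_A^{-1}\cdot(\text{normalizing constant})
\quad\text{for each } c>0 .
\]
The measures \(\rho_m(\mathrm dy):=\mathrm dA_m(y/T)/a_m\) have uniformly bounded weighted mass by the envelope, so they are tight against \(e^{-\delta y}\). Any subsequential vague limit \(\rho\) has Laplace transform equal to that of \(y^{q}\,\mathrm dy\)-type mass \(q\,y^{q-1}\mathrm dy\), up to the normalization, at all \(c>0\). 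By uniqueness of Laplace transforms, \(\rho\) is that absolutely continuous measure. Since the limit has no atom at \(1\) (and none at \(0\), by the small-\(y\) part of the envelope, since \(q-\epsilon>0\)), \(\rho_m((0,1])\to\rho((0,1])\). This gives \(A_m(T^{-1})\sim a_m\). Applying the same argument at \(y\) replaced by \(c\) gives local uniformity.

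\textbf{Main obstacle.} The main obstacle is the joint limit. Since \(m\) varies with \(T\), there is no single measure, so the classical Karamata theorem cannot be cited directly. The envelope \eqref{eq:noisy_43_stable_spectral_window_envelope} has to substitute for monotone regular variation in two places: tightness and domination at both ends \(y\to0\) and \(y\to\infty\). It also has to control the region where \(\eta\lambda\) is not small, so that the \(q_\eta^t\) versus \(e^{-2T\lambda}\) discrepancy is negligible; there the contraction bound \(e^{-\delta T\lambda}\) times polynomial envelope growth must be shown to contribute \(o(a_m)\). I would handle this with a dyadic decomposition in \(y\) and verify that the constant \(C\) and the exponent \(\epsilon<q\) suffice.
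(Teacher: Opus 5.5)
Your proposal is correct and is essentially the paper's argument. The paper writes $q_\eta(\lambda)^t=e^{-2Th_\eta(\lambda)}$ with $h_\eta=-\log q_\eta/(2\eta)$, checks that $Th_\eta(y/T)\to y$ locally uniformly and that $h_\eta\ge\delta\lambda/2$, and then applies a uniform Laplace--Stieltjes Tauberian theorem; that theorem's proof is exactly your envelope-dominated Abelian step plus Helly selection and Laplace uniqueness for the converse. One small correction: the piece $\lambda\ge Y/T$ must be handled through the envelope (dyadically, or by integrating by parts against $e^{-\delta y}$), not by $e^{-\delta Y}\nu_m((0,\infty))/a_m$, which typically diverges for fixed $Y$ because $a_m\to0$. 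Also, local uniformity in $c$ follows from the monotonicity of $A_m$ and the continuity of $c^{q}$, as in the paper.
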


Fixed-margin stability is standard
\citep{paquette2021sgd,paquette2025homogenization}; fixed-spectrum Potter
bounds follow from \citet[Theorem~1.5.6]{bingham1989regular}.

\begin{remark}[Joint-limit uniformity]
Empirical spectral control is available
\citep{misiakiewicz2024non,defilippis2024dimension}; the simultaneous
width--time uniformity used above is assumed.
\end{remark}

\paragraph{Cumulative memory: growth or convergence.}
Summing one-injection memory creates the dividing line
\(q_{\mathcal K}=1\).  Below it, cumulative memory has asymptotic scale
\((\eta/B)T^{1-q_{\mathcal K}}L_{\mathcal K}(T)\); above it, memory
converges with a remaining tail of the same order.  At equality, a fixed
infinite spectrum has a borderline integral.
The theorem also gives a direct spectral criterion for the terminal tail
without assuming a power law for \(K_{\bm W}\).  The proof is in
\cref{app:noisy_43_aggregate_noise_criterion_proof}.

Under the same constant schedule and conditional on \(\bm W\), retain
\(S_{\bm W,t}\) above and define its total mass by
\(S_{\bm W,\infty}:=\sum_{u=0}^{\infty}K_{\bm W}(u)\).

\begin{theorem}[When cumulative memory grows or converges]
\label{thm:noisy_43_aggregate_noise_criterion}
Under the same constant schedule, condition on \(\bm W\), retain \(q_\eta\)
and \(K_{\bm W}\).
Assume part \emph{(a)} of
\cref{ass:noisy_43_constant_schedule_stability}.

\emph{Cumulative-memory asymptotics.}
Let \(q_{\mathcal K}>0\), let \(L_{\mathcal K}\) be slowly varying, and
consider a joint limit in which \(m\to\infty\), \(t\to\infty\), and
\(T=\eta t\to\infty\).
Set
\[
A_m^{\mathcal K}(x)
:=\nu_{\bm W}^{\mathcal K}((0,x]),
\qquad
a_m^{\mathcal K}
:=\frac{2^{q_{\mathcal K}}}{\Gamma(q_{\mathcal K}+1)}
T^{-q_{\mathcal K}}L_{\mathcal K}(T),
\qquad
\widetilde A_m^{\mathcal K}(y)
:=\frac{A_m^{\mathcal K}(y/T)}{a_m^{\mathcal K}}.
\]
Suppose the memory cutoff asymptotic
\begin{equation}
\label{eq:noisy_43_aggregate_memory_cutoff}
\widetilde A_m^{\mathcal K}(y)\longrightarrow y^{q_{\mathcal K}}
\end{equation}
holds locally uniformly for \(y\in(0,\infty)\).  For the branch under
consideration, also suppose that there exist \(C<\infty\) and
\(\epsilon>0\) such that, eventually, for every \(y>0\),
\begin{equation}
\label{eq:noisy_43_aggregate_memory_endpoint_envelope}
\widetilde A_m^{\mathcal K}(y)
\leq C\max\{y^{q_{\mathcal K}-\epsilon},
y^{q_{\mathcal K}+\epsilon}\},
\qquad y>0.
\end{equation}
with constants uniform along the joint limit and
\[
0<\epsilon<\min\{q_{\mathcal K},1-q_{\mathcal K}\}
\quad\text{if }0<q_{\mathcal K}<1,
\qquad
0<\epsilon<q_{\mathcal K}-1
\quad\text{if }q_{\mathcal K}>1.
\]
Then, for \(0<q_{\mathcal K}<1\),
\begin{equation}
\label{eq:noisy_43_aggregate_noise_from_kernel}
S_{\bm W,t}
\sim
\frac{\eta}{B}
\frac{1}{1-q_{\mathcal K}}
T^{1-q_{\mathcal K}}L_{\mathcal K}(T),
\end{equation}
whereas for \(q_{\mathcal K}>1\),
\begin{equation}
\label{eq:noisy_43_terminal_noise_from_kernel}
S_{\bm W,\infty}-S_{\bm W,t}
\sim
\frac{\eta}{B}
\frac{1}{q_{\mathcal K}-1}
T^{1-q_{\mathcal K}}L_{\mathcal K}(T).
\end{equation}
Under \cref{ass:noisy_43_uniform_stable_spectral_window},
\cref{eq:noisy_43_aggregate_memory_cutoff} is equivalent to
\cref{eq:noisy_43_effective_memory_tauberian}.  All these hypotheses and conclusions are locally
uniform under constant-factor rescalings of \(T\).

\emph{Direct terminal-tail criterion.}
Without assuming a power law for \(K_{\bm W}\), consider the same
joint intermediate limit as in
\cref{thm:noisy_43_effective_memory_criterion}, and let
\(q_{\mathcal S}>0\) and \(L_{\mathcal S}\) be slowly varying.  Define
\[
A_m^{\rm tail}(x)
:=
\frac{\eta}{B}
\sum_{0<\widehat\lambda_j\leq x}
\frac{\widehat\lambda_j}
{2-(1+1/B)\eta\widehat\lambda_j}.
\]
Suppose that, for some \(\epsilon\in(0,q_{\mathcal S})\) and
\(C<\infty\), it eventually satisfies, for every \(y>0\),
\begin{equation*}
\frac{A_m^{\rm tail}(y/T)}{T^{-q_{\mathcal S}}L_{\mathcal S}(T)}
\leq
C\max\{y^{q_{\mathcal S}-\epsilon},
y^{q_{\mathcal S}+\epsilon}\},
\qquad y>0.
\end{equation*}
Under this envelope, the following two locally uniform families are
equivalent.  For \(c>0\), put \(t_c:=\lfloor ct\rfloor\) and
\(T_c:=\eta t_c\), so \(T_c\sim cT\) locally uniformly in \(c\).
Then, for every compact
\(J\subset(0,\infty)\),
\[
\sup_{c\in J}
\left|
\frac{A_m^{\rm tail}(1/T_c)}{T_c^{-q_{\mathcal S}}L_{\mathcal S}(T_c)}-1
\right|\longrightarrow0
\]
if and only if
\begin{equation*}
\sup_{c\in J}
\left|
\frac{S_{\bm W,\infty}-S_{\bm W,t_c}}
{\Gamma(q_{\mathcal S}+1)(2T_c)^{-q_{\mathcal S}}L_{\mathcal S}(T_c)}-1
\right|\longrightarrow0.
\end{equation*}

\end{theorem}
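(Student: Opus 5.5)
The plan is to reduce both parts to the spectral representation of the cumulative memory. I will sum the geometric series mode by mode. Since $K_{\bm W}(u)=\frac{\eta^2}{B}\int q_\eta(\lambda)^u\,\nu_{\bm W}^{\mathcal K}(\mathrm d\lambda)$ and part (a) of \cref{ass:noisy_43_constant_schedule_stability} gives $0\le q_\eta(\lambda)\le 1-c\eta\lambda$ on the support, we have
\[
S_{\bm W,\infty}-S_{\bm W,t}=\frac{\eta^2}{B}\int_{(0,\infty)}\frac{q_\eta(\lambda)^t}{1-q_\eta(\lambda)}\,\nu_{\bm W}^{\mathcal K}(\mathrm d\lambda),
\qquad 1-q_\eta(\lambda)=\eta\lambda\bigl(2-(1+1/B)\eta\lambda\bigr).
\]
The analogous formula for $S_{\bm W,t}$ carries the factor $1-q_\eta^t$. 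The tail part then becomes $\frac{\eta}{B}\int q_\eta(\lambda)^t\,\frac{\lambda}{2-(1+1/B)\eta\lambda}\,\mu(\mathrm d\lambda)$, which is exactly $\int q_\eta(\lambda)^t\,A_m^{\rm tail}(\mathrm d\lambda)$. The direct terminal-tail criterion is therefore literally the forcing half of \cref{thm:noisy_43_effective_spectral_criterion}, applied with $A_m=A_m^{\rm tail}$, $q=q_{\mathcal S}$, $L=L_{\mathcal S}$, $c_A=1$. I will invoke that proof verbatim, including the local uniformity in $c\in J$, obtained by applying it along $t_c$. I only need to check that its argument used nothing about the measure beyond the envelope and pointwise contraction.

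For the cumulative-memory asymptotics, I will rescale $\lambda=y/T$. On the relevant scale, $q_\eta(y/T)^t\to e^{-2y}$ locally uniformly, because $t\log q_\eta(y/T)=-2y+O(\eta y^2/T)$. Also $1-q_\eta(y/T)\sim 2\eta y/T$. Writing $\nu_{\bm W}^{\mathcal K}(\mathrm d\lambda)=a_m^{\mathcal K}\,\widetilde A_m^{\mathcal K}(\mathrm dy)$, this gives
\[
S_{\bm W,t}\approx \frac{\eta^2}{B}\cdot\frac{T}{2\eta}\,a_m^{\mathcal K}\int_0^\infty\frac{1-e^{-2y}}{y}\,\widetilde A_m^{\mathcal K}(\mathrm dy).
\]
Integrating by parts, $\int\frac{1-e^{-2y}}{y}\,\mathrm d(y^{q})=q\int y^{q-2}(1-e^{-2y})\,\mathrm dy=q\,2^{1-q}\Gamma(q-1)\cdot(-1)$, with the sign arranged appropriately. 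This evaluates to $2^{1-q}\Gamma(q+1)/(1-q)$ after simplification. Multiplying by $a_m^{\mathcal K}=\frac{2^{q}}{\Gamma(q+1)}T^{-q}L$ yields $\frac{\eta}{B}\frac{1}{1-q}T^{1-q}L$, as claimed. The tail case $q>1$ is identical with kernel $e^{-2y}/y$, giving $2^{1-q}\Gamma(q-1)$ and hence the constant $1/(q-1)$.

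Turning locally uniform convergence of $\widetilde A_m^{\mathcal K}$ into convergence of these integrals is the main obstacle. The test functions $(1-e^{-2y})/y$ and $e^{-2y}/y$ are singular or slowly decaying at the endpoints. I will split $(0,\infty)$ into $(0,\delta)$, $[\delta,M]$, and $(M,\infty)$. On $[\delta,M]$, vague convergence of the measures $\widetilde A_m^{\mathcal K}(\mathrm dy)\to\mathrm d(y^{q})$ and uniform convergence of the discrete filter give the limit.

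The endpoints are controlled by \cref{eq:noisy_43_aggregate_memory_endpoint_envelope} via integration by parts against the envelope. Near $0$, I need $\int_0^\delta y^{-1}\,\mathrm d\widetilde A\lesssim\delta^{q-\epsilon}$ in the case $q<1$, which is fine since $q-\epsilon>0$. For the tail case near $0$, I need $\int_0^\delta y^{-1}\,\mathrm d\widetilde A\lesssim\delta^{q-\epsilon-1}$ integrability, which is where $\epsilon<q-1$ is used. Near $\infty$, the case $q<1$ requires $\int_M^\infty y^{-1}\,\mathrm d\widetilde A\lesssim M^{q+\epsilon-1}\to0$, which uses $\epsilon<1-q$. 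The tail case near $\infty$ is killed by $e^{-2y}$.

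The large-$\lambda$ region, beyond $y\sim T$, needs separate care. There the filter is not close to $e^{-2y}$, but pointwise contraction gives $q_\eta^t\le e^{-c\eta\lambda t}$ and $1-q_\eta\ge c\eta\lambda$. Since $\widetilde A$ is normalized, the contribution of $\lambda$ of order one is at most $\nu^{\mathcal K}$ mass over $\lambda$ times a bounded factor. For $S_{\bm W,t}$ with $q<1$, this is $O(\eta\,\mathrm{tr})$. That is $o(T^{1-q}L)$ only if the envelope with exponent $q+\epsilon<1$ controls it, which it does after rewriting in $y$.

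Finally, the claimed equivalence of \cref{eq:noisy_43_aggregate_memory_cutoff} with \cref{eq:noisy_43_effective_memory_tauberian} under \cref{ass:noisy_43_uniform_stable_spectral_window} is just the memory half of \cref{thm:noisy_43_effective_spectral_criterion}, restated with locally uniform rescaling. I will cite it directly.
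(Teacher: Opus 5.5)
Your route is the paper's route. Geometric summation gives the exact modal identities. The rescaling \(\lambda=y/T\) turns \(S_{\bm W,t}\) and \(S_{\bm W,\infty}-S_{\bm W,t}\) into integrals of \(\frac{1-q_\eta(y/T)^t}{y[2-(1+1/B)y/t]}\) and \(\frac{q_\eta(y/T)^t}{y[2-(1+1/B)y/t]}\) against \(\mathrm d\widetilde A_m^{\mathcal K}\). These filters converge on compacts to \((1-e^{-2y})/(2y)\) and \(e^{-2y}/(2y)\), and the envelope controls both endpoints. Finally, the direct tail criterion is the uniform Tauberian theorem applied to \(A_m^{\rm tail}\) with the same filter \(h_\eta\). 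One estimate is wrong as written, and two smaller points need attention.

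\emph{The near-zero estimate for \(0<q_{\mathcal K}<1\) is misstated.} The bound \(\int_{(0,a]}y^{-1}\,\mathrm d\widetilde A_m^{\mathcal K}\lesssim a^{q_{\mathcal K}-\epsilon}\) is false. Against the limit \(\mathrm d(y^{q_{\mathcal K}})\) this integral diverges, since \(q_{\mathcal K}-2<-1\), and it is not uniformly bounded along the sequence. The tool you list, \(1-q_\eta\geq\delta\eta\lambda\), only gives the rescaled filter bound \(1/(\delta y)\), which is useless at zero. You need the additional inequality \(1-q_\eta^t\leq t(1-q_\eta)\), i.e.\ \(\frac{1-q_\eta^t}{1-q_\eta}\leq t\). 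Because \(\eta t=T\), this makes the rescaled accumulation filter at most \(1\). The near-zero piece is then bounded by \(\widetilde A_m^{\mathcal K}(a)\lesssim a^{q_{\mathcal K}-\epsilon}\), which is what you wanted. The global bound \(\min\{1,1/(\delta y)\}\) together with the envelope, which holds for every \(y>0\), also makes the dyadic sum at infinity cover all \(y>M\), including \(\lambda\) of order one. So your separate large-\(\lambda\) discussion is unnecessary.

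\emph{A dropped factor in the tail case.} In the case \(q_{\mathcal K}>1\), \(\int_0^\infty y^{-1}e^{-2y}\,\mathrm d(y^{q_{\mathcal K}})=q_{\mathcal K}2^{1-q_{\mathcal K}}\Gamma(q_{\mathcal K}-1)\), not \(2^{1-q_{\mathcal K}}\Gamma(q_{\mathcal K}-1)\). The factor \(q_{\mathcal K}\) is what turns \(\Gamma(q_{\mathcal K}-1)/\Gamma(q_{\mathcal K}+1)\) into \(1/(q_{\mathcal K}-1)\). Your final constant is right, but the intermediate value does not produce it.

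\emph{Constant-factor uniformity for the cumulative-memory conclusions is not addressed.} The statement also asserts local uniformity under \(T\mapsto cT\) for \eqref{eq:noisy_43_aggregate_noise_from_kernel} and \eqref{eq:noisy_43_terminal_noise_from_kernel}; you treat this only for the direct tail criterion. It follows by redoing the rescaling with \(t_c=\lfloor ct\rfloor\) and \(y=T_c\lambda\). Use \(T_c/T\to c\) and \(a^{\mathcal K}_{m,c}/a^{\mathcal K}_m\to c^{-q_{\mathcal K}}\) uniformly for \(c\) in compacts, by the uniform convergence theorem for \(L_{\mathcal K}\). This transfers both the cutoff limit and the envelope, with constants depending only on the compact set.
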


\begin{remark}[Exact finite-width accumulation]
The finite-width identities underlying the theorem are
\begin{equation}
S_{\bm W,t}=\frac{\eta}{B}\sum_{\widehat\lambda_j>0}
\frac{\widehat\lambda_j[1-q_\eta(\widehat\lambda_j)^t]}
{2-(1+1/B)\eta\widehat\lambda_j},
\qquad
S_{\bm W,\infty}-S_{\bm W,t}=\frac{\eta}{B}
\sum_{\widehat\lambda_j>0}
\frac{\widehat\lambda_jq_\eta(\widehat\lambda_j)^t}
{2-(1+1/B)\eta\widehat\lambda_j}.
\label{eq:noisy_43_terminal_noise_exact}
\end{equation}
\end{remark}

For a fixed infinite-spectrum kernel that satisfies
\(K(t)\sim(\eta^2/B)T^{-1}L_{\mathcal K}(T)\) at all sufficiently large
times, ordinary one-sequence summation gives
\((\eta/B)\int_1^T L_{\mathcal K}(u)\,\mathrm du/u\) when this integral
diverges, or the remaining tail
\((\eta/B)\int_T^\infty L_{\mathcal K}(u)\,\mathrm du/u\) when it converges.
For the DE kernel, retain the squared-spectrum measure in
\cref{eq:noisy_43_de_weighted_memory_measure} and set
\(A_{m,{\rm DE}}^{\mathcal K}(x):=\nu_m^{\mathcal K}((0,x])\).
The same accumulation and remaining-tail conclusions hold when this
cumulative measure satisfies the corresponding cutoff and endpoint-envelope
hypotheses and the DE stability margin holds.  The direct terminal-tail
criterion instead uses
\[
A_{m,{\rm DE}}^{\rm tail}(x)
:=\frac{\eta}{B}\int_{(0,x]}
\frac{\lambda}{2-(1+1/B)\eta\lambda}
\,\mu_m^{\mathcal K}(\mathrm d\lambda).
\]
The same DE proofs apply under the stated cutoff, endpoint-envelope, and
stability hypotheses.  The next result transfers these forcing and memory asymptotics to loss.

\paragraph{From components to the observed loss.}
Under either transfer condition, stable feedback introduces no scale beyond
the displayed forcing, cumulative-memory, and renewal terms.  The clean loss
follows the floor and forcing, while the noisy loss also follows cumulative
memory; renewal feedback adds one explicit kernel-sized term.  At a terminal
plateau, the same components govern centered convergence when the terminal
tail remains uniformly controlled under renewal and the two positive
post-feedback responses remain quantitatively noncancelling.  The theorem
covers both the exact conditional recursion and its deterministic equivalent.

The exact conditional loss and its deterministic equivalent obey the same
positive Volterra system, so we state their common transfer result once.  Use the
following neutral notation for the exact quantities in
\cref{iclr:eq:exact_volterra} and the DE quantities in
\cref{eq:noisy_43_forcing_definition,eq:noisy_43_kernel_definition,eq:noisy_43_clean_risk_volterra}:
\[
\begin{aligned}
(\mathsf F,\mathsf K;\mathsf R_\sigma,\mathsf R_0)
&=(F_{\bm W},K_{\bm W};R_{\sigma,\cdot},R_{0,\cdot}),
&&\text{exact conditional},\\
&=(\mathcal F,\mathcal K;\mathcal R_\sigma,\mathcal R_0),
&&\text{deterministic equivalent}.
\end{aligned}
\]
Suppress the explicit \(\bm W\)-conditioning and use \(m\) as the common
width index.  In either case, write
\[
\mathsf F(t,m)=\mathsf F_0(m)+\mathsf F_{>0}(t,m),
\qquad
\mathsf S_t(m):=\sum_{u=0}^{t-1}\mathsf K(u,m),
\qquad
\kappa_m:=\sum_{t\geq0}\mathsf K(t,m).
\]
Whenever the fixed-width terminal limit exists and \(\kappa_m<1\), denote it
by \(P_m:=\lim_{t\to\infty}\mathsf R_\sigma(t,m)\).  For the centered
response, also set
\[
\overline{\mathsf K}_m(t)
:=\kappa_m-\mathsf S_t(m),
\qquad
\mathsf G_m g
:=g+\sum_{n\geq1}\mathsf K^{(*n)}*g.
\]

\begin{assumption}[Long-tail conditions for transferring component laws to risk]
\label{ass:noisy_43_stable_feedback_transfer}
Assume part \emph{(b)} of
\cref{ass:noisy_43_constant_schedule_stability}.  Uniformly along the common
intermediate window, both \(\mathsf F_{>0}\) and \(\mathsf K\) are
long-tailed: each is asymptotically unchanged by any fixed time shift.
\end{assumption}

\begin{theorem}[How stable feedback transfers components to observed loss]
\label{thm:noisy_43_aggregate_volterra_transfer}
Under the same constant schedule, fix either of the two preceding rows and
assume \cref{ass:noisy_43_stable_feedback_transfer}.  The transfer has two
forms:
\begin{enumerate}[label=(\roman*)]
\item \emph{Forcing preservation.} The leading loss scales are
\begin{equation}
\label{eq:noisy_43_aggregate_loss_transfer}
\mathsf R_0(t,m)\asymp\mathsf F_0(m)+\mathsf F_{>0}(t,m),
\qquad
\mathsf R_\sigma(t,m)\asymp
\mathsf F_0(m)+\mathsf F_{>0}(t,m)+\sigma^2\mathsf S_t(m).
\end{equation}
These comparisons hold provided
\[
\left[\sum_{n\geq1}\mathsf K^{(*n)}(t,m)\right]
\sum_{s=0}^{t}\mathsf F_{>0}(s,m)
=o\!\left(\mathsf F_{>0}(t,m)\right)
\]
uniformly in the window.

\item \emph{Renewal feedback.}
The same comparisons hold after adding
\[
\left[\sum_{s\geq0}\mathsf F_{>0}(s,m)\right]\mathsf K(t,m)
\]
to both right-hand sides.  This conclusion holds when
\(\sum_{t\geq0}\mathsf F_{>0}(t,m)<\infty\) and the standard uniform
subexponential relations
\[
(\mathsf K*\mathsf K)(t,m)\asymp\mathsf K(t,m),
\qquad
(\mathsf K*\mathsf F_{>0})(t,m)\asymp
\|\mathsf K\|_{\ell^1}\mathsf F_{>0}(t,m)
+\left[\sum_{s\geq0}\mathsf F_{>0}(s,m)\right]\mathsf K(t,m)
\]
hold with summable domination over the convolution powers.
\end{enumerate}

If \(\mathsf F_{>0}(t,m)\to0\), the fixed-width terminal plateau exists and
satisfies
\begin{equation}
\label{eq:noisy_43_terminal_risk_exact}
P_m
=
\frac{\mathsf F_0(m)+\sigma^2\kappa_m}{1-\kappa_m}.
\end{equation}
For the centered response, the positive defective resolvent gives the exact
identity
\begin{equation}
\label{eq:noisy_43_centered_response_identity}
\mathsf R_\sigma(t,m)-P_m
=
(\mathsf G_m\mathsf F_{>0})(t)
-(P_m+\sigma^2)(\mathsf G_m\overline{\mathsf K}_m)(t).
\end{equation}
To reduce this identity to the original forcing and kernel tail, assume,
uniformly across the joint window, the terminal-tail renewal domination
\begin{equation}
\label{eq:noisy_43_terminal_tail_renewal_domination}
\sum_{n\geq1}
(\mathsf K^{(*n)}*\overline{\mathsf K}_m)(t)
\leq C_{\rm tail}\overline{\mathsf K}_m(t)
\end{equation}
and the quantitative noncancellation condition
\begin{equation}
\label{eq:noisy_43_centered_noncancellation}
\left|\mathsf R_\sigma(t,m)-P_m\right|
\geq c_{\rm nc}\left[
(\mathsf G_m\mathsf F_{>0})(t)
+(P_m+\sigma^2)(\mathsf G_m\overline{\mathsf K}_m)(t)
\right],
\qquad c_{\rm nc}>0.
\end{equation}
Here \(C_{\rm tail}<\infty\) and \(c_{\rm nc}>0\) are independent of
\(m,t\) throughout that window.
Then, in case \emph{(i)},
\begin{equation}
\label{eq:noisy_43_aggregate_centered_transfer}
\left|\mathsf R_\sigma(t,m)-P_m\right|
\asymp
\mathsf F_{>0}(t,m)
+\left(P_m+\sigma^2\right)
\left[\kappa_m-\mathsf S_t(m)\right].
\end{equation}
In case \emph{(ii)}, the right-hand side additionally contains
\(\left[\sum_{s\geq0}\mathsf F_{>0}(s,m)\right]\mathsf K(t,m)\).
\end{theorem}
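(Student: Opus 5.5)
The natural route is to solve the positive Volterra system by its resolvent and read every claim off nonnegativity. Write \(\mathsf G g:=g+\sum_{n\geq1}\mathsf K^{(*n)}*g\). Both the exact recursion \cref{iclr:eq:exact_volterra} and the DE recursion \cref{eq:noisy_43_clean_risk_volterra} give \(\mathsf R_\sigma=\mathsf G(\mathsf F+\sigma^2\mathsf S)\) and \(\mathsf R_0=\mathsf G\mathsf F\). Row stability, part \emph{(b)} of \cref{ass:noisy_43_constant_schedule_stability}, gives \(\|\mathsf K^{(*n)}\|_{\ell^1}\leq\kappa^n\). Hence \(\mathsf G\) is a positive operator of total mass at most \((1-\kappa)^{-1}\).

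\textbf{Lower and upper bounds (i).} The lower bounds in \cref{eq:noisy_43_aggregate_loss_transfer} are immediate from positivity: \(\mathsf G g\geq g\). For the upper bounds I would split the input into three pieces. First, the constant \(\mathsf F_0\) is handled by \(\mathsf K^{(*n)}*1\leq\kappa^n\). Second, the cumulative memory is handled the same way: \(\mathsf S_t\) is nondecreasing in \(t\), so \((\mathsf K^{(*n)}*\mathsf S)(t)\leq\kappa^n\mathsf S_t\). Both pieces are therefore amplified by at most \((1-\kappa)^{-1}\). This also recovers \cref{eq:noisy_43_scaling_reduction} for the exact system. Third, for \(\mathsf F_{>0}\), note that \(0\leq q_\eta\leq1\) on the support under pointwise contraction, so \(\mathsf F_{>0}\) is nonincreasing. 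I would then split \(\sum_{n\geq1}\mathsf K^{(*n)}*\mathsf F_{>0}\) at \(s=t/2\). On the recent range, monotonicity together with the long-tail property (\cref{ass:noisy_43_stable_feedback_transfer}) bounds the contribution by \(\kappa(1-\kappa)^{-1}\mathsf F_{>0}(t)\). On the old range, the resolvent tail is evaluated at lags of order \(t\). I would compare it to \(\bigl[\sum_n\mathsf K^{(*n)}(t)\bigr]\sum_{s\leq t}\mathsf F_{>0}(s)\), which is \(o(\mathsf F_{>0}(t))\) by the stated hypothesis. \textbf{This comparison is the step I expect to be the main obstacle.} It needs uniform long-tailedness of the convolution powers over the window, so that \(\mathsf K^{(*n)}(t-1-s)\lesssim\mathsf K^{(*n)}(t)\) for \(s\leq t/2\) with summable domination in \(n\). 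I would try to extract that domination from Kesten-type bounds for the subexponential kernel.

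\textbf{Renewal case (ii).} Here \(\sum_s\mathsf F_{>0}(s)<\infty\), and I would use the assumed subexponential relations directly. Convolving once yields \(\|\mathsf K\|_{\ell^1}\mathsf F_{>0}(t)\) plus the renewal term \(\bigl[\sum_s\mathsf F_{>0}(s)\bigr]\mathsf K(t)\). Iterating with \((\mathsf K*\mathsf K)\asymp\mathsf K\) and the summable domination over \(n\) keeps every power inside these same two shapes. This gives the two-sided comparison with the extra term added.

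\textbf{Plateau and centered response.} For the plateau, \(\mathsf F_{>0}\to0\) and \(\mathsf S_t\to\kappa_m\). A final-value (Abelian) argument on the generating-function identity \(\widetilde{\mathsf R}_\sigma=(\widetilde{\mathsf F}+\sigma^2w\widetilde{\mathsf K}/(1-w))/(1-w\widetilde{\mathsf K})\) as \(w\uparrow1\) then gives \cref{eq:noisy_43_terminal_risk_exact}. For the centered identity \cref{eq:noisy_43_centered_response_identity}, the key algebraic fact is \(\mathsf K*1=\mathsf S\). Hence \(\mathsf G(1-\mathsf S)=1\), that is, \(\mathsf G1=1+\mathsf G\mathsf S\). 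Substituting \(\mathsf S=\kappa_m-\overline{\mathsf K}_m\) into \(\mathsf G(\mathsf F_0+\sigma^2\mathsf S)\) expresses it as \(P_m\) minus \((P_m+\sigma^2)\mathsf G\overline{\mathsf K}_m\), using \(P_m(1-\kappa_m)=\mathsf F_0+\sigma^2\kappa_m\). I would verify this substitution by direct algebra. Finally, \cref{eq:noisy_43_aggregate_centered_transfer} follows from three facts. The noncancellation hypothesis \cref{eq:noisy_43_centered_noncancellation} gives the lower bound. The tail domination \cref{eq:noisy_43_terminal_tail_renewal_domination} gives \(\mathsf G\overline{\mathsf K}_m\asymp\overline{\mathsf K}_m=\kappa_m-\mathsf S_t\). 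The forcing part \(\mathsf G\mathsf F_{>0}\) is reduced exactly as in case (i) or (ii).
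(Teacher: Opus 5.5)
Your architecture is the paper's. It uses positivity of $\mathsf G$ for the lower bounds and geometric control of the constant $\mathsf F_0$ and of the nondecreasing $\mathsf S$ for the upper bounds. It then uses a time split for $\mathsf G\mathsf F_{>0}$ in case (i), iterated subexponential relations with summable domination in case (ii), and the defective-resolvent identity with tail domination and noncancellation for the centered response. Your derivation of the identity from $\mathsf G(1-\mathsf S)=1$ is an algebraic rearrangement of the paper's subtraction of the finite-time equation from $P_m=\mathsf F_0+\kappa_mP_m+\sigma^2\kappa_m$. It is also the better order of argument. With $P_m$ defined by the formula, the identity itself proves that the plateau exists: $\mathsf K$ has mass $\kappa_m<1$ and $\mathsf F_{>0},\overline{\mathsf K}_m\to0$, which forces $\mathsf G\mathsf F_{>0}\to0$ and $\mathsf G\overline{\mathsf K}_m\to0$. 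Your Abelian final-value argument only identifies the value of a limit already known to exist, so it should come after the identity, not before.

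\emph{The gap: the recent range in case (i).} For $t/2\leq s<t$, monotonicity gives $\mathsf F_{>0}(s)\geq\mathsf F_{>0}(t)$, which is the wrong direction. Long-tailedness controls only lags $t-1-s\leq h$ with $h$ fixed. Nothing you cite bounds the lags between $h$ and $t/2$. What you need is a doubling bound $\mathsf F_{>0}(t/2)\lesssim\mathsf F_{>0}(t)$. It holds for the regularly varying PLRF forcing but is not implied by the listed hypotheses, and those hypotheses alone do not yield case (i). Take $\mathsf F_0=0$, $\mathsf F_{>0}(t)=e^{-\lambda_mt}$, $\mathsf K(u)=\kappa(1-e^{-\mu_m})e^{-\mu_mu}$ and $\lambda_m=(1-\kappa)(1-\epsilon_m)\mu_m$, with $\mu_m=o(\epsilon_m)$ and $\epsilon_m\to0$. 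Then both sequences are uniformly long-tailed and row stability holds. The case-(i) condition holds at all times with $\mu_m\epsilon_mt\to\infty$. Yet $\mathsf R_0(t)/\mathsf F_{>0}(t)=1+\kappa(1+o(1))/((1-\kappa)\epsilon_m)$, which is unbounded.

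\emph{The old range.} The obstacle you flagged there is misdiagnosed in the same way. Comparing lags in $[t/2,t]$ with lag $t$ is dominated variation, not long-tailedness. Kesten's bound does not supply it: it compares $\mathsf K^{(*n)}(u)$ with $\mathsf K(u)$ at the same $u$, and it requires subexponentiality of $\mathsf K/\kappa_m$, which is not assumed. It is also unnecessary. Treat $\sum_{n\geq1}\mathsf K^{(*n)}$ as a single nonnegative kernel of mass at most $\kappa/(1-\kappa)$. Bound the old range by $\max_{\lfloor t/2\rfloor\leq u<t}\bigl[\sum_n\mathsf K^{(*n)}(u)\sum_{s\leq u}\mathsf F_{>0}(s)\bigr]$. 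Apply the case-(i) condition at that $u$, assuming the window contains it, and finish with the same doubling bound.

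The paper's three-piece sketch (bounded lags, middle, far endpoint) relies tacitly on this comparability in its middle and far-endpoint pieces. The honest repair of either argument is to add doubling of $\mathsf F_{>0}$ on the window as an explicit hypothesis.
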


The transfer theorem is generic; we now evaluate its spectral inputs in PLRF.

\paragraph{PLRF exponents.}
For PLRF, the abstract spectral criteria become two explicit coordinates.
With intrinsic time \(T=\eta t\), finite target energy gives
\(q_{\mathcal F}=(2\alpha+2\beta-1)/(2\alpha)\); on the LM and IM branches
\(\alpha>1/4\), one-injection memory has
\(q_{\mathcal K}=2-1/(2\alpha)\).  Target alignment therefore affects
forcing, whereas memory depends only on spectral decay.  The corollary records
the population-filter constants and admissible parameter range before we
explain the cutoff calculation.

\begin{corollary}[PLRF forcing and memory exponents]
\label{cor:noisy_43_direct_plrf_43_exponents}
Consider the PLRF specialization \(\lambda_i=i^{-2\alpha}\) and
\(\theta_i^\star=i^{-\beta}\).  If \(2\alpha+2\beta>1\), then the forcing
exponent is \(q_{\mathcal F}=(2\alpha+2\beta-1)/(2\alpha)\).
As \(T\to\infty\), the population target filter satisfies
\begin{equation*}
\sum_i\lambda_i(\theta_i^\star)^2e^{-2T\lambda_i}
\sim
\frac{\Gamma\!\left(1+\frac{2\alpha+2\beta-1}{2\alpha}\right)}
{2\alpha+2\beta-1}
(2T)^{-\frac{2\alpha+2\beta-1}{2\alpha}}.
\end{equation*}
If additionally \(\alpha>1/4\), then the memory exponent is
\(q_{\mathcal K}=2-1/(2\alpha)\).
As \(T\to\infty\), the population memory filter satisfies
\begin{equation*}
\sum_i\lambda_i^2e^{-2T\lambda_i}
\sim
\frac{\Gamma\!\left(3-\frac1{2\alpha}\right)}{4\alpha-1}
(2T)^{-2+1/(2\alpha)}.
\end{equation*}
\end{corollary}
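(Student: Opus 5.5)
The plan is to treat both filters as Laplace transforms of a purely atomic measure on \((0,1]\) and to apply the classical Abelian direction of Karamata's Tauberian theorem. Equivalently, one can run the one-variable special case of \cref{thm:noisy_43_effective_spectral_criterion}, with a fixed population spectrum and no width dependence. Define the two population measures
\[
\nu^{\mathcal F}:=\sum_i\lambda_i(\theta_i^\star)^2\delta_{\lambda_i},
\qquad
\nu^{\mathcal K}:=\sum_i\lambda_i^2\delta_{\lambda_i},
\]
so that the two sums in the corollary equal \(\int e^{-2T\lambda}\,\nu(\mathrm d\lambda)\). Both measures are finite: \(\sum_i i^{-2\alpha-2\beta}<\infty\) because \(2\alpha+2\beta>1\), and \(\sum_i i^{-4\alpha}<\infty\) because \(\alpha>1/4\). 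These are exactly the two hypotheses of the corollary.

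The first step is the cumulative low-spectrum mass. Since \(\lambda_i\le x\) iff \(i\ge x^{-1/(2\alpha)}\), we get
\[
\nu^{\mathcal F}((0,x])=\sum_{i\ge x^{-1/(2\alpha)}} i^{-2\alpha-2\beta}.
\]
An integral comparison for a monotone summand then gives
\[
\nu^{\mathcal F}((0,x])\sim\frac{x^{p/(2\alpha)}}{p},\qquad p=2\alpha+2\beta-1,\quad x\downarrow0,
\]
with relative error \(O(x^{1/(2\alpha)})\) from the boundary term. In the same way,
\[
\nu^{\mathcal K}((0,x])=\sum_{i\ge x^{-1/(2\alpha)}}i^{-4\alpha}\sim\frac{x^{(4\alpha-1)/(2\alpha)}}{4\alpha-1}.
\]
This identifies the exponents: \(q_{\mathcal F}=p/(2\alpha)\), and \(q_{\mathcal K}=(4\alpha-1)/(2\alpha)=2-1/(2\alpha)\).

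The second step converts these masses into the filter asymptotics. Write
\[
\int e^{-2T\lambda}\,\nu(\mathrm d\lambda)=2T\int_0^\infty e^{-2Tx}\,\nu((0,x])\,\mathrm dx
\]
and substitute \(x=y/(2T)\). Whenever \(\nu((0,x])\sim c\,x^{q}\), dominated convergence yields \(c\,\Gamma(q+1)(2T)^{-q}\). The domination holds because \(\nu((0,x])\le C x^{q}\) for all small \(x\), and \(\nu((0,x])\le \nu((0,1])<\infty\) for every \(x\). The contribution from \(x\ge\delta\) is \(O(e^{-2T\delta})\), which is negligible. The constants then come out directly:
\begin{itemize}
\item forcing: \(\Gamma(1+p/(2\alpha))/p\cdot(2T)^{-p/(2\alpha)}\), which is the displayed formula;
\item memory: \(\Gamma(1+q_{\mathcal K})/(4\alpha-1)\), and \(1+q_{\mathcal K}=3-1/(2\alpha)\), which again matches.
\end{itemize}

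The only delicate point is the uniform Potter-type envelope needed for dominated convergence, namely \(\nu((0,y/2T])/(2T)^{-q}\le C\max\{y^{q-\epsilon},y^{q+\epsilon}\}\). For these explicit sums it follows from the exact tail bound \(\nu((0,x])\le C\min\{x^{q},1\}\). This bound comes from comparing the tail sum with \(\int_{n-1}^\infty\), where \(n=\lceil x^{-1/(2\alpha)}\rceil\), and it also handles the \(T\)-independent range where \(x\) is of order 1. No joint width limit is involved, so the corollary is just the fixed-spectrum instance of \cref{thm:noisy_43_effective_spectral_criterion}.
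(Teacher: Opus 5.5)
Your proposal is correct and follows essentially the paper's route: direct tail summation gives the cumulative low-spectrum masses \(\nu((0,x])\sim c\,x^{q}\), and the Abelian direction of \cref{thm:noisy_43_uniform_tauberian} transfers them to the Laplace filters, in its fixed-spectrum form with filter \(h(\lambda)=\lambda\). Your global bound \(\nu((0,x])\le Cx^{q}\) for all \(x>0\) already gives the integrable envelope \(Cy^{q}e^{-y}\), so the constants \(\Gamma(q_{\mathcal F}+1)/p\) and \(\Gamma(q_{\mathcal K}+1)/(4\alpha-1)\) follow by dominated convergence as you state.
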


These exponents come from the moving spectral cutoff.  At intrinsic time
\(T=\eta t\), the critical modes satisfy
\(q_\eta(\lambda)^t=e^{-2T\lambda}(1+o(1))\).
Consequently, in the common intermediate window,
\[
\mathcal F_{pp}(t,m)\asymp
\sum_i\lambda_i(\theta_i^\star)^2e^{-2T\lambda_i},
\qquad
\mathcal K_{pp}(t,m)\asymp
\frac{\eta^2}{B}\sum_i\lambda_i^2e^{-2T\lambda_i}.
\]
These are respectively unlearned target error and one surviving variance
injection.  For joint schedules, use elapsed intrinsic time and the injection
weight in
\cref{eq:td_lrs_joint_de_kernel_law}.

Since only modes with \(\lambda_iT\lesssim1\) remain, the filters reduce to
\[
\sum_{0<\lambda_i\lesssim T^{-1}}\lambda_i(\theta_i^\star)^2,
\qquad
\sum_{0<\lambda_i\lesssim T^{-1}}\lambda_i^2.
\]
Direct summation gives the displayed exponents; see
\cref{app:noisy_43_effective_spectral_criterion_proof,app:noisy_43_effective_memory_criterion_proof}.

Two boundaries delimit these formulas.  As
\(2\alpha+2\beta\downarrow1\), \(q_{\mathcal F}\downarrow0\).  At
\(\alpha=1/4\), the LM/IM memory formula gives way to FB.

For \(\alpha<1/4\),
\cref{lem:noisy_43_finite_bulk_spectral_event,lem:td_lrs_fb_bulk_kernel}
give the empirical width-scale band and its joint-schedule kernel.  The
resulting exact conditional noisy--clean gap is closed in
\cref{cor:td_lrs_exact_finite_bulk_gap}, without a target-forcing comparison.
Separately,
\cref{lem:noisy_43_de_finite_bulk_band_mass,prop:td_lrs_joint_de_finite_bulk_kernel}
give the DE kernel and gap; explicit DE total-risk asymptotics retain the forcing
range in \cref{lem:td_lrs_de_forcing_filters}.  For \(\alpha>1/4\),
\cref{lem:td_lrs_de_spectral_filters} proves LM/IM DE kernel transfer, with
that source restriction only for the explicit PLRF forcing profile.

\endgroup

\section{Proofs of the spectral power-law criteria}
\label{app:spectral_renewal_proofs}

This section supplies the proof path from frozen spectral structure to the
observed learning curve.  \Cref{app:noisy_43_positive_de_measure_proof}
first proves that the deterministic-equivalent resolvent defines positive
spectral measures.  \Cref{app:noisy_43_uniform_tauberian} then shows why
weighted spectral mass near zero is equivalent to long-time decay.
\Cref{app:noisy_43_component_criteria_proofs} applies this principle to the
learnable target error and the survival of one noise injection.
\Cref{app:noisy_43_aggregate_noise_criterion_proof} sums these
injections over training, and
\cref{app:noisy_43_aggregate_volterra_transfer_proof} shows how stable
recurrence carries the component rates to the observed loss.  Finite-bulk
results, whose spectral scale depends explicitly on width, are kept with the
corresponding exact finite-width and deterministic-equivalent risk results in
\cref{app:noisy_43_finite_bulk_empirical,subsec:app_td_lrs_joint_schedules}.

\subsection{Positive spectral representation of the deterministic equivalent}
\label{app:noisy_43_positive_de_measure_proof}

\begin{proof}[Proof of \cref{lem:noisy_43_de_positive_spectral_measure}]
The proof has two main steps: a Herglotz representation gives positivity and
support, while the expansion at infinity gives the required moments.  For
\(1\leq j\leq d\), set
\(r_{j,m}(z):=1/(\lambda_j\mathfrak m(z)-z)\).
The selected branch places this function in the positive Herglotz class.  By
\citet[Proposition~E.1]{paquette20244+},
\(\operatorname{Im}\mathfrak m(z)<0\) on the upper half-plane, hence
\(\operatorname{Im}(\lambda_j\mathfrak m(z)-z)<0\) and
\(\operatorname{Im}r_{j,m}(z)>0\).
Together with \(r_{j,m}(z)=-z^{-1}+O(|z|^{-2})\), the
Herglotz--Stieltjes representation gives a unique probability measure
\(\mu_{j,m}\) with
\[
r_{j,m}(z)
=
\int_{\mathbb{R}}\frac{\mu_{j,m}(\mathrm d\lambda)}{\lambda-z}.
\]

We next verify that this measure is supported on the nonnegative real axis and
has compact support.  For \(z=-s<0\), the fixed-point equation is equivalent to
\[
1
=
u+\frac1m\sum_{j=1}^d
\frac{\lambda_j u}{\lambda_j u+s}.
\]
The right-hand side increases strictly from zero past one on \((0,1)\), so it
has a unique solution \(u=\mathfrak m(-s)\in(0,1)\).  Its positive
\(u\)-derivative and the implicit-function theorem continue the branch across
the negative axis, excluding mass on \((-\infty,0)\); the far-field estimate
of \citet[Proposition~E.2]{paquette20244+} gives compact support.

Collecting the scalar measures now gives
\(\boldsymbol\mu_m(A):=\operatorname{diag}
\bigl(\mu_{1,m}(A),\ldots,\mu_{d,m}(A)\bigr)\).
This proves \cref{eq:noisy_43_de_matrix_measure} and positivity.  The
expansion at infinity,
\(\mathfrak m(z)=1+\operatorname{tr}(\bm\Lambda)/(mz)+O(|z|^{-2})\),
identifies the stated moment identities; in particular,
\[
r_{j,m}(z)
=
-\frac1z-\frac{\lambda_j}{z^2}
-\frac{\lambda_j^2+\lambda_j\operatorname{tr}(\bm\Lambda)/m}{z^3}
+O(|z|^{-4}).
\]
Comparing coefficients with the Stieltjes expansion gives
\[
\mu_{j,m}([0,\infty))=1,
\qquad
\int\lambda\,\mu_{j,m}(\mathrm d\lambda)=\lambda_j,
\qquad
\int\lambda^2\,\mu_{j,m}(\mathrm d\lambda)
=\lambda_j^2+\frac{\operatorname{tr}\bm\Lambda}{m}\lambda_j.
\]
This proves \cref{eq:noisy_43_de_matrix_measure_moments} and
\cref{eq:noisy_43_de_scalar_measure_moments}.  It remains to identify the zero
atom and the contour calculus.  The near-zero estimate of
\citet[Appendix~E]{paquette20244+} makes zero an isolated at-most-simple pole,
so
\[
-\operatorname*{Res}_{z=0}
\left\langle
\bm M_m(z)\bm\Lambda^{1/2}\bm\theta^\star,
\bm\Lambda^{1/2}\bm\theta^\star
\right\rangle
=
\mu_m^{\mathcal F}(\{0\}).
\]
Finally, substituting the measure representation into the contour integral and
applying Cauchy's formula componentwise gives
\cref{eq:noisy_43_de_functional_calculus}.
\end{proof}

\subsection{Uniform Tauberian transfer for moving spectra}
\label{app:noisy_43_uniform_tauberian}

\begin{theorem}[Uniform Tauberian equivalence for \(m\)-dependent spectra]
\label{thm:noisy_43_uniform_tauberian}
Let \(A_m:[0,\infty)\to[0,\infty)\) be bounded, nondecreasing, and
right-continuous, with \(A_m(0)=0\), and consider any joint sequence with
\(m\to\infty\) and \(T=T_m\to\infty\).
Fix \(\tau>0\), let \(L\) be slowly varying at infinity, and set
\(a_m:=T^{-\tau}L(T)\).
Assume the following two conditions.
\begin{enumerate}[label=(\alph*)]
\item \emph{Uniform spectral control.}
For some \(\epsilon\in(0,\tau)\) and \(C<\infty\), eventually
\begin{equation}
\label{eq:noisy_43_uniform_potter_envelope}
\frac{A_m(y/T)}{a_m}
\leq
C\max\{y^{\tau-\epsilon},y^{\tau+\epsilon}\},
\qquad y>0.
\end{equation}

\item \emph{Compatible training filter.}
Let \(h_m:(0,\infty)\to[0,\infty)\) satisfy, for every \(A<\infty\),
\begin{equation}
\label{eq:noisy_43_uniform_filter_local_identity}
\sup_{0<y\leq A}
\left|T h_m(y/T)-y\right|
\longrightarrow0,
\end{equation}
and at every point where \(A_m\) increases,
\begin{equation}
\label{eq:noisy_43_uniform_filter_lower_bound}
h_m(\lambda)\geq c_0\lambda
\end{equation}
for a constant \(c_0>0\) independent of \(m\).
\end{enumerate}
Then the following statements are equivalent:

\begin{enumerate}[label=(\roman*)]
\item For every compact \(K\subset(0,\infty)\),
\begin{equation}
\label{eq:noisy_43_uniform_tauberian_cutoff}
\sup_{c\in K}
\left|
\frac{A_m(c/T)}{a_m}-c^\tau
\right|
\longrightarrow0.
\end{equation}

\item For every compact \(K\subset(0,\infty)\),
\begin{equation}
\label{eq:noisy_43_uniform_tauberian_transform}
\sup_{c\in K}
\left|
\frac{\displaystyle
\int_0^\infty
e^{-2cT h_m(\lambda)}\,\mathrm dA_m(\lambda)}
{a_m}
-
\Gamma(\tau+1)(2c)^{-\tau}
\right|
\longrightarrow0.
\end{equation}
\end{enumerate}
\end{theorem}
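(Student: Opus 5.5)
The plan is to rescale everything to the moving cutoff and reduce both directions to a statement about a single family of measures on \((0,\infty)\).

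\textbf{Setup and domination.} Set \(G_m(y):=A_m(y/T)/a_m\), a bounded, nondecreasing, right-continuous function with \(G_m(0)=0\). Write \(\Phi_m(c):=a_m^{-1}\int_0^\infty e^{-2cTh_m(\lambda)}\,\mathrm dA_m(\lambda)\). Substituting \(\lambda=y/T\) gives
\[
\Phi_m(c)=\int_{(0,\infty)}e^{-2c\,g_m(y)}\,\mathrm dG_m(y),
\qquad g_m(y):=Th_m(y/T).
\]
By \cref{eq:noisy_43_uniform_filter_local_identity}, \(g_m(y)\to y\) uniformly on every \((0,A]\). By \cref{eq:noisy_43_uniform_filter_lower_bound}, \(g_m(y)\ge c_0y\) on the support of \(\mathrm dG_m\). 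Since \(\mathrm dG_m\) only charges points where \(A_m\) increases, the integrand is dominated by \(e^{-2cc_0y}\) there. Integration by parts then gives the uniform tail bound
\[
\int_{(A,\infty)}e^{-2cc_0y}\,\mathrm dG_m(y)\le 2cc_0\int_A^\infty e^{-2cc_0y}\,C y^{\tau+\epsilon}\,\mathrm dy\longrightarrow 0 \quad (A\to\infty),
\]
uniformly in \(m\) and in \(c\in K\); the boundary term \(e^{-2cc_0A}G_m(A)\) is controlled the same way. Near the origin, \(\mathrm dG_m((0,\delta])=G_m(\delta)\le C\delta^{\tau-\epsilon}\to0\), since \(\epsilon<\tau\). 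The envelope \cref{eq:noisy_43_uniform_potter_envelope} is therefore exactly the tightness and uniform integrability needed at both ends. This domination step is the heart of the argument: the filter is only known to match \(y\) on compact windows, and the lower bound is only assumed at increase points of \(A_m\).

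\textbf{(i)\(\Rightarrow\)(ii).} Fix compact \(K\) and truncate to \([\delta,A]\). On this window, \(\sup|e^{-2cg_m(y)}-e^{-2cy}|\to0\) uniformly in \(c\in K\). Also, \(G_m\to y^\tau\) uniformly on \([\delta,A]\), so integration by parts against the smooth function \(e^{-2cy}\) gives
\[
\int_{(\delta,A]}e^{-2cy}\,\mathrm dG_m\to\int_\delta^A e^{-2cy}\tau y^{\tau-1}\,\mathrm dy,
\]
uniformly in \(c\). The discarded pieces are small uniformly by the domination step. Letting \(\delta\to0\) and \(A\to\infty\) yields \(\Phi_m(c)\to\int_0^\infty e^{-2cy}\,\mathrm d(y^\tau)=\Gamma(\tau+1)(2c)^{-\tau}\), uniformly on \(K\).

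\textbf{(ii)\(\Rightarrow\)(i).} I would argue by compactness. The envelope bounds \(G_m\) locally uniformly, so by Helly's selection theorem every subsequence has a further subsequence with \(G_m\to G\) at all continuity points of \(G\). The limit \(G\) is nondecreasing, satisfies \(G(y)\le C\max\{y^{\tau-\epsilon},y^{\tau+\epsilon}\}\), and hence \(G(0+)=0\). Repeating the truncation argument with vague convergence on \([\delta,A]\), with \(\delta\) and \(A\) chosen as continuity points, gives \(\Phi_m(c)\to\int e^{-2cy}\,\mathrm dG(y)\) for each \(c>0\). By (ii), this limit equals \(\int e^{-2cy}\,\mathrm d(y^\tau)\) for all \(c>0\). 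Uniqueness of Laplace--Stieltjes transforms on a half-line then forces \(G(y)=y^\tau\). Since every subsequence has the same limit, the full sequence converges pointwise. Because the limit is continuous and each \(G_m\) is monotone, pointwise convergence upgrades to uniform convergence on compacts (the Pólya/Dini argument), and this is exactly (i) after writing \(y=c\).

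The main obstacle I expect is the domination bookkeeping in the first step. One must check that the filter lower bound only needs to hold on the support of \(\mathrm dA_m\), which is how it is stated. One must also check that the two envelope exponents \(\tau\pm\epsilon\) handle small and large \(y\) separately, with \(\epsilon<\tau\) giving the vanishing near \(0\) and the exponential factor \(e^{-2cc_0y}\) beating the polynomial growth at infinity.
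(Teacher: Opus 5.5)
Your proposal is correct and follows the paper's proof essentially step for step. The shared steps are: the rescaling \(G_m(y)=A_m(y/T)/a_m\); endpoint control from the Potter envelope near zero and from the filter lower bound plus integration by parts at infinity; truncation to \([\delta,A]\) for (i)\(\Rightarrow\)(ii); and Helly selection, Laplace--Stieltjes uniqueness, and the monotone-to-uniform upgrade for (ii)\(\Rightarrow\)(i). One small simplification: the boundary term \(-e^{-2cc_0A}G_m(A)\) in your tail integration by parts is nonpositive, so it needs no separate control.
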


\begin{proof}
Set \(B_m(y):=A_m(y/T)/a_m\).  Then
\begin{equation}
\label{eq:noisy_43_uniform_scaled_envelope}
B_m(y)
\leq
C\max\{y^{\tau-\epsilon},y^{\tau+\epsilon}\},
\qquad y>0.
\end{equation}

Assume first \cref{eq:noisy_43_uniform_tauberian_cutoff}.  Then
\(B_m(y)\to y^\tau\) locally uniformly on \((0,\infty)\).
Uniformly for \(c\) in a fixed compact subset of \((0,\infty)\),
\cref{eq:noisy_43_uniform_filter_local_identity} gives
\(e^{-2cT h_m(y/T)}\to e^{-2cy}\)
on compact \(y\)-intervals.  The \((0,\delta]\) contribution is uniformly
\(O(\delta^{\tau-\epsilon})\) by
\cref{eq:noisy_43_uniform_scaled_envelope}; moreover,
\cref{eq:noisy_43_uniform_filter_lower_bound} and integration by parts give,
uniformly for \(c\geq c_->0\),
\begin{align*}
\int_{(A,\infty)}
e^{-2cT h_m(y/T)}\,\mathrm dB_m(y)
&\leq
\int_{(A,\infty)}
e^{-2c_-c_0y}\,\mathrm dB_m(y)\\
&\leq
2c_-c_0\int_A^\infty
e^{-2c_-c_0y}B_m(y)\,\mathrm dy,
\end{align*}
which vanishes uniformly in \(m\) as \(A\to\infty\).  Passing to the limit on
\([\delta,A]\), then sending \(\delta\downarrow0\) and \(A\to\infty\), gives
locally uniformly in \(c>0\)
\[
\frac{\int_0^\infty e^{-2cT h_m(\lambda)}\,\mathrm dA_m(\lambda)}{a_m}
\longrightarrow
\int_0^\infty e^{-2cy}\tau y^{\tau-1}\,\mathrm dy
=
\Gamma(\tau+1)(2c)^{-\tau},
\]
which proves \cref{eq:noisy_43_uniform_tauberian_transform}.

Conversely, assume \cref{eq:noisy_43_uniform_tauberian_transform}.  The
envelope and Helly's theorem give, from every subsequence, one on which
\(B_m\) converges at continuity points to a nondecreasing \(B_\star\).  The
same endpoint estimates and \cref{eq:noisy_43_uniform_filter_local_identity}
give, for every \(c>0\),
\[
\int_0^\infty e^{-2cy}\,\mathrm dB_\star(y)
=
\Gamma(\tau+1)(2c)^{-\tau}.
\]
Uniqueness of Laplace--Stieltjes transforms gives \(B_\star(y)=y^\tau\), so
\(B_m(c)\to c^\tau\) for \(c>0\).
Monotonicity and continuity of \(c^\tau\) make convergence uniform on compact
subsets, proving \cref{eq:noisy_43_uniform_tauberian_cutoff}.
\end{proof}

\begin{remark}
\label{rem:noisy_43_uniform_tauberian_envelope}
The function \(A_m(x)\) is the cumulative weighted spectral mass below \(x\).
Its monotonicity and right-continuity mean that it represents a finite positive
measure, so contributions from different spectral modes cannot cancel.  The
joint limit examines the moving scale \(\lambda\asymp T^{-1}\), and
\(a_m=T^{-\tau}L(T)\) is the expected amount of mass at that scale.

The \emph{Uniform spectral control} assumption ensures that mass outside this
moving window cannot remain hidden from the cutoff asymptotic while still affecting
the training response.
For example, a group of spectral mass may be located at
\(\lambda_m=y_m/T\), where \(y_m\to\infty\).
It eventually lies above every fixed cutoff \(c/T\).  The envelope prevents
such moving mass from becoming large enough to change the filtered response.

The \emph{Compatible training filter} assumption treats \(h_m\) as the
effective decay rate of a spectral mode.  Its local condition makes the
training filter behave like \(e^{-2cy}\) for \(\lambda=y/T\), while its lower
bound suppresses faster modes away from the cutoff.  Together, these
assumptions ensure that the long-time response is determined by spectral mass
near \(T^{-1}\).
\end{remark}

\subsection{Spectral criteria for forcing and memory}
\label{app:noisy_43_component_criteria_proofs}

This subsection proves the forcing and memory criteria stated in
\cref{thm:noisy_43_effective_spectral_criterion,thm:noisy_43_effective_memory_criterion}.
After specifying their common constant-schedule limit, we derive both by applying
\cref{thm:noisy_43_uniform_tauberian} to the corresponding weighted spectral
measure.

The proof has two parallel applications of
\cref{thm:noisy_43_uniform_tauberian}: one to learnable forcing and the other
to one-injection memory.  Both use the same constant-schedule limit and
effective training filter.  Apply
\cref{ass:noisy_43_uniform_stable_spectral_window} with the component's
weighted spectral mass and normalization.  Part \emph{(a)} of
\cref{ass:noisy_43_constant_schedule_stability} verifies the
\emph{Compatible training filter} condition below, while the spectral-window
envelope is the \emph{Uniform spectral control} condition in
\cref{eq:noisy_43_uniform_potter_envelope}.  The two applications differ only
in their weighted spectral measure and normalization.

\begin{proof}[Proof of
\cref{thm:noisy_43_effective_spectral_criterion}]

\noindent\textbf{Learnable forcing.}
\label{app:noisy_43_effective_spectral_criterion_proof}

For the conditioned empirical spectrum, set
\(A_m(x):=\nu_{\bm W}^{\mathcal F}((0,x])\).  For every positive empirical
eigenvalue, set \(h_\eta(\lambda):=-\log q_\eta(\lambda)/(2\eta)\).
With \(x=\eta\lambda\), stability gives
\[
1-q_\eta(\lambda)
=x\left[2-\left(1+\frac1B\right)x\right]\geq\delta x.
\]
Since \(-\log q\geq1-q\),
\(h_\eta(\lambda)\geq\delta\lambda/2\) on the positive support.  Since
\(T=\eta t\), uniformly for
\(0<y\leq A\),
\[
T h_\eta(y/T)
=-
\frac{t}{2}
\log\!\left[
1-\frac{2y}{t}
+\left(1+\frac1B\right)\frac{y^2}{t^2}
\right]
=y+O\!\left(\frac{A^2}{t}\right).
\]
Hence \(h_\eta\) satisfies
\cref{eq:noisy_43_uniform_filter_local_identity,eq:noisy_43_uniform_filter_lower_bound}.
With \(T=\eta t\),
\[
F_{\bm W,>0}(t)
=
\sum_{\widehat\lambda_j>0}
\left|\left\langle
\widehat{\bm u}_j,\bm\Lambda^{1/2}\bm\theta^\star
\right\rangle\right|^2
e^{-2T h_\eta(\widehat\lambda_j)}
=
\int_0^\infty e^{-2T h_\eta(\lambda)}\,\mathrm dA_m(\lambda).
\]
Apply \cref{thm:noisy_43_uniform_tauberian} with
\(a_m=T^{-q_{\mathcal F}}L_{\mathcal F}(T)\) to obtain the stated cutoff and
forcing asymptotics, uniformly under constant-factor changes of \(T\).

\medskip\noindent\textbf{One-injection memory.}
\label{app:noisy_43_effective_memory_criterion_proof}

For the conditioned empirical spectrum, set
\(A_m(x):=\nu_{\bm W}^{\mathcal K}((0,x])\).  With the same map \(h_\eta\)
used in the forcing proof, the exact conditional kernel becomes
\[
\frac{B}{\eta^2}K_{\bm W}(t)
=
\int_0^\infty e^{-2T h_\eta(\lambda)}\,\mathrm dA_m(\lambda).
\]
The filter verification in the forcing proof is unchanged.  Apply
\cref{thm:noisy_43_uniform_tauberian} with
\[
a_m=\frac{2^{q_{\mathcal K}}T^{-q_{\mathcal K}}L_{\mathcal K}(T)}
{\Gamma(q_{\mathcal K}+1)}
\]
to obtain the claimed equivalence and constant-factor uniformity.
\end{proof}

\subsection{Accumulating memory over training}
\label{app:noisy_43_aggregate_noise_criterion_proof}

\begin{proof}[Proof of
\cref{thm:noisy_43_aggregate_noise_criterion}]
For the same constant schedule and every \(\lambda>0\),
\(1-q_\eta(\lambda)=\eta\lambda
[2-(1+1/B)\eta\lambda]\).
The quadratic identity
\[
q_\eta(\lambda)
=
\frac1{B+1}
+\left(1+\frac1B\right)
\left(\eta\lambda-\frac{B}{B+1}\right)^2
\]
and the stability margin show, on every positive empirical eigenvalue, that
\(0<q_\eta(\lambda)<1\) and
\[
2-\left(1+\frac1B\right)\eta\lambda\geq\delta,
\qquad
q_\eta(\lambda)^t
\leq e^{-t[1-q_\eta(\lambda)]}
\leq e^{-\delta T\lambda}.
\]
Substitution into the empirical kernel sum and exact geometric summation give
the two identities in
\cref{eq:noisy_43_terminal_noise_exact}.

We first prove the cumulative-memory asymptotics directly from those identities.
Retain \(A_m^{\mathcal K},a_m^{\mathcal K}\), and
\(\widetilde A_m^{\mathcal K}\) from the theorem.  On the rescaled spectral
support define the proof-local filters
\[
\phi^{\rm acc}_{m,t}(y)
:=
\frac{1-q_\eta(y/T)^t}
{y[2-(1+1/B)y/t]},
\qquad
\phi^{\rm tail}_{m,t}(y)
:=
\frac{q_\eta(y/T)^t}
{y[2-(1+1/B)y/t]}.
\]
Because \(T=\eta t\), the exact identities become
\begin{equation}
\label{eq:noisy_43_rescaled_accumulation_integral}
\frac{S_{\bm W,t}}{(\eta/B)Ta_m^{\mathcal K}}
=\int_{(0,\infty)}\phi^{\rm acc}_{m,t}(y)\,
\mathrm d\widetilde A_m^{\mathcal K}(y),
\qquad
\frac{S_{\bm W,\infty}-S_{\bm W,t}}{(\eta/B)Ta_m^{\mathcal K}}
=\int_{(0,\infty)}\phi^{\rm tail}_{m,t}(y)\,
\mathrm d\widetilde A_m^{\mathcal K}(y).
\end{equation}

On every compact interval \(0<a\leq y\leq A<\infty\),
\[
q_\eta(y/T)^t
=
\left[
1-\frac{2y}{t}
+\left(1+\frac1B\right)\frac{y^2}{t^2}
\right]^t
\longrightarrow e^{-2y}
\]
uniformly.  Hence
\[
\phi^{\rm acc}_{m,t}(y)
\longrightarrow\frac{1-e^{-2y}}{2y},
\qquad
\phi^{\rm tail}_{m,t}(y)
\longrightarrow\frac{e^{-2y}}{2y}
\]
uniformly there.  The cutoff asymptotic
\cref{eq:noisy_43_aggregate_memory_cutoff} makes the rescaled Stieltjes
measures converge on the same compact interval to \(\mathrm d(y^{q_{\mathcal K}})\).

It remains to make this compact limit uniform at zero and infinity.  The
preceding stability bounds and
\(1-q^t\leq t(1-q)\) for \(0\leq q<1\) give
\begin{equation}
\label{eq:noisy_43_aggregate_filter_endpoint_bounds}
0\leq\phi^{\rm acc}_{m,t}(y)
\leq\min\left\{1,\frac1{\delta y}\right\},
\qquad
0\leq\phi^{\rm tail}_{m,t}(y)
\leq\frac{e^{-\delta y}}{\delta y}.
\end{equation}
For \(0<q_{\mathcal K}<1\), the endpoint envelope therefore gives
\[
\int_{(0,a]}\phi^{\rm acc}_{m,t}\,
\mathrm d\widetilde A_m^{\mathcal K}
\lesssim a^{q_{\mathcal K}-\epsilon}.
\]
A dyadic decomposition at infinity gives
\[
\int_{(A,\infty)}\phi^{\rm acc}_{m,t}\,
\mathrm d\widetilde A_m^{\mathcal K}
\lesssim
\sum_{j\geq0}
\frac{\widetilde A_m^{\mathcal K}(2^{j+1}A)}{2^jA}
\lesssim
A^{q_{\mathcal K}+\epsilon-1}
\sum_{j\geq0}2^{j(q_{\mathcal K}+\epsilon-1)},
\]
which vanishes as \(A\to\infty\) because
\(q_{\mathcal K}+\epsilon<1\).

For \(q_{\mathcal K}>1\), Stieltjes integration by parts at zero instead
gives
\[
\int_{(0,a]}\phi^{\rm tail}_{m,t}\,
\mathrm d\widetilde A_m^{\mathcal K}
\lesssim
\int_{(0,a]}y^{-1}\,
\mathrm d\widetilde A_m^{\mathcal K}(y)
\lesssim a^{q_{\mathcal K}-\epsilon-1},
\]
which vanishes because \(q_{\mathcal K}-\epsilon>1\).  At infinity, a
dyadic decomposition combines the second bound in
\cref{eq:noisy_43_aggregate_filter_endpoint_bounds} with the polynomial
endpoint envelope; its resulting exponentially damped series vanishes
uniformly as \(A\to\infty\).

We may now pass to the compact limit and then send \(a\downarrow0\) and
\(A\to\infty\).  The limiting integrals are
\begin{align*}
\int_0^\infty\frac{1-e^{-2y}}{2y}\,
\mathrm d(y^{q_{\mathcal K}})
&=
\frac{\Gamma(q_{\mathcal K}+1)}
{2^{q_{\mathcal K}}(1-q_{\mathcal K})},
&&0<q_{\mathcal K}<1,\\
\int_0^\infty\frac{e^{-2y}}{2y}\,
\mathrm d(y^{q_{\mathcal K}})
&=
\frac{\Gamma(q_{\mathcal K}+1)}
{2^{q_{\mathcal K}}(q_{\mathcal K}-1)},
&&q_{\mathcal K}>1.
\end{align*}
Substituting the definition of \(a_m^{\mathcal K}\) into
\cref{eq:noisy_43_rescaled_accumulation_integral}
proves
\cref{eq:noisy_43_aggregate_noise_from_kernel,eq:noisy_43_terminal_noise_from_kernel}
with their displayed \(\eta/B\) constants.  The uniform convergence theorem
for slowly varying functions makes this rescaling explicit: for \(c\) in a
compact subset of \((0,\infty)\), put
\[
\quad t_c:=\lfloor ct\rfloor,
\qquad
T_c:=\eta t_c,
\qquad
a_{m,c}^{\mathcal K}
:=\frac{2^{q_{\mathcal K}}}{\Gamma(q_{\mathcal K}+1)}
T_c^{-q_{\mathcal K}}L_{\mathcal K}(T_c).
\]
Then \(T_c/T\to c\), and
\(a_{m,c}^{\mathcal K}/a_m^{\mathcal K}\to c^{-q_{\mathcal K}}\)
uniformly over such \(c\).  Repeating the preceding rescaling with
\(y=T_c\lambda\) gives the same compact limits and endpoint bounds uniformly
in \(c\).  This proves the stated constant-factor uniformity.

For the direct terminal-tail criterion, use the separate cumulative measure
\(A_m^{\rm tail}\) defined in the theorem.  Its transform under the forcing
proof's \(h_\eta\) is exactly
\[
S_{\bm W,\infty}-S_{\bm W,t}
=
\int_0^\infty e^{-2Th_\eta(\lambda)}
\,\mathrm dA_m^{\rm tail}(\lambda).
\]
The denominator in \(A_m^{\rm tail}\) is at least \(\delta\), and the
filter verification in the forcing proof is unchanged.  Thus
\cref{thm:noisy_43_uniform_tauberian}, with
\(a_m=T^{-q_{\mathcal S}}L_{\mathcal S}(T)\), gives both locally uniform
families.  Apply its cutoff family at \(T/T_c\) and its transform family at
\(T_c/T\).  Since \(T_c/T\to c\) and
\(L_{\mathcal S}(T_c)/L_{\mathcal S}(T)\to1\) locally uniformly in \(c\),
this is exactly the displayed direct equivalence.

Finally, replacing the empirical squared-spectrum measure by
\(\nu_m^{\mathcal K}(\mathrm d\lambda)\) gives the DE cumulative-memory
argument, while replacing \(A_m^{\rm tail}\) by
\(A_{m,{\rm DE}}^{\rm tail}\) gives the DE direct-tail argument.  Every
step uses the corresponding cutoff, endpoint-envelope, and stability
hypotheses stated after the theorem.
\end{proof}

\subsection{From forcing and memory asymptotics to observed loss under stable feedback}
\label{app:noisy_43_aggregate_volterra_transfer_proof}

\begin{proof}[Proof of \cref{thm:noisy_43_aggregate_volterra_transfer}]
Stability makes the nonnegative repeated-feedback series summable:
\[
\sum_{n\geq1}\|\mathsf K^{(*n)}\|_{\ell^1}
=\sum_{n\geq1}\kappa_m^n
=\frac{\kappa_m}{1-\kappa_m}
\leq\frac{\kappa}{1-\kappa}.
\]
The clean solution is
\(\mathsf F+\sum_{n\geq1}\mathsf K^{(*n)}*\mathsf F\).
Its constant-term response lies in
\([\mathsf F_0,\mathsf F_0/(1-\kappa)]\).

In case (i), split at a fixed fraction of \(t\): the long-tail assumptions
control bounded lags, summability of the feedback kernels controls the middle
part, and cumulative forcing controls the opposite endpoint.  Thus
\[
\mathsf F_{>0}(t,m)
+\sum_{n\geq1}(\mathsf K^{(*n)}*\mathsf F_{>0})(t,m)
\asymp\mathsf F_{>0}(t,m).
\]
In case (ii), the two uniform subexponential relations applied successively
to the convolution powers give
\[
(\mathsf K^{(*n)}*\mathsf F_{>0})(t,m)
\asymp
\kappa_m^n\mathsf F_{>0}(t,m)
+n\kappa_m^{n-1}
\left[\sum_{s\geq0}\mathsf F_{>0}(s,m)\right]\mathsf K(t,m).
\]
The assumed summable domination permits summation over \(n\), yielding
\[
\mathsf F_{>0}(t,m)
+\sum_{n\geq1}(\mathsf K^{(*n)}*\mathsf F_{>0})(t,m)
\asymp
\mathsf F_{>0}(t,m)
+\left[\sum_{s\geq0}\mathsf F_{>0}(s,m)\right]\mathsf K(t,m).
\]
This proves the clean comparison in
\cref{eq:noisy_43_aggregate_loss_transfer}; positivity of the common Volterra
system adds \(\sigma^2\mathsf S_t\), proving its noisy comparison.

If \(\mathsf F_{>0}(t,m)\to0\), taking the limit in the Volterra equation
and using \(\lim_{t\to\infty}\mathsf S_t(m)=\kappa_m\) gives
\cref{eq:noisy_43_terminal_risk_exact}.  Put
\(d_t:=P_m-\mathsf R_\sigma(t,m)\).
Subtracting the finite-time equation from its terminal fixed point gives
\[
d_t
=
-\mathsf F_{>0}(t,m)
+(\mathsf K*d)(t,m)
+\left(P_m+\sigma^2\right)
\overline{\mathsf K}_m(t).
\]
The stability bound makes the positive resolvent \(\mathsf G_m\) well
defined.  Iterating the last recursion gives the exact identity
\[
P_m-\mathsf R_\sigma(t,m)
=
-(\mathsf G_m\mathsf F_{>0})(t)
+(P_m+\sigma^2)
(\mathsf G_m\overline{\mathsf K}_m)(t),
\]
which is precisely
\cref{eq:noisy_43_centered_response_identity} after multiplication by
\(-1\).  Moreover,
\(P_m+\sigma^2=(\mathsf F_0(m)+\sigma^2)/(1-\kappa_m)\geq0\).
Together with positivity of \(\mathsf G_m\), this proves directly that the
two terms on the right-hand side of
\cref{eq:noisy_43_centered_response_identity} are nonnegative.

The forcing estimates already established above give, uniformly in the common
window,
\[
(\mathsf G_m\mathsf F_{>0})(t)\asymp\mathsf F_{>0}(t,m)
\qquad\text{in case \emph{(i)}},
\]
whereas in case \emph{(ii)},
\[
(\mathsf G_m\mathsf F_{>0})(t)
\asymp
\mathsf F_{>0}(t,m)
+\left[\sum_{s\geq0}\mathsf F_{>0}(s,m)\right]
\mathsf K(t,m).
\]
For the terminal response, positivity and
\cref{eq:noisy_43_terminal_tail_renewal_domination} give the explicit
two-sided comparison
\[
\overline{\mathsf K}_m(t)
\leq
(\mathsf G_m\overline{\mathsf K}_m)(t)
\leq
(1+C_{\rm tail})\overline{\mathsf K}_m(t).
\]
Consequently,
\((P_m+\sigma^2)(\mathsf G_m\overline{\mathsf K}_m)(t)
\asymp(P_m+\sigma^2)
[\kappa_m-\mathsf S_t(m)]\).
Finally, the exact response identity and
\cref{eq:noisy_43_centered_noncancellation} imply
that \(|\mathsf R_\sigma(t,m)-P_m|\) is comparable to the sum of the two
nonnegative terms in
\cref{eq:noisy_43_centered_response_identity}.
Substitution of the preceding positive-response estimates proves
\cref{eq:noisy_43_aggregate_centered_transfer} in case \emph{(i)} and its
stated additional renewal term in case \emph{(ii)}.
\end{proof}

\begingroup

\section{Joint learning-rate and batch-size schedules}
\label{sec:joint_schedules}
\label{app:time_dependent_lrs}

This appendix develops the schedule-dependent dynamics used in the main text
in five steps.  First,
\cref{subsec:td_lrs_joint_exact_dynamics,subsec:td_lrs_joint_de_recursion}
derive the exact conditional risk recursion for joint learning-rate and
batch-size schedules and its time-inhomogeneous PLRF deterministic equivalent.
Next,
\cref{subsec:td_lrs_common_schedule_setup,%
subsec:td_lrs_plrf_regimes_and_admissibility}
state the conditions under which intrinsic time controls how errors in each
spectral mode decay,
then define the PLRF regimes and admissible schedule families.  We then
establish the empirical finite-bulk spectrum, kernel, and exact noisy--clean
gap, followed by the deterministic-equivalent risk asymptotics in the
long-memory, integrable-memory, and finite-bulk regimes; see
\cref{app:noisy_43_finite_bulk_empirical,%
subsec:td_lrs_joint_de_ordinary,subsec:app_td_lrs_joint_schedules}.
\Cref{subsec:td_lrs_joint_empirical_transfer} states the additional comparison
conditions needed to transfer the DE total-risk orders to realized
finite-width SGD.  Finally,
\cref{subsec:td_lrs_ratio_control_integration} formulates the optimal
ratio-control problem, proves the matching data and feature-compute rates,
and constructs integer schedules that attain them.  Each proof is placed with
the result it establishes.

\subsection{Exact risk dynamics under joint schedules}
\label{subsec:td_lrs_joint_exact_dynamics}

This subsection derives the exact conditional risk dynamics for arbitrary
deterministic learning-rate and batch-size schedules.  The SGD update in
\cref{eq:td_lrs_joint_sgd_update} gives the modal recursion in
\cref{eq:td_lrs_joint_mode_recursion}, which sums to the closed risk equation
in \cref{eq:td_lrs_joint_volterra}.

Return to the deterministic schedule of \cref{sec:rv_joint_schedules}, with
integer \(B_t\geq1\) and \(\eta_t\geq0\).  Intrinsic time controls signal
learning and \(B_t/\eta_t\) controls noise per intrinsic-time increment,
although exact dynamics retain the individual steps:
\begin{equation}
\label{eq:td_lrs_joint_sgd_update}
\bm a_{t+1}
=
\bm a_t
-\frac{\eta_t}{B_t}\sum_{i=1}^{B_t}
\bm W^{\!\top}\bm x_t^i
\left(f_{\bm a_t}(\bm x_t^i)-y_t^i\right).
\end{equation}
Schedules are fixed before sampling.  With \((T_t,r_s)\) from
\cref{eq:joint_sgd_schedule_coordinates}, set
\(\Delta T_s:=T_{s+1}-T_s=\eta_s\)
with \(1/r_s:=0\) when \(\eta_s=0\), so step \(s\) has injection weight
\(\eta_s^2/B_s=\Delta T_s/r_s\).

To track the joint schedule exactly, let
\begin{equation*}
q_s(z)
:=
1-2\eta_sz+\left(1+\frac1{B_s}\right)\eta_s^2z^2,
\qquad
Q_{s,t}(z):=\prod_{u=s}^{t-1}q_u(z),
\qquad
Q_{t,t}(z):=1.
\end{equation*}
With
\(\widehat{\bm H}\widehat{\bm u}_j
=\widehat\lambda_j\widehat{\bm u}_j\),
\(\bm e_t:=\bm\Lambda^{1/2}(\bm W\bm a_t-\bm\theta^\star)\), and
\(\rho_j(t):=\langle\widehat{\bm u}_j,\bm e_t\rangle\), we can now derive
the risk equation directly.

Conditional on the frozen random features, the joint update gives
\[
\bm e_{t+1}
=
\bm e_t
-\frac{\eta_t}{B_t}\sum_{i=1}^{B_t}
\widehat{\bm H}\bm z_t^i
\left(\langle\bm z_t^i,\bm e_t\rangle-\varepsilon_t^i\right).
\]
For one fresh Gaussian covariate,
\[
\mathbb E[z_j\langle\bm z,\bm e_t\rangle]=\rho_j(t),
\qquad
\mathbb E[z_j^2\langle\bm z,\bm e_t\rangle^2]
=\|\bm e_t\|_2^2+2\rho_j^2(t).
\]
The \(B_t\) diagonal terms contribute
\(B_t\|\bm e_t\|_2^2+2B_t\rho_j^2(t)\), the
\(B_t(B_t-1)\) ordered cross terms contribute
\(B_t(B_t-1)\rho_j^2(t)\), and the independent label noises contribute
\(B_t\sigma^2\).  Their sum gives the exact modal recursion
\begin{equation}
\label{eq:td_lrs_joint_mode_recursion}
\mathbb E_t[\rho_j^2(t+1)]
=
q_t(\widehat\lambda_j)\rho_j^2(t)
+\frac{\eta_t^2}{B_t}\widehat\lambda_j^2
\left(\|\bm e_t\|_2^2+\sigma^2\right),
\end{equation}
and finite iteration gives
\[
\mathbb E[\rho_j^2(t)]
=Q_{0,t}(\widehat\lambda_j)\rho_j^2(0)
+\sum_{s<t}\frac{\eta_s^2}{B_s}\widehat\lambda_j^2
Q_{s+1,t}(\widehat\lambda_j)
\left(\mathbb E\|\bm e_s\|_2^2+\sigma^2\right).
\]
Define the propagated initial error and the weight of one variance injection by
\[
F_t:=\sum_j\rho_j^2(0)Q_{0,t}(\widehat\lambda_j),
\qquad
K_{t,s}:=\frac{\eta_s^2}{B_s}
\sum_j\widehat\lambda_j^2Q_{s+1,t}(\widehat\lambda_j).
\]
Summing the finite iterate over \(j\) yields the exact conditional risk
equation below; this finite-dimensional step needs no stability condition or
asymptotic spectral input:
\begin{equation}
\label{eq:td_lrs_joint_volterra}
R_{\sigma,t}
=
F_t
+\sum_{s=0}^{t-1}
K_{t,s}\left(R_{\sigma,s}+\sigma^2\right),
\qquad
R_{\sigma,t}:=\mathbb E\!\left[\|\bm e_t\|_2^2\mid\bm W\right].
\end{equation}
The fixed-batch recursion and its finite iterate require no second
calculation: set \(B_s\equiv B\) in the preceding formulas, so that
\(q_s=q_{\eta_s}\) and \(\eta_s^2/B_s=\eta_s^2/B\), and then sum over
\(j\).  Cauchy's functional calculus
with the exact resolvent \((\widehat{\bm H}-z\bm I_d)^{-1}\) gives the
corresponding empirical spectral formulas.

\subsection{Time-inhomogeneous PLRF deterministic equivalent}
\label{subsec:td_lrs_joint_de_recursion}

The positive representation in
\cref{lem:noisy_43_de_positive_spectral_measure} defines a deterministic
time-inhomogeneous surrogate whose exact mode-decay factors \(Q_{s,t}\) depend
on the full factorization \((\boldsymbol\eta,\boldsymbol B)\), not only on
\(T_t\).

For deterministic schedules
\(\boldsymbol\eta=(\eta_0,\eta_1,\ldots)\) and
\(\boldsymbol B=(B_0,B_1,\ldots)\),
with \(\eta_s\geq0\) and \(B_s\in\mathbb N\), retain \(q_s\) and \(Q_{s,t}\)
from the exact joint recursion and define
\begin{align}
\mathcal F_{\boldsymbol\eta,\boldsymbol B}(t,m)
&:=
\int_{[0,\infty)}
Q_{0,t}(\lambda)\,
\mu_m^{\mathcal F}(\mathrm d\lambda),
\label{eq:td_lrs_joint_de_forcing}\\
\mathcal K_{\boldsymbol\eta,\boldsymbol B}(t,s,m)
&:=
\frac{\eta_s^2}{B_s}
\int_{[0,\infty)}
\lambda^2Q_{s+1,t}(\lambda)\,
\mu_m^{\mathcal K}(\mathrm d\lambda),
\qquad 0\leq s<t.
\label{eq:td_lrs_joint_de_kernel}
\end{align}
The corresponding time-inhomogeneous PLRF resolvent-DE risk is specified by
\begin{equation}
\label{eq:td_lrs_joint_de_volterra}
\mathcal R_{\sigma,\boldsymbol\eta,\boldsymbol B}(t,m)
=
\mathcal F_{\boldsymbol\eta,\boldsymbol B}(t,m)
+
\sum_{s=0}^{t-1}
\mathcal K_{\boldsymbol\eta,\boldsymbol B}(t,s,m)
\left[
\mathcal R_{\sigma,\boldsymbol\eta,\boldsymbol B}(s,m)+\sigma^2
\right].
\end{equation}

\begin{proposition}[Joint-schedule PLRF resolvent-DE recursion]
\label{prop:td_lrs_joint_de_recursion}
For any deterministic \((\boldsymbol\eta,\boldsymbol B)\) with
\(\eta_s\geq0\) and \(B_s\in\mathbb N\), the DE recursion in
\cref{eq:td_lrs_joint_de_volterra} has a unique nonnegative solution at every
finite horizon.

Let \(\delta>0\).  If
\begin{equation}
\label{eq:td_lrs_joint_de_pointwise_stability}
\sup_{\substack{s\geq0\\
\lambda\in\operatorname{supp}\mu_m^{\mathcal K}}}
\eta_s\left(1+\frac1{B_s}\right)\lambda
\leq2-\delta,
\end{equation}
then
\begin{equation}
\label{eq:td_lrs_joint_de_row_mass_certificate}
\sup_{t\geq1}
\sum_{s<t}
\mathcal K_{\boldsymbol\eta,\boldsymbol B}(t,s,m)
\leq
\frac{\operatorname{tr}\bm\Lambda}{\delta}
\sup_s\frac{\eta_s}{B_s}.
\end{equation}
Consequently, if the right-hand side is at most \(\kappa<1\), then
\begin{equation}
\label{eq:td_lrs_joint_de_uniform_bound}
\sup_{t\geq0}
\mathcal R_{\sigma,\boldsymbol\eta,\boldsymbol B}(t,m)
\leq
\frac{
(\bm\theta^\star)^{\!\top}\bm\Lambda\bm\theta^\star
+\kappa\sigma^2
}{1-\kappa}.
\end{equation}
\end{proposition}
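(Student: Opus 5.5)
Existence and uniqueness come directly from the causal structure. The kernel \(\mathcal K_{\boldsymbol\eta,\boldsymbol B}(t,s,m)\) is defined only for \(s<t\), so \cref{eq:td_lrs_joint_de_volterra} expresses \(\mathcal R(t)\) through \(\mathcal R(0),\dots,\mathcal R(t-1)\). Forward substitution from \(\mathcal R(0)=\mathcal F(0,m)\) therefore determines the solution uniquely at every finite horizon. For nonnegativity, note that \(\mu_m^{\mathcal F},\mu_m^{\mathcal K}\) are positive measures by \cref{lem:noisy_43_de_positive_spectral_measure}. Each factor \(q_s(\lambda)=1/(B_s+1)+(1+1/B_s)(\eta_s\lambda-B_s/(B_s+1))^2\) is strictly positive for every real \(\lambda\). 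Hence \(\mathcal F\geq0\) and \(\mathcal K\geq0\), and induction gives \(\mathcal R\geq0\).

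For the row-mass certificate \cref{eq:td_lrs_joint_de_row_mass_certificate}, I would proceed as follows.

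\begin{enumerate}
\item Bound \(\eta_s^2/B_s\leq(\sup_u\eta_u/B_u)\,\eta_s\). This gives
\[
\sum_{s<t}\mathcal K(t,s,m)\leq\Bigl(\sup_u\frac{\eta_u}{B_u}\Bigr)\int\lambda^2\sum_{s<t}\eta_sQ_{s+1,t}(\lambda)\,\mu_m^{\mathcal K}(\mathrm d\lambda).
\]
\item On the support, use the pointwise stability \cref{eq:td_lrs_joint_de_pointwise_stability} to get \(1-q_s(\lambda)=\eta_s\lambda[2-(1+1/B_s)\eta_s\lambda]\geq\delta\eta_s\lambda\) and \(0\leq q_s(\lambda)\leq1\).
\item Apply the telescoping identity \(\sum_{s<t}(1-q_s)Q_{s+1,t}=1-Q_{0,t}\leq1\). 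Combined with step 2, it gives \(\lambda\sum_{s<t}\eta_sQ_{s+1,t}(\lambda)\leq1/\delta\).
\item The integrand is therefore at most \(\lambda/\delta\). Integrating against \(\mu_m^{\mathcal K}\) with the first-moment identity \(\int\lambda\,\mu_m^{\mathcal K}(\mathrm d\lambda)=\operatorname{tr}\bm\Lambda\) from \cref{eq:noisy_43_de_scalar_measure_moments} yields \cref{eq:td_lrs_joint_de_row_mass_certificate}.
\end{enumerate}

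The zero atom of \(\mu_m^{\mathcal K}\) contributes nothing, because of the factor \(\lambda^2\).

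For the uniform bound, step 2 also gives \(Q_{0,t}\leq1\) on the support. Together with the total-mass identity \(\mu_m^{\mathcal F}([0,\infty))=(\bm\theta^\star)^{\!\top}\bm\Lambda\bm\theta^\star\), this yields \(\mathcal F(t,m)\leq(\bm\theta^\star)^{\!\top}\bm\Lambda\bm\theta^\star=:E\). Now set \(M_t:=\max_{s\leq t}\mathcal R(s)\). The recursion and the row bound \(\kappa\) give \(\mathcal R(t)\leq E+\kappa(M_{t-1}+\sigma^2)\). An induction on \(t\) then shows \(M_t\leq(E+\kappa\sigma^2)/(1-\kappa)\). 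The base case \(\mathcal R(0)\leq E\) holds, and the inductive step follows since \(E+\kappa(E+\kappa\sigma^2)/(1-\kappa)+\kappa\sigma^2=(E+\kappa\sigma^2)/(1-\kappa)\). Taking the supremum over \(t\) gives \cref{eq:td_lrs_joint_de_uniform_bound}.

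I expect the main obstacle to be the telescoping step 3 in the time-inhomogeneous setting. The step sizes \(\eta_s\) differ across \(s\), so \(\eta_s\) cannot be factored out as in the constant-schedule geometric sum. The resolution is to pair each \(\eta_s\lambda\) with its own \(1-q_s(\lambda)\) via the lower bound in step 2, which is exactly why stability is imposed uniformly in \(s\). A secondary technical point is that pointwise stability is stated only on \(\operatorname{supp}\mu_m^{\mathcal K}\), while \(\mathcal F\) integrates against \(\mu_m^{\mathcal F}\). This causes no difficulty: \(\boldsymbol\mu_m\) is diagonal and positive, so \(\operatorname{supp}\mu_m^{\mathcal F}\subseteq\operatorname{supp}\mu_m^{\mathcal K}\), and the bound \(Q_{0,t}\leq1\) holds \(\mu_m^{\mathcal F}\)-almost everywhere.
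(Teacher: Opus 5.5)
Your proof is correct and follows the paper's argument essentially step for step. You use forward substitution plus positivity of \(q_s\) for existence and uniqueness; the bound \(1-q_s(\lambda)\geq\delta\eta_s\lambda\) with the telescoping identity \(\sum_{s<t}(1-q_s)Q_{s+1,t}=1-Q_{0,t}\) and the first moment of \(\mu_m^{\mathcal K}\) for the row-mass certificate; and the total mass of \(\mu_m^{\mathcal F}\) with a finite-horizon induction for the uniform bound. Your observation that \(\operatorname{supp}\mu_m^{\mathcal F}\subseteq\operatorname{supp}\mu_m^{\mathcal K}\), so that \(Q_{0,t}\leq1\) holds \(\mu_m^{\mathcal F}\)-almost everywhere, makes explicit a step the paper leaves implicit.
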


\begin{proof}
By \cref{eq:noisy_43_de_functional_calculus},
\cref{eq:td_lrs_joint_de_forcing,eq:td_lrs_joint_de_kernel} are the contour
formulas obtained by inserting \(Q_{0,t}\) and \(Q_{s+1,t}\), without
approximating growing-degree polynomials on the contour.

For every \(\lambda\geq0\),
\(q_s(\lambda)=(1-\eta_s\lambda)^2+\eta_s^2\lambda^2/B_s\geq0\).
Thus all mode-decay factors, forcing values, and kernel entries are nonnegative;
forward substitution gives the unique nonnegative finite-horizon solution.

Under \cref{eq:td_lrs_joint_de_pointwise_stability},
\[
1-q_s(\lambda)
=
\eta_s\lambda
\left[2-\left(1+\frac1{B_s}\right)\eta_s\lambda\right]
\geq
\delta\eta_s\lambda.
\]
It also follows that \(0\leq q_s(\lambda)\leq1\).  Hence
\begin{align*}
\sum_{s<t}
\frac{\eta_s^2}{B_s}\lambda^2Q_{s+1,t}(\lambda)
&\leq
\frac1\delta
\left(\sup_u\frac{\eta_u}{B_u}\right)
\lambda
\sum_{s<t}[1-q_s(\lambda)]Q_{s+1,t}(\lambda)\\
&=
\frac1\delta
\left(\sup_u\frac{\eta_u}{B_u}\right)
\lambda[1-Q_{0,t}(\lambda)]\\
&\leq
\frac1\delta
\left(\sup_u\frac{\eta_u}{B_u}\right)\lambda.
\end{align*}
Integration against \(\mu_m^{\mathcal K}\), using its first moment, proves
\cref{eq:td_lrs_joint_de_row_mass_certificate}.  Stability also gives
\(\mathcal F_{\boldsymbol\eta,\boldsymbol B}(t,m)
\leq\mu_m^{\mathcal F}([0,\infty))
=(\bm\theta^\star)^{\!\top}\bm\Lambda\bm\theta^\star\).
Finite-horizon suprema in \cref{eq:td_lrs_joint_de_volterra} prove
\cref{eq:td_lrs_joint_de_uniform_bound}.
\end{proof}

Setting \(B_s\equiv B\) and then \(\eta_s\equiv\eta\) in
\cref{eq:td_lrs_joint_de_forcing,eq:td_lrs_joint_de_kernel,eq:td_lrs_joint_de_volterra}
gives the fixed-batch specialization and then the constant-schedule
convolution recursion in \cref{eq:noisy_43_clean_risk_volterra}.

\subsection{Stable schedules and spectral-mode decay}
\label{subsec:td_lrs_common_schedule_setup}

The exact finite-width and deterministic-equivalent risk results below require
control of each update and of accumulated feedback.  We first state these
conditions and prove common bounds describing how errors in each spectral mode
decay with elapsed intrinsic time.

\begin{assumption}[Joint-schedule stability]
\label{ass:td_lrs_joint_schedule_stability}
A deterministic schedule satisfies joint-schedule stability if there are
constants \(\delta>0\) and \(\kappa<1\), uniform over the width and horizon,
such that:
\begin{enumerate}[label=(\alph*)]
\item \emph{Pointwise contraction.}
\[
\sup_s\eta_s\left(1+\frac1{B_s}\right)
\lambda_{\max}(\widehat{\bm H})
\leq2-\delta.
\]

\item \emph{Row stability.}
\[
\sup_{t\geq1}\sum_{s<t}K_{t,s}\leq\kappa.
\]
\end{enumerate}
For the DE recursion, pointwise contraction is imposed uniformly over
\(\lambda\in\operatorname{supp}\mu_m^{\mathcal K}\), and row stability
replaces \(K_{t,s}\) by \(\mathcal K_{t,s}\).
\end{assumption}

Here a kernel row fixes the current time \(t\):
\(\sum_{s<t}K_{t,s}\) adds the weights of all past risks entering the risk
at \(t\).  Keeping every such total uniformly below one makes repeated
feedback a contraction on bounded risk paths, so its amplification remains
controlled.

\begin{lemma}[Empirical row-stability certificate]
\label{lem:td_lrs_joint_row_mass_certificate}
Under part \emph{(a)} of
\cref{ass:td_lrs_joint_schedule_stability},
\[
\sup_{t\geq1}\sum_{s<t}K_{t,s}
\leq
\frac{\operatorname{tr}(\widehat{\bm H})}{\delta}
\sup_s\frac{\eta_s}{B_s}.
\]
Consequently, row stability holds whenever
\begin{equation}
\label{eq:td_lrs_joint_row_mass_certificate}
\frac{\operatorname{tr}(\widehat{\bm H})}{\delta}
\sup_s\frac{\eta_s}{B_s}<1.
\end{equation}
\end{lemma}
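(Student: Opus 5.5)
The argument mirrors the DE row-mass certificate in \cref{prop:td_lrs_joint_de_recursion}, with the empirical spectrum of \(\widehat{\bm H}\) in place of the DE trace measure. Fix \(t\geq1\) and work mode by mode. For each empirical eigenvalue \(\widehat\lambda_j\geq0\), the pointwise contraction in part \emph{(a)} of \cref{ass:td_lrs_joint_schedule_stability} gives
\[
1-q_s(\widehat\lambda_j)
=\eta_s\widehat\lambda_j\Bigl[2-\Bigl(1+\tfrac1{B_s}\Bigr)\eta_s\widehat\lambda_j\Bigr]
\geq\delta\eta_s\widehat\lambda_j .
\]
Since also \(q_s(\lambda)=(1-\eta_s\lambda)^2+\eta_s^2\lambda^2/B_s\geq0\), we have \(0\leq q_s(\widehat\lambda_j)\leq1\), so every factor \(Q_{s+1,t}(\widehat\lambda_j)\) lies in \([0,1]\). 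The case \(\eta_s=0\) contributes zero and needs no separate treatment. Zero eigenvalues contribute nothing to \(K_{t,s}\) because of the \(\widehat\lambda_j^2\) weight.

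Next I bound the injection weight of each mode by a telescoping increment:
\[
\frac{\eta_s^2}{B_s}\widehat\lambda_j^2
=\frac{\eta_s}{B_s}\,\widehat\lambda_j\,(\eta_s\widehat\lambda_j)
\leq\frac1\delta\Bigl(\sup_u\frac{\eta_u}{B_u}\Bigr)\widehat\lambda_j\,[1-q_s(\widehat\lambda_j)] .
\]
Multiply by \(Q_{s+1,t}(\widehat\lambda_j)\) and sum over \(s<t\). The identity
\[
[1-q_s]Q_{s+1,t}=Q_{s+1,t}-Q_{s,t}
\]
telescopes the sum to \(1-Q_{0,t}(\widehat\lambda_j)\leq1\). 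Hence
\[
\sum_{s<t}\frac{\eta_s^2}{B_s}\widehat\lambda_j^2Q_{s+1,t}(\widehat\lambda_j)\leq\frac1\delta\Bigl(\sup_u\frac{\eta_u}{B_u}\Bigr)\widehat\lambda_j .
\]

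Summing over \(j\) and exchanging the finite sums in the definition of \(K_{t,s}\) gives
\[
\sum_{s<t}K_{t,s}\leq\delta^{-1}\Bigl(\sup_s\frac{\eta_s}{B_s}\Bigr)\sum_j\widehat\lambda_j
=\delta^{-1}\Bigl(\sup_s\frac{\eta_s}{B_s}\Bigr)\operatorname{tr}(\widehat{\bm H}),
\]
uniformly in \(t\). Taking the supremum over \(t\) proves the displayed bound. The consequence is then immediate: under \cref{eq:td_lrs_joint_row_mass_certificate}, the right-hand side is a constant strictly below one, which is exactly row stability.

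There is no real obstacle here. The only point needing care is the telescoping step, which requires \(q_s\in[0,1]\) so that the products \(Q\) are monotone and bounded. Both the upper bound and nonnegativity of \(q_s\) follow from the margin \(\delta\) and the sum-of-squares form of \(q_s\).
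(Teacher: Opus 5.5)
Your proposal is correct and is essentially the paper's proof: the pointwise margin bounds each mode's injection weight by \(\delta^{-1}(\eta_s/B_s)\widehat\lambda_j[1-q_s(\widehat\lambda_j)]\), the identity \([1-q_s]Q_{s+1,t}=Q_{s+1,t}-Q_{s,t}\) telescopes the sum over \(s\) to \(1-Q_{0,t}\leq1\), and summing over modes produces \(\operatorname{tr}(\widehat{\bm H})\). One small correction to your closing remark: the telescoping identity is purely algebraic, and the only inputs actually used are \(1-q_s\geq0\) and \(Q\geq0\), not monotonicity of the products \(Q\).
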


\begin{proof}
For each empirical eigenvalue, the identity
\[
\frac{(\eta_s^2/B_s)\widehat\lambda_j^2}
{1-q_s(\widehat\lambda_j)}
=
\frac{\eta_s\widehat\lambda_j/B_s}
{2-(1+1/B_s)\eta_s\widehat\lambda_j}
\]
and pointwise stability bound the right-hand side by
\(\eta_s\widehat\lambda_j/(B_s\delta)\).  Summing in \(s\) and using the
telescoping identity for
\([1-q_s(\widehat\lambda_j)]Q_{s+1,t}(\widehat\lambda_j)\), then summing in
\(j\), proves the stated bound.
\end{proof}

\begin{assumption}[Regular spectral-mode decay]
\label{ass:td_lrs_regular_spectral_mode_decay}
A deterministic schedule satisfies the required one-step regularity if either:
\begin{enumerate}[label=(\alph*)]
\item \(\sup_{s,z\in\operatorname{spec}(\widehat{\bm H})}\eta_sz<1\);

\item for some \(B_{\max}<\infty\),
\(\sup_sB_s\leq B_{\max}\).
\end{enumerate}
For DE statements, condition \emph{(a)} takes the supremum over
\(\operatorname{supp}\mu_m^{\mathcal K}\) instead of
\(\operatorname{spec}(\widehat{\bm H})\).
\end{assumption}

\begin{lemma}[Intrinsic-time decay of individual spectral modes]
\label{lem:td_lrs_propagator}
Assume part \emph{(a)} of
\cref{ass:td_lrs_joint_schedule_stability} and
\cref{ass:td_lrs_regular_spectral_mode_decay}.  Then, for constants
\(0<c<C<\infty\),
\[
e^{-Cz(T_t-T_s)}\leq Q_{s,t}(z)\leq e^{-cz(T_t-T_s)}.
\]
If \(z(T_t-T_s)=O(1)\) and
\(\max_{s\leq u<t}\eta_u/(T_t-T_s)\to0\), then
\[
Q_{s,t}(z)=\exp\{-2z(T_t-T_s)+o(1)\}.
\]
\end{lemma}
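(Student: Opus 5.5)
The plan is to work one step at a time. For each step we take logarithms and write \(Q_{s,t}(z)=\exp\{\sum_{u=s}^{t-1}\log q_u(z)\}\). Then we bound each factor \(q_u(z)\) between two exponentials in \(\eta_uz\), and sum over \(u\) using \(\sum_{u=s}^{t-1}\eta_u=T_t-T_s\).

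\textbf{Upper bound.} We use the identity
\[
1-q_u(z)=\eta_uz\bigl[2-(1+1/B_u)\eta_uz\bigr].
\]
Pointwise contraction (part (a) of \cref{ass:td_lrs_joint_schedule_stability}) gives \(1-q_u(z)\ge\delta\eta_uz\) on the spectrum. It also gives \(0\le q_u(z)\le1\), since \(q_u(z)=(1-\eta_uz)^2+\eta_u^2z^2/B_u\ge0\). Because \(\log q\le q-1\), this yields \(q_u(z)\le e^{-\delta\eta_uz}\). Taking the product over \(u\) gives the upper bound with \(c=\delta\).

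\textbf{Lower bound.} This is the main obstacle, because \(q_u(z)\) must stay bounded away from \(0\) relative to \(\eta_uz\). We handle the two cases of \cref{ass:td_lrs_regular_spectral_mode_decay} separately.
\begin{itemize}
\item Under (a), \(x:=\eta_uz\le x_0<1\) uniformly. Then \(q_u(z)\ge(1-x)^2\). Since \(-\log(1-x)\le x/(1-x_0)\) on \([0,x_0]\), we get \(q_u\ge e^{-2x/(1-x_0)}\).
\item Under (b), the completed-square identity from the proof of \cref{thm:noisy_43_aggregate_noise_criterion} gives \(q_u(z)\ge 1/(B_u+1)\ge 1/(B_{\max}+1)\). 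Write \(x=\eta_uz\in[0,2]\) (note \(x\le 2-\delta\) by contraction) and split at \(x=1/2\). For \(x\le1/2\), we have \(q_u\ge(1-x)^2\ge e^{-4x}\). For \(x>1/2\), we have \(q_u\ge(B_{\max}+1)^{-1}\ge e^{-2\log(B_{\max}+1)\,x}\).
\end{itemize}
In either case \(q_u(z)\ge e^{-C\eta_uz}\) for a uniform constant \(C\), and taking the product gives the lower bound. The point to watch here is that the constants must not depend on \(z\), \(u\), or the width. The choice of case in \cref{ass:td_lrs_regular_spectral_mode_decay} is exactly what guarantees this.

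\textbf{Sharp asymptotic.} Assume \(z(T_t-T_s)=O(1)\) and \(\epsilon:=\max_{s\le u<t}\eta_u/(T_t-T_s)\to0\). Then every \(x_u:=\eta_uz\le\epsilon\, z(T_t-T_s)\to0\) uniformly in \(u\). Expanding the logarithm gives
\[
\log q_u(z)=-2x_u+O(x_u^2),
\]
with a uniform constant, valid once all \(x_u\) are small. Summing over \(u\),
\[
\sum_u\log q_u(z)=-2z(T_t-T_s)+O\Bigl(\max_u x_u\sum_u x_u\Bigr).
\]
The error term is at most \(\epsilon\,[z(T_t-T_s)]^2=o(1)\). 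Exponentiating yields \(Q_{s,t}(z)=\exp\{-2z(T_t-T_s)+o(1)\}\).
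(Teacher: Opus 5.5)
Your proof is correct and takes the same route as the paper. The paper likewise combines \(1-q_u(z)=\eta_uz[2-(1+1/B_u)\eta_uz]\) with a uniform positive lower bound on \(q_u(z)\), available under either regularity alternative, to make \(-\log q_u(z)\) comparable to \(\eta_uz\); it then sums over steps and Taylor-expands \(\log Q_{s,t}\) with error \(O(z^2\sum_u\eta_u^2)\) to get the sharp limit. Your explicit case split, including the split at \(\eta_uz=1/2\) under the bounded-batch alternative, simply makes the paper's one-line comparability claim quantitative.
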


\begin{proof}
The joint one-step factorization is
\(q_s(z)=(1-\eta_sz)^2+\eta_s^2z^2/B_s\), with
\(1-q_s(z)=\eta_sz[2-(1+1/B_s)\eta_sz]\).
Pointwise stability gives \(0\leq q_s(z)\leq1\).  Under condition \emph{(a)},
choose \(\rho<1\) such that \(\eta_sz\leq\rho\) uniformly; then
\(q_s(z)\geq(1-\rho)^2\).  A uniform batch bound also keeps the minimum of
this quadratic uniformly positive.  Thus \(-\log q_s(z)\) is
comparable to \(1-q_s(z)\), and summation proves the two exponential bounds.
Taylor expansion gives
\[
\log Q_{s,t}(z)
=-2z(T_t-T_s)+O\left(z^2\sum_{u=s}^{t-1}\eta_u^2\right),
\]
so the relative-mesh condition proves the critical-scale limit.
\end{proof}

\begin{remark}[Fixed-batch specialization]
For \(B_t\equiv B\), the uniform batch bound applies and
\(q_s=q_{\eta_s}\geq1/(B+1)\).  Thus both spectral-mode decay conclusions hold
without a separate joint log-regularity assumption.
\end{remark}

Thus elapsed intrinsic time determines spectral-mode decay in the fine-mesh
limit; at finite steps, the exact decay still depends on the individual steps.

\subsection{PLRF regimes and admissible schedule families}
\label{subsec:td_lrs_plrf_regimes_and_admissibility}

We now specialize the preceding stability and spectral-mode decay requirements to the
PLRF parameter regimes used below.

The eight open PLRF propagation subregimes below comprise three families:
finite bulk (FB), long memory (LM), and integrable memory (IM).  This taxonomy
is PLRF-specific.

\begin{paperdefinition}[Open PLRF propagation subregimes]
\label{def:td_lrs_plrf_open_subregimes}
Write \(p:=2\alpha+2\beta-1\) and assume \(p>0\).  Set
\(q_{\mathcal F}:=p/(2\alpha)\); when \(\alpha>1/4\), also write
\(q_{\mathcal K}:=2-1/(2\alpha)\).  The eight open subregimes are
\[
\begin{array}{c|l}
0<\alpha<\frac14
&\mathrm{FB}_1:q_{\mathcal F}<1,\qquad
 \mathrm{FB}_2:q_{\mathcal F}>1,\\[0.25em]
\frac14<\alpha<\frac12
&\mathrm{LM}_1:q_{\mathcal F}<q_{\mathcal K},\qquad
 \mathrm{LM}_2:q_{\mathcal K}<q_{\mathcal F}<1,\qquad
 \mathrm{LM}_3:q_{\mathcal F}>1,\\[0.25em]
\alpha>\frac12
&\mathrm{IM}_1:q_{\mathcal F}<1,\qquad
 \mathrm{IM}_2:1<q_{\mathcal F}<q_{\mathcal K},\qquad
 \mathrm{IM}_3:q_{\mathcal F}>q_{\mathcal K}.
\end{array}
\]
\end{paperdefinition}

Within subregime \(\mathrm{IM}_1\), the stability-limited hard branch is
\begin{equation*}
\alpha>\frac12,
\qquad
\frac12-\alpha<\beta<0,
\qquad
0<p<2\alpha-1.
\end{equation*}
The lower \(\beta\) bound is precisely finite target energy; feature
distortion occurs in \(\mathrm{IM}_{2,3}\).  We exclude the propagation
boundaries \(\alpha\in\{1/4,1/2\}\), \(\beta\in\{1/2,\alpha\}\), and
\(2\alpha+2\beta=1\).  The excluded optimizer crossover \(\beta=0\) splits
the two \(\mathrm{IM}_1\) branches, while \(\alpha=1\) splits interior and
source-saturated compute branches within \(\mathrm{IM}_{2,3}\); neither is a
propagation boundary.

\begin{paperdefinition}[DE-admissible integer schedule family]
\label{def:admissible_integer_schedules}
Fix a model, open subregime, \(0<C_0<\infty\), and small \(c_\star>0\).  A
deterministic, sample-independent family (\(\eta_s>0,B_s\in\mathbb N\)) is
DE-admissible along width--horizon sequences satisfying \(T_t\to\infty\) and
\(T_t\leq C_0m^{2\alpha}\) if it satisfies the DE versions of part
\emph{(a)} of \cref{ass:td_lrs_joint_schedule_stability} and
\cref{ass:td_lrs_regular_spectral_mode_decay} uniformly, its intrinsic mesh
is uniformly bounded, and
\(\sup_s r_s^{-1}\leq c_\star m^{2\alpha-1}\) or \(c_\star\) for
\(0<\alpha<1/2\), \(\alpha\neq1/4\), or \(\alpha>1/2\), respectively.
All class constants are budget-independent, with no finite upper ratio cap; see
\cref{lem:td_lrs_de_volterra_reduction,prop:td_lrs_joint_de_finite_bulk_kernel}.
\end{paperdefinition}

\subsection{Finite-bulk empirical spectrum, kernel, and gap}
\label{app:noisy_43_finite_bulk_empirical}

When \(\alpha<1/4\), the squared spectral mass diverges in the
infinite-spectrum limit, so a width-independent memory profile no longer
applies.  At finite width, the empirical spectrum is cut off at scale
\(m^{-2\alpha}\).  The next result locates this spectral band; the
following kernel and gap results show how it controls exact conditional SGD.

\begin{lemma}[Finite-bulk empirical spectral event]
\label{lem:noisy_43_finite_bulk_spectral_event}
Suppose \(\lambda_j=j^{-2\alpha}\), \(0<\alpha<1/4\), and
\(d/m\to c\in(1,\infty)\).  There are constants
\(a_-,a_+,b_0,C_{\rm tr}>0\) and events \(\mathcal E_m\), depending only on
\(\bm W\), with \(\mathbb P_{\bm W}(\mathcal E_m)\to1\), such that on
\(\mathcal E_m\),
\begin{equation}
\operatorname{tr}(\widehat{\bm H}^{\,2})
\leq C_{\rm tr}m^{1-4\alpha},
\qquad
\#\left\{j:
a_-m^{-2\alpha}\leq\widehat\lambda_j\leq a_+m^{-2\alpha}
\right\}\geq b_0m.
\label{eq:noisy_43_empirical_bulk_band}
\end{equation}
\end{lemma}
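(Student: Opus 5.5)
Two estimates are needed: a trace bound and a lower bound on the number of eigenvalues in the bulk band. The nonzero spectrum of \(\widehat{\bm H}=\bm\Lambda^{1/2}\bm W\bm W^{\!\top}\bm\Lambda^{1/2}\) coincides with that of the \(m\times m\) Gram matrix \(\bm G:=\bm W^{\!\top}\bm\Lambda\bm W=\sum_{j\le d}\lambda_j\bm w_j\bm w_j^{\!\top}\). Here \(\bm w_j\in\mathbb R^m\) are the i.i.d.\ rows of \(\bm W\), with \(\mathbb E\bm w_j\bm w_j^{\!\top}=\bm I_m/m\). We therefore work with \(\bm G\) throughout.

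\textbf{Trace bound.} Expand
\[
\operatorname{tr}(\bm G^2)=\sum_j\lambda_j^2\|\bm w_j\|^4+\sum_{j\ne k}\lambda_j\lambda_k\langle\bm w_j,\bm w_k\rangle^2 .
\]
The first-moment computation uses \(\mathbb E\|\bm w_j\|^4=1+2/m\) and \(\mathbb E\langle\bm w_j,\bm w_k\rangle^2=1/m\). This gives
\[
\mathbb E\operatorname{tr}(\bm G^2)\lesssim\sum_{j\le d}\lambda_j^2+\frac{(\operatorname{tr}\bm\Lambda)^2}{m}.
\]
Since \(0<\alpha<1/4\) and \(d\asymp m\), we have \(\sum_{j\le d}j^{-4\alpha}\asymp m^{1-4\alpha}\) and \(\operatorname{tr}\bm\Lambda\asymp m^{1-2\alpha}\). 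Both terms are therefore \(\asymp m^{1-4\alpha}\); this is the same second moment recorded in \cref{lem:noisy_43_de_positive_spectral_measure}. Markov's inequality with a large constant gives the trace part of \cref{eq:noisy_43_empirical_bulk_band} with probability at least \(1-\epsilon\). To get probability tending to one, I would instead bound the variance, which is routine with Gaussian fourth moments, and apply Chebyshev.

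\textbf{Band count.} Because \(\alpha<1/4\), the bulk of \(\operatorname{tr}\bm G\asymp m^{1-2\alpha}\) is spread over \(m\) eigenvalues, so the typical eigenvalue of \(\bm G\) is \(m^{-2\alpha}\). The plan is a Paley--Zygmund-type counting argument on the normalized eigenvalues \(\mu_i:=m^{2\alpha}\widehat\lambda_i\), \(i\le m\), which will need two sums to concentrate:
\[
\sum_i\mu_i=m^{2\alpha}\operatorname{tr}\bm G\ge c_1m,\qquad \sum_i\mu_i^2\le C_2m.
\]
The first holds by concentration of \(\operatorname{tr}\bm G=\sum_j\lambda_j\|\bm w_j\|^2\), a weighted chi-square sum with mean \(\operatorname{tr}\bm\Lambda\) and relative variance \(O(1/m)\). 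The second is the trace bound above.

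Choose \(a_+\) large. The eigenvalues with \(\mu_i>a_+\) contribute at most \(\sum_i\mu_i^2/a_+\le C_2m/a_+\) to \(\sum_i\mu_i\). Those with \(\mu_i<a_-\) contribute at most \(a_-m\). Taking \(a_+=4C_2/c_1\) and \(a_-=c_1/4\), the band eigenvalues carry mass at least \(c_1m/2\). Each band eigenvalue is at most \(a_+\), so their number is at least \(c_1m/(2a_+)=:b_0m\).

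\textbf{Main obstacle.} The delicate point is the lower trace concentration together with making sure these \(m\) nonzero eigenvalues are the relevant ones. With \(d/m\to c>1\), the matrix \(\bm G\) is generically full rank, so the counting over \(i\le m\) is legitimate. The Paley--Zygmund step needs only the two moments, so no local law or deterministic-equivalent input is required. I would first verify the variance computation for \(\operatorname{tr}(\bm G^2)\); its fourth-order cross terms \(\lambda_j\lambda_k\lambda_l\lambda_n\) must be shown to be \(o(m^{2-8\alpha})\). That follows since each overlap term carries an extra factor \(1/m\) and \(\sum_j\lambda_j^2\asymp m^{1-4\alpha}\). The event \(\mathcal E_m\) is then the intersection of the two concentration events, and it depends only on \(\bm W\).
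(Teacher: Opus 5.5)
Your proposal is correct. For the band count, however, it takes a genuinely different route from the paper.

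The paper argues structurally. It discards the head coordinates $i\le J=\lfloor\varepsilon m\rfloor$ with $\varepsilon<\min\{1,c-1\}$ and notes that $\bm\Lambda_{\rm tail}\asymp m^{-2\alpha}\bm I$. It then applies Gaussian extreme-singular-value bounds to the still-tall block $\bm W_{\rm tail}$, which is where $c>1$ enters, to get $\bm W_{\rm tail}^{\!\top}\bm W_{\rm tail}\asymp\bm I_m$. Finally, rank interlacing against the positive semidefinite head of rank at most $J$ leaves at least $m-J$ eigenvalues in the band.

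You instead run a second-moment counting argument on the normalized spectrum. It needs only concentration of $\operatorname{tr}\bm G$ and the bound on $\operatorname{tr}\bm G^2$ that you must prove anyway.

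\begin{itemize}
\item The paper's route gives a stronger conclusion: all but an $\varepsilon$-fraction of the spectrum lies in the band, and that part of the argument does not use $\alpha<1/4$. The price is a nontrivial random-matrix input and the hypothesis $c>1$.
\item Your route is more elementary and works for any fixed aspect ratio with $d\asymp m$. It relies on $\alpha<1/4$, through $\sum_i\mu_i^2\lesssim m$, which stops a few large eigenvalues from absorbing the trace. It also only yields a small fraction $b_0=c_1^2/(8C_2)$.
\end{itemize}

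For the trace bound the two proofs agree except at the variance step. The paper gets $\operatorname{Var}[\operatorname{tr}(\widehat{\bm H}^{\,2})]\lesssim\operatorname{tr}(\bm\Lambda^2)/m=o(m^{2-8\alpha})$ in one line from Gaussian Poincar\'e with gradient $4\bm\Lambda\bm W(\bm W^{\!\top}\bm\Lambda\bm W)$. That is quicker than the fourth-moment bookkeeping you describe, though your count also closes.

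Your full-rank remark is unnecessary. Zero eigenvalues of $\bm G$ simply fall into the class $\mu_i<a_-$, and the positive eigenvalues of $\widehat{\bm H}$ and $\bm G$ agree with multiplicity. The count therefore transfers regardless.
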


\begin{proof}
The \(m\) nonzero eigenvalues of \(\widehat{\bm H}\) equal those of
\(\bm W^{\!\top}\bm\Lambda\bm W\).  Choose
\(0<\varepsilon<\min\{1,c-1\}\), set \(J=\lfloor\varepsilon m\rfloor\), and
restrict \((\bm\Lambda,\bm W)\) to rows \(J<i\leq d\).  Then
\(\bm\Lambda_{\rm tail}\asymp m^{-2\alpha}\bm I\), while Gaussian
singular-value bounds give
\(\bm W_{\rm tail}^{\!\top}\bm W_{\rm tail}\asymp\bm I_m\) with probability
tending to one.  Hence the tail Gram matrix has all its eigenvalues in a fixed
multiple of \(m^{-2\alpha}\).  The positive semidefinite head has rank at most
\(J\), so rank interlacing leaves at least \(m-J\asymp m\) eigenvalues in the
band in \cref{eq:noisy_43_empirical_bulk_band}.

For the trace upper bound, separating equal and unequal Gaussian-column
indices gives
\[
\mathbb E\operatorname{tr}(\widehat{\bm H}^{\,2})
=\left(1+\frac1m\right)\operatorname{tr}(\bm\Lambda^2)
+\frac{(\operatorname{tr}\bm\Lambda)^2}{m}
\asymp m^{1-4\alpha}.
\]
Gaussian Poincar\'e applied to this trace, whose gradient with respect to
\(\bm W\) is
\(4\bm\Lambda\bm W(\bm W^{\!\top}\bm\Lambda\bm W)\), yields
\[
\operatorname{Var}\!\left[\operatorname{tr}(\widehat{\bm H}^{\,2})\right]
\leq
\frac{C}{m}\mathbb E\!\left[
\|\bm W\|_{\rm op}^4
\operatorname{tr}(\bm W^{\!\top}\bm\Lambda^2\bm W)
\right]
\leq C\frac{\operatorname{tr}(\bm\Lambda^2)}{m}
=o\!\left(m^{2-8\alpha}\right).
\]
Here we used Gaussian operator-norm moments, \(d\asymp m\), and
\(\operatorname{tr}(\bm\Lambda^2)\asymp m^{1-4\alpha}\).
Concentration and the spectral-band event give the claimed common event.
\end{proof}

\begin{lemma}[Finite-bulk kernel for subregimes \(\mathrm{FB}_1\) and \(\mathrm{FB}_2\)]
\label{lem:td_lrs_fb_bulk_kernel}
Suppose \(\lambda_j=j^{-2\alpha}\), \(0<\alpha<1/4\), and \(d/m\to c>1\).
Fix \(0<C_0<\infty\).  Up to the horizon \(t\), assume common margins in
part \emph{(a)} of \cref{ass:td_lrs_joint_schedule_stability} and in
\cref{ass:td_lrs_regular_spectral_mode_decay}, with the suprema in those
conditions restricted to \(s<t\).  On the events
\(\mathcal E_m\) in
\cref{lem:noisy_43_finite_bulk_spectral_event}, the following holds
simultaneously for every deterministic, sample-independent schedule satisfying
these margins and every \(0\leq s<t\) with \(T_t\leq C_0m^{2\alpha}\):
\[
K_{t,s}
\asymp
\frac{\Delta T_s}{r_s}m^{1-4\alpha},
\qquad
\sup_{u\leq t}\sum_{s<u}K_{u,s}
\lesssim m^{1-4\alpha}T_t\sup_{s<t}\frac1{r_s}.
\]
The first expression is zero at a zero step.  Under
\(B_s\equiv B\) and
\(\sup_{s<t}\eta_s\lesssim m^{2\alpha-1}\), these statements reduce to
\begin{equation}
\label{eq:td_lrs_fb_bulk_kernel}
K_{t,s}\asymp \frac{\eta_s^2}{B}m^{1-4\alpha},
\qquad
\sup_{u\leq t}\sum_{s<u}K_{u,s}
\lesssim m^{1-4\alpha}\left(\sup_{s<t}\frac{\eta_s}{B}\right)T_t=o(1),
\end{equation}
whenever \(T_t=o(m^{2\alpha})\).  The result is target-independent and
applies to both finite-bulk subregimes \(\mathrm{FB}_1\) and \(\mathrm{FB}_2\).
\end{lemma}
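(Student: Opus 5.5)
The proof uses the exact two-time kernel
\(K_{t,s}=(\eta_s^2/B_s)\sum_j\widehat\lambda_j^2Q_{s+1,t}(\widehat\lambda_j)\)
from \cref{subsec:td_lrs_joint_exact_dynamics}. We sandwich it between the bulk-band contribution from below and the full trace \(\operatorname{tr}(\widehat{\bm H}^{\,2})\) from above. Both bounds are taken on the event \(\mathcal E_m\) of \cref{lem:noisy_43_finite_bulk_spectral_event}. Since \(\mathcal E_m\) depends only on \(\bm W\), and the schedule is deterministic and sample-independent, every bound is deterministic on \(\mathcal E_m\). This gives simultaneity over all schedules satisfying the margins and over all pairs \(s<t\). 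Throughout we write \(\eta_s^2/B_s=\Delta T_s/r_s\), so a zero step gives \(K_{t,s}=0\) trivially.

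\emph{Upper bound.} Pointwise contraction (part (a) of \cref{ass:td_lrs_joint_schedule_stability}) gives \(0\le q_u\le1\), hence \(Q_{s+1,t}\le1\). Therefore
\[
K_{t,s}\le\frac{\Delta T_s}{r_s}\operatorname{tr}(\widehat{\bm H}^{\,2})\le C_{\rm tr}\frac{\Delta T_s}{r_s}m^{1-4\alpha}.
\]
Summing over \(s<u\le t\) and bounding \(1/r_s\) by its supremum gives the row bound \(\lesssim m^{1-4\alpha}T_t\sup_{s<t}r_s^{-1}\).

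\emph{Lower bound.} Restrict the sum to the at least \(b_0m\) empirical eigenvalues in \([a_-m^{-2\alpha},a_+m^{-2\alpha}]\). Each contributes \(\widehat\lambda_j^2\ge a_-^2m^{-4\alpha}\), so it suffices to show \(Q_{s+1,t}(\widehat\lambda_j)\ge c>0\) uniformly for these modes. Here we invoke \cref{lem:td_lrs_propagator}, which applies because we assume part (a) together with \cref{ass:td_lrs_regular_spectral_mode_decay}, with margins restricted to \(s<t\). It gives
\[
Q_{s+1,t}(z)\ge e^{-Cz(T_t-T_{s+1})}\ge e^{-Ca_+m^{-2\alpha}T_t}\ge e^{-Ca_+C_0},
\]
using \(T_t\le C_0m^{2\alpha}\). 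This yields \(K_{t,s}\ge a_-^2b_0e^{-Ca_+C_0}(\Delta T_s/r_s)m^{1-4\alpha}\).

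The main obstacle is checking that the constant \(C\) in \cref{lem:td_lrs_propagator} stays uniform here. That requires \(-\log q_u(z)\lesssim1-q_u(z)\) with constants depending only on the margins, i.e.\ keeping \(q_u\) bounded away from zero. Under option (a) of \cref{ass:td_lrs_regular_spectral_mode_decay} this holds since \(q_u\ge(1-\rho)^2\). Under option (b), \(q_u\ge\min_x[(1-x)^2+x^2/B_{\max}]=1/(B_{\max}+1)\). With this constant-uniform bound, the two-sided \(\asymp\) follows.

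\emph{Fixed-batch reduction.} With \(B_s\equiv B\) we have \(\Delta T_s/r_s=\eta_s^2/B\), and the regularity assumption holds automatically by the remark after \cref{lem:td_lrs_propagator}. For the row bound, \(\sup_s r_s^{-1}=\sup_s\eta_s/B\lesssim m^{2\alpha-1}\). The row sum is then \(\lesssim m^{1-4\alpha}m^{2\alpha-1}T_t=T_tm^{-2\alpha}=o(1)\) whenever \(T_t=o(m^{2\alpha})\). Nothing in the argument used \(\bm\theta^\star\), so the statement is target-independent and holds for both \(\mathrm{FB}_1\) and \(\mathrm{FB}_2\).
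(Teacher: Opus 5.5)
Your proof is correct and follows the paper's argument. The upper bound uses \(Q_{s+1,t}\le 1\) together with the trace bound on \(\mathcal E_m\); the lower bound restricts to the \(\Theta(m)\) bulk-band modes and applies \cref{lem:td_lrs_propagator} with \(T_t-T_{s+1}\le C_0m^{2\alpha}\); the row estimate comes from \(\sum_{s<u}\Delta T_s/r_s\le T_u\sup_{s<t}r_s^{-1}\). The paper handles the restriction of the margins to \(s<t\) by extending the finite-horizon schedule with zero steps; your direct check that \(q_u\) stays bounded below for \(s<u<t\), which keeps the propagator constant uniform, serves the same purpose.
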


\begin{proof}
On \(\mathcal E_m\), \cref{eq:noisy_43_empirical_bulk_band} supplies the
trace upper bound and \(\Theta(m)\) eigenvalues with
\(\widehat\lambda\asymp m^{-2\alpha}\).  The upper bound uses
\(0\leq Q_{s+1,t}\leq1\).  Apply the spectral-mode decay lemma after extending
the finite-horizon schedule by zero steps.  On this bulk band, its lower bound
and \(T_t-T_{s+1}\leq C_0m^{2\alpha}\) give
\(Q_{s+1,t}(\widehat\lambda)\gtrsim1\), so these \(\Theta(m)\) modes give
the matching lower bound after multiplication by
\(\eta_s^2/B_s=\Delta T_s/r_s\).  Finally,
\[
\sum_{s<u}\frac{\Delta T_s}{r_s}
\leq T_u\sup_{s<t}\frac1{r_s},
\]
which proves the joint row-mass estimate.  Under the fixed-batch
substitution, \(\Delta T_s/r_s=\eta_s^2/B\) and
\[
\frac1B\sum_{s<u}\eta_s^2
\leq\left(\sup_{s<t}\frac{\eta_s}{B}\right)T_u.
\]
This proves \cref{eq:td_lrs_fb_bulk_kernel}; the
peak bound and strict window make the latter \(o(1)\).
\end{proof}

\begin{corollary}[Exact finite-bulk noisy--clean gap]
\label{cor:td_lrs_exact_finite_bulk_gap}
Retain the model, source window, and common schedule margins of
\cref{lem:td_lrs_fb_bulk_kernel}.  Fix \(\sigma^2>0\), a target with finite
population energy, and \(\kappa\in(0,1)\).  There is a sufficiently small
\(c_\star>0\) such that, on the same \(\bm W\)-only events, simultaneously for
every eligible schedule and finite horizon satisfying
\[
T_t\leq C_0m^{2\alpha},
\qquad
\sup_{s<t}\frac1{r_s}\leq c_\star m^{2\alpha-1},
\]
the exact conditional risks obey
\[
R_{\sigma,t}-R_{0,t}
\asymp
\sigma^2m^{1-4\alpha}\sum_{s<t}\frac{\eta_s^2}{B_s}
=
\sigma^2m^{1-4\alpha}\int_0^{T_t}\frac{\mathrm du}{r(u)}.
\]
The comparison constants are independent of width, horizon, schedule, target,
and \(\sigma^2\); they may depend on the fixed \(\alpha,c,C_0,\kappa\) and the
stability and propagation margins.  On the strict window
\(0<T_t=o(m^{2\alpha})\),
\[
R_{\sigma,t}-R_{0,t}
=
\sigma^2\sum_{s<t}K_{t,s}
\left[1+O\!\left(\frac{T_t}{m^{2\alpha}}\right)\right]
\]
uniformly over the same schedule class.
\end{corollary}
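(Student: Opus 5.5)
The approach is to subtract the clean recursion from the noisy one in \cref{eq:td_lrs_joint_volterra}. Put \(Z_t:=R_{\sigma,t}-R_{0,t}\). Both risks share the forcing \(F_t\) and the kernel \(K_{t,s}\), so
\[
Z_t=\sum_{s<t}K_{t,s}\bigl(Z_s+\sigma^2\bigr),
\]
which is a nonnegative Volterra equation driven only by \(\sigma^2\). Induction then gives \(Z_t\geq0\), and keeping only the direct term gives the lower bound \(Z_t\geq\sigma^2\sum_{s<t}K_{t,s}\).

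For the upper bound, use row stability. By \cref{lem:td_lrs_fb_bulk_kernel}, on \(\mathcal E_m\) we have
\[
\sup_{u\leq t}\sum_{s<u}K_{u,s}\lesssim m^{1-4\alpha}T_t\sup_{s<t}r_s^{-1}\leq C\,C_0\,c_\star.
\]
Choosing \(c_\star\) small makes this at most \(\kappa\), uniformly over schedules and horizons in the window. Then \(M_t:=\max_{u\leq t}Z_u\) satisfies \(M_t\leq\kappa(M_t+\sigma^2)\). Applying this row by row yields
\[
Z_t\leq\sigma^2\sum_{s<t}K_{t,s}+\kappa\sigma^2\sum_{s<t}K_{t,s}\cdot\frac{1}{1-\kappa}\cdot(\text{row ratio}).
\]
This is the delicate point. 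I would instead bound \(Z_s\leq\frac{\sigma^2\kappa}{1-\kappa}\) only crudely, and compare \(\sum_{s<t}K_{t,s}Z_s\) with \(\sum_{s<t}K_{t,s}\) times \(\sup_{s<t}Z_s/\sigma^2\). Because the kernel row sums are not monotone in \(t\), a constant-factor bound is the honest output.

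Next, substitute the two-sided kernel estimate \(K_{t,s}\asymp(\Delta T_s/r_s)m^{1-4\alpha}\) from \cref{lem:td_lrs_fb_bulk_kernel}. This gives
\[
\sum_{s<t}K_{t,s}\asymp m^{1-4\alpha}\sum_{s<t}\eta_s^2/B_s.
\]
The identity \(\sum_{s<t}\Delta T_s/r_s=\int_0^{T_t}du/r(u)\) is exact by the piecewise-constant interpolation of \(r\). Since \(\mathcal E_m\) depends only on \(\bm W\), and the lemma holds simultaneously for all eligible schedules, the statement is uniform in the schedule, the target and \(\sigma^2\). Constants depend only on \(\alpha,c,C_0,\kappa\) and the margins.

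For the refined strict-window statement, iterate once:
\[
Z_t-\sigma^2\sum_{s<t}K_{t,s}=\sum_{s<t}K_{t,s}Z_s\leq\Bigl(\sum_{s<t}K_{t,s}\Bigr)\max_{s<t}Z_s.
\]
The row mass satisfies \(\sum_{s<u}K_{u,s}\lesssim m^{1-4\alpha}T_u\sup_s r_s^{-1}\lesssim c_\star T_t/m^{2\alpha}\), using \(r_s^{-1}\leq c_\star m^{2\alpha-1}\). Also \(\max_{s<t}Z_s\lesssim\sigma^2\max_{u\leq t}\sum_{s<u}K_{u,s}\lesssim\sigma^2 T_t m^{-2\alpha}\). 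So the error is at most \(\sigma^2\) times this row mass times \(O(T_t/m^{2\alpha})\).

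What remains is to compare it with \(\sigma^2\sum_{s<t}K_{t,s}\), and this comparison is the main obstacle. The row at \(t\) can be much smaller than earlier rows, for example when steps are zero near \(t\). I would handle it with the two-sided estimate \(K_{u,s}\asymp K_{t,s}\) for \(s<u\leq t\), which holds because both are \(\asymp(\Delta T_s/r_s)m^{1-4\alpha}\) independently of the later time. This gives
\[
\sum_{s<u}K_{u,s}\lesssim\sum_{s<t}K_{t,s},
\]
hence \(\max_{s<t}Z_s\lesssim\sigma^2\sum_{s<t}K_{t,s}\), and the error is \(O(T_t/m^{2\alpha})\) relative to the leading term. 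The same monotonicity in \(u\) also cleanly fixes the constant-factor upper bound in the main comparison.
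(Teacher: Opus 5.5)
Your proposal is correct and is essentially the paper's argument: subtract the two exact recursions, use row stability to get the uniform bound \(Z_s\le\sigma^2\kappa_t/(1-\kappa_t)\) with \(\kappa_t:=\max_{v\le t}\sum_{s<v}K_{v,s}\), and substitute into the terminal row to obtain \(\sigma^2\sum_{s<t}K_{t,s}\le Z_t\le\frac{\sigma^2}{1-\kappa_t}\sum_{s<t}K_{t,s}\). The two-sided kernel bound then gives the order, and \(\kappa_t=O(T_t/m^{2\alpha})\) gives the strict-window refinement. The \emph{obstacle} you flag is not real: your own bound \(\sum_{s<t}K_{t,s}Z_s\le\bigl(\sum_{s<t}K_{t,s}\bigr)\max_{s<t}Z_s\) already factors out the terminal row, so the detour comparing rows via \(K_{u,s}\asymp K_{t,s}\) is valid but unnecessary.
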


\begin{proof}
Fix one of the events in \cref{lem:td_lrs_fb_bulk_kernel} and write
\(Z_v:=R_{\sigma,v}-R_{0,v}\).  Its row bound gives, for
\(\kappa_t:=\max_{v\leq t}\sum_{s<v}K_{v,s}\),
\[
\kappa_t
\lesssim m^{1-4\alpha}T_t\sup_{s<t}\frac1{r_s}
\lesssim c_\star\frac{T_t}{m^{2\alpha}}
\leq Cc_\star C_0.
\]
Choose \(c_\star\) so that \(Cc_\star C_0\leq\kappa<1\).  Subtracting the
clean and noisy exact recursions in \cref{eq:td_lrs_joint_volterra} yields
\[
Z_v=\sum_{s<v}K_{v,s}(\sigma^2+Z_s).
\]
Positivity and forward induction give
\(0\leq Z_v\leq\sigma^2\kappa_t/(1-\kappa_t)\) for every \(v\leq t\).
Substitution at the terminal row therefore gives the pointwise sandwich
\[
\sigma^2\sum_{s<t}K_{t,s}
\leq Z_t\leq
\frac{\sigma^2}{1-\kappa_t}\sum_{s<t}K_{t,s}.
\]
The two-sided kernel bound in \cref{lem:td_lrs_fb_bulk_kernel}, together with
\(\eta_s^2/B_s=(T_{s+1}-T_s)/r_s\), proves the asserted order and integral
identity.  On the strict window,
\(\kappa_t=O(T_t/m^{2\alpha})\), so the same sandwich gives the relative
refinement.  All implications hold on the common \(\bm W\)-only event, proving
the stated simultaneous high-probability conclusion.
\end{proof}

The result holds for both finite-bulk target subregimes because the clean
forcing cancels from the gap.  It therefore requires only finite target energy,
not a canonical source profile.

\paragraph{A concrete schedule family.}
Fix \(B_0\in\mathbb N\), \(\vartheta\geq0\), and \(c_\eta>0\) with
\(c_\eta/B_0\leq c_\star\), and set
\[
\eta_s\equiv c_\eta m^{2\alpha-1},
\qquad
B_s=\left\lceil B_0(1+T_s)^\vartheta\right\rceil.
\]
On \(\mathcal E_m\),
\(\lambda_{\max}(\widehat{\bm H})\leq
\sqrt{C_{\rm tr}}m^{1/2-2\alpha}\), so the pointwise and logarithmic margins
hold for all sufficiently large \(m\).  The chosen \(c_\eta/B_0\) also gives
the required ratio peak.  Since
\(B_s\asymp B_0(1+T_s)^\vartheta\) and \(\eta_s\to0\), cellwise comparison
gives
\[
\sum_{s<t}\frac{\eta_s^2}{B_s}
\asymp
\frac{c_\eta m^{2\alpha-1}}{B_0}
\int_0^{T_t}(1+u)^{-\vartheta}\,\mathrm du.
\]
Consequently, uniformly as
\(T_t\to\infty\) with \(T_t\leq C_0m^{2\alpha}\),
\[
R_{\sigma,t}-R_{0,t}
\asymp \frac{\sigma^2c_\eta}{B_0}m^{-2\alpha}
\begin{cases}
T_t^{1-\vartheta},&0\leq\vartheta<1,\\
\log T_t,&\vartheta=1,\\
1,&\vartheta>1.
\end{cases}
\]

\paragraph{Boundary case.}
The case \(\alpha=1/4\) is excluded because
\(\operatorname{tr}(\bm\Lambda^2)\asymp\log m\), which introduces
logarithmic corrections to the trace and kernel estimates.

\subsection{Long-memory and integrable-memory deterministic-equivalent risk asymptotics}
\label{subsec:td_lrs_joint_de_ordinary}

\paragraph{DE forcing and memory filters in the LM and IM regimes.}
We now establish the spectral estimates for the PLRF deterministic
equivalent, beginning with forcing and then memory.  Recall
\(q_{\mathcal K}=2-1/(2\alpha)\), and put
\[
h_\alpha(u):=(1+u)^{-q_{\mathcal K}},
\qquad
A_{\alpha,m}:=
\begin{cases}
m^{1-2\alpha},&1/4<\alpha<1/2,\\
1,&\alpha>1/2.
\end{cases}
\]

For \(\alpha>1/4\), DE spectral measures and spectral-mode decay bounds give the
LM/IM memory weight \(h_\alpha\), with schedule dependence through
intrinsic time and \(\eta_s^2/B_s=\Delta T_s/r_s\).  The class-uniform DE risk bound
and explicit PLRF forcing asymptotic are
\cref{thm:td_lrs_joint_fsl,cor:td_lrs_joint_de_explicit_forcing}.

\begin{lemma}[Direct low-source PLRF DE forcing filter]
\label{lem:td_lrs_low_source_de_forcing}
Fix \(0<\alpha<1/2\), \(\beta<1/2\), and
\(p=2\alpha+2\beta-1>0\).  Suppose
\(d/m\in[c_-,c_+]\Subset(1,\infty)\).  For every fixed
\(0<c_0\leq c\leq c_1<\infty\) and \(0<C_0<\infty\),
\begin{equation}
\label{eq:td_lrs_low_source_de_forcing_filter}
\int e^{-cu\lambda}\,\mu_m^{\mathcal F}(\mathrm d\lambda)
\asymp
m^{-p}+(1+u)^{-p/(2\alpha)},
\qquad 0\leq u\leq C_0m^{2\alpha},
\end{equation}
uniformly in \(m,u,c\).
\end{lemma}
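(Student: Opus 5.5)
Write \(\mathfrak m=\mathfrak m(z)\) and \(\bm b:=\bm\Lambda^{1/2}\bm\theta^\star\). We work directly with the Stieltjes transform
\[
g_m(z):=\langle\bm M_m(z)\bm b,\bm b\rangle=\sum_j\frac{\lambda_j(\theta_j^\star)^2}{\lambda_j\mathfrak m(z)-z}
=\int\frac{\mu_m^{\mathcal F}(\mathrm d\lambda)}{\lambda-z}.
\]
Positivity of \(\mu_m^{\mathcal F}\) (\cref{lem:noisy_43_de_positive_spectral_measure}) makes the Laplace-type integral a positive average, so a cutoff argument suffices. Set \(\tau:=cu\). Then
\[
\int e^{-\tau\lambda}\mu_m^{\mathcal F}(\mathrm d\lambda)\asymp \mu_m^{\mathcal F}([0,1/(1+\tau)]),
\]
provided the cumulative mass \(A(x):=\mu_m^{\mathcal F}([0,x])\) satisfies a doubling-type envelope. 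The lower bound is \(e^{-1}A(1/\tau)\). The upper bound follows from integration by parts, which gives \(\int e^{-\tau\lambda}\,\mathrm dA=\tau\int_0^\infty e^{-\tau x}A(x)\,\mathrm dx\) (exactly as in \cref{thm:noisy_43_uniform_tauberian}). Hence the claim reduces to the two-sided estimate
\[
A(x)\asymp m^{-p}+x^{p/(2\alpha)},\qquad C_0^{-1}m^{-2\alpha}\lesssim x\lesssim 1,
\]
uniformly for \(d/m\in[c_-,c_+]\). The restriction \(u\le C_0m^{2\alpha}\) means \(x\gtrsim m^{-2\alpha}\). The constants \(c\in[c_0,c_1]\) only rescale \(x\), which this form absorbs.

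We evaluate \(A\) through the resolvent at the negative real point \(z=-s\), \(s\asymp x\). There \(\mathfrak m(-s)\in(0,1)\) solves the monotone equation written in the proof of \cref{lem:noisy_43_de_positive_spectral_measure}, and we use the standard sandwich \(s\,g_m(-s)\asymp A(s)+s\int_{(s,\infty)}\lambda^{-1}\,\mathrm dA\). First we solve the scalar equation. For \(\alpha<1/2\) we have \(\operatorname{tr}\bm\Lambda\asymp m^{1-2\alpha}\), so \(m\) effectively resolves about \(\mathfrak m\)-scaled modes. For \(s\gg m^{-2\alpha}\), define the effective cutoff index \(j_s\) by \(\lambda_{j_s}\mathfrak m(-s)\asymp s\). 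The balance \(1-\mathfrak m\approx \frac1m\#\{j:\lambda_j\mathfrak m\gtrsim s\}\) together with the tail sum gives
\[
\mathfrak m(-s)\asymp m^{-1}\sum_{j}\frac{s}{\lambda_j\mathfrak m+s}\asymp (d/m)\ \text{-dependent constant}\cdot m^{2\alpha}s\cdots .
\]
We will carry this out carefully in two ranges, \(s\ge m^{-2\alpha}\) and \(s\asymp m^{-2\alpha}\). The outcome is that \(\mathfrak m(-s)\) is bounded above and below by the scale for which \(j_s\asymp\min\{m, s^{-1/(2\alpha)}\}\), up to constants. Next we insert this into \(g_m(-s)=\sum_j\lambda_j(\theta_j^\star)^2/(\lambda_j\mathfrak m+s)\) and split at \(j_s\). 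Modes \(j\le j_s\) contribute \(\asymp\mathfrak m^{-1}\sum_{j\le j_s}j^{-2\beta}\), which is \(\asymp j_s^{1-2\beta}/\mathfrak m\) because \(\beta<1/2\). Modes \(j>j_s\) contribute \(s^{-1}\sum_{j>j_s}j^{-2\alpha-2\beta}\asymp s^{-1}j_s^{-p}\). Multiplying by \(s\) gives \(j_s^{-p}\), matched by the head term. This yields \(x^{p/(2\alpha)}\) for \(x\gg m^{-2\alpha}\). On the other side, once \(j_s\) saturates at order \(m\) (including the null directions \(j\gtrsim m\), which create the zero atom \(\mathcal F_0(m)\)), it gives the floor \(m^{-p}\).

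The positive sandwich \(s g_m(-s)\asymp\) Laplace-type mass then converts back to \(A(x)\). To pass to the filter \(e^{-\tau\lambda}\) we can use the comparison \(\tfrac{s}{\lambda+s}\) versus \(e^{-\lambda/s}\), both directions valid up to constants. For the direction "\(e^{-\tau\lambda}\lesssim\) resolvent", we need the doubling of \(A\) to control the tail \(s\int_{(s,\infty)}\lambda^{-1}\,\mathrm dA\). This is available because the computed profile \(m^{-p}+x^{p/(2\alpha)}\) has exponent \(p/(2\alpha)<1\) for \(\beta<1/2\), so the tail integral is dominated by its lower endpoint.

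The main obstacle is the uniform two-sided control of \(\mathfrak m(-s)\) across the crossover \(s\asymp m^{-2\alpha}\), where head, tail and the \(d/m>1\) excess all compete. We handle it by monotonicity of the fixed-point map in \(u\): we exhibit explicit sub- and super-solutions \(u_\pm=C_\pm\,\mathfrak m_\star(s)\) and verify the sign of \(1-u-\frac1m\sum\lambda_ju/(\lambda_ju+s)\) at each. This uses only integral comparisons for \(\sum j^{-2\alpha}\) and the bounds \(c_\pm\), which makes all constants uniform in \(m,u,c\).
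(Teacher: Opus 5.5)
Your overall route is viable, but it differs from the paper's in how the lower bound is obtained.

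\textbf{Comparison with the paper.} Your upper bound is the aggregated form of the paper's. The paper applies \(\frac{1}{\lambda_ju_x+x}=\int\frac{\mu_{j,m}(\mathrm d\lambda)}{\lambda+x}\) coordinatewise to get \(\mu_{j,m}([0,x])\le 2x/(\lambda_ju_x+x)\lesssim\min\{1,x/\lambda_j\}\), using \(u_x=\mathfrak m(-x)\ge c\) for \(x\ge m^{-2\alpha}\) because \(s_m=\operatorname{tr}(\bm\Lambda)/m\lesssim m^{-2\alpha}\). It then sums against \(j^{-2\alpha-2\beta}\) and closes the Laplace upper bound dyadically.

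For the lower bound, the paper does not use the Stieltjes transform on the negative axis at all. The moment identities \(\mathbb E X_{j,m}=\lambda_j\) and \(\mathbb E X_{j,m}^2=\lambda_j^2+s_m\lambda_j\), with \(s_m/\lambda_j\lesssim1\), give \(\mu_{j,m}([b_-\lambda_j,b_+\lambda_j])\ge b_0\) for every \(j\le d\) via Paley--Zygmund and Markov. Summing over \(j\in[J_u/2,J_u]\) then gives \(\gtrsim J_u^{-p}\) directly. That argument is local and needs no tail control. Yours needs an Abelian step that works only because \(q:=p/(2\alpha)<1\), which \(\beta<1/2\) guarantees. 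In exchange, it uses nothing beyond \(0<\mathfrak m(-s)<1\).

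\textbf{Two steps need repair as written.}

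First, the display \(\mathfrak m(-s)\asymp m^{-1}\sum_j s/(\lambda_j\mathfrak m+s)\asymp\cdots m^{2\alpha}s\) is wrong. The exact identity is \(\frac1m\sum_j\frac{s}{\lambda_j\mathfrak m+s}=\mathfrak m+\frac dm-1\). Moreover, \(m^{2\alpha}s\) is unbounded on your window, whereas \(\mathfrak m<1\). All you need is
\[
\frac{s}{s+s_m}\le\mathfrak m(-s)<1 .
\]
The left inequality holds because \(1-\mathfrak m=\frac1m\sum_j\frac{\lambda_j\mathfrak m}{\lambda_j\mathfrak m+s}\le\mathfrak m\,s_m/s\). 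Hence \(\mathfrak m\asymp1\) for \(s\ge m^{-2\alpha}\), and your split at \(J=s^{-1/(2\alpha)}\) gives \(s\,g_m(-s)\asymp s^{q}\) there, using \(\mathfrak m<1\) for the lower half. The sub/super-solution ``main obstacle'', the regime \(s\ll m^{-2\alpha}\), and the zero atom are never needed.

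Second, the tail \(s\int_{(s,\infty)}\lambda^{-1}\,\mathrm dA\) is of the same order \(s^q\) as the whole transform. Saying it is ``dominated by its lower endpoint'' therefore does not let you drop it and read off \(A(s)\). Cut at \(Ks\) instead. From \(A(\lambda)\le 2\lambda g_m(-\lambda)\le C\lambda^q\) for \(\lambda\ge m^{-2\alpha}\),
\[
\int_{(Ks,\infty)}\frac{s}{\lambda+s}\,\mathrm dA(\lambda)\le s\int_{Ks}^\infty\lambda^{-2}A(\lambda)\,\mathrm d\lambda\le\frac{C}{1-q}K^{q-1}s^q .
\]
So a fixed large \(K\) yields \(A(Ks)\ge\tfrac c2 s^q\) for \(s\in[m^{-2\alpha},1]\). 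The Laplace lower bound then follows in two ranges:
\begin{itemize}
\item \(\int e^{-\tau\lambda}\,\mathrm dA\ge e^{-1}A(1/\tau)\) for \(\tau\le m^{2\alpha}/K\);
\item \(\int e^{-\tau\lambda}\,\mathrm dA\ge e^{-c_1C_0K}A(Km^{-2\alpha})\gtrsim m^{-p}\) for \(m^{2\alpha}/K\le\tau\le c_1C_0m^{2\alpha}\).
\end{itemize}
In particular, you do not need \(A(x)\gtrsim m^{-p}\) down to \(x\asymp C_0^{-1}m^{-2\alpha}\), as your reduction asks.

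Also note that the direction requiring tail control is ``resolvent \(\lesssim\) Laplace at a rescaled time''. The inequality \(e^{-\lambda/s}\le s/(\lambda+s)\) holds pointwise and needs no envelope.
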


\begin{proof}
Write \(A_m(x):=\mu_m^{\mathcal F}([0,x])\),
\(u_x:=\mathfrak m(-x)\), and
\(s_m:=\operatorname{tr}(\bm\Lambda)/m\).
For \(x>0\), the fixed-point equation at \(-x\) and the Stieltjes
representation in \cref{lem:noisy_43_de_positive_spectral_measure} give
\begin{equation*}
1=u_x+\frac1m\sum_{j=1}^d
\frac{\lambda_ju_x}{\lambda_ju_x+x},
\qquad
\frac1{\lambda_ju_x+x}
=\int\frac{\mu_{j,m}(\mathrm d\lambda)}{\lambda+x}.
\end{equation*}
Since \(s_m\lesssim m^{-2\alpha}\), the first identity implies
\(u_x\geq c>0\) whenever \(x\geq m^{-2\alpha}\).  The second identity then
gives
\[
\mu_{j,m}([0,x])
\leq \frac{2x}{\lambda_ju_x+x}
\lesssim \min\!\left\{1,\frac{x}{\lambda_j}\right\}.
\]
For \(m^{-2\alpha}\leq x\leq1\), put
\(J=x^{-1/(2\alpha)}\leq m<d\).  Because
\(\lambda_j(\theta_j^\star)^2=j^{-2\alpha-2\beta}\), splitting at \(J\)
yields
\begin{align*}
A_m(x)
&\lesssim x\sum_{j\leq J}j^{-2\beta}
+\sum_{j>J}j^{-2\alpha-2\beta}
\lesssim x^{p/(2\alpha)}.
\end{align*}
For \(x<m^{-2\alpha}\), monotonicity gives \(A_m(x)\lesssim m^{-p}\);
for \(x>1\),
\(A_m(x)\leq\mu_m^{\mathcal F}([0,\infty))\asymp1
\leq x^{p/(2\alpha)}\).  Thus, for every \(x>0\),
\begin{equation}
\label{eq:td_lrs_low_source_tail_upper}
A_m(x)\lesssim m^{-p}+x^{p/(2\alpha)}.
\end{equation}

We next obtain a matching lower mass on each population scale.  The moment
identities in \cref{eq:noisy_43_de_matrix_measure_moments} imply
\[
\mathbb E X_{j,m}=\lambda_j,
\qquad
\mathbb E X_{j,m}^2=\lambda_j^2+s_m\lambda_j,
\]
for \(X_{j,m}\sim\mu_{j,m}\).  Since
\(s_m/\lambda_j\leq s_m/\lambda_d\lesssim1\), Paley--Zygmund followed by
Markov gives constants \(0<b_-<b_+<\infty\) and \(b_0>0\), independent of
\(j,m\), such that
\begin{equation}
\label{eq:td_lrs_coordinate_scale_band}
\mu_{j,m}([b_-\lambda_j,b_+\lambda_j])\geq b_0,
\qquad 1\leq j\leq d.
\end{equation}

Let \(1\leq u\leq C_0m^{2\alpha}\) and
\(J_u:=\min\{d,\lfloor u^{1/(2\alpha)}\rfloor\}\).  Summing
\cref{eq:td_lrs_coordinate_scale_band} over
\(\lceil J_u/2\rceil\leq j\leq J_u\) gives, uniformly for
\(c\in[c_0,c_1]\),
\[
\int e^{-cu\lambda}\,\mu_m^{\mathcal F}(\mathrm d\lambda)
\gtrsim J_u^{-p}.
\]
If \(J_u<d\), then \(J_u^{-p}\asymp u^{-p/(2\alpha)}\) and
\(m^{-p}\lesssim u^{-p/(2\alpha)}\); if \(J_u=d\), then
\(J_u^{-p}\asymp m^{-p}\) and
\(u^{-p/(2\alpha)}\lesssim m^{-p}\).  This proves the required lower bound
for \(u\geq1\).  For \(0\leq u\leq1\), the upper bound follows from the
total target mass, while the \(j=1\) instance of
\cref{eq:td_lrs_coordinate_scale_band} gives a uniform positive lower bound.

For the remaining upper bound with \(u\geq1\), decompose the positive axis
into \([0,u^{-1}]\) and
\((2^ku^{-1},2^{k+1}u^{-1}]\), \(k\geq0\).  By
\cref{eq:td_lrs_low_source_tail_upper},
\[
\int e^{-cu\lambda}\,\mu_m^{\mathcal F}(\mathrm d\lambda)
\leq A_m(u^{-1})
+\sum_{k\geq0}e^{-c_0 2^k}A_m(2^{k+1}u^{-1})
\lesssim m^{-p}+u^{-p/(2\alpha)}.
\]
Combining the two bounds proves \cref{eq:td_lrs_low_source_de_forcing_filter},
including the cutoff endpoint; its constants may depend on \(C_0\) and the
fixed aspect-ratio interval.
\end{proof}

For \(p=2\alpha+2\beta-1\), define the appendix forcing template
\begin{equation}
\label{eq:td_lrs_explicit_ordinary_forcing}
\Phi_{\alpha,\beta}(u,m):=
\begin{cases}
m^{-p}+(1+u)^{-p/(2\alpha)},
&\mathrm{FB}_1,\mathrm{LM}_1,\mathrm{LM}_2,\mathrm{IM}_1,\\[0.3em]
m^{-2\alpha}+(1+u)^{-p/(2\alpha)},
&\mathrm{FB}_2,\mathrm{LM}_3,\\[0.3em]
m^{-2\alpha}+(1+u)^{-p/(2\alpha)}
+m^{-1}(1+u)^{-1+1/(2\alpha)},
&\mathrm{IM}_2,\mathrm{IM}_3.
\end{cases}
\end{equation}
The main text records the large-clock order of this bounded template as
\(\mathcal F(T,m)\); the uniform statements use the \((1+u)\)-regularized
representative above.

\begin{lemma}[Uniform PLRF DE forcing filters in the proved source range]
\label{lem:td_lrs_de_forcing_filters}
Fix \(\alpha>0\), \(\alpha\notin\{1/4,1/2\}\), an admissible PLRF
dimension sequence, and \(0<C_0<\infty\).  Suppose
\(p=2\alpha+2\beta-1>0\), \(\beta<1+2\alpha\), and
\((\alpha,\beta)\) lies off the PLRF critical lines.  For every fixed
\(0<c_-\leq c\leq c_+<\infty\),
\begin{equation}
\label{eq:td_lrs_de_forcing_laplace_filter}
\int e^{-cu\lambda}\,\mu_m^{\mathcal F}(\mathrm d\lambda)
\asymp
\Phi_{\alpha,\beta}(u,m),
\qquad 0\leq u\leq C_0m^{2\alpha},
\end{equation}
The constants may depend on the fixed branch, \(C_0,c_-,c_+\), and the
admissible dimension limit, but not on \(m\) or \(u\).
\end{lemma}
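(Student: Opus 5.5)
The plan is to follow the two-step structure of the proof of \cref{lem:td_lrs_low_source_de_forcing}. First we establish a two-sided cutoff estimate
\[
A_m(x):=\mu_m^{\mathcal F}([0,x])\asymp \Psi_{\alpha,\beta}(x,m),
\]
with \(\Psi\) the "inverse-time" counterpart of \(\Phi_{\alpha,\beta}\). That is, \(\Psi\) is obtained by substituting \(1+u\mapsto x^{-1}\) and keeping the same floor terms. We then convert this cutoff estimate into the Laplace filter bound \cref{eq:td_lrs_de_forcing_laplace_filter} using the same dyadic decomposition and lower-band argument as before. The conversion step is routine once \(A_m\) is controlled: the upper bound is the dyadic sum \(A_m(u^{-1})+\sum_k e^{-c_-2^k}A_m(2^{k+1}u^{-1})\), which is fine because every term of \(\Psi\) grows at most polynomially in \(x\). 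The lower bound restricts to \(\lambda\lesssim u^{-1}\), where \(e^{-cu\lambda}\gtrsim 1\). So the real work is identifying the three groups of terms in \(\mu_m^{\mathcal F}\).

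For the source transient \((1+u)^{-p/(2\alpha)}\) in all branches, we reuse the coordinate-scale band \cref{eq:td_lrs_coordinate_scale_band} for the lower bound. This needs only the moment identities and \(s_m/\lambda_j\lesssim 1\), and that ratio is at most \(m^{2\alpha-1}\cdot\) const, which is harmless for \(\alpha<1/2\) and is bounded in the window \(j\lesssim m\) for \(\alpha>1/2\) after restricting to \(j\le J_u\). For the upper bound on \(\mu_{j,m}([0,x])\), we again use \(\lesssim\min\{1,x/(\lambda_j u_x)\}\) with \(u_x=\mathfrak m(-x)\gtrsim 1\) for \(x\gtrsim m^{-2\alpha}\). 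Summing against \(\lambda_j(\theta_j^\star)^2=j^{-2\alpha-2\beta}\) gives \(x\sum_{j\le J}j^{-2\beta}+\sum_{j>J}j^{-2\alpha-2\beta}\). When \(\beta<1/2\) this is \(x^{p/(2\alpha)}\), as before. When \(\beta>1/2\) the first sum converges, which produces an extra \(x\) term. This is the origin of the \(m^{-2\alpha}\) floor: the level \(x\asymp m^{-2\alpha}\) is where the first sum saturates in the FB\(_2\), LM\(_3\) and IM\(_{2,3}\) rows. Care is needed here, since for \(\beta>1/2\) the crude bound \(x\) is too large away from the bottom edge. Instead we expect to use the sharper Stieltjes identity \(\mu_{j,m}([0,x])\le 2x\int\mu_{j,m}(\mathrm d\lambda)/(\lambda+x)\) only near the bottom of the support, together with the zero/pure-point structure of \(\mathfrak m\) near \(z=0\) from \citet[Appendix~E]{paquette20244+}.

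For the floors, the \(m^{-p}\) floor (rows 1) is the zero atom \(\mathcal F_0(m)\) plus mass below \(m^{-2\alpha}\). The upper bound is \(A_m(m^{-2\alpha})\); the lower bound is the band for \(j\asymp d\). The \(m^{-2\alpha}\) floor in rows 2–3 we plan to read off from the residue formula \cref{eq:noisy_43_de_forcing_split}: expanding \(\langle \bm M_m(z)\bm\Lambda^{1/2}\bm\theta^\star,\bm\Lambda^{1/2}\bm\theta^\star\rangle\) at \(z=0\) gives \(\mathcal F_0(m)\asymp \sum_j j^{-2\alpha-2\beta}(\cdot)\), dominated by the \(1/\mathfrak m'\)-type correction of order \(m^{-2\alpha}\) when \(\beta>1/2\). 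The feature-distortion term \(m^{-1}(1+u)^{-1+1/(2\alpha)}\) in IM\(_{2,3}\) is the hardest step and the main obstacle. For \(\alpha>1/2\), \(\mu_{j,m}\) carries, besides its bulk near \(\lambda_j\), an absolutely continuous spread of relative weight \(\asymp s_m\lambda_j/\lambda^2\cdot m^{-1}\)-scale across low \(\lambda\). Integrated against the large head coefficients \(\lambda_j(\theta_j^\star)^2\) (summable when \(\beta>1/2\)), this yields a density \(\asymp m^{-1}\lambda^{-1/(2\alpha)}\) near zero, whose cumulative mass \(m^{-1}x^{1-1/(2\alpha)}\) gives the term. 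We would first try to obtain this density from \(\operatorname{Im}\bm M_m(\lambda+\mathrm i0)\) using the explicit asymptotics of \(\mathfrak m\) on the real axis in \citet[Appendix~E--F]{paquette20244+}, which is where \(\beta<1+2\alpha\) and the off-critical-line hypotheses enter. We would fall back on matching upper and lower Stieltjes-transform bounds on dyadic \(x\)-windows if the pointwise density is unavailable.
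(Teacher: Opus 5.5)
There is a genuine gap, and it sits exactly where this lemma goes beyond \cref{lem:td_lrs_low_source_de_forcing}.

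Your crude bound $\mu_{j,m}([0,x])\lesssim\min\{1,x/\lambda_j\}$ for $x\gtrsim m^{-2\alpha}$ does handle the $\beta<1/2$ row, including $\mathrm{IM}_1$. However, your lower-bound justification there is wrong. For $\alpha>1/2$ one has $s_m/\lambda_j\asymp j^{2\alpha}/m$, which reaches $m^{2\alpha-1}\to\infty$ on $j\lesssim m$, so Paley--Zygmund is unavailable. You do not need it: $\mu_{j,m}$ is a probability measure with mean $\lambda_j$, so Jensen gives $\int e^{-cu\lambda}\mu_{j,m}(\mathrm d\lambda)\geq e^{-cu\lambda_j}$, and the source lower bound follows at once.

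The real content is the $\beta>1/2$ rows ($\mathrm{FB}_2$, $\mathrm{LM}_3$, $\mathrm{IM}_2$, $\mathrm{IM}_3$). There the crude bound gives only $A_m(x)\lesssim x$, hence $\int e^{-cu\lambda}\mu_m^{\mathcal F}(\mathrm d\lambda)\lesssim u^{-1}$. This exceeds $\Phi_{\alpha,\beta}(u,m)$ by a diverging factor throughout $1\ll u\ll m^{2\alpha}$. Your proposed fix---applying $\mu_{j,m}([0,x])\leq2x\int\mu_{j,m}(\mathrm d\lambda)/(\lambda+x)$ ``only near the bottom of the support''---is the very inequality that produced the crude bound. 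Its right side is $2x/(\lambda_j\mathfrak m(-x)+x)\asymp\min\{1,x/\lambda_j\}$ for $x\gtrsim m^{-2\alpha}$, wherever you use it. A one-point Markov bound on the negative axis cannot separate the bulk of $\mu_{j,m}$ near $\lambda_j$ from the small mass it leaks toward zero. After weighting, that leaked mass is what the $m^{-2\alpha}$ floor and, for $\alpha>1/2$, the term $m^{-1}x^{1-1/(2\alpha)}$ encode, and bounding it requires control of $\mathfrak m$ near the real axis.

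The residue expansion does give the lower bound for the floor: the zero atom is $\sum_j\lambda_j(\theta_j^\star)^2/(1+\lambda_jv_0)$ with $v_0=-\mathfrak m'(0)\asymp m^{2\alpha}$. But the matching upper bound and both sides of the $\mathrm{IM}_{2,3}$ distortion term are left as intentions. Your density heuristic also has the wrong $\lambda_j$-dependence. For $\lambda\ll\lambda_j$ the DE density of $\mu_{j,m}$ is about $(\pi\lambda_j)^{-1}\operatorname{Im}\bigl(1/\mathfrak m(\lambda+\mathrm i0)\bigr)$, so after weighting by $\lambda_j(\theta_j^\star)^2$ one sums $(\theta_j^\star)^2$. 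That, not summability of $\lambda_j(\theta_j^\star)^2$ (which holds for every $p>0$), is where $\beta>1/2$ enters.

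The paper never proves a measure-level cutoff estimate. Outside the low-source range it imports the complete constant-schedule forcing comparison of \citet{paquette20244+}: zero-, cap- and central-contour estimates combined in their Corollary~F.1, with $\beta<1+2\alpha$ being exactly what makes the $u\log(1/u)$ central-contour remainder negligible. This gives $\int q_m(\lambda)^r\mu_m^{\mathcal F}(\mathrm d\lambda)\asymp\mathcal F_0(m)+\mathcal F_{pp}(\eta_mr)+\mathcal F_{ac}(\eta_mr,m)$ for a $B=1$ reference schedule. The paper then evaluates the three components explicitly. It passes to the Laplace filter by sandwiching $q_m(\lambda)^{\ell_+}\leq e^{-cu\lambda}\leq q_m(\lambda)^{\ell_-}$ for bracketing integers $\ell_\pm$, using that every summand of $\Phi_{\alpha,\beta}$ is stable under bounded time rescaling.

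Your cutoff-then-dyadic shell is a legitimate alternative, since a two-sided cutoff estimate would imply the lemma. As written, though, it contains no working argument for the $\beta>1/2$ rows. To complete it you must either import contour estimates of this kind or supply a new bound on the low-lying mass of head coordinates. One possible route is through boundary values of $\mathfrak m$ and the measure $m^{-1}\sum_k\lambda_k\mu_{k,m}$, whose Stieltjes transform is $1/\mathfrak m-1$.
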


\begin{lemma}[Uniform DE memory filters in the LM and IM regimes]
\label{lem:td_lrs_rough_spectral_transfer}
\label{lem:td_lrs_de_spectral_filters}
Fix \(\alpha>1/4\), \(\alpha\neq1/2\), an admissible PLRF dimension
sequence, and \(0<C_0<\infty\).  For every fixed
\(0<c_-\leq c\leq c_+<\infty\),
\begin{equation}
\label{eq:td_lrs_de_kernel_laplace_filter}
\int\lambda^2e^{-cu\lambda}\,
\mu_m^{\mathcal K}(\mathrm d\lambda)
\asymp h_\alpha(u),
\qquad 0\leq u\leq C_0m^{2\alpha},
\end{equation}
uniformly in \(m,u,c\).  Moreover,
\begin{equation}
\label{eq:td_lrs_de_kernel_l1_scale}
\int_0^{C_0m^{2\alpha}}h_\alpha(u)\,\mathrm du
\asymp A_{\alpha,m}.
\end{equation}
The constants may depend on the fixed branch, \(C_0,c_-,c_+\), and the
admissible dimension limit, but not on \(m\) or \(u\).
\end{lemma}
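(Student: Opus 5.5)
The plan is to reduce \cref{eq:td_lrs_de_kernel_laplace_filter} to a two-sided estimate on the cumulative squared-spectrum mass \(\nu_m^{\mathcal K}((0,x])=\int_{(0,x]}\lambda^2\mu_m^{\mathcal K}(\mathrm d\lambda)\), and then to pass to the Laplace filter by the dyadic decomposition already used for the forcing filter in \cref{lem:td_lrs_low_source_de_forcing}. The spectral target is
\[
\nu_m^{\mathcal K}((0,x])\asymp x^{q_{\mathcal K}},
\qquad m^{-2\alpha}\lesssim x\leq 1 .
\]
At \(x\asymp u^{-1}\) this gives \(u^{-q_{\mathcal K}}=h_\alpha(u)\) up to constants. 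For \(u\leq1\), the filter is bounded above by the total mass \(\operatorname{tr}\bm\Lambda^2+(\operatorname{tr}\bm\Lambda)^2/m\) from \cref{eq:noisy_43_de_scalar_measure_moments}. The tr\(\bm\Lambda^2\) part is finite since \(\alpha>1/4\). The \((\operatorname{tr}\bm\Lambda)^2/m\) part is \(\asymp m^{1-4\alpha}\) when \(\alpha<1/2\) and \(O(1/m)\) when \(\alpha>1/2\), so it is bounded in both cases. The \(j=1\) coordinate band supplies a constant lower bound.

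For the upper bound on mass, write \(\mu_m^{\mathcal K}=\sum_j\mu_{j,m}\). I will reuse the Stieltjes bound \(\mu_{j,m}([0,x])\lesssim\min\{1,x/\lambda_j\}\), valid for \(x\geq m^{-2\alpha}\), which was derived in the forcing lemma from \(u_x=\mathfrak m(-x)\geq c\). This is the input that needs checking when \(\alpha>1/2\) and \(d/m\to\infty\). In that case \(s_m=\operatorname{tr}\bm\Lambda/m\lesssim m^{-1}\le m^{-2\alpha}\) still holds, so I expect the argument to go through. With that bound,
\[
\int_{(0,x]}\lambda^2\,\mu_{j,m}(\mathrm d\lambda)\leq x^2\mu_{j,m}([0,x])\lesssim x^2\min\{1,x/\lambda_j\}.
\]
This crude estimate is not enough for coordinates with \(\lambda_j\geq x\). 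For those I need the mass that \(\mu_{j,m}\) places on \([0,x]\), weighted by \(\lambda^2\), to be at most of order \(x^3/\lambda_j\). I would obtain this from the integral identity \(\int\mu_{j,m}(\mathrm d\lambda)/(\lambda+x)=1/(\lambda_ju_x+x)\), which gives \(\mu_{j,m}([0,x])\leq 2x/(\lambda_ju_x)\), and then \(\lambda^2\leq x^2\) on \([0,x]\). Summing over the regimes, with \(J=x^{-1/(2\alpha)}\):
\begin{itemize}
\item coordinates \(j\leq J\) contribute \(x^3\sum_{j\leq J}j^{2\alpha}\asymp x^3J^{1+2\alpha}=x^{2-1/(2\alpha)}\);
\item coordinates \(j>J\) contribute at most \(\sum_{j>J}\int\lambda^2\mu_{j,m}=\sum_{j>J}(\lambda_j^2+s_m\lambda_j)\).
\end{itemize}
The \(\lambda_j^2\) part of the second sum is \(\asymp J^{1-4\alpha}=x^{q_{\mathcal K}}\). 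The \(s_m\lambda_j\) part is the main obstacle. When \(\alpha<1/2\) it equals \(s_m\sum_{j>J}j^{-2\alpha}\asymp m^{-2\alpha}\cdot m^{1-2\alpha}\), which does not decay. I would handle this by showing that the second-moment excess of \(\mu_{j,m}\) for tail coordinates sits at scale \(\gtrsim m^{-2\alpha}\) rather than below \(x\). Concretely, I would bound \(\int_{(0,x]}\lambda^2\mu_{j,m}\leq x^2\mu_{j,m}([0,x])\leq x^2\) and sum only over \(J<j\leq d\). This gives \(x^2 d\), which is too large, so I would instead use \(\min\{x^2,\lambda_j^2+s_m\lambda_j\}\) together with the band localization from \cref{eq:noisy_43_empirical_bulk_band}-type DE arguments. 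This step would follow the analogous estimate in \citet[Appendix~E]{paquette20244+}, restricted to \(x\geq m^{-2\alpha}\).

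Below the cutoff, for \(x<m^{-2\alpha}\), monotonicity gives \(\nu_m^{\mathcal K}((0,x])\lesssim m^{-2\alpha q_{\mathcal K}}\lesssim h_\alpha(C_0m^{2\alpha})\), which is harmless on the window. For the lower bound on the filter, sum the Paley--Zygmund band \eqref{eq:td_lrs_coordinate_scale_band} over \(J_u/2\leq j\leq J_u\). Each such coordinate carries \(\lambda^2\)-weighted mass \(\gtrsim\lambda_j^2\) at scale \(\lambda_j\asymp u^{-1}\), so the total is \(\gtrsim J_u^{1-4\alpha}\asymp h_\alpha(u)\). This requires \(J_u\leq d\), which holds since \(u\leq C_0m^{2\alpha}\) and \(d\gtrsim m\). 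The filter upper bound then follows from the dyadic sum \(\sum_k e^{-c_-2^k}\nu_m^{\mathcal K}((0,2^{k+1}/u])\), using the mass bound on dyadic shells and the total-mass bound above \(x=1\). Finally, \cref{eq:td_lrs_de_kernel_l1_scale} is elementary. When \(q_{\mathcal K}<1\) (that is, \(\alpha<1/2\)), \(\int_0^{C_0m^{2\alpha}}(1+u)^{-q_{\mathcal K}}\mathrm du\asymp m^{2\alpha(1-q_{\mathcal K})}=m^{1-2\alpha}\). When \(q_{\mathcal K}>1\), the integral is bounded, and it is also bounded below by its value on \([0,1]\).
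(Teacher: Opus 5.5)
Your route differs from the paper's, which never estimates $\nu_m^{\mathcal K}((0,x])$ directly. The paper imports the complete-kernel comparison $\mathscr K\asymp\mathscr K_{pp}$ of \citet[Proposition~G.1]{paquette20244+} for a reference constant schedule, then brackets $e^{-cu\lambda}$ between $q_m^{\ell_+}$ and $q_m^{\ell_-}$. As a self-contained argument yours has a genuine gap, but not where you locate it.

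For $1/4<\alpha<1/2$ the tail excess does not need to decay in $x$. Indeed $s_m\sum_{j>J}\lambda_j\le(\operatorname{tr}\bm\Lambda)^2/m\asymp m^{1-4\alpha}=(m^{-2\alpha})^{q_{\mathcal K}}\le x^{q_{\mathcal K}}$ for every $x\ge m^{-2\alpha}$, so the crude full-second-moment bound already closes that branch.

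The failure is at $\alpha>1/2$. There $s_m\asymp m^{-1}$, which is larger than $m^{-2\alpha}$; your inequality $m^{-1}\le m^{-2\alpha}$ is reversed. The same bound then gives $s_m\sum_{j>J}\lambda_j\asymp m^{-1}x^{1-1/(2\alpha)}=x^{q_{\mathcal K}}(mx)^{-1}$. This dominates on all of $m^{-2\alpha}\le x\ll m^{-1}$, i.e.\ for $m\ll u\le C_0m^{2\alpha}$. After the dyadic step you therefore only get $h_\alpha(u)+m^{-1}(1+u)^{-1+1/(2\alpha)}$, an $\mathcal F_{ac}$-shaped excess, not $h_\alpha(u)$.

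Your suggested repairs do not close this. Capping by $\min\{x^2,\lambda_j^2+s_m\lambda_j\}$ still leaves an excess larger than $x^{q_{\mathcal K}}$ by a factor that diverges on that range. The band localization you cite only places mass at scale $m^{-2\alpha}$.

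The missing idea is that the variance $s_m\lambda_j$ of a coordinate with $\lambda_j<x$ sits near the top of the spectrum, not below $x$. No moment bound on an individual $\mu_{j,m}$ can detect this. One elementary fix is to use the Stieltjes transform instead of moments. First, $\mathbf 1_{\lambda\le x}\lambda^2\le 4x^3\lambda/(\lambda+x)^2$. Second, $\int\lambda(\lambda+x)^{-2}\mu_{j,m}(\mathrm d\lambda)=\partial_x[x/(\lambda_ju_x+x)]\le\lambda_j/(\lambda_ju_x+x)^2$, because $u_x$ increases in $x$. With $u_x\gtrsim1$ and $\alpha>1/2$, the sum over $j$ is $\lesssim x^{-1-1/(2\alpha)}$, which gives $\nu_m^{\mathcal K}((0,x])\lesssim x^{q_{\mathcal K}}$.

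The lower bound has the same blind spot. The band \eqref{eq:td_lrs_coordinate_scale_band} was proved by Paley--Zygmund using $s_m/\lambda_j\lesssim1$, which holds only for $\alpha<1/2$ with bounded $d/m$. For $\alpha>1/2$ the coordinates you need have $\lambda_j\asymp u^{-1}$, so $s_m/\lambda_j\asymp u/m\to\infty$ on $m\ll u\le C_0m^{2\alpha}$. Paley--Zygmund then certifies only mass of order $m/u$ near $\lambda_j$. You can replace it by Markov's inequality for the upper cutoff together with $\mu_{j,m}([0,b\lambda_j])\le 2b/(u_{b\lambda_j}+b)$ for the lower cutoff.

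Both repairs need $u_x=\mathfrak m(-x)\gtrsim1$ for $x\gtrsim m^{-2\alpha}$ when $\alpha>1/2$. This does not follow from $s_m/x\lesssim1$, which fails there. It follows from the fixed-point equation instead: for small $u_x$, the sum $m^{-1}\sum_j\min\{1,\lambda_ju_x/x\}\lesssim m^{-1}(u_x/x)^{1/(2\alpha)}\lesssim u_x^{1/(2\alpha)}$ cannot equal $1-u_x$.
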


\begin{proof}[Proof of \cref{lem:td_lrs_de_forcing_filters,lem:td_lrs_de_spectral_filters}]
\emph{Step 1: a constant-schedule DE input.}
Retain the weighted memory measure \(\nu_m^{\mathcal K}\) from
\cref{eq:noisy_43_de_weighted_memory_measure}.  This memory measure and the forcing measure
\(\mu_m^{\mathcal F}\) are the resolvent-DE analogues of the full empirical
measures \(\nu_{\bm W}^{\mathcal K}\) and \(\nu_{\bm W}^{\mathcal F}\) from
\cref{iclr:sec:foundations}.
The fixed-point stability and far-field estimates of
\citet[Propositions~E.1--E.2]{paquette20244+}, together with their
large-\(z\) expansion \citep[Proposition~E.6]{paquette20244+}, give a
common finite enclosure
\begin{equation}
\label{eq:td_lrs_uniform_de_support}
\Lambda_\star
:=\sup_m\sup\bigl(\operatorname{supp}\mu_m^{\mathcal K}\bigr)<\infty
\end{equation}
along every admissible dimension sequence.  For the constant reference
schedule, set \(B=1\) and
\[
\eta_m:=\eta_\star
\begin{cases}
m^{2\alpha-1},&0<\alpha<1/2,\\
1,&\alpha>1/2,
\end{cases}
\]
where \(\eta_\star>0\) is small enough that, uniformly in \(m\),
\[
\sup_m\sup_{\lambda\in\operatorname{supp}\mu_m^{\mathcal K}}
\eta_m\lambda<1,
\]
and the reference schedule lies a fixed distance below the pointwise and
kernel-norm convergence thresholds used in
\citet[Appendices~F--H]{paquette20244+}.  This choice works on both sides of
\(\alpha=1/4\).  Indeed, pointwise stability follows from
\cref{eq:td_lrs_uniform_de_support}, while at batch size \(B=1\)
\[
\sum_{r\geq0}\mathscr K(r)
=\frac{\eta_m}{2}
\int\frac{\lambda}{1-\eta_m\lambda}\,
\mu_m^{\mathcal K}(\mathrm d\lambda)
\lesssim \eta_m\operatorname{tr}\bm\Lambda
\lesssim\eta_\star.
\]
The last bound uses
\(\operatorname{tr}\bm\Lambda\asymp m^{1-2\alpha}\) for
\(\alpha<1/2\) and summability for \(\alpha>1/2\); choosing
\(\eta_\star\) sufficiently small gives a common kernel-norm margin.
For this proof, abbreviate
\(q_m(\lambda):=1-2\eta_m\lambda+2\eta_m^2\lambda^2\).

The precise parameter map to \citet{paquette20244+} is
\begin{equation*}
d_{4+3}=m,
\qquad
v_{4+3}=d,
\qquad
B_{4+3}=1,
\qquad
\gamma_{4+3}=\eta_m.
\end{equation*}
Equivalently, before setting \(B_{4+3}=1\), the averaged-batch convention
used here satisfies
\(\gamma_{4+3}B_{4+3}=\eta_m\) and
\(\gamma_{4+3}^2B_{4+3}=\eta_m^2/B_{4+3}\); the second identity matches the
kernel prefactor.  For the memory lemma, where \(\alpha>1/4\),
\cref{eq:noisy_43_de_functional_calculus} identifies the complete
constant-schedule DE kernel in their notation as
\[
\mathscr K(r)
=\eta_m^2
\int q_m(\lambda)^r\nu_m^{\mathcal K}(\mathrm d\lambda),
\]
not merely its pure-point surrogate.  Their complete-kernel comparison
\citep[Proposition~G.1]{paquette20244+} gives
\(\mathscr K(r)\asymp\mathscr K_{pp}(r)\) uniformly whenever
\(\eta_mr\leq C_{\rm win}m^{2\alpha}\), for each fixed
\(C_{\rm win}<\infty\).  Using their displayed definition of
\(\mathscr K_{pp}\), whose large-time asymptotic is recorded in
\citet[Proposition~H.5]{paquette20244+}, and dividing by \(\eta_m^2\)
yields
\begin{equation}
\label{eq:td_lrs_constant_de_filter_input}
\int q_m(\lambda)^r\nu_m^{\mathcal K}(\mathrm d\lambda)
\asymp
\frac1{2\alpha}\int_0^1
x^{1-1/(2\alpha)}e^{-2\eta_mrx}\,\mathrm dx
\asymp h_\alpha(\eta_mr).
\end{equation}
The first comparison is the essential full-kernel input; the separate
\(\mathscr K_{pp}\) asymptotic alone would not imply it.  For the second
comparison, use the elementary fact
\begin{equation}
\label{eq:td_lrs_incomplete_gamma_comparison}
\int_0^1x^ae^{-2\tau x}\,\mathrm dx
\asymp(1+\tau)^{-a-1},
\qquad a>-1,\quad \tau\geq0.
\end{equation}
Indeed, the integral is bounded above and below when \(\tau\leq1\); when
\(\tau\geq1\), the change of variables \(y=\tau x\), followed by integration
over \(y\in[0,1]\) for the lower bound and over \(y\in[0,\infty)\) for the
upper bound, proves the claim.  Here
\(a=1-1/(2\alpha)>-1\) is exactly \(\alpha>1/4\).

We record the forcing input at the same level of precision.  In the
sub-trace low-source range \(0<\alpha<1/2\), \(\beta<1/2\), the desired
Laplace comparison is already
\cref{lem:td_lrs_low_source_de_forcing}; below we use the external contour
input only for the remaining branches.  The same parameter map and
functional-calculus identity identify the complete forcing
\(\mathscr F(r)\) of \citet{paquette20244+} with
\(\int q_m(\lambda)^r\mu_m^{\mathcal F}(\mathrm d\lambda)\).
Their zero-, cap-, and central-contour estimates
\citep[Propositions~F.1--F.3]{paquette20244+}, combined in the
complete-forcing comparison \citep[Corollary~F.1]{paquette20244+}, give
\begin{equation}
\label{eq:td_lrs_constant_de_forcing_decomposition}
\int q_m(\lambda)^r\mu_m^{\mathcal F}(\mathrm d\lambda)
\asymp
\mathscr F_0(m)+\mathscr F_{pp}(\eta_mr)
+\mathscr F_{ac}(\eta_mr,m)
\end{equation}
on the same fixed-multiple cutoff window.  Proposition~F.3 of
\citet{paquette20244+} is the step requiring \(\beta<1+2\alpha\): this
condition makes the \(u\log(1/u)\) remainder negligible.  Accordingly, the
forcing comparison below assumes \(\beta<1+2\alpha\).
The gap filling there applies to the complete forcing: by the uniform
pointwise margin above, \(0\leq q_m\leq1\), so its positive
measure representation makes the left-hand side of
\cref{eq:td_lrs_constant_de_forcing_decomposition} nonincreasing in \(r\).

The component asymptotics are recorded in
\citet[Propositions~H.2--H.4]{paquette20244+}; for completeness, we evaluate
the corresponding integrals directly below.  The pure-point integral and
\cref{eq:td_lrs_incomplete_gamma_comparison} give
\[
\mathscr F_{pp}(\tau)
=\frac1{2\alpha}\int_0^1
x^{(2\beta-1)/(2\alpha)}e^{-2\tau x}\,\mathrm dx
\asymp(1+\tau)^{-p/(2\alpha)},
\]
because \(p>0\).  The zero-mode proposition gives
\begin{equation}
\label{eq:td_lrs_forcing_zero_orders}
\mathscr F_0(m)\asymp
\begin{cases}
m^{-p},&\beta<1/2,\\
m^{-2\alpha},&\beta>1/2.
\end{cases}
\end{equation}
When \(\beta<1/2\), the absolutely-continuous component is absent.  When
\(\beta>1/2\) and \(\alpha<1/2\), its defining integral gives
\(\mathscr F_{ac}\lesssim m^{-2\alpha}\), so it is absorbed by
\cref{eq:td_lrs_forcing_zero_orders}.  Finally, when
\(\alpha>1/2\) and \(\beta>1/2\), put
\(b_\alpha:=1-1/(2\alpha)>0\).  For
\(0\leq\tau\leq C_{\rm win}m^{2\alpha}\),
\[
\mathscr F_{ac}(\tau,m)
\asymp
\frac1m\int_{m^{-2\alpha}}^1
x^{-1/(2\alpha)}e^{-2\tau x}\,\mathrm dx
\asymp m^{-1}(1+\tau)^{-b_\alpha}.
\]
For \(\tau\leq1\), this follows by dropping the exponential up to fixed
constants.  For \(\tau\geq1\), the change of variables \(y=\tau x\) writes
the integral as
\[
\tau^{-b_\alpha}
\int_{\tau m^{-2\alpha}}^\tau
y^{b_\alpha-1}e^{-2y}\,\mathrm dy.
\]
The last integral is bounded above by the complete gamma integral.  For the
lower bound, if \(\tau m^{-2\alpha}\leq1/2\), integrate over
\([1/2,1]\).  Otherwise its lower endpoint lies in
\([1/2,C_{\rm win}]\), and, for all sufficiently large widths along the
admissible sequence, the integration interval contains a fixed-length
interval starting there.  The integrand has a positive minimum on
\([1/2,C_{\rm win}+1]\), so the lower bound is uniform in the stated window.
Thus the sum of the three components is uniformly comparable to
\(\Phi_{\alpha,\beta}(\tau,m)\) in the corresponding open branch.  Outside
the sub-trace low-source range, we have proved
\begin{equation}
\label{eq:td_lrs_constant_de_forcing_input}
\int q_m(\lambda)^r\mu_m^{\mathcal F}(\mathrm d\lambda)
\asymp\Phi_{\alpha,\beta}(\eta_mr,m),
\qquad
0\leq\eta_mr\leq C_{\rm win}m^{2\alpha}.
\end{equation}
In particular, the cutoff endpoint is included.  For the memory input,
Proposition~G.1 of \citet{paquette20244+} is uniform for
\(\gamma Br<Md^{2\alpha}\) for every fixed \(M\); choosing
\(M>C_{\rm win}\) covers
\(\eta_mr=C_{\rm win}m^{2\alpha}\).  For the forcing input,
Corollary~F.1 gives comparability for the complete forcing throughout the
transition and uses its monotonicity to fill the intervening cutoff gaps.

\emph{Step 2: discrete-to-Laplace transfer.}
On the uniform stability interval, the ratio
\(-\log q_m(\lambda)/(\eta_m\lambda)\), with continuous value \(2\) at
\(\lambda=0\), is bounded above and below by positive constants.  Hence there
are \(0<a_0\leq a_1<\infty\), independent of \(m,r\), and \(\lambda\), such
that, for every integer \(r\geq0\),
\[
e^{-a_1\eta_mr\lambda}
\leq q_m(\lambda)^r
\leq e^{-a_0\eta_mr\lambda}.
\]

In the sub-trace low-source range, integrating this sandwich and applying
\cref{lem:td_lrs_low_source_de_forcing} at the two fixed time rescalings
proves \cref{eq:td_lrs_constant_de_forcing_input} without any contour input.

For either \(\nu=\nu_m^{\mathcal K}\) or
\(\nu=\mu_m^{\mathcal F}\), fix \(c\in[c_-,c_+]\) and \(u>0\), and choose
the proof-local integers
\[
\ell_-:=
\left\lfloor\frac{cu}{a_1\eta_m}\right\rfloor,
\qquad
\ell_+:=
\left\lceil\frac{cu}{a_0\eta_m}\right\rceil.
\]
Then \(a_1\eta_m\ell_-\leq cu\leq a_0\eta_m\ell_+\), and the preceding
sandwich gives
\[
q_m(\lambda)^{\ell_+}
\leq e^{-cu\lambda}
\leq q_m(\lambda)^{\ell_-}.
\]
Positivity therefore gives
\[
\int q_m(\lambda)^{\ell_+}\nu(\mathrm d\lambda)
\leq \int e^{-cu\lambda}\nu(\mathrm d\lambda)
\leq \int q_m(\lambda)^{\ell_-}\nu(\mathrm d\lambda).
\]

We first handle a fixed small-time interval.  By
\cref{eq:noisy_43_de_scalar_measure_moments},
\[
\nu_m^{\mathcal K}([0,\infty))
=\operatorname{tr}(\bm\Lambda^2)
+\frac{(\operatorname{tr}\bm\Lambda)^2}{m}
\asymp1,
\qquad
\mu_m^{\mathcal F}([0,\infty))
=\sum_{j=1}^d j^{-2(\alpha+\beta)}\asymp1.
\]
The first comparison uses \(\alpha>1/4\); under the additional forcing
hypotheses, the second uses \(p>0\).
Together with the common support bound
\cref{eq:td_lrs_uniform_de_support}, these identities show, for each fixed
\(u_0<\infty\),
\[
\int e^{-cu\lambda}\nu_m^{\mathcal K}(\mathrm d\lambda)
\asymp1\asymp h_\alpha(u),
\qquad
\int e^{-cu\lambda}\mu_m^{\mathcal F}(\mathrm d\lambda)
\asymp1
\asymp\Phi_{\alpha,\beta}(u,m)
\]
uniformly over \(0\leq u\leq u_0\) and \(c\in[c_-,c_+]\).

Choose \(u_0\geq\max\{1,2a_1\eta_\star/c_-\}\).
For \(u\geq u_0\), the definitions of \(\ell_-\) and \(\ell_+\) give fixed
positive constants
\(b_-,b_+\), depending only on \(a_0,a_1,c_-,c_+\), and \(u_0\), such that
\[
b_-u\leq\eta_m\ell_-
\leq\eta_m\ell_+\leq b_+u.
\]
Both \(h_\alpha\) and every nonzero summand in
\(\Phi_{\alpha,\beta}\) are stable, uniformly, under a time rescaling in
the fixed interval \([b_-,b_+]\).  Consequently,
\[
h_\alpha(\eta_m\ell_\pm)\asymp h_\alpha(u),
\qquad
\Phi_{\alpha,\beta}(\eta_m\ell_\pm,m)
\asymp\Phi_{\alpha,\beta}(u,m).
\]

It remains to check that rounding does not leave the source window.  If
\(u\leq C_0m^{2\alpha}\), then
\[
\eta_m\ell_+
\leq\frac{c_+}{a_0}C_0m^{2\alpha}+\eta_\star.
\]
Choose once and for all
\(C_{\rm win}>(c_+/a_0)C_0+\eta_\star\).
Then both bracketing integers lie in the enlarged discrete window used in
\cref{eq:td_lrs_constant_de_filter_input,eq:td_lrs_constant_de_forcing_input}.
Applying those two inputs to the preceding integral bracket and clock
comparison proves
\cref{eq:td_lrs_de_kernel_laplace_filter,eq:td_lrs_de_forcing_laplace_filter}
uniformly through \(u=C_0m^{2\alpha}\).  Thus the fixed exponent rescaling
and the ceiling error are absorbed before the cutoff endpoint is invoked.

\emph{Step 3: kernel mass.}
Since \(q_{\mathcal K}=2-1/(2\alpha)\), direct integration gives
\[
\int_0^{C_0m^{2\alpha}}h_\alpha(u)\,\mathrm du
\asymp
\begin{cases}
m^{1-2\alpha},&1/4<\alpha<1/2,\\
1,&\alpha>1/2,
\end{cases}
=A_{\alpha,m},
\]
which is \cref{eq:td_lrs_de_kernel_l1_scale}.  The excluded boundary
\(\alpha=1/2\) would instead give a logarithm.
\end{proof}

\begin{lemma}[Class-uniform nonconvolution Volterra reduction]
\label{lem:td_lrs_common_volterra_reduction}
\label{lem:td_lrs_de_volterra_reduction}
Fix the setting of \cref{lem:td_lrs_de_spectral_filters}.  Consider any
deterministic schedule with \(T_t\leq C_0m^{2\alpha}\), uniformly bounded
intrinsic mesh, and satisfying the DE versions of part \emph{(a)} of
\cref{ass:td_lrs_joint_schedule_stability} and
\cref{ass:td_lrs_regular_spectral_mode_decay}.
Assume \(A_{\alpha,m}\sup_s r_s^{-1}\leq\varepsilon_0\).
For the joint DE kernel, define
\[
(\mathcal K^2)_{t,s}
:=\sum_{u=s+1}^{t-1}\mathcal K_{t,u}\mathcal K_{u,s}.
\]
There is a constant \(C_\star\), independent of the
schedule, width, and horizon, such that
\begin{equation}
\label{eq:td_lrs_de_row_and_square_bounds}
\sup_t\sum_{s<t}\mathcal K_{t,s}
\leq C_\star\varepsilon_0,
\qquad
(\mathcal K^2)_{t,s}
\leq C_\star\varepsilon_0\,\mathcal K_{t,s}.
\end{equation}
Consequently, after choosing \(\varepsilon_0\) so that
\(C_\star\varepsilon_0<1\), every nonnegative sequence \(v\) satisfies
\begin{equation}
\label{eq:td_lrs_neumann_reduction}
v+\mathcal Kv
\leq
(I-\mathcal K)^{-1}v
\leq
v+\frac1{1-C_\star\varepsilon_0}\mathcal Kv.
\end{equation}
\end{lemma}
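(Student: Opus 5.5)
Three steps carry the proof: a pointwise kernel bound in elapsed intrinsic time, a row-sum bound from it, and a Volterra square bound that yields the Neumann sandwich. The first step uses \cref{lem:td_lrs_propagator}. Under the DE pointwise contraction and regular-decay assumptions, it gives \(Q_{s+1,t}(\lambda)\le e^{-c\lambda(T_t-T_{s+1})}\) and \(Q_{s+1,t}(\lambda)\ge e^{-C\lambda(T_t-T_{s+1})}\) uniformly on \(\operatorname{supp}\mu_m^{\mathcal K}\). Inserting these into \cref{eq:td_lrs_joint_de_kernel} and applying \cref{lem:td_lrs_de_spectral_filters} with \(c\in\{c,C\}\) and \(u=T_t-T_{s+1}\le C_0m^{2\alpha}\) yields the two-sided estimate
\[
\mathcal K_{t,s}\asymp \frac{\Delta T_s}{r_s}\,h_\alpha(T_t-T_{s+1}),
\]
with constants independent of schedule, width and horizon. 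Because the intrinsic mesh is bounded, \(h_\alpha(T_t-T_{s+1})\asymp h_\alpha(T_t-u)\) for \(u\in[T_s,T_{s+1}]\).

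\emph{Row bound.} Summing the upper estimate over \(s<t\) gives
\[
\sum_{s<t}\mathcal K_{t,s}\lesssim \Bigl(\sup_s r_s^{-1}\Bigr)\int_0^{T_t}h_\alpha(T_t-u)\,\mathrm du .
\]
The integral is at most \(\int_0^{C_0m^{2\alpha}}h_\alpha\asymp A_{\alpha,m}\) by \cref{eq:td_lrs_de_kernel_l1_scale}, so the row sum is at most \(C\varepsilon_0\).

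\emph{Square bound.} This is the main step. Using the upper estimate for both factors,
\[
(\mathcal K^2)_{t,s}\lesssim \frac{\Delta T_s}{r_s}\Bigl(\sup r^{-1}\Bigr)\sum_{u=s+1}^{t-1}\Delta T_u\, h_\alpha(T_t-T_{u+1})\,h_\alpha(T_u-T_{s+1}).
\]
Write \(D:=T_t-T_{s+1}\). The sum is a discrete version of the convolution \((h_\alpha*h_\alpha)(D)\), and I need \((h_\alpha*h_\alpha)(D)\lesssim A_{\alpha,m}h_\alpha(D)\) for \(0\le D\le C_0m^{2\alpha}\). To prove it, split the integration range at \(D/2\). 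On each half, one factor is evaluated at an argument of at least \(D/2\), so it is \(\asymp h_\alpha(D)\) because \(h_\alpha\) is regularly varying and doubling-stable. The other factor integrates over a range of length at most \(D\le C_0m^{2\alpha}\), giving at most \(A_{\alpha,m}\).

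Some care is needed in the two regimes. In the IM case this is the usual subexponential estimate with \(A_{\alpha,m}=1\). In the LM case the integral \(\int_0^{D}h_\alpha\asymp(1+D)^{1-q_{\mathcal K}}\) grows, but it is still bounded by \(A_{\alpha,m}\) on the window, and that suffices. The discretization errors, coming from the bounded mesh and the offsets \(T_{u+1}\) versus \(T_u\), are absorbed by the same doubling stability. Finally, the lower estimate \(h_\alpha(D)\lesssim \mathcal K_{t,s}r_s/\Delta T_s\) converts the bound back into \((\mathcal K^2)_{t,s}\le C_\star\varepsilon_0\mathcal K_{t,s}\). This final conversion is why the lower kernel bound is needed. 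Zero steps give zero on both sides.

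\emph{Neumann reduction.} Induction on the square bound gives \(\mathcal K^{n+1}\le (C_\star\varepsilon_0)^n\mathcal K\) entrywise, by writing \(\mathcal K^{n+1}=\mathcal K^{n-1}\mathcal K^2\) and using positivity. The row bound makes \(\sum_n\mathcal K^n\) converge on bounded sequences, so \((I-\mathcal K)^{-1}=I+\sum_{n\ge1}\mathcal K^n\). Both inequalities in \cref{eq:td_lrs_neumann_reduction} then follow for nonnegative \(v\): the lower one by keeping only the \(n=0,1\) terms, and the upper one by summing the geometric series \(\sum_{n\ge0}(C_\star\varepsilon_0)^n=(1-C_\star\varepsilon_0)^{-1}\).
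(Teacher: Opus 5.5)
Your proposal is correct and follows the paper's proof essentially step for step. The propagator bounds combined with \cref{lem:td_lrs_de_spectral_filters} give \(\mathcal K_{t,s}\asymp(\Delta T_s/r_s)\,h_\alpha(T_t-T_{s+1})\); bounded mesh reduces the row sum to \(\int h_\alpha\lesssim A_{\alpha,m}\); splitting at half the age and then using the lower kernel bound gives \((\mathcal K^2)_{t,s}\lesssim A_{\alpha,m}(\sup_s r_s^{-1})\,\mathcal K_{t,s}\); and positivity yields \(\mathcal K^n\le(C_\star\varepsilon_0)^{n-1}\mathcal K\) and hence the Neumann sandwich. The only minor difference is how you justify convergence of the Neumann series: you invoke the row bound on bounded sequences, whereas the paper uses that the kernel is strictly lower triangular, so the series is finite at every horizon and the sandwich holds for every nonnegative \(v\).
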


\begin{proof}
The one-step identity
\(q_s(\lambda)=(1-\eta_s\lambda)^2+\eta_s^2\lambda^2/B_s\)
and the stated margins give constants \(0<c<C<\infty\), common to the
schedule class, for which
\[
e^{-C\lambda(T_t-T_s)}
\leq Q_{s,t}(\lambda)
\leq e^{-c\lambda(T_t-T_s)}.
\]
Thus \cref{lem:td_lrs_de_spectral_filters} yields, with
\(a_s:=r_s^{-1}\Delta T_s=\eta_s^2/B_s\),
\begin{equation}
\label{eq:td_lrs_de_kernel_entry_comparison}
\mathcal K_{t,s}
\asymp
a_sh_\alpha(T_t-T_{s+1}).
\end{equation}
Bounded mesh lets endpoint sums be compared with integrals.  Hence
\[
\sum_{s<t}\mathcal K_{t,s}
\lesssim
(\sup_s r_s^{-1})\int_0^{T_t}h_\alpha(u)\,\mathrm du
\lesssim
A_{\alpha,m}\sup_s r_s^{-1}.
\]
For \(V:=T_t-T_{s+1}\), positivity and the same comparison give
\begin{align*}
(\mathcal K^2)_{t,s}
&\lesssim
a_s(\sup_\ell r_\ell^{-1})
\int_0^Vh_\alpha(V-u)h_\alpha(u)\,\mathrm du\\
&\lesssim
a_s(\sup_\ell r_\ell^{-1})A_{\alpha,m}h_\alpha(V)\\
&\lesssim
A_{\alpha,m}(\sup_\ell r_\ell^{-1})\mathcal K_{t,s}.
\end{align*}
The middle inequality follows by splitting at \(V/2\): on each half one
factor is at most a fixed multiple of \(h_\alpha(V)\), while the other
integrates to at most \(A_{\alpha,m}\).  This proves
\cref{eq:td_lrs_de_row_and_square_bounds}.

Choose \(\varepsilon_0\) so that \(C_\star\varepsilon_0<1\).  Positivity
gives inductively
\(\mathcal K^n\leq(C_\star\varepsilon_0)^{n-1}\mathcal K\) for
\(n\geq1\).  Summing the
finite-horizon Neumann series proves
\cref{eq:td_lrs_neumann_reduction}.
\end{proof}

For the joint-schedule DE quantities in
\cref{prop:td_lrs_joint_de_recursion}, suppress the schedule arguments and
write \(\mathcal F_t\), \(\mathcal K_{t,s}\), and
\(\mathcal R_{\sigma,t}\).

\begin{theorem}[Class-uniform joint-schedule risk bounds in the LM and IM PLRF regimes]
\label{thm:td_lrs_joint_fsl}
Fix \(\alpha>1/4\), \(\alpha\neq1/2\), an admissible PLRF dimension
sequence, and a deterministic schedule family satisfying
\cref{lem:td_lrs_de_volterra_reduction}, with its ratio-peak constant chosen
so that \(C_\star\varepsilon_0<1\).  Then, uniformly over the schedule
family, widths, horizons, and \(0\leq s<t\),
\begin{equation}
\label{eq:td_lrs_joint_de_kernel_law}
\mathcal K_{t,s}
\asymp
\frac{\Delta T_s}{r_s}
(1+T_t-T_{s+1})^{-2+1/(2\alpha)},
\end{equation}
and
\begin{equation}
\label{eq:td_lrs_joint_de_risk_law}
\mathcal R_{\sigma,t}\asymp
\mathcal F_t+\sum_{s<t}\frac{\Delta T_s}{r_s}
(1+T_t-T_{s+1})^{-2+1/(2\alpha)}
\bigl[\mathcal F_s+\sigma^2\bigr].
\end{equation}
The comparison constants may
depend on the fixed model and common schedule-family constants, but not on the
schedule, width, or horizon.
\end{theorem}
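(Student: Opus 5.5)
The plan has two parts. First establish the kernel law \cref{eq:td_lrs_joint_de_kernel_law}. Then obtain the risk law \cref{eq:td_lrs_joint_de_risk_law} from the Neumann sandwich of \cref{lem:td_lrs_de_volterra_reduction}, applied to the input sequence \(v_t:=\mathcal F_t+\sum_{s<t}\mathcal K_{t,s}\sigma^2\).

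\textbf{Kernel law.} By \cref{eq:td_lrs_joint_de_kernel}, \(\mathcal K_{t,s}=a_s\int\lambda^2Q_{s+1,t}(\lambda)\,\mu_m^{\mathcal K}(\mathrm d\lambda)\) with \(a_s=\eta_s^2/B_s=\Delta T_s/r_s\). The class margins (DE pointwise contraction and \cref{ass:td_lrs_regular_spectral_mode_decay}) give, through \cref{lem:td_lrs_propagator}, a sandwich \(e^{-C\lambda V}\leq Q_{s+1,t}(\lambda)\leq e^{-c\lambda V}\) with \(V=T_t-T_{s+1}\). The constants are uniform over the schedule family and are applied on the DE support. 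Integrating against the positive measure \(\nu_m^{\mathcal K}\) and invoking \cref{lem:td_lrs_de_spectral_filters} at the two fixed rates \(c,C\) gives \(h_\alpha(V)\) up to constants on both sides, because \(h_\alpha\) is stable under fixed rescalings. This is legitimate since \(V\leq T_t\leq C_0m^{2\alpha}\). This is exactly the comparison already recorded inside the proof of \cref{lem:td_lrs_de_volterra_reduction}, so I would cite it and note that the constants depend only on the class margins.

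\textbf{Risk law.} Write \eqref{eq:td_lrs_joint_de_volterra} as \((I-\mathcal K)\mathcal R_\sigma=v\), where \(v_t=\mathcal F_t+\sigma^2\sum_{s<t}\mathcal K_{t,s}\). The finite-horizon triangular system is invertible, and all entries are nonnegative by \cref{prop:td_lrs_joint_de_recursion}. Then \cref{eq:td_lrs_neumann_reduction} gives \(v+\mathcal Kv\leq\mathcal R_\sigma\leq v+(1-C_\star\varepsilon_0)^{-1}\mathcal Kv\), so \(\mathcal R_\sigma\asymp v+\mathcal Kv\). Expanding,
\[
v_t+(\mathcal Kv)_t=\mathcal F_t+\sum_{s<t}\mathcal K_{t,s}(\mathcal F_s+\sigma^2)+\sigma^2(\mathcal K^2\mathbf 1)_t .
\]
The last term is controlled by the square bound \((\mathcal K^2)_{t,s}\leq C_\star\varepsilon_0\mathcal K_{t,s}\) from \cref{eq:td_lrs_de_row_and_square_bounds}. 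Summing over \(s\) gives \((\mathcal K^2\mathbf 1)_t\leq C_\star\varepsilon_0\sum_{s<t}\mathcal K_{t,s}\), which is absorbed into the \(\sigma^2\) term of the middle sum. Nonnegativity gives the matching lower bound for free by dropping that term. Finally, substituting the kernel law into the middle sum yields \cref{eq:td_lrs_joint_de_risk_law}.

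\textbf{Main obstacle.} The delicate point is uniformity of the kernel sandwich up to the cutoff endpoint \(T_t=C_0m^{2\alpha}\). The rate constants \(c,C\) inflate the effective time beyond \(C_0m^{2\alpha}\), so \cref{lem:td_lrs_de_spectral_filters} must be applied with the enlarged window constant \(C\,C_0\). This is allowed because that lemma holds for every fixed \(C_0\) and every compact range of \(c\). A second check is that the \(s=t-1\) diagonal entries, where \(V=0\), are handled by \(h_\alpha(0)=1\) together with the bounded total mass of \(\nu_m^{\mathcal K}\); the small-\(u\) branch of the filter lemma covers this.
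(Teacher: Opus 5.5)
Your proposal is correct and follows the paper's proof. The kernel law is the entry comparison \cref{eq:td_lrs_de_kernel_entry_comparison}, obtained from the propagator sandwich and \cref{lem:td_lrs_de_spectral_filters}, and the risk law follows from \cref{eq:td_lrs_neumann_reduction} applied to \(v=\mathcal F+\sigma^2\mathcal K\bm1\) together with \(\mathcal K^2\bm1\leq C_\star\varepsilon_0\mathcal K\bm1\) (your enlarged window \(CC_0\) is harmless but unnecessary, since \cref{lem:td_lrs_de_spectral_filters} already lets the rate range over a compact interval while \(u\leq C_0m^{2\alpha}\)).
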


\begin{proof}
The DE recursion gives
\[
\mathcal R_\sigma
=(I-\mathcal K)^{-1}
\bigl(\mathcal F+\sigma^2\mathcal K\bm1\bigr).
\]
Apply \cref{eq:td_lrs_neumann_reduction} directly with
\(v=\mathcal F+\sigma^2\mathcal K\bm1\).  Positivity, together with
\(\mathcal K^2\bm1\leq
C_\star\varepsilon_0\mathcal K\bm1\), gives
\[
\mathcal F+\mathcal K\mathcal F+\sigma^2\mathcal K\bm1
\leq
\mathcal R_\sigma
\lesssim
\mathcal F+\mathcal K\mathcal F+\sigma^2\mathcal K\bm1,
\]
and inserting \cref{eq:td_lrs_de_kernel_entry_comparison} proves both
displays.
\end{proof}

\begin{corollary}[Explicit LM/IM PLRF DE forcing and risk asymptotics]
\label{cor:td_lrs_joint_de_explicit_forcing}
Under \cref{thm:td_lrs_joint_fsl}, suppose additionally that
\(p=2\alpha+2\beta-1>0\), \(\beta<1+2\alpha\), and
\((\alpha,\beta)\) lies in one of the six open
\(\mathrm{LM}/\mathrm{IM}\) subregimes.  Then, uniformly over the same
schedule family and source window,
\begin{equation}
\label{eq:td_lrs_joint_de_explicit_forcing}
\mathcal F_t
\asymp
\Phi_{\alpha,\beta}(T_t,m),
\end{equation}
and hence
\[
\mathcal R_{\sigma,t}\asymp
\Phi_{\alpha,\beta}(T_t,m)
+\sum_{s<t}\frac{\Delta T_s}{r_s}
(1+T_t-T_{s+1})^{-2+1/(2\alpha)}
\bigl[\Phi_{\alpha,\beta}(T_s,m)+\sigma^2\bigr].
\]
The comparison remains valid at \(T_t\asymp m^{2\alpha}\).
\end{corollary}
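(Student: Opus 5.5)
The corollary follows from \cref{thm:td_lrs_joint_fsl} once I show the joint-schedule DE forcing satisfies \cref{eq:td_lrs_joint_de_explicit_forcing}. After that, the risk display comes from substituting \(\mathcal F_s\asymp\Phi_{\alpha,\beta}(T_s,m)\) into \cref{eq:td_lrs_joint_de_risk_law}. Every term there is nonnegative and the kernel weights are uniformly comparable to \((\Delta T_s/r_s)(1+T_t-T_{s+1})^{-2+1/(2\alpha)}\). So a termwise two-sided comparison \(\mathcal F_s\asymp\Phi\) transfers to the weighted sum with the same constants, and the risk display needs no extra work. The real content is the forcing comparison, made uniform over the schedule class.

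For the forcing, I will use the representation \cref{eq:td_lrs_joint_de_forcing}: \(\mathcal F_t=\int Q_{0,t}(\lambda)\,\mu_m^{\mathcal F}(\mathrm d\lambda)\) with \(\mu_m^{\mathcal F}\ge0\). The schedule family satisfies the DE versions of pointwise contraction and \cref{ass:td_lrs_regular_spectral_mode_decay}. Hence \cref{lem:td_lrs_propagator}, applied on \(\operatorname{supp}\mu_m^{\mathcal K}\supseteq\operatorname{supp}\mu_m^{\mathcal F}\), gives class-uniform constants \(0<c<C\) with
\[
e^{-C\lambda T_t}\le Q_{0,t}(\lambda)\le e^{-c\lambda T_t}.
\]
This holds on that support, including the zero atom, where both sides equal one. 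Integrating against the positive measure yields
\[
\int e^{-CT_t\lambda}\,\mu_m^{\mathcal F}(\mathrm d\lambda)\le\mathcal F_t\le\int e^{-cT_t\lambda}\,\mu_m^{\mathcal F}(\mathrm d\lambda).
\]
I then invoke \cref{lem:td_lrs_de_forcing_filters} with \(u=T_t\), \([c_-,c_+]=[c,C]\) and the given \(C_0\). That lemma applies because \(p>0\), \(\beta<1+2\alpha\), \(\alpha\notin\{1/4,1/2\}\), and the open LM/IM subregime lies off the critical lines. It gives both bounds \(\asymp\Phi_{\alpha,\beta}(T_t,m)\) uniformly for \(0\le T_t\le C_0m^{2\alpha}\). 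The endpoint \(T_t\asymp m^{2\alpha}\) is covered because that lemma is stated on the closed window.

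The main point to check is that the propagator constants \(c,C\) really are uniform over the whole schedule family and not just pointwise in \(\lambda\). \Cref{lem:td_lrs_propagator} needs \(-\log q_s(\lambda)\asymp1-q_s(\lambda)\asymp\eta_s\lambda\) uniformly. The pointwise margin \(\delta\) gives \(1-q_s\ge\delta\eta_s\lambda\) and \(1-q_s\le2\eta_s\lambda\). The regularity assumption keeps \(q_s\) bounded away from zero, since either \(\eta_s\lambda\le\rho<1\) or \(B_s\le B_{\max}\) makes \(q_s\ge\min\{(1-\rho)^2,1/(B_{\max}+1)\}\). On that range \(-\log q\) and \(1-q\) are comparable with absolute constants. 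All of these are class constants, so \(c,C\) depend only on the family. The remaining caveat is that \(\Phi_{\alpha,\beta}(u,m)\) must be stable under fixed rescalings of \(u\). This is already built into \cref{lem:td_lrs_de_forcing_filters} through its uniformity in \(c\in[c_-,c_+]\), so no separate rescaling argument is needed.
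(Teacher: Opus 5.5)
Your proposal is correct and follows essentially the same route as the paper. The paper's proof sandwiches \(Q_{0,t}\) with the two-sided bounds of \cref{lem:td_lrs_propagator}, applies the Laplace filter \cref{eq:td_lrs_de_forcing_laplace_filter} at \(u=T_t\), and substitutes the result into \cref{eq:td_lrs_joint_de_risk_law}; your additional checks (the support inclusion \(\operatorname{supp}\mu_m^{\mathcal F}\subseteq\operatorname{supp}\mu_m^{\mathcal K}\), class-uniform propagator constants, and the closed source window) spell out what that short argument leaves implicit.
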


\begin{proof}
The two-sided spectral-mode decay bounds in \cref{lem:td_lrs_propagator} and
\cref{eq:td_lrs_de_forcing_laplace_filter} give
\cref{eq:td_lrs_joint_de_explicit_forcing}; substitute this comparison in
\cref{eq:td_lrs_joint_de_risk_law}.
\end{proof}

\subsection{Finite-bulk deterministic-equivalent risk asymptotics}
\label{subsec:app_td_lrs_joint_schedules}

Unlike the width-independent memory tails in the LM and IM regimes,
FB memory is generated by a
width-scale band supplied by the PLRF DE measure.  Consequently, the kernel
and the noisy--clean gap require no target-forcing assumption.
\Cref{prop:td_lrs_joint_de_finite_bulk} proves the DE total-risk asymptotic for
\(\alpha<1/4\) and \(\beta<1+2\alpha\) from
\cref{lem:noisy_43_de_finite_bulk_band_mass,lem:td_lrs_de_forcing_filters}.

\begin{lemma}[Finite-bulk DE band mass]
\label{lem:noisy_43_de_finite_bulk_band_mass}
Suppose
\[
0<\alpha<\frac12,
\qquad
1<c_-\leq \frac dm\leq c_+<\infty,
\qquad
\lambda_j=j^{-2\alpha}.
\]
There exist constants \(0<a_-<a_+<\infty\) and \(0<c<C<\infty\),
depending only on \(\alpha,c_-,c_+\), such that
\begin{equation}
\label{eq:noisy_43_de_finite_bulk_band_mass}
c m
\leq
\mu_m^{\mathcal K}
\bigl([a_-m^{-2\alpha},a_+m^{-2\alpha}]\bigr)
\leq
C m .
\end{equation}
\end{lemma}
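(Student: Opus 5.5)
The plan is to work with the per-coordinate probability measures \(\mu_{j,m}\) from the proof of \cref{lem:noisy_43_de_positive_spectral_measure}, since \(\mu_m^{\mathcal K}=\sum_j\mu_{j,m}\). Both bounds then reduce to estimates on the individual masses \(\mu_{j,m}([a_-m^{-2\alpha},a_+m^{-2\alpha}])\), summed over \(j\). The upper bound is nearly free. Total mass gives \(\mu_m^{\mathcal K}([0,\infty))=d\leq c_+m\), so any band has mass at most \(Cm\).

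The lower bound needs work on a tail block of coordinates. Fix \(\varepsilon\in(0,1)\) small and take \(j\in(\varepsilon d,d]\). For these coordinates \(\lambda_j\asymp m^{-2\alpha}\), with constants depending on \(\varepsilon,c_\pm\). I will reuse the Paley--Zygmund argument of \cref{lem:td_lrs_low_source_de_forcing}. By \cref{eq:noisy_43_de_matrix_measure_moments}, \(X_{j,m}\sim\mu_{j,m}\) satisfies
\[
\mathbb E X_{j,m}=\lambda_j,\qquad \mathbb E X_{j,m}^2=\lambda_j^2+s_m\lambda_j,\qquad s_m:=\operatorname{tr}(\bm\Lambda)/m.
\]
Because \(\alpha<1/2\), we have \(\operatorname{tr}\bm\Lambda\asymp d^{1-2\alpha}\), hence \(s_m\asymp m^{-2\alpha}\asymp\lambda_j\) on the tail block. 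The ratio \((\mathbb E X)^2/\mathbb E X^2\) is therefore bounded below uniformly. Paley--Zygmund gives \(\mathbb P(X_{j,m}\geq\lambda_j/2)\geq b\), and Markov gives \(\mathbb P(X_{j,m}>M\lambda_j)\leq1/M\). Choosing \(M\) large yields \(\mu_{j,m}([\lambda_j/2,M\lambda_j])\geq b/2\). Since \(\lambda_j\in[c'm^{-2\alpha},c''m^{-2\alpha}]\) for \(j>\varepsilon d\), every such interval lies inside \([a_-m^{-2\alpha},a_+m^{-2\alpha}]\) with \(a_-=c'/2\) and \(a_+=Mc''\). Summing over the \((1-\varepsilon)d\geq(1-\varepsilon)c_-m\) tail coordinates gives the lower bound \(cm\).

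The main obstacle is uniformity of the constants. I must check that \(s_m/\lambda_j\) stays bounded above and below uniformly on the tail block, using \(\operatorname{tr}\bm\Lambda=\sum_{j\leq d}j^{-2\alpha}\asymp d^{1-2\alpha}/(1-2\alpha)\) and \(d\asymp m\). This is exactly where \(\alpha<1/2\) is used; for \(\alpha>1/2\) the trace is \(O(1)\) and \(s_m\ll\lambda_j\). The lower bound uses only the first two moments, so it does not need the hypothesis \(c_->1\). That hypothesis matters only to ensure the positive-spectrum band is not swallowed by the zero atom, and the moment argument already rules that out.
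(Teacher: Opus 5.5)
Your proof is correct and is essentially the paper's argument. Both use Paley--Zygmund on each \(\mu_{j,m}\) with the DE moments \(\mathbb E X_{j,m}=\lambda_j\), \(\mathbb E X_{j,m}^2=\lambda_j^2+s_m\lambda_j\) and \(s_m\lesssim m^{-2\alpha}\lesssim\lambda_j\), a Markov upper cutoff, a sum over order-\(m\) tail coordinates, and the total mass \(d\leq c_+m\) for the upper bound. The only difference is that you sum over \((\varepsilon d,d]\) instead of the paper's \(\{m,\ldots,d\}\), which removes the need for \(c_->1\) in the counting step; also, only the upper bound on \(s_m/\lambda_j\) is actually used.
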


\begin{proof}
Each index \(j\in\{m,\ldots,d\}\) contributes a fixed positive amount of mass
to the same width-scale band, and there are order \(m\) such indices.  To show
this, set \(s_m:=\operatorname{tr}(\bm\Lambda)/m\).  Since \(d\asymp m\) and
\(\alpha<1/2\), we have
\(s_m=m^{-1}\sum_{i=1}^d i^{-2\alpha}\leq C_0m^{-2\alpha}\).
Let \(X_{j,m}\sim\mu_{j,m}\), the \(j\)-th diagonal measure.  By
\cref{eq:noisy_43_de_matrix_measure_moments},
\(\mathbb E X_{j,m}=\lambda_j\) and
\(\mathbb E X_{j,m}^2=\lambda_j^2+s_m\lambda_j\).  For
\(m\leq j\leq d\),
\(c_+^{-2\alpha}m^{-2\alpha}\leq\lambda_j\leq m^{-2\alpha}\), and therefore,
uniformly over these indices,
\(\mathbb E X_{j,m}^2/(\mathbb E X_{j,m})^2
=1+s_m/\lambda_j\leq C_1\).
Thus the first two moments are uniformly comparable on these indices, and
Paley--Zygmund yields
\[
\mathbb P\!\left(X_{j,m}\geq\frac12\lambda_j\right)
\geq
\frac14\frac{\lambda_j^2}{\mathbb E X_{j,m}^2}
\geq c_1>0.
\]
A matching upper cutoff follows from Markov's inequality: for every \(A>0\),
\(\mathbb P(X_{j,m}>A m^{-2\alpha})
\leq\lambda_j/(A m^{-2\alpha})\leq1/A\).
Choose \(A\) so that \(A^{-1}\leq c_1/2\), and set
\(a_-:=c_+^{-2\alpha}/2\) and \(a_+:=A\).  Then
\[
\mu_{j,m}\bigl([a_-m^{-2\alpha},a_+m^{-2\alpha}]\bigr)
\geq \frac{c_1}{2},
\qquad m\leq j\leq d.
\]
Thus every selected coordinate contributes at least \(c_1/2\) mass to the
same band.  Summing over the at least \((c_--1)m\) indices \(j=m,\ldots,d\)
gives the lower bound in \cref{eq:noisy_43_de_finite_bulk_band_mass}; the total
mass \(\mu_m^{\mathcal K}([0,\infty))=d\leq c_+m\) gives the upper bound.
\end{proof}

\begin{proposition}[Finite-bulk DE kernel and noisy--clean gap]
\label{prop:td_lrs_joint_de_finite_bulk_kernel}
Fix \(0<\alpha<1/4\) and an admissible PLRF dimension sequence with aspect
ratios in a compact subset of \((1,\infty)\), and fix
\(0<C_0<\infty\).  Suppose the schedule family
satisfies the DE versions of part \emph{(a)} of
\cref{ass:td_lrs_joint_schedule_stability} and
\cref{ass:td_lrs_regular_spectral_mode_decay}, has uniformly bounded
intrinsic mesh, and
\begin{equation*}
\sup_s\frac1{r_s}\leq c_\star m^{2\alpha-1}
\end{equation*}
for a sufficiently small common \(c_\star\), which may depend on \(C_0\)
and the common admissibility constants.  Then, uniformly for
\(0\leq s<t\) with \(T_t\leq C_0m^{2\alpha}\),
\begin{equation}
\label{eq:td_lrs_joint_de_finite_bulk_kernel}
\mathcal K_{\boldsymbol\eta,\boldsymbol B}(t,s,m)
\asymp
m^{1-4\alpha}\frac{\Delta T_s}{r_s}.
\end{equation}
Moreover, after decreasing \(c_\star\) if necessary, there is one
\(\rho<1\), common to the schedule family, such that
\begin{equation}
\label{eq:td_lrs_joint_de_finite_bulk_square}
(\mathcal K^2)_{t,s}\leq\rho\mathcal K_{t,s}.
\end{equation}
For every fixed \(\sigma^2>0\), on the same window,
\begin{equation}
\label{eq:td_lrs_joint_de_finite_bulk_gap}
\mathcal R_{\sigma,\boldsymbol\eta,\boldsymbol B}(t,m)
-\mathcal R_{0,\boldsymbol\eta,\boldsymbol B}(t,m)
\asymp
\sigma^2m^{1-4\alpha}
\sum_{s<t}\frac{\Delta T_s}{r_s}.
\end{equation}
\end{proposition}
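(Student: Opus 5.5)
The plan mirrors the exact finite-bulk argument of \cref{lem:td_lrs_fb_bulk_kernel} and \cref{cor:td_lrs_exact_finite_bulk_gap}, with the empirical band event replaced by \cref{lem:noisy_43_de_finite_bulk_band_mass}. The proof has three steps: a two-sided kernel bound, a square bound, and a Neumann-series gap estimate.

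\emph{Step 1: two-sided kernel bound \cref{eq:td_lrs_joint_de_finite_bulk_kernel}.}
Write the kernel as
\[
\mathcal K_{t,s}=a_s\int\lambda^2Q_{s+1,t}(\lambda)\,\mu_m^{\mathcal K}(\mathrm d\lambda),
\qquad a_s:=\frac{\eta_s^2}{B_s}=\frac{\Delta T_s}{r_s}.
\]
For the upper bound, the pointwise margin gives $0\le Q_{s+1,t}\le1$. Hence the integral is at most
\[
\int\lambda^2\,\mu_m^{\mathcal K}(\mathrm d\lambda)=\operatorname{tr}(\bm\Lambda^2)+\frac{(\operatorname{tr}\bm\Lambda)^2}{m}
\]
by \cref{eq:noisy_43_de_scalar_measure_moments}. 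For $\alpha<1/4$ and $d\asymp m$, both terms are $\asymp m^{1-4\alpha}$.

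For the lower bound, restrict the integral to the band $[a_-m^{-2\alpha},a_+m^{-2\alpha}]$ from \cref{lem:noisy_43_de_finite_bulk_band_mass}. This band carries $\mu_m^{\mathcal K}$-mass at least $cm$, and there $\lambda^2\asymp m^{-4\alpha}$. By \cref{lem:td_lrs_propagator}, which applies under the DE margins and \cref{ass:td_lrs_regular_spectral_mode_decay}, we have
\[
Q_{s+1,t}(\lambda)\ge e^{-C\lambda(T_t-T_{s+1})}\ge e^{-Ca_+C_0}
\]
on the band, because $T_t\le C_0m^{2\alpha}$. The band therefore contributes $\gtrsim m\cdot m^{-4\alpha}=m^{1-4\alpha}$, which gives \cref{eq:td_lrs_joint_de_finite_bulk_kernel}. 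At a zero step both sides vanish.

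\emph{Step 2: square bound \cref{eq:td_lrs_joint_de_finite_bulk_square}.}
Insert Step 1 into both factors of $(\mathcal K^2)_{t,s}$:
\[
(\mathcal K^2)_{t,s}\lesssim m^{2-8\alpha}\,a_s\sum_{s<u<t}a_u
\le m^{2-8\alpha}\,a_s\Bigl(\sup_u r_u^{-1}\Bigr)T_t.
\]
Using $\sup_u r_u^{-1}\le c_\star m^{2\alpha-1}$ and $T_t\le C_0m^{2\alpha}$, the right-hand side is
\[
\lesssim c_\star C_0\,m^{1-4\alpha}a_s\asymp c_\star C_0\,\mathcal K_{t,s}.
\]
Decreasing $c_\star$ makes this constant some $\rho<1$. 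The same computation gives the row bound $\sup_t\sum_{s<t}\mathcal K_{t,s}\lesssim c_\star C_0$. The constants in Step 1 must not depend on $c_\star$, which holds because they come only from the band lemma and the propagator lemma; this is the only delicate point in the step.

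\emph{Step 3: the gap \cref{eq:td_lrs_joint_de_finite_bulk_gap}.}
Subtract the clean DE recursion from the noisy one in \cref{eq:td_lrs_joint_de_volterra}. The difference $Z$ satisfies
\[
Z=\mathcal KZ+\sigma^2\mathcal K\bm1,
\qquad\text{so}\qquad
Z=\sigma^2\sum_{n\ge1}\mathcal K^n\bm1 .
\]
Positivity and Step 2 give $\mathcal K^n\le\rho^{n-1}\mathcal K$ by induction. This yields the sandwich
\[
\sigma^2(\mathcal K\bm1)_t\le Z_t\le\frac{\sigma^2}{1-\rho}(\mathcal K\bm1)_t .
\]
Finally, Step 1 gives $(\mathcal K\bm1)_t\asymp m^{1-4\alpha}\sum_{s<t}\Delta T_s/r_s$, which is \cref{eq:td_lrs_joint_de_finite_bulk_gap}.

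The main obstacle is making the Step 1 lower bound uniform up to the endpoint $T_t=C_0m^{2\alpha}$ over the whole schedule class. This is handled by using the exponential lower propagator bound with class-common constants, rather than the asymptotic $e^{-2z(T_t-T_s)}$ form.
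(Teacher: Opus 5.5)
Your proposal is correct and follows the paper's proof essentially step for step. The lower kernel bound comes from the DE band mass together with the exponential propagator bound, and the upper bound from \(Q_{s+1,t}\le1\) and the second-moment identity. The square bound inserts this two-sided comparison and uses \(\sum_u\Delta T_u/r_u\le T_t\sup_u r_u^{-1}\le c_\star C_0m^{4\alpha-1}\), and the gap follows from the positive Neumann sandwich \(\sigma^2\mathcal K\bm1\le\mathcal Z\le\sigma^2(1-\rho)^{-1}\mathcal K\bm1\). You also point out that the Step 1 constants do not depend on \(c_\star\). That independence is what justifies "decreasing \(c_\star\) if necessary", which the paper leaves implicit.
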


\begin{proof}
The two-sided spectral-mode decay bounds in \cref{lem:td_lrs_propagator} and the DE band-mass
bound in \cref{lem:noisy_43_de_finite_bulk_band_mass} give the lower bound in
\cref{eq:td_lrs_joint_de_finite_bulk_kernel}: on the band
\(\lambda\asymp m^{-2\alpha}\), the age condition
\(T_t-T_{s+1}\leq C_0m^{2\alpha}\) keeps \(Q_{s+1,t}(\lambda)\) bounded
below by a positive constant.  The upper bound follows from
\(Q_{s+1,t}\leq1\) and the DE moment identity
\[
\int\lambda^2\,\mu_m^{\mathcal K}(\mathrm d\lambda)
\asymp m^{1-4\alpha}.
\]
Consequently,
\[
\sup_{u\leq t}\sum_{s<u}\mathcal K_{u,s}
\lesssim
m^{1-4\alpha}T_t\sup_s r_s^{-1}.
\]
This is at most a fixed constant smaller than one in \(\mathrm{FB}_1\), after
choosing \(c_\star\), and is \(o(1)\) in the strict \(\mathrm{FB}_2\)
window.  The two-sided kernel comparison also gives, for \(s<t\),
\[
(\mathcal K^2)_{t,s}
=\sum_{r=s+1}^{t-1}\mathcal K_{t,r}\mathcal K_{r,s}
\lesssim
\mathcal K_{t,s}
m^{1-4\alpha}\sum_{r=s+1}^{t-1}\frac{\Delta T_r}{r_r}
\leq\rho\mathcal K_{t,s},
\]
with one common \(\rho<1\) after decreasing \(c_\star\).  Thus the same
two-term Neumann reduction as in
\cref{eq:td_lrs_neumann_reduction} applies.

Set \(\mathcal Z_t:=\mathcal R_{\sigma,t}-\mathcal R_{0,t}\).  Subtracting the two
DE recursions gives
\(\mathcal Z=(I-\mathcal K)^{-1}\sigma^2\mathcal K\bm1\).
Positivity and \cref{eq:td_lrs_joint_de_finite_bulk_square} therefore imply
\[
\sigma^2\mathcal K\bm1
\leq \mathcal Z
\leq\frac{\sigma^2}{1-\rho}\mathcal K\bm1.
\]
Summing \cref{eq:td_lrs_joint_de_finite_bulk_kernel} over \(s<t\) proves
\cref{eq:td_lrs_joint_de_finite_bulk_gap}.
\end{proof}

The canonical PLRF target forcing is also controlled throughout the source
range in which the central-contour remainder is negligible.  This closes the
total-risk asymptotic without an additional finite-bulk forcing assumption.

\begin{proposition}[Finite-bulk joint-schedule DE risk asymptotics in the proved source range]
\label{prop:td_lrs_joint_de_finite_bulk}
\label{iclr:prop:finite_bulk_de}
Fix an open finite-bulk subregime, \(p=2\alpha+2\beta-1>0\), and
\(\beta<1+2\alpha\), away from the PLRF critical lines.  Retain the
dimension-sequence and schedule-family hypotheses of
\cref{prop:td_lrs_joint_de_finite_bulk_kernel}.  For fixed \(\sigma^2>0\),
uniformly for \(1\ll T_t\leq C_0m^{2\alpha}\) in \(\mathrm{FB}_1\),
\begin{equation}
\label{eq:td_lrs_joint_fb1_fsl}
\mathcal R_{\sigma,\boldsymbol\eta,\boldsymbol B}(t,m)
\asymp
m^{-p}+(1+T_t)^{-p/(2\alpha)}
+\sigma^2m^{1-4\alpha}
\sum_{s<t}\frac{\Delta T_s}{r_s}.
\end{equation}
Uniformly for \(1\ll T_t=o(m^{2\alpha})\) in \(\mathrm{FB}_2\),
\begin{equation}
\label{eq:td_lrs_joint_fb2_fsl}
\mathcal R_{\sigma,\boldsymbol\eta,\boldsymbol B}(t,m)
\asymp
m^{-2\alpha}+(1+T_t)^{-p/(2\alpha)}
+\sigma^2m^{1-4\alpha}
\sum_{s<t}\frac{\Delta T_s}{r_s}.
\end{equation}
All constants are uniform over the stated family.
\end{proposition}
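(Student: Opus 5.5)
The plan is to write the DE noisy risk as clean risk plus gap, and to evaluate each piece with results already proved. Subtracting the clean and noisy DE recursions \cref{eq:td_lrs_joint_de_volterra} gives
\[
\mathcal R_{\sigma,t}=\mathcal R_{0,t}+\mathcal Z_t,
\]
where \(\mathcal Z_t\) is the noisy--clean gap. By \cref{prop:td_lrs_joint_de_finite_bulk_kernel}, on the stated windows
\[
\mathcal Z_t\asymp\sigma^2m^{1-4\alpha}\sum_{s<t}\frac{\Delta T_s}{r_s}.
\]
This already supplies the third term in both \cref{eq:td_lrs_joint_fb1_fsl} and \cref{eq:td_lrs_joint_fb2_fsl}. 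It remains to show \(\mathcal R_{0,t}\asymp\mathcal F_t\asymp\Phi_{\alpha,\beta}(T_t,m)\). Since \(\mathcal R_{0,t}\) and \(\mathcal Z_t\) are both nonnegative, the two orders then add.

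For the forcing, I would combine the two-sided mode-decay bounds of \cref{lem:td_lrs_propagator},
\[
e^{-C\lambda T_t}\le Q_{0,t}(\lambda)\le e^{-c\lambda T_t},
\]
with the Laplace filter comparison \cref{eq:td_lrs_de_forcing_laplace_filter} of \cref{lem:td_lrs_de_forcing_filters}. That lemma covers \(\alpha<1/4\): it only excludes \(\alpha\in\{1/4,1/2\}\) and requires \(\beta<1+2\alpha\). Applying it at the two rescalings \(c\) and \(C\), and using that \(\Phi_{\alpha,\beta}\) is stable under fixed time dilations, gives
\[
\mathcal F_t\asymp\Phi_{\alpha,\beta}(T_t,m),
\]
uniformly up to \(T_t\le C_0m^{2\alpha}\). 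In \(\mathrm{FB}_1\) we have \(\beta<1/2\), so \(\Phi=m^{-p}+(1+T)^{-p/(2\alpha)}\). In \(\mathrm{FB}_2\) we have \(\beta>1/2\), so \(\Phi=m^{-2\alpha}+(1+T)^{-p/(2\alpha)}\). These match the displayed first two terms.

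For the clean risk, the lower bound \(\mathcal R_0\ge\mathcal F\) follows from positivity. For the upper bound I would use the Neumann reduction of \cref{prop:td_lrs_joint_de_finite_bulk_kernel}, namely \((\mathcal K^2)_{t,s}\le\rho\mathcal K_{t,s}\), which gives
\[
\mathcal R_0\le\mathcal F+(1-\rho)^{-1}\mathcal K\mathcal F.
\]
So I need \((\mathcal K\mathcal F)_t\lesssim\mathcal F_t+\mathcal Z_t/\sigma^2\).

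The kernel is flat on the window, \(\mathcal K_{t,s}\asymp m^{1-4\alpha}\Delta T_s/r_s\), and \(\sup_s r_s^{-1}\le c_\star m^{2\alpha-1}\). Hence
\[
(\mathcal K\mathcal F)_t\lesssim c_\star m^{-2\alpha}\sum_{s<t}\Delta T_s\,\Phi(T_s,m).
\]
I split \(\Phi\) into its floor and its source transient.

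\begin{itemize}
\item \emph{Floor term.} It contributes at most \(c_\star m^{-2\alpha}T_t\times\text{floor}\). In \(\mathrm{FB}_1\) this is \(\lesssim c_\star C_0\,m^{-p}\), absorbed by the floor. In \(\mathrm{FB}_2\) it is \(o(m^{-2\alpha})\), because the window there is strict, \(T_t=o(m^{2\alpha})\).
\item \emph{Transient term.} It contributes \(c_\star m^{-2\alpha}\int_0^{T_t}(1+u)^{-q_{\mathcal F}}\,\mathrm du\).
\begin{itemize}
\item In \(\mathrm{FB}_1\) (\(q_{\mathcal F}<1\)) this is \(\asymp c_\star m^{-2\alpha}T_t^{1-q_{\mathcal F}}\), which is \(\lesssim c_\star T_t^{-q_{\mathcal F}}\) because \(T_t\lesssim m^{2\alpha}\).
\item In \(\mathrm{FB}_2\) (\(q_{\mathcal F}>1\)) it is \(\lesssim m^{-2\alpha}\), again absorbed by the floor.
\end{itemize}
\end{itemize}

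The main obstacle is making these absorptions uniform all the way to the endpoint \(T_t\asymp m^{2\alpha}\) in \(\mathrm{FB}_1\). For this I would rely on the smallness of \(c_\star\) and on \(T_t\le C_0m^{2\alpha}\), choosing \(c_\star\) depending on \(C_0\). I also need the mesh bound to replace sums by integrals, and the strict window in \(\mathrm{FB}_2\) to dispose of the floor. If the bound \(\mathcal R_0\lesssim\mathcal F\) failed anywhere, the fallback is cheap: the extra term \(\mathcal K\mathcal F\) is dominated by \(\mathcal F+\sigma^2\mathcal K\mathbf 1\), since \(\mathcal F\) is bounded and \(\sigma^2>0\) is fixed. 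The upper bound therefore survives in either case.
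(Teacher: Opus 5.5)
Your proposal is correct and follows the paper's proof essentially step for step. It obtains the forcing from the propagator bounds and the Laplace filter lemma, bounds $(\mathcal K\mathcal F)_t\lesssim\mathcal F_t$ by splitting the profile into floor and transient in each finite-bulk branch, and closes $\mathcal R_{0}$ with the two-term Neumann reduction before adding the target-free gap from the finite-bulk kernel proposition. Your fallback remark, which absorbs $\mathcal K\mathcal F$ into $\sigma^2\mathcal K\mathbf 1$ using boundedness of $\mathcal F$ and fixed $\sigma^2>0$, is a valid extra safeguard that the paper's proof does not need.
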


\begin{proof}
The two-sided spectral-mode decay bounds in \cref{lem:td_lrs_propagator} and
\cref{eq:td_lrs_de_forcing_laplace_filter} give, in the two finite-bulk
branches,
\[
\mathcal F_t\asymp
\begin{cases}
m^{-p}+(1+T_t)^{-p/(2\alpha)},&\mathrm{FB}_1,\\
m^{-2\alpha}+(1+T_t)^{-p/(2\alpha)},&\mathrm{FB}_2.
\end{cases}
\]

Write \(q=p/(2\alpha)\).  In \(\mathrm{FB}_1\), \(0<q<1\), and
\[
(\mathcal K\mathcal F)_t
\lesssim
m^{1-4\alpha}(\sup_s r_s^{-1})
\int_0^{T_t}\left[m^{-p}+(1+u)^{-q}\right] \,\mathrm du
\lesssim
m^{-p}+(1+T_t)^{-q}
\asymp\mathcal F_t.
\]
In \(\mathrm{FB}_2\), \(q>1\); using
\(\int_0^\infty(1+u)^{-q}\,\mathrm du<\infty\) gives
\[
(\mathcal K\mathcal F)_t
\lesssim m^{-2\alpha}
\bigl[m^{-2\alpha}T_t+1\bigr]
\lesssim m^{-2\alpha}\lesssim\mathcal F_t
\]
on the strict window.  Positivity and the
two-term Neumann reduction from
\cref{eq:td_lrs_joint_de_finite_bulk_square} now give
\[
\mathcal F_t\leq\mathcal R_{0,t}
\leq\mathcal F_t+\frac{1}{1-\rho}(\mathcal K\mathcal F)_t
\lesssim\mathcal F_t.
\]
Adding the
noisy--clean gap in
\cref{eq:td_lrs_joint_de_finite_bulk_gap} proves
\cref{eq:td_lrs_joint_fb1_fsl,eq:td_lrs_joint_fb2_fsl}.
\end{proof}

\begin{assumption}[High-source \(\mathrm{FB}_2\) DE forcing closure]
\label{ass:td_lrs_high_source_fb2_forcing}
Fix \(0<\alpha<1/4\), \(\beta>1+2\alpha\), and aspect ratios in a compact
subset of \((1,\infty)\).  Uniformly over the finite-bulk admissible schedule
family,
\begin{equation*}
\mathcal F_{\boldsymbol\eta,\boldsymbol B}(t,m)
\asymp
m^{-2\alpha}+(1+T_t)^{-p/(2\alpha)}
\end{equation*}
on the strict \(\mathrm{FB}_2\) window \(T_t=o(m^{2\alpha})\).  The boundary
\(\beta=1+2\alpha\) is excluded; logarithmic corrections may occur there.
\end{assumption}

\begin{corollary}[Conditional high-source \(\mathrm{FB}_2\) extension]
\label{cor:td_lrs_joint_de_high_source_fb2}
Fix \(0<\alpha<1/4\), \(\beta>1+2\alpha\), and \(\sigma^2>0\), and let a
schedule family satisfy the hypotheses of
\cref{prop:td_lrs_joint_de_finite_bulk_kernel}.  Under the residual forcing
input in \cref{ass:td_lrs_high_source_fb2_forcing},
\cref{eq:td_lrs_joint_fb2_fsl} holds uniformly for
\(1\ll T_t=o(m^{2\alpha})\).  The boundary \(\beta=1+2\alpha\) is not
included in this pure-power statement.
\end{corollary}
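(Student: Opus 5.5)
The plan is to repeat the proof of \cref{prop:td_lrs_joint_de_finite_bulk}, replacing only its forcing input. Assumption~\cref{ass:td_lrs_high_source_fb2_forcing} supplies exactly the two-sided forcing comparison that \cref{lem:td_lrs_de_forcing_filters} provided for \(\beta<1+2\alpha\):
\[
\mathcal F_t\asymp m^{-2\alpha}+(1+T_t)^{-q},\qquad q=p/(2\alpha).
\]
This holds uniformly over the schedule family on the strict window \(T_t=o(m^{2\alpha})\). Every other ingredient of that proof is target-independent, so it carries over unchanged.

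\emph{First step: kernel and gap.} Because \(0<\alpha<1/4\) and the schedule hypotheses are those of \cref{prop:td_lrs_joint_de_finite_bulk_kernel}, that proposition applies verbatim; it uses only the band mass of \(\mu_m^{\mathcal K}\), never the target. It yields three facts:
\[
\mathcal K_{t,s}\asymp m^{1-4\alpha}\Delta T_s/r_s,\qquad (\mathcal K^2)_{t,s}\le\rho\,\mathcal K_{t,s},\qquad \mathcal R_{\sigma,t}-\mathcal R_{0,t}\asymp\sigma^2m^{1-4\alpha}\sum_{s<t}\Delta T_s/r_s .
\]

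\emph{Second step: clean risk.} Since \(\beta>1+2\alpha>1/2\), we have \(p>2\alpha\), i.e.\ \(q>1\), so \(\int_0^\infty(1+u)^{-q}\,\mathrm du<\infty\). Using \(\sup_s r_s^{-1}\le c_\star m^{2\alpha-1}\),
\[
(\mathcal K\mathcal F)_t\lesssim m^{1-4\alpha}\,c_\star m^{2\alpha-1}\bigl[m^{-2\alpha}T_t+C\bigr]\lesssim m^{-2\alpha}\lesssim\mathcal F_t
\]
on \(T_t=o(m^{2\alpha})\). The two-term Neumann reduction with \(\rho<1\) then gives
\[
\mathcal F_t\le\mathcal R_{0,t}\le\mathcal F_t+(1-\rho)^{-1}(\mathcal K\mathcal F)_t\lesssim\mathcal F_t .
\]

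\emph{Conclusion.} Adding the gap to the clean-risk comparison gives \cref{eq:td_lrs_joint_fb2_fsl}. All constants are uniform over the schedule family because each input is.

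The only genuine obstacle is the forcing comparison itself, which is why it is assumed rather than proved. For \(\beta\ge1+2\alpha\) the central-contour \(u\log(1/u)\) remainder of \citet{paquette20244+} is no longer negligible, and at \(\beta=1+2\alpha\) logarithmic factors can enter. For this reason the corollary excludes the boundary and relies on the assumption in place of \cref{lem:td_lrs_de_forcing_filters}. What remains to check is bookkeeping: the assumption is stated on the same strict window and the same schedule family, so it can be substituted directly into the previous proof.
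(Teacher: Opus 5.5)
Your proposal is correct and matches the paper's proof. The assumed forcing comparison of \cref{ass:td_lrs_high_source_fb2_forcing} replaces \cref{lem:td_lrs_de_forcing_filters}, and then the target-free kernel and gap bounds of \cref{prop:td_lrs_joint_de_finite_bulk_kernel}, the estimate \((\mathcal K\mathcal F)_t\lesssim m^{-2\alpha}\lesssim\mathcal F_t\) (using \(q>1\) and \(T_t=o(m^{2\alpha})\)), and the two-term Neumann reduction go through unchanged, exactly as in the \(\mathrm{FB}_2\) part of \cref{prop:td_lrs_joint_de_finite_bulk} that the paper's one-line proof cites.
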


\begin{proof}
The residual assumption supplies the \(\mathrm{FB}_2\) forcing comparison.
The same bound \(\mathcal K\mathcal F\lesssim\mathcal F\), followed by the
two-term Neumann reduction and the target-free noisy--clean gap, is identical
to the \(\mathrm{FB}_2\) part of the preceding proof.
\end{proof}
\subsection{Conditional transfer from DE risk asymptotics to realized SGD}
\label{subsec:td_lrs_joint_empirical_transfer}

The preceding risk asymptotics concern the deterministic equivalent.  Comparing their
total-risk orders, or the resource infima below, with realized SGD requires
the empirical-to-DE conditions stated here.

For the transfer statements below, suppress the fixed schedule and width by
writing
\[
F_t^{\bm W}:=F_t,
\qquad
K_{t,s}^{\bm W}:=K_{t,s},
\qquad
\mathcal F_t:=\mathcal F_{\boldsymbol\eta,\boldsymbol B}(t,m),
\qquad
\mathcal K_{t,s}:=\mathcal K_{\boldsymbol\eta,\boldsymbol B}(t,s,m),
\]
and set
\[
R_\sigma^{\bm W}:=(R_{\sigma,t})_{t\geq0},
\qquad
\mathcal R_\sigma:=
\bigl(\mathcal R_{\sigma,\boldsymbol\eta,\boldsymbol B}(t,m)\bigr)_{t\geq0}.
\]

For a deterministic schedule, \emph{fixed-sequence transfer} means that, for
each prescribed width--horizon--schedule sequence, there are events of probability
tending to one and constants \(0<c<C<\infty\), possibly depending on that
sequence but not on \(m,s,t\), such that
\begin{equation*}
c\mathcal F_t\leq F_t^{\bm W}\leq C\mathcal F_t,
\qquad
c\mathcal K_{t,s}\leq K_{t,s}^{\bm W}\leq C\mathcal K_{t,s}
\end{equation*}
uniformly on its source window.  On the same events, for one common
\(\rho<1\),
\begin{equation*}
(K^{\bm W})^2_{t,s}\leq\rho K^{\bm W}_{t,s},
\qquad
\mathcal K^2_{t,s}\leq\rho\mathcal K_{t,s}.
\end{equation*}
A \emph{class-uniform transfer} means that the PLRF operators are coupled
triangularly across resource levels and that there are events \(E_N\), with
\(\mathbb P(E_N)\to1\), and common \(c,C,\rho\) such that, on \(E_N\), the
comparisons hold simultaneously for every admissible width, horizon, and
schedule whose data or feature-compute budget is at most \(N\).

\begin{paperassumption}[Conditional empirical-to-DE transfer]
\label{ass:uniform_subregime_plrf}
Assume fixed-sequence transfer for each prescribed
width--horizon--schedule sequence under consideration.  Whenever comparisons
are made simultaneously over an admissible resource-bounded family, assume
the class-uniform transfer property instead.
\end{paperassumption}

\begin{proposition}[Conditional transfer from joint DE to realized SGD]
\label{prop:td_lrs_joint_empirical_de_transfer}
Under fixed-sequence transfer, for every fixed \(\sigma^2\geq0\),
\[
R_{\sigma,t}
\asymp
\mathcal R_{\sigma,\boldsymbol\eta,\boldsymbol B}(t,m)
\]
uniformly along the prescribed sequence.  Under class-uniform transfer, the
same comparison is uniform over the admissible family; consequently the
exact conditional and DE infima at fixed data or feature compute are
comparable.
\end{proposition}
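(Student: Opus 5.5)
The proof compares two positive Volterra systems term by term. On the transfer events supplied by \cref{ass:uniform_subregime_plrf}, we have \(F_t^{\bm W}\asymp\mathcal F_t\) and \(K^{\bm W}_{t,s}\asymp\mathcal K_{t,s}\) entrywise with common constants \(c,C\). Both kernels also satisfy the square bound \(\mathsf K^2\leq\rho\mathsf K\) with one common \(\rho<1\). Write the exact recursion \cref{eq:td_lrs_joint_volterra} and the DE recursion \cref{eq:td_lrs_joint_de_volterra} in operator form. Each has the solution \(\mathsf R_\sigma=(I-\mathsf K)^{-1}(\mathsf F+\sigma^2\mathsf K\bm1)\) on every finite horizon, where \(\mathsf K\) is the strictly lower-triangular nonnegative kernel matrix.

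\emph{Step 1: two-term Neumann reduction for each system.} From \(\mathsf K^2\leq\rho\mathsf K\), positivity and induction give \(\mathsf K^n\leq\rho^{n-1}\mathsf K\). Summing the finite Neumann series then gives, for every nonnegative \(v\),
\[
v+\mathsf Kv\leq(I-\mathsf K)^{-1}v\leq v+\frac{1}{1-\rho}\mathsf Kv,
\]
exactly as in \cref{eq:td_lrs_neumann_reduction}. Apply this with \(v=\mathsf F+\sigma^2\mathsf K\bm1\), and use \(\mathsf K^2\bm1\leq\rho\mathsf K\bm1\) once more. This yields
\[
\mathsf R_\sigma\asymp\mathsf F+\mathsf K\mathsf F+\sigma^2\mathsf K\bm1,
\]
with constants depending only on \(\rho\). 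The comparison holds separately for the exact and the DE system, and \(\sigma^2=0\) is included.

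\emph{Step 2: entrywise comparison.} The right-hand side above is monotone in each entry of \(\mathsf F\) and \(\mathsf K\), and the term \(\mathsf K\mathsf F\) is bilinear. So replacing \((F^{\bm W},K^{\bm W})\) by \((\mathcal F,\mathcal K\)) changes it by at most the factor \(C^2\) (upper) or \(c^2\) (lower). Chaining the two reductions gives \(R_{\sigma,t}\asymp\mathcal R_{\sigma,\boldsymbol\eta,\boldsymbol B}(t,m)\) uniformly along the prescribed sequence on the transfer events. The constants depend only on \(c,C,\rho\), not on \(t\), \(m\), \(\sigma^2\), or the schedule.

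\emph{Step 3: class-uniform version and infima.} Under class-uniform transfer, the same constants hold simultaneously for every admissible schedule with budget at most \(N\) on the single event \(E_N\). The pointwise comparison is therefore uniform over the family. Take the infimum over the family at a fixed data or feature-compute budget: if \(R\leq C'\mathcal R\) for every member, then \(\inf R\leq C'\inf\mathcal R\), and symmetrically for the lower bound. Hence the infima are comparable.

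The main point requiring care is that the square bound enters for both kernels, not only the DE one. The bare entrywise comparison of \(K^{\bm W}\) with \(\mathcal K\) cannot control the feedback series, because \(C^n\) would blow up through the convolution powers. This is why the hypothesis includes the common \(\rho\) for \(K^{\bm W}\). Otherwise the argument is routine positivity bookkeeping.
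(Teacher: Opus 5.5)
Your proposal is correct and matches the paper's proof. The paper likewise applies the two-term Neumann sandwich \(v+Kv\le(I-K)^{-1}v\le v+(1-\rho)^{-1}Kv\) to both the exact and DE kernels with input \(F+\sigma^2K\mathbf 1\), obtains \(R_\sigma\asymp F+K(F+\sigma^2\mathbf 1)\) for each system, chains the two through the entrywise forcing and kernel comparisons, and takes infima on the common class-uniform event.

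Your remark is also right: the square bound must hold for \(K^{\bm W}\) itself, since it is not inherited from \(\mathcal K\) through the entrywise comparison. That is why the transfer hypothesis includes one common \(\rho\) for both kernels.
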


\begin{proof}
Let \(K\) denote either the exact or the DE positive kernel.  The bound
\(K^2\leq\rho K\) gives, for \(v\geq0\),
\[
v+Kv\leq(I-K)^{-1}v\leq v+\frac1{1-\rho}Kv.
\]
Applied to forcing and label-noise inputs, componentwise comparison gives
\[
R_{\sigma}^{\bm W}
\asymp F^{\bm W}+K^{\bm W}(F^{\bm W}+\sigma^2\bm1)
\asymp\mathcal F+\mathcal K(\mathcal F+\sigma^2\bm1)
\asymp\mathcal R_\sigma.
\]
Class uniformity permits taking infima on the same event.
\end{proof}

\paragraph{Finite-width Volterra hierarchy.}
Across the displayed \(\mathrm{LM}\), \(\mathrm{IM}\), and \(\mathrm{FB}\)
systems, true online SGD closely tracks both the exact conditional finite-\(W\)
Volterra recursion and its resolvent-DE reduction; see
\cref{fig:volterra_three_layer_bridge,fig:fb_volterra_three_layer_bridge}.
At fixed \(\bm W\), \cref{eq:td_lrs_joint_volterra} gives the exact
conditional population risk, and replacing its forcing and kernel by their
resolvent deterministic equivalents gives the second reduction.

\begin{figure}[t]
\centering
\includegraphics[width=\linewidth]{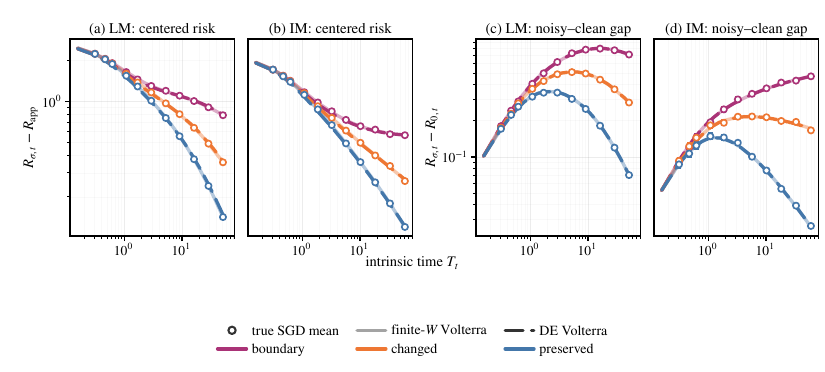}
\caption{Online SGD, the exact finite-width recursion, and its DE reduction
for centered risk and the noisy--clean gap in long and integrable memory.
Their agreement supports both approximation steps and the predicted boundary,
changed, and preserved schedule responses.}
\label{fig:volterra_three_layer_bridge}
\end{figure}

\begin{figure}[t]
\centering
\includegraphics[width=\linewidth]{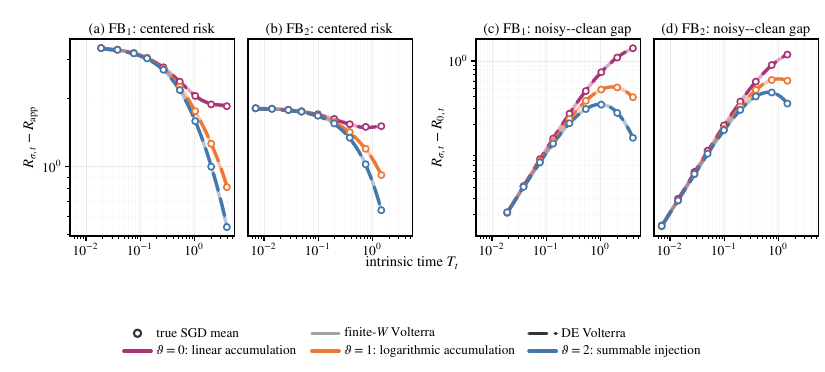}
\caption{Online SGD, the exact finite-width recursion, and its DE reduction
for the three injection responses in \(\mathrm{FB}_1\) and \(\mathrm{FB}_2\).
Their agreement supports the finite-bulk Volterra hierarchy and accumulation
laws.}
\label{fig:fb_volterra_three_layer_bridge}
\end{figure}

\subsection{Optimal ratio and schedule realizations}
\label{subsec:td_lrs_ratio_control_integration}

Represent the discrete ratio as the piecewise-constant intrinsic-time control
\(r(u):=r_s\) for \(T_s\leq u<T_{s+1}\),
so the injection weights remain exact.  If
\(\sup_s\eta_s\lesssim1\) and \(\sup_s r_s^{-1}\lesssim1\), their continuum
order is
\begin{equation}
\label{eq:td_lrs_joint_ratio_functional}
\int_0^{T_t}
\frac1{r(u)}
\left(1+T_t-u\right)^{-2+1/(2\alpha)}
\left[\mathcal F(u,m)+\sigma^2\right] \,\mathrm du,
\end{equation}
with finite-bulk analogue \(\int_0^{T_t}r(u)^{-1}\,\mathrm du\).  It also
records exact sample count:
\begin{equation}
\label{eq:td_lrs_joint_data_identity}
D_t:=\sum_{s<t}B_s
=
\sum_{s<t}r_s\Delta T_s
=
\int_0^{T_t}r(u)\,\mathrm du.
\end{equation}
Use \(\mathfrak f\asymp mD_t\) as the FLOP proxy.

Setting
\begin{equation}
\label{eq:td_lrs_joint_fixed_batch_recovery}
B_s\equiv B
\end{equation}
gives \(T_t=\sum_{u<t}\eta_u\), \(r_s=B/\eta_s\), and
\(\Delta T_s/r_s=\eta_s^2/B\),
recovering every fixed-\(B\) risk formula.

The identities \(\eta_s^2/B_s=\Delta T_s/r_s\) and
\(B_s=r_s\Delta T_s\) are algebraic.  Bounded-mesh endpoint quadrature, and its vanishing-mesh
strengthening, prove respectively the comparison and convergence in
\cref{eq:td_lrs_joint_ratio_functional}; summing the second identity proves
\cref{eq:td_lrs_joint_data_identity}.  Under
\cref{eq:td_lrs_joint_fixed_batch_recovery},
\(q_s(z)=1-2\eta_sz+(1+1/B)\eta_s^2z^2\) and
\(\Delta T_s/r_s=\eta_s^2/B\)
recover every fixed-\(B\) coefficient.

Fix terminal horizon \(T:=T_t\) and budget \(D:=D_t\).  For every LM or IM
branch of \cref{thm:td_lrs_joint_fsl}, define the surviving injection weight
\begin{equation}
\label{eq:td_lrs_ratio_weight}
w_{T,m}(u)
:=
(1+T-u)^{-2+1/(2\alpha)}
\left[\mathcal F(u,m)+\sigma^2\right],
\qquad 0\leq u\leq T.
\end{equation}
The schedule-dependent loss and its budget constraint are then
\begin{equation}
\label{eq:td_lrs_ratio_proxy}
\mathcal J_{T,m}(r)
:=
\int_0^T\frac{w_{T,m}(u)}{r(u)}\,\mathrm du,
\qquad
\int_0^T r(u)\,\mathrm du=D.
\end{equation}
The optimizer is proportional to \(\sqrt{w_{T,m}}\), allocating more samples
where the weighted noise contribution to the terminal risk is larger; see
\cref{thm:td_lrs_ratio_control}.
For the box-constrained form, use
\(\operatorname{clip}(x,a,b):=\min\{b,\max\{a,x\}\}\).

\begin{theorem}[Fixed-horizon continuum ratio control]
\label{thm:td_lrs_ratio_control}
Suppose \(w_{T,m}\) is integrable and positive almost everywhere and
\(0<\int_0^T\sqrt{w_{T,m}(u)}\,\mathrm du<\infty\).
Among positive controls satisfying the data constraint in
\cref{eq:td_lrs_ratio_proxy}, the unique minimizer up to null sets and its
minimum value are
\begin{equation}
\label{eq:td_lrs_ratio_unconstrained}
r_T^\star(u)
=
\frac{D\sqrt{w_{T,m}(u)}}
{\displaystyle\int_0^T\sqrt{w_{T,m}(v)}\,\mathrm dv},
\qquad
\min_r\mathcal J_{T,m}(r)
=
\frac1D
\left(
\int_0^T\sqrt{w_{T,m}(u)}\,\mathrm du
\right)^2.
\end{equation}
With pointwise bounds \(0<r_-<r_+<\infty\), the admissible set is nonempty
exactly when \(r_-T\leq D\leq r_+T\).  At \(D=r_\pm T\), the unique feasible
control is \(r_T^\star\equiv r_\pm\).  Under strict feasibility the unique
control is
\begin{equation}
\label{eq:td_lrs_ratio_clipped}
r_T^\star(u)
=
\operatorname{clip}
\left(
\sqrt{\frac{w_{T,m}(u)}{\mu}},
r_-,r_+
\right),
\qquad
\int_0^T r_T^\star(u)\,\mathrm du=D,
\end{equation}
for a multiplier \(\mu>0\) chosen to satisfy the data constraint.
\end{theorem}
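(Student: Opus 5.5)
The unconstrained claim follows from Cauchy--Schwarz, and the box-constrained claim from a pointwise Lagrangian argument using strict convexity of \(x\mapsto w/x\). For the unconstrained case, for any positive measurable \(r\) with \(\int_0^T r=D\), write
\[
\int_0^T\sqrt{w_{T,m}}\,\mathrm du=\int_0^T\sqrt{\frac{w_{T,m}}{r}}\,\sqrt r\,\mathrm du
\le\Bigl(\mathcal J_{T,m}(r)\Bigr)^{1/2}D^{1/2}.
\]
This gives the lower bound \(\mathcal J_{T,m}(r)\ge D^{-1}(\int_0^T\sqrt{w_{T,m}})^2\). Equality in Cauchy--Schwarz holds iff \(\sqrt{w_{T,m}/r}\) and \(\sqrt r\) are proportional a.e., i.e.\ \(r\propto\sqrt{w_{T,m}}\) a.e. The data constraint then fixes the constant, yielding \(r_T^\star\) in \cref{eq:td_lrs_ratio_unconstrained}. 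Finiteness and positivity of \(\int\sqrt{w_{T,m}}\), together with \(w_{T,m}>0\) a.e., make \(r_T^\star\) admissible and positive a.e. Substituting \(r_T^\star\) attains the bound, and the equality characterization gives uniqueness up to null sets.

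For the box constraint, feasibility is immediate: \(r_-\le r\le r_+\) integrates to \(r_-T\le D\le r_+T\), and conversely a constant \(r\equiv D/T\) is feasible. At \(D=r_\pm T\), the inequality \(r\le r_+\) (or \(r\ge r_-\)) integrated with equality forces \(r=r_\pm\) a.e., so the feasible control is unique.

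Under strict feasibility, define \(g(\mu):=\int_0^T\operatorname{clip}(\sqrt{w_{T,m}/\mu},r_-,r_+)\,\mathrm du\) for \(\mu>0\). By monotone and dominated convergence (the integrand is bounded by \(r_+\)), \(g\) is continuous and nonincreasing. It tends to \(r_+T\) as \(\mu\downarrow0\) and to \(r_-T\) as \(\mu\to\infty\), using \(w_{T,m}>0\) a.e.\ and \(w_{T,m}<\infty\) a.e. The intermediate value theorem therefore gives \(\mu>0\) with \(g(\mu)=D\).

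Optimality of the resulting clipped control then follows from the pointwise Lagrangian inequality. For a.e.\ \(u\), the clipped value \(r^\star(u)\) minimizes the strictly convex function \(x\mapsto w_{T,m}(u)/x+\mu x\) over \([r_-,r_+]\), since its unconstrained minimizer is \(\sqrt{w/\mu}\) and a convex function of one variable is minimized over an interval at the projection of that point. Hence for any feasible \(r\),
\[
\mathcal J_{T,m}(r)+\mu D\ge\mathcal J_{T,m}(r^\star)+\mu D,
\]
and strict convexity gives a.e.\ uniqueness whenever \(\mathcal J_{T,m}(r)<\infty\).

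The main technical point is the existence of the multiplier, namely the continuity and limits of \(g\) at the endpoints. This is where the a.e.\ positivity of \(w_{T,m}\) enters: it guarantees \(g\to r_+T\), so that strict feasibility \(D<r_+T\) is reached from above.
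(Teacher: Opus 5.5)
Your proposal is correct and follows essentially the same route as the paper. You use Cauchy--Schwarz for the unconstrained optimum, integration for feasibility and the endpoint cases \(D=r_\pm T\), an intermediate-value argument on the continuous nonincreasing clipped resource map to obtain the multiplier, and the pointwise Lagrangian inequality for \(x\mapsto w_{T,m}(u)/x+\mu x\) with strict convexity to get optimality and uniqueness of the clipped control (you take unconstrained uniqueness from the Cauchy--Schwarz equality case rather than from strict convexity, which is equivalent here).
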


\begin{proof}[Proof of \cref{thm:td_lrs_ratio_control}]
Write \(w=w_{T,m}\).  For every positive control with
\(\int_0^T r=D\), Cauchy--Schwarz gives
\begin{equation*}
\mathcal J_{T,m}(r)D
=
\left(\int_0^T\frac{w(u)}{r(u)}\,\mathrm du\right)
\left(\int_0^T r(u)\,\mathrm du\right)
\geq
\left(\int_0^T\sqrt{w(u)}\,\mathrm du\right)^2.
\end{equation*}
Equality gives \cref{eq:td_lrs_ratio_unconstrained}; strict convexity of
\(x\mapsto w(u)/x\) almost everywhere gives uniqueness up to null sets.

For the bounded problem, \(r_-T\leq D\leq r_+T\) is necessary by integration
and sufficient by \(r\equiv D/T\).  A multiplier \(\mu>0\) solves
\begin{equation*}
\int_0^T
\operatorname{clip}
\left(
\sqrt{\frac{w(u)}{\mu}},
r_-,r_+
\right)\,\mathrm du
=D.
\end{equation*}
Its left side is continuous and nonincreasing from \(r_+T\) to \(r_-T\), so
strict feasibility gives a solution.  The clipped control in
\cref{eq:td_lrs_ratio_clipped} pointwise minimizes
\(w(u)/x+\mu x\) on \([r_-,r_+]\), whence
\[
\frac{w(u)}{r(u)}+\mu r(u)
\geq
\frac{w(u)}{r_T^\star(u)}+\mu r_T^\star(u).
\]
Integration cancels the resource terms, proving global optimality, while
strict convexity gives primal uniqueness; at either resource endpoint the
box constraint forces the stated constant control.  The multiplier is unique
exactly when its resource equation is.  Its left side is strictly decreasing
through a solution iff the free set
\[
\left\{u:
r_-<\sqrt{w(u)/\mu}<r_+
\right\}
\]
has positive measure; otherwise it may have a plateau despite primal
uniqueness.
\end{proof}

Because \(\sqrt{\mathcal F(u,m)+\sigma^2}\) accounts for clean and label
noise, the full optimizer need not be monotone.  When fixed nonzero label
noise leads, it reduces to
\begin{equation}
\label{eq:td_lrs_ratio_label_noise_profile}
r_T^\star(u)
=
\operatorname{clip}
\left(
c(1+T-u)^{-1+1/(4\alpha)},
r_-,r_+
\right),
\end{equation}
where \(c\) enforces the data constraint.  In \(\mathrm{FB}_{1,2}\), instead,
\begin{equation}
\label{eq:td_lrs_ratio_low_regular}
\sigma^2m^{1-4\alpha}
\int_0^T\frac{\mathrm du}{r(u)},
\qquad
r_T^\star(u)\equiv\frac DT,
\end{equation}
when feasible, with the same \(\mathrm{FB}_{1,2}\)
proved-source and high-source qualifications stated above.

\paragraph{Two continuum schedule realizations.}
The positive optimizer fixes \(r\), not its learning-rate/batch factorization.
Let \(\tau\) denote continuum iteration time, so
\(\mathrm du/\mathrm d\tau=\eta(\tau)\) and
\(r(u(\tau))=B(\tau)/\eta(\tau)\).  For a fixed learning rate \(\eta_0\), set
\begin{equation}
\label{eq:td_lrs_ratio_jinbo_factorization}
\eta(\tau)\equiv\eta_0,
\qquad
u(\tau)=\eta_0\tau,
\qquad
B(\tau)=\eta_0r_T^\star(\eta_0\tau),
\qquad
0\leq \tau\leq\frac{T}{\eta_0}.
\end{equation}
This is the fixed-learning-rate realization of \citet{wang2026fast}.  For a
fixed continuum batch \(B_0>0\), let \(u(0)=0\) solve
\begin{equation}
\label{eq:td_lrs_ratio_blake_factorization}
\frac{\mathrm du}{\mathrm d\tau}
=\frac{B_0}{r_T^\star(u(\tau))},
\qquad
B(\tau)\equiv B_0,
\qquad
\eta(\tau)=\frac{B_0}{r_T^\star(u(\tau))},
\qquad
0\leq\tau\leq\frac{D}{B_0}.
\end{equation}
Both forms process \(D\) samples and reach intrinsic time \(T\); the second is
the fixed-batch realization of \citet{bordelon2026theory}.  If, on an unclipped
terminal segment,
\[
r_T^\star(u)=c(1+T-u)^{-1+1/(4\alpha)},
\]
then its fixed-batch realization on the corresponding terminal iteration-time
segment is
\begin{equation}
\label{eq:td_lrs_ratio_physical_decay}
\eta(\tau)
=
\frac{B_0}{c}
\left[
1+\frac{D-B_0\tau}{4\alpha c}
\right]^{4\alpha-1}.
\end{equation}
Thus the intrinsic square-root profile becomes iteration-time decay of exponent
\(4\alpha-1\), while clipping becomes learning-rate or batch plateaus.
With \(p=2\alpha+2\beta-1\), the corresponding terminal-decay, peak, risk,
and source exponents are, respectively,
\begin{equation}
\label{eq:td_lrs_ratio_external_map}
4\alpha-1,
\qquad
\frac{\beta}{\alpha+\beta},
\qquad
\frac{p}{p+1},
\qquad
\frac{p}{2\alpha}.
\end{equation}
When fixed label noise leads, \cref{eq:td_lrs_ratio_label_noise_profile}
coincides, after reparameterization by intrinsic time, with the joint
learning-rate--batch-size optimizer of \citet{bordelon2026theory}.
Thus, when fixed label noise leads, the \(\beta>0\) branch recovers the
power-decay shape and the peak and risk exponents of
\citet{bordelon2026theory} and \citet{li2026optimal}.  For \(\beta<0\), lower-ratio
clipping becomes a maximum-learning-rate plateau followed by the same terminal
power decay, recovering their WSD structure and hard-branch risk exponent.
The easy--hard boundary is \(\beta=0\).  These comparisons are theorem-level
only in the trace-class regime \(\alpha>1/2\), away from the excluded boundary;
below trace class the correspondence is algebraic.  The constant-learning-rate
factorization similarly matches \citet{wang2026fast}.  Related work optimizes batch
given learning rate \citep{li2026towards} or proves their finite-sample
equivalence \citep{meterez2026seesaw}; here ratio optimization follows after
the spectral forcing and memory asymptotics of
\cref{thm:noisy_43_effective_spectral_criterion}.

\paragraph{From continuum control to integer SGD.}
Here \emph{admissible} means the full class of
\cref{def:admissible_integer_schedules}, with common constants fixed as
budgets grow and no finite upper ratio cap in the scaling infima.  Equal-sample
quantiles discretize the continuum profile, using \(B_s\equiv1\) for every
upper bound.  Let \(\mathcal R_\sigma^\star(D)\) be the terminal DE-risk
infimum subject to
\(\sum_{s<t}B_s\leq D\), and define
\(\mathcal R_\sigma^\star(\mathfrak f)\) analogously under
\(m\sum_{s<t}B_s\leq\mathfrak f\).  Integer realization loses no exponent and
no admissible schedule improves the orders, as proved below.

\paragraph{Proof of the phasewise optimal rates.}
We now prove \cref{cor:td_lrs_subregime_exponents} and the rates summarized
in \cref{tab:td_lrs_ratio_optimal_exponents}.  The propagation subregime
fixes the clean-risk and memory scalings, the ratio path \(r\) allocates
stochastic error, and the resource constraint determines the trade-off
between width and data.  Equal-sample quantiles yield admissible integer
realizations with \(B_s\equiv1\), and the lower objectives below show that no
admissible schedule improves the displayed orders.

At \(\alpha=1\), the two \(\mathrm{IM}_{2,3}\) feature-compute expressions
agree at \(\mathfrak f^{-1/2}\).  For \(\alpha>1\), the optimizer reaches
\(T\asymp m^{2\alpha}\), which is a source-window saturation rather than a
new propagation regime.  The lower-ratio floor is active in
\(\mathrm{IM}_1\) when \(\beta<0\).  At \(\beta=0\), the neighboring
exponents agree, although the open-branch statements exclude the crossover.
The high-source \(\mathrm{FB}_2\) rate retains the conditional qualification
in \cref{ass:td_lrs_high_source_fb2_forcing,%
cor:td_lrs_joint_de_high_source_fb2}.

The subregime fixes the clean-risk and memory scalings, \(r\) allocates noise, and the
compute budget determines the trade-off between width and data.  Unit-batch
schedules attain every proved order.
At \(\alpha=1\), the two \(\mathrm{IM}_{2,3}\) compute expressions agree at
\(\mathfrak f^{-1/2}\).  For \(\alpha>1\), the optimizer reaches
\(T\asymp m^{2\alpha}\), a source-window saturation rather than a new
subregime.  The lower-ratio floor is active in \(\mathrm{IM}_1\) for
\(\beta<0\); at fixed data its \(\beta>0\) and \(\beta<0\) branches have peaks
\[
D^{\,1-\frac1{2(p+1)}}
\quad\text{and}\quad
D^{\,\frac12(1+p/(2\alpha))},
\]
respectively, so any cap must exceed the relevant order.  At \(\beta=0\) the
neighboring exponents agree, but open-branch statements exclude the crossover.

We now prove \cref{cor:td_lrs_subregime_exponents} and the consequences in
\cref{subsec:td_lrs_ratio_control_integration}: specialize the continuum
solution, realize its profiles with integer \(B_s=1\) schedules, and match the
DE lower objectives under data and FLOP budgets.

\paragraph{Subregime profiles.}
Substitution of \cref{eq:td_lrs_ratio_weight} into the square-root optimizer gives,
on the LM and IM branches,
\[
r_T^\star(u)
\propto
(1+T-u)^{-1+1/(4\alpha)}
\sqrt{\mathcal F(u,m)+\sigma^2}.
\]
Its fixed nonzero label-noise component yields
\cref{eq:td_lrs_ratio_label_noise_profile}.  In \(\mathrm{FB}_1\) and
\(\mathrm{FB}_2\),
\[
\left(\int_0^T\frac{\mathrm du}{r(u)}\right)
\left(\int_0^T r(u)\,\mathrm du\right)\geq T^2
\]
instead gives the constant optimizer \cref{eq:td_lrs_ratio_low_regular}.

The next lemma turns the required ratio profiles into genuine SGD schedules;
it is also the achievability step in
\cref{cor:td_lrs_subregime_exponents}.

For integers \(D\geq1\), a horizon \(T>0\), and \(\alpha>1/4\), define
\[
I_\alpha(T):=\int_0^T(1+T-u)^{-1+1/(4\alpha)}\,\mathrm du,
\qquad
\bar r(u):=\frac{D}{I_\alpha(T)}(1+T-u)^{-1+1/(4\alpha)}.
\]
Let \(0=u_0<\cdots<u_D=T\) be the data-quantile partition determined by
\(\int_0^{u_s}\bar r(v)\,\mathrm dv=s\), and set \(B_s:=1\) and
\(\eta_s:=u_{s+1}-u_s\).

\begin{lemma}[Integer realization of the subregime optimizers]
\label{lem:td_lrs_integer_reachability}
Let \(D\to\infty\) through integers and let \(T\to\infty\) with
\(T/D\lesssim1\).  Fix \(\alpha>0\), \(\alpha\neq1/4\).  For
\(\alpha>1/4\), the preceding deterministic integer-batch schedule uses
exactly \(D\) samples, reaches intrinsic time \(T\), and satisfies
\[
\sum_{s<D}\eta_s^2
\bigl(1+T-u_{s+1}\bigr)^{-2+1/(2\alpha)}
\asymp
\frac{T^{1/(2\alpha)}}D.
\]
For \(1/4<\alpha<1/2\), the same order is attained by the simpler constant
choice \(B_s=1\), \(\eta_s=T/D\).  For \(0<\alpha<1/4\), that constant
choice satisfies \(\sum_{s<D}\eta_s^2=T^2/D\).

In the \(\beta<0\) branch of subregime \(\mathrm{IM}_1\), there is a horizon
\(T\asymp D\) and an admissible unit-batch schedule for which
\[
\sum_{s<D}\eta_s^2
\bigl(1+T-u_{s+1}\bigr)^{-2+1/(2\alpha)}
=o\!\left(D^{-p/(2\alpha)}\right).
\]
These schedules satisfy the peak and bounded-batch conditions at the
subregime-wise horizons and widths used in the resource bounds.
\end{lemma}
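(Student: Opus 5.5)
The plan is to verify the three claims in turn. Throughout, the sum is read as a Riemann-type quadrature of a continuum integral in intrinsic time.

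\emph{Quantile schedule, $\alpha>1/4$.} By construction, the partition has $D$ cells with $\int_{u_s}^{u_{s+1}}\bar r=1$ and $B_s=1$. So exactly $D$ samples are used and $T_D=u_D=T$. On each cell, $\eta_s\bar r(\xi_s)=1$ for some $\xi_s\in[u_s,u_{s+1}]$. Writing $g(u):=(1+T-u)^{-1+1/(4\alpha)}$, this gives
\[
\eta_s^2(1+T-u_{s+1})^{-2+1/(2\alpha)}=\eta_s\,\frac{g(u_{s+1})^2}{\bar r(\xi_s)}=\eta_s\,\frac{I_\alpha(T)}{D}\,\frac{g(u_{s+1})^2}{g(\xi_s)}.
\]
Next I will show $g(u_{s+1})\asymp g(\xi_s)$, i.e.\ that the cells have length $O(1+T-u_{s+1})$. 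Since $g$ is increasing in $u$, the rightmost cells are the shortest. Each cell has length at most $I_\alpha(T)/(D\,g(u_s))$, and $I_\alpha(T)\asymp T^{1/(4\alpha)}$ because the exponent exceeds $-1$. The worst cell is the first one, with $g(0)\asymp T^{-1+1/(4\alpha)}$, so cell lengths are at most $\lesssim T/D\lesssim1$. Hence $1+T-u$ changes by at most a bounded factor across a cell, and the sum is comparable to
\[
\frac{I_\alpha(T)}{D}\int_0^T g(u)\,\mathrm du=\frac{I_\alpha(T)^2}{D}\asymp\frac{T^{1/(2\alpha)}}{D}.
\]

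\emph{Constant step $\eta_s=T/D$.} For $1/4<\alpha<1/2$, the sum equals $(T/D)^2\sum_s(1+T-u_{s+1})^{-q_{\mathcal K}}$ with $q_{\mathcal K}<1$. Bounded mesh turns this into $(T/D)\int_0^T(1+v)^{-q_{\mathcal K}}\,\mathrm dv\asymp (T/D)\,T^{1-q_{\mathcal K}}$, and $2-q_{\mathcal K}=1/(2\alpha)$ gives the same order $T^{1/(2\alpha)}/D$. For $\alpha<1/4$ the identity $\sum_{s<D}\eta_s^2=D\,(T/D)^2=T^2/D$ is immediate.

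\emph{The $\mathrm{IM}_1$, $\beta<0$ branch.} Here $q_{\mathcal K}>1$, so the kernel sum is dominated by the late cells. I will use a WSD-type unit-batch schedule:
\begin{itemize}
\item a plateau $\eta_s\equiv\eta_{\max}$ (admissible since $\alpha>1/2$ allows $\sup r_s^{-1}\le c_\star$) for $D-D'$ steps, where $D'=o(D)$, so that $T\asymp D$;
\item a terminal decay over the last $D'$ samples, given by the quantile construction above on a window of intrinsic length $T'$.
\end{itemize}
The plateau contributes $\lesssim\eta_{\max}^2\sum_{v\ge T'}(1+v)^{-q_{\mathcal K}}\asymp (T')^{1-q_{\mathcal K}}$. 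The decay segment contributes $\asymp (T')^{1/(2\alpha)}/D'$ by the first part. I then choose $T'=D^{a}$ and $D'=D^{b}$ with $a,b<1$ so that both contributions are $o(D^{-p/(2\alpha)})$. This requires
\[
a(q_{\mathcal K}-1)>\frac{p}{2\alpha},\qquad b-\frac{a}{2\alpha}>\frac{p}{2\alpha}.
\]
The first inequality can be met because $p<2\alpha-1$ gives $p/(2\alpha)<q_{\mathcal K}-1$, so some $a<1$ works. For the second, I take $b$ close to $1$; this is possible since $(a+p)/(2\alpha)<(1+2\alpha-1)/(2\alpha)=1$. This balancing of exponents, together with checking the admissibility margins (peak ratio, bounded mesh, and regular decay with $B_s=1$) at the junction, is the main obstacle. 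The remaining admissibility checks follow from the stated mesh bounds and $\sup_s r_s^{-1}=\sup_s\eta_s\lesssim T/D$, or $\eta_{\max}$ on the plateau.
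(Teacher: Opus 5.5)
Your first two parts follow the paper's argument. Quantile cells have length at most \(I_\alpha(T)/(D\,g(0))\asymp T/D\lesssim1\), so the shifted powers are constant up to fixed factors on each cell, and the sum collapses to \(I_\alpha(T)^2/D\asymp T^{1/(2\alpha)}/D\). The constant-step and finite-bulk computations are likewise fine.

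For the \(\mathrm{IM}_1\), \(\beta<0\) branch you take a different but valid route.
\begin{itemize}
\item \emph{The paper} uses one continuous clipped profile \(\bar r(u)=r_0\max\{1,((1+L)/(1+T-u))^{1-1/(4\alpha)}\}\), with \(L=D^{(1+s_0)/(1+h)}\), \(s_0=p/(2\alpha)\) and \(h=1-1/(2\alpha)=q_{\mathcal K}-1\). Plateau and tail then both contribute \(\asymp L^{-h}\), and \(s_0<h\) alone closes the argument.
\item \emph{You} glue a maximal-step plateau to a rescaled copy of the first-part quantile tail and tune \((a,b)\) separately. This works: take \(a\in(s_0/h,1)\), then \(b\in(\max\{a,(a+p)/(2\alpha)\},1)\), which is nonempty because \(p<2\alpha-1\).
\end{itemize}
You do need \(b\geq a\), so that the first part applies to the tail with \(T'/D'\lesssim1\). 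The jump of \(B/\eta\) at the junction is harmless, because admissibility constrains only \(\sup_s\eta_s\), the mesh, and \(B_s\). The paper's profile yields an explicit balanced rate and reuses the quantile construction verbatim. Yours is more modular but gives only the required \(o(D^{-p/(2\alpha)})\).

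The gap is in the last sentence of the lemma. You dispose of admissibility with \(\sup_s r_s^{-1}=\sup_s\eta_s\lesssim T/D\), but this does not deliver the required conditions:
\begin{itemize}
\item For \(0<\alpha<1/2\), the peak condition of the admissible class is \(\sup_s r_s^{-1}\leq c_\star m^{2\alpha-1}\), and \(m^{2\alpha-1}\to0\). So \(T/D\lesssim1\) does not imply it; for instance \(T\asymp D\) violates it at large width.
\item For \(\alpha>1/2\) you need \(\sup_s\eta_s\leq c_\star\) with \(c_\star\) small, not merely \(\lesssim1\).
\item The source window \(T\leq C_0m^{2\alpha}\) ties \(T\) to \(m\), and you never check it.
\end{itemize}
None of these follows from \(T/D\lesssim1\). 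They hold only at the specific subregime-wise scalings of \((m,T,D)\) used in the resource bounds, and that is what must be verified. Put \(x:=T^{1/(2\alpha)}\).
\begin{itemize}
\item \emph{Source windows.} One has \(x\asymp m\) in \(\mathrm{FB}_1\), \(\mathrm{LM}_1\), \(\mathrm{LM}_2\), both \(\mathrm{IM}_1\) branches and the \(\alpha>1\) \(\mathrm{IM}_{2,3}\) branch, and \(x/m\to0\) in the rest.
\item \emph{Normalized peak below trace class.} The ratio \((T/D)/m^{2\alpha-1}\) equals \(x^{-p}\) in \(\mathrm{FB}_1,\mathrm{LM}_1,\mathrm{LM}_2\) and \(x^{-2\alpha}\) in \(\mathrm{FB}_2\). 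In \(\mathrm{LM}_3\) it is a negative power of the budget by the defining inequalities \(p>2\alpha\), \(\alpha>1/4\).
\item \emph{Peak above trace class.} In the \(\mathrm{IM}\) branches with \(\beta>0\), the attaining horizons give \(T/D\to0\) (e.g.\ \(T\asymp D^{2\alpha/(1+p)}\) at fixed data).
\end{itemize}
Only after these checks do the row-stability margins follow: the row-mass certificate above trace class and the Volterra reduction below it.
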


\begin{proof}
Assume first \(\alpha>1/4\).  The data-quantile definition gives
\[
\sum_{s<D}B_s=D,
\qquad
\sum_{s<D}\eta_s=T,
\qquad
\frac1{\eta_s}
=\frac1{u_{s+1}-u_s}\int_{u_s}^{u_{s+1}}\bar r(v)\,\mathrm dv.
\]
Thus the discrete ratio \(B_s/\eta_s\) is the cell average of \(\bar r\).
Moreover, \(\inf_u\bar r(u)\asymp D/T\), so \(\eta_s\lesssim T/D\).
The shifted powers \(\bar r(u)\) and
\((1+T-u)^{-2+1/(2\alpha)}\) vary by only constant factors on each
unit-data cell.  Consequently the discrete injection sum is comparable to
\[
\int_0^T
\frac{(1+T-u)^{-2+1/(2\alpha)}}{\bar r(u)}\,\mathrm du
\asymp
\frac{I_\alpha(T)^2}{D}
\asymp
\frac{T^{1/(2\alpha)}}D.
\]
For a constant step, direct summation gives the same long-memory order
when \(1/4<\alpha<1/2\).

If \(0<\alpha<1/4\), take \(B_s=1\) and \(\eta_s=T/D\) directly.  Then
\[
\sum_{s<D}B_s=D,
\qquad
\sum_{s<D}\eta_s=T,
\qquad
\sum_{s<D}\eta_s^2=\frac{T^2}{D},
\]
which proves the finite-bulk assertion.

For the \(\beta<0\) branch of subregime \(\mathrm{IM}_1\), put
\[
s_0:=\frac{p}{2\alpha},
\qquad h:=1-\frac1{2\alpha},
\qquad L:=D^{(1+s_0)/(1+h)},
\]
choose a sufficiently large fixed \(r_0\), and set
\[
\bar r(u):=r_0\max\!\left\{
1,\left(\frac{1+L}{1+T-u}\right)^{1-1/(4\alpha)}
\right\},
\qquad
\int_0^T\bar r(u)\,\mathrm du=D.
\]
Then \(T\asymp D\), and the same quantile construction has
\[
\sum_{s<D}\eta_s^2
\bigl(1+T-u_{s+1}\bigr)^{-2+1/(2\alpha)}
\lesssim L^{-h}=o\!\left(D^{-p/(2\alpha)}\right),
\]
because \(s_0<h\) and
\(h(1+s_0)/(1+h)>s_0\).  Here \(\eta_s\leq1/r_0\), so the clipped profile
and shifted kernel are uniformly comparable on every cell; hence the
continuum \(L^{-h}\) bound transfers to the displayed discrete sum.  Finally,
\(B_s\equiv1\) supplies the bounded-batch propagation alternative.

It remains to check admissibility at these scales.  Put
\(x=T^{1/(2\alpha)}\).  The largest step is \(O(T/D)\), except that the
\(\beta<0\) \(\mathrm{IM}_1\) construction directly has
\(\eta_s\leq1/r_0\).  We have \(x\asymp m\) in both \(\mathrm{IM}_1\)
branches, \(\mathrm{FB}_1\), \(\mathrm{LM}_1\), \(\mathrm{LM}_2\), and the
\(\alpha>1\) source-saturated \(\mathrm{IM}_2\)--\(\mathrm{IM}_3\) branch; whereas
\(x/m\to0\) in \(\mathrm{FB}_2\), \(\mathrm{LM}_3\), and the
\(1/2<\alpha\leq1\) interior \(\mathrm{IM}_2\)--\(\mathrm{IM}_3\) branch.  Thus every
source window holds, strictly in \(\mathrm{FB}_2\) because \(p>2\alpha\).
For \(\alpha<1/2\), the \(\mathrm{FB}_1,\mathrm{LM}_1,\mathrm{LM}_2\)
choices satisfy
\[
\frac{T/D}{m^{2\alpha-1}}\asymp x^{-p}\longrightarrow0,
\]
while the corresponding ratios in \(\mathrm{FB}_2\) and \(\mathrm{LM}_3\)
are \(x^{-2\alpha}\to0\) and vanish by the defining inequalities,
respectively.  Since the normalized peak tends to zero, the peak condition
holds for all sufficiently large widths.  The row-mass
certificate \cref{eq:td_lrs_joint_row_mass_certificate} for \(\alpha>1/2\)
and \cref{lem:td_lrs_common_volterra_reduction} below trace class give the
Volterra margin; the strict \(\mathrm{FB}_2\) window has vanishing row mass.  Rounding
\(m,D\) changes only fixed factors.
\end{proof}

\paragraph{Two continuum schedule parameterizations.}
For fixed learning rate, \(u(\tau)=\eta_0\tau\) and
\(B(\tau)=\eta_0r_T^\star(\eta_0\tau)\), so
\[
\int_0^{T/\eta_0}B(\tau)\,\mathrm d\tau
=\int_0^Tr_T^\star(u)\,\mathrm du=D.
\]
This proves \cref{eq:td_lrs_ratio_jinbo_factorization}.  For fixed batch,
\cref{eq:td_lrs_ratio_blake_factorization} gives
\(\mathrm d\tau/\mathrm du=r_T^\star(u)/B_0\).  Hence the terminal iteration
time is \(D/B_0\), and
\(\int_0^{D/B_0}B_0\,\mathrm d\tau=D\).

On a terminal, subsequently unclipped segment with
\(r_T^\star(u)=c(1+T-u)^{-1+1/(4\alpha)}\), put \(x=T-u\).  Throughout its
constant-batch-size time coordinate,
\[
\frac{D}{B_0}-\tau
=
\frac1{B_0}\int_u^T r_T^\star(v)\,\mathrm dv
=
\frac{4\alpha c}{B_0}
\left[
(1+x)^{1/(4\alpha)}-1
\right].
\]
Solving for \(x\) and substituting into
\(\eta=B_0/r_T^\star\) gives
\cref{eq:td_lrs_ratio_physical_decay} with decay exponent \(4\alpha-1\).
In other clipped configurations we assert only the plateau correspondence:
lower/upper ratio clipping becomes maximum/minimum learning-rate plateaus in
the constant-batch-size parameterization and minimum/maximum batch plateaus
in the constant-learning-rate parameterization.  Direct substitution gives
\cref{eq:td_lrs_ratio_external_map}.  In the external conventions, the spectral
parameter is \(2\alpha\), the source parameter of \citet{li2026optimal} is
\(p/(2\alpha)\), and the target-decay parameter of
\citet{bordelon2026theory} is \(p+1\).  Hence their decay, peak, and risk
exponents match the displayed native exponents, while both trace-class
conditions reduce to \(\alpha>1/2\).

\paragraph{Matching lower bounds.}
Let \(D_t:=\sum_{s<t}B_s\leq D\).  Positivity of the DE Volterra recursion and
\cref{eq:td_lrs_joint_de_kernel_law} give, for \(\alpha>1/4\),
\[
\mathcal R_{\sigma,t}\gtrsim
\Phi_{\alpha,\beta}(T,m)
+\sigma^2\sum_{s<t}\frac{\eta_s^2}{B_s}
\bigl(1+T-T_{s+1}\bigr)^{-2+1/(2\alpha)}.
\]
Discrete Cauchy--Schwarz and the right Riemann sum imply
\begin{align*}
D_t\sum_{s<t}\frac{\eta_s^2}{B_s}
\bigl(1+T-T_{s+1}\bigr)^{-2+1/(2\alpha)}
&\geq
\left[
\sum_{s<t}\eta_s
\bigl(1+T-T_{s+1}\bigr)^{-1+1/(4\alpha)}
\right]^2\\
&\gtrsim T^{1/(2\alpha)}.
\end{align*}
For \(0<\alpha<1/4\), the target-free gap bound in
\cref{prop:td_lrs_joint_de_finite_bulk_kernel}, together with the proved
forcing comparison when \(p>0\), \(\beta<1+2\alpha\), and the parameters
are off the critical lines, instead gives
\[
\mathcal R_{\sigma,t}\gtrsim
\Phi_{\alpha,\beta}(T,m)
+\sigma^2m^{1-4\alpha}\sum_{s<t}\frac{\eta_s^2}{B_s}
\gtrsim
\Phi_{\alpha,\beta}(T,m)
+\sigma^2m^{1-4\alpha}\frac{T^2}{D}.
\]
These bounds use the integer sample count, not a continuum factorization.  In
the \(\beta<0\) branch of subregime \(\mathrm{IM}_1\), the lower ratio floor
also gives \(T\lesssim D\).

For compactness put \(x:=T^{1/(2\alpha)}\), so that the source window is
\(x\lesssim m\).  Ignoring fixed constants, the preceding inequalities give
the following lower objectives:
\begin{align*}
&m^{-p}+x^{-p}+x/D
&&\text{in subregimes \(\mathrm{LM}_1\), \(\mathrm{LM}_2\), and \(\mathrm{IM}_1\)},\\
&m^{-2\alpha}+x^{-p}+x/D
&&\text{in subregime \(\mathrm{LM}_3\)},\\
&m^{-2\alpha}+x^{-p}+m^{-1}x^{-(2\alpha-1)}+x/D
&&\text{in subregimes \(\mathrm{IM}_2\) and \(\mathrm{IM}_3\)},\\
&m^{-p}+x^{-p}+m^{1-4\alpha}x^{4\alpha}/D
&&\text{in subregime \(\mathrm{FB}_1\)},\\
&m^{-2\alpha}+x^{-p}+m^{1-4\alpha}x^{4\alpha}/D
&&\text{in subregime \(\mathrm{FB}_2\)}.
\end{align*}

The only compute balance that changes inside the \(\mathrm{IM}\) region is
the following one.  We isolate it here because both the unrestricted schedule
problem and the fixed-batch schedule comparison use the same four terms.

\begin{lemma}[Joint \(\mathrm{IM}_2\)--\(\mathrm{IM}_3\) compute balance]
\label{lem:td_lrs_im23_compute_balance}
Let \(\alpha>1/2\), \(\beta>1/2\),
\(p=2\alpha+2\beta-1\), and \(F\to\infty\).  Then
\[
\inf_{1\ll x\lesssim m}
\left\{
x^{-p}+m^{-2\alpha}+m^{-1}x^{-(2\alpha-1)}+\frac{mx}{F}
\right\}
\asymp
\begin{cases}
F^{-p/(1+p+2\beta)},&1/2<\alpha\leq1,\\[0.2em]
F^{-\alpha/(1+\alpha)},&\alpha>1.
\end{cases}
\]
For \(1/2<\alpha\leq1\), one attaining choice is
\[
x\asymp F^{1/(1+p+2\beta)},
\qquad m\asymp x^{2\beta}.
\]
For \(\alpha>1\), the optimum reaches the source endpoint:
\[
x\asymp m\asymp F^{1/[2(1+\alpha)]}.
\]
At \(\alpha=1\), the rate is \(F^{-1/2}\), but the allocation is not unique:
every choice satisfying
\[
mx\asymp F^{1/2},
\qquad
F^{1/(2p)}\lesssim x\lesssim F^{1/4}
\]
has the same order.
\end{lemma}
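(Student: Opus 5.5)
We minimize the four-term objective as a two-variable power balance. Every term is a monomial in \((x,m)\), so the infimum is comparable, up to constants, to the minimum over the regions where each term dominates. We therefore first optimize over \(m\) at fixed \(x\), and then over \(x\), subject to \(1\ll x\lesssim m\).

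\textbf{Fixed \(x\): optimize \(m\).} The \(m\)-dependent terms are \(m^{-2\alpha}\), \(m^{-1}x^{-(2\alpha-1)}\) and \(mx/F\). The second and third balance at \(m_\star^2\asymp F x^{-2\alpha}\), giving the value \(2(x^{1-2\alpha}\cdot x)^{1/2}F^{-1/2}\asymp x^{1-\alpha}F^{-1/2}\). The first and third balance at \(m^{1+2\alpha}\asymp F/x\).

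We must also check which term is active near \(m_\star\). We have \(m_\star^{-2\alpha}\leq m_\star^{-1}x^{-(2\alpha-1)}\) iff \(m_\star^{1-2\alpha}\leq x^{1-2\alpha}\), i.e.\ iff \(m_\star\geq x\) (recall \(\alpha>1/2\)). So exactly when the source constraint \(x\lesssim m_\star\) holds, the \(m^{-2\alpha}\) floor is dominated. This coincidence is what makes the source endpoint \(x\asymp m\) the natural boundary.

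\textbf{Interior branch.} With \(m=m_\star\), the reduced objective is \(x^{-p}+x^{1-\alpha}F^{-1/2}\), subject to \(x\lesssim m_\star\), i.e.\ \(x^{2+2\alpha}\lesssim F\).
\begin{itemize}
\item If \(\alpha<1\), the second term grows in \(x\). The balance \(x^{-p}=x^{1-\alpha}F^{-1/2}\) gives \(x\asymp F^{1/[2(p+1-\alpha)]}\). Since \(p+1-\alpha=\alpha+2\beta\), this is \(x\asymp F^{1/(2\alpha+4\beta)}\).
\item Sanity check: \(m_\star=F^{1/2}x^{-\alpha}\) should equal \(x^{2\beta}\). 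Indeed, \(F^{1/2}=x^{\alpha+2\beta}\) gives exactly \(m\asymp x^{2\beta}\), which matches the claimed allocation.
\end{itemize}

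\textbf{Matching the displayed exponent.} The rate is then \(x^{-p}=F^{-p/(2\alpha+4\beta)}\). But \(1+p+2\beta=2\alpha+4\beta\), so this is \(F^{-p/(1+p+2\beta)}\) and \(x\asymp F^{1/(1+p+2\beta)}\), exactly as stated. Feasibility \(x\lesssim m\) becomes \(x\lesssim x^{2\beta}\), which holds because \(\beta>1/2\).

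\textbf{Case \(\alpha>1\).} Now \(x^{1-\alpha}F^{-1/2}\) decreases in \(x\), and so does \(x^{-p}\). Both push \(x\) up to the endpoint \(x\asymp m\). There the objective is \(m^{-p}+m^{-2\alpha}+m/F\), after merging \(m^{-1}x^{1-2\alpha}=m^{-2\alpha}\) and \(mx/F=m^2/F\).
\begin{itemize}
\item Correcting the last term: it is \(m^2/F\), not \(m/F\).
\item Since \(\beta>1/2\), we have \(p>2\alpha\), so \(m^{-2\alpha}\) dominates \(m^{-p}\).
\item The balance \(m^{-2\alpha}=m^2/F\) gives \(m\asymp F^{1/(2+2\alpha)}\) and value \(F^{-\alpha/(1+\alpha)}\), as claimed.
\end{itemize}

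\textbf{Case \(\alpha=1\).} The reduced second term is \(F^{-1/2}\) independent of \(x\), so the rate is \(F^{-1/2}\). It is attained whenever \(x^{-p}\lesssim F^{-1/2}\), i.e.\ \(x\gtrsim F^{1/(2p)}\), and the constraint \(x\lesssim m_\star\) holds. With \(m_\star\asymp F^{1/2}/x\), this constraint reads \(x\lesssim F^{1/4}\). So the optimal set is exactly the displayed family with \(mx\asymp F^{1/2}\).

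\textbf{Lower bound (main obstacle).} We must show that no choice, including off-balance \(m\) and the boundary \(x\asymp m\), beats these rates. The plan is to use weighted AM–GM lower bounds on three-term subsets.
\begin{itemize}
\item For \(\alpha<1\): from \(x^{-p}\) and \(m^{-1}x^{1-2\alpha}\) and \(mx/F\), the product \((m^{-1}x^{1-2\alpha})(mx/F)=x^{2-2\alpha}/F\) bounds the sum of those two terms below by \(x^{1-\alpha}F^{-1/2}\). Then \(\max\{x^{-p},x^{1-\alpha}F^{-1/2}\}\geq F^{-p/(1+p+2\beta)}\) for every \(x\). This needs no constraint at all.
\item For \(\alpha>1\): the same pair gives \(x^{1-\alpha}F^{-1/2}\), which is at least \(m^{1-\alpha}F^{-1/2}\) since \(x\lesssim m\). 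Combined with \(m^{-2\alpha}\), minimizing over \(m\) gives \(\max\{m^{-2\alpha},m^{1-\alpha}F^{-1/2}\}\gtrsim F^{-\alpha/(1+\alpha)}\). This is where the source constraint is genuinely needed.
\item For \(\alpha=1\): the AM–GM bound \(F^{-1/2}\) is immediate.
\end{itemize}
The care needed is in checking the exponent algebra at each case boundary.
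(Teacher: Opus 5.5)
Your upper-bound constructions, the \(\alpha\le1\) lower bound, and the \(\alpha=1\) family are correct. For \(\alpha\le1\), the pairwise AM--GM on the distortion and noise terms followed by the crossing with \(x^{-p}\) is the paper's weighted AM--GM (weights \(2(1-\alpha),p,p\) over \(1+p+2\beta\)) done in two steps. The gap is the lower bound for \(\alpha>1\). After the pairwise AM--GM and \(x\lesssim m\), you are left with \(m^{-2\alpha}+m^{1-\alpha}F^{-1/2}\). For \(\alpha>1\), \emph{both} terms decrease in \(m\), so \(\max\{m^{-2\alpha},m^{1-\alpha}F^{-1/2}\}\to0\) as \(m\to\infty\). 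The crossing point you compute is therefore not a minimum of this expression, and the claimed bound \(\gtrsim F^{-\alpha/(1+\alpha)}\) does not follow. The cause is that the product \((m^{-1}x^{1-2\alpha})(mx/F)\) cancels \(m\). This discards the only term that grows with \(m\), so the width floor has nothing left to balance against.

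The fix is to keep all three \(m\)-dependent terms in a single weighted AM--GM, with these weights:
\begin{itemize}
\item \(\frac{\alpha-1}{(2\alpha-1)(1+\alpha)}\) on \(m^{-2\alpha}\);
\item \(\frac{\alpha}{(2\alpha-1)(1+\alpha)}\) on \(m^{-1}x^{1-2\alpha}\);
\item \(\frac{\alpha}{1+\alpha}\) on \(mx/F\).
\end{itemize}
These weights are nonnegative exactly when \(\alpha\ge1\) and sum to one. They cancel both the \(m\)- and \(x\)-exponents, so the objective is \(\ge F^{-\alpha/(1+\alpha)}\) for all \(x,m>0\); this is the paper's argument. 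Alternatively, stay with your fixed-\(x\) approach but keep both balances: at fixed \(x\), the \(m\)-part is \(\gtrsim\max\{x^{1-\alpha}F^{-1/2},(x/F)^{2\alpha/(1+2\alpha)}\}\). These two pieces now move in opposite directions in \(x\). They cross at \(x\asymp F^{1/[2(1+\alpha)]}\) with value \(F^{-\alpha/(1+\alpha)}\). Either way, your remark that the source constraint is ``genuinely needed'' here is backwards. The lower bound holds without \(x\lesssim m\); the unconstrained optimizer simply lands at \(x\asymp m\).
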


\begin{proof}
For \(1/2<\alpha\leq1\), apply weighted AM--GM to the forcing,
label-noise, and feature-distortion terms with weights
\[
\frac{2(1-\alpha)}{1+p+2\beta},
\qquad
\frac{p}{1+p+2\beta},
\qquad
\frac{p}{1+p+2\beta}.
\]
They are nonnegative, sum to one, and their weighted product is
\(F^{-p/(1+p+2\beta)}\).  At the displayed choice of \(x\) and \(m\),
those three terms match.  Moreover,
\(4\alpha\beta-p=(2\alpha-1)(2\beta-1)>0\), so the width floor is lower
order, and \(x\lesssim m\).

For \(\alpha>1\), use instead the label-noise, feature-distortion, and
width-floor weights
\[
\frac{\alpha}{1+\alpha},
\qquad
\frac{\alpha}{(2\alpha-1)(1+\alpha)},
\qquad
\frac{\alpha-1}{(2\alpha-1)(1+\alpha)}.
\]
Their weighted product is \(F^{-\alpha/(1+\alpha)}\).  Taking
\(x\asymp m\asymp F^{1/[2(1+\alpha)]}\) matches these three terms, while
\(p>2\alpha\) makes \(x^{-p}\) lower order.  When \(\alpha=1\), direct
substitution gives the stated family of attaining allocations.
\end{proof}

\paragraph{Attaining scales.}
Because \(\sigma^2>0\) is fixed and the clean forcing is bounded, the upper
FSL has the same order as these label-noise objectives for the schedules in
\cref{lem:td_lrs_integer_reachability}.  At fixed data, the matching choices
are
\[
\begin{aligned}
(\mathrm{IM}_1,\ \beta<0):\quad
&m\asymp D^{1/(2\alpha)}, &&T\asymp D;\\
(\mathrm{IM}_1,\ \beta>0),\ \mathrm{FB}_1,\ \mathrm{LM}_1,\ \mathrm{LM}_2:\quad
&m\asymp D^{1/(1+p)}, &&T\asymp D^{2\alpha/(1+p)};\\
\mathrm{FB}_2:\quad
&m\asymp D^{\frac{p}{p(1-2\alpha)+8\alpha^2}},
&&T\asymp D^{\frac{4\alpha^2}{p(1-2\alpha)+8\alpha^2}};\\
\mathrm{LM}_3:\quad
&m\asymp D^{\frac{p}{2\alpha(1+p)}},
&&T\asymp D^{2\alpha/(1+p)};\\
\mathrm{IM}_2,\ \mathrm{IM}_3:\quad
&m\asymp D^{2\beta/(1+p)},
&&T\asymp D^{2\alpha/(1+p)}.
\end{aligned}
\]
The first uses the plateau--ramp construction; the second and last use the
data-quantile square-root schedule in \(\mathrm{IM}\), while all listed
\(\mathrm{FB}\) and \(\mathrm{LM}\) cases use the constant \(B_s=1\)
construction.  Direct substitution gives the fixed-data column of
\cref{tab:td_lrs_ratio_optimal_exponents}.

On \(mD\leq\mathfrak f\), the matching choices for, respectively,
\((\mathrm{IM}_1,\beta<0)\); the shared
\((\mathrm{IM}_1,\beta>0),\mathrm{FB}_1,\mathrm{LM}_1,\mathrm{LM}_2\) class;
\(\mathrm{FB}_2\); and \(\mathrm{LM}_3\), are
\[
\begin{aligned}
m&\asymp\mathfrak f^{1/(1+2\alpha)},
&D&\asymp\mathfrak f^{2\alpha/(1+2\alpha)},
&T&\asymp D;\\
m&\asymp\mathfrak f^{1/(2+p)},
&D&\asymp\mathfrak f^{(1+p)/(2+p)},
&T&\asymp\mathfrak f^{2\alpha/(2+p)};\\
m&\asymp\mathfrak f^{\frac{p}{p(2-2\alpha)+8\alpha^2}},
&D&\asymp\mathfrak f^{\frac{p(1-2\alpha)+8\alpha^2}
{p(2-2\alpha)+8\alpha^2}},
&T&\asymp\mathfrak f^{\frac{4\alpha^2}{p(2-2\alpha)+8\alpha^2}};\\
m&\asymp\mathfrak f^{\frac{p}{p+2\alpha(1+p)}},
&D&\asymp\mathfrak f^{\frac{2\alpha(1+p)}{p+2\alpha(1+p)}},
&T&\asymp\mathfrak f^{\frac{4\alpha^2}{p+2\alpha(1+p)}}.
\end{aligned}
\]
Balancing proves the corresponding table entries.  Indeed,
\(x\lesssim m\) and \(1-4\alpha>0\) lower-bound the \(\mathrm{FB}_1\) noise
by \(x/D\), explaining its shared rate, while \(p>2\alpha\) makes the
\(\mathrm{FB}_2\) source window strict.  If \(\varepsilon\) bounds all three
\(\mathrm{FB}_2\) terms, then \(m\gtrsim \varepsilon^{-1/(2\alpha)}\) and
\(x\gtrsim \varepsilon^{-1/p}\); substitution into the noise term gives
\[
\varepsilon\gtrsim
D^{-\frac{2\alpha p}{p(1-2\alpha)+8\alpha^2}}
\quad\text{or}\quad
\varepsilon\gtrsim
\mathfrak f^{-\frac{2\alpha p}{p(2-2\alpha)+8\alpha^2}},
\]
under the data and compute budgets, respectively.

For \(\mathrm{IM}_2\) and \(\mathrm{IM}_3\), apply
\cref{lem:td_lrs_im23_compute_balance} with \(F=\mathfrak f\).  Together
with \(D=\mathfrak f/m\) and \(T=x^{2\alpha}\), its two attaining choices
give exactly the two compute branches in
\cref{tab:td_lrs_ratio_optimal_exponents}.  Both use the data-quantile
\(B_s=1\) schedule from \cref{lem:td_lrs_integer_reachability}; at
\(\alpha=1\) the rates agree at \(\mathfrak f^{-1/2}\), with no forced
logarithmic correction and no unique width allocation.

All upper constructions have \(B_s\equiv1\), so batch integrality and the
bounded-batch propagation alternative are exact; rounding \(m,D\) preserves
orders.  The uniform FSL, these lower bounds, and
\cref{lem:td_lrs_integer_reachability} prove
\cref{cor:td_lrs_subregime_exponents,tab:td_lrs_ratio_optimal_exponents}
over the fixed admissible class and the stated source range.  The high-source
\(\mathrm{FB}_2\) continuation separately invokes
\cref{ass:td_lrs_high_source_fb2_forcing}.

\endgroup

\section{Sharp transfer for regularly varying joint schedules}
\label{app:rv_joint_schedule_proof}

This appendix gives the full regular-variation version of the schedule results
stated in the main text.  Appendix~\ref{app:rv_spectral_memory_transfer}
converts the low-spectrum tail into a two-time memory kernel.
Appendix~\ref{app:condition_examples} uses examples and counterexamples to show
what the spectral, uniformity, and stability assumptions permit and exclude.
Appendix~\ref{app:rv_schedule_transfer} derives the noisy--clean response in
long memory, including the effect of feedback, while
Appendix~\ref{app:rv_integrable_memory} treats the two endpoint contributions
that arise under integrable memory.  Appendix~\ref{app:rv_schedule_identification}
proves the converse that recovers \(B/\eta\) from the observed gap below the
memory ceiling.  Appendix~\ref{app:rv_total_loss} combines the gap with the
clean risk to obtain the preserve--change--destroy classification, and
Appendix~\ref{app:rv_physical_time} translates the intrinsic-time results to
polynomial schedules in iteration time.

We first state the regular-variation condition shared by these results.

\begin{paperdefinition}[Width--time triangular sequence]
\label{def:rv_width_time_triangular_sequence}
A width--time triangular sequence is a family indexed by the width \(m\).  At
width \(m\), fix a finite system \(\bm W_m\), a deterministic schedule
\(\{(\eta_{m,s},B_{m,s})\}_{s<t_m}\), and a terminal index \(t_m\), and set
\[
T_m:=\sum_{s<t_m}\eta_{m,s}.
\]
Both the finite system and the observation horizon may vary with \(m\), and
the limit is taken with \(m,t_m,T_m\to\infty\).  Uniform statements along the
sequence use constants and bounds independent of \(m\) and \(t_m\).  Below,
we write \(t\) and \(T_t\) for the terminal pair \(t_m\) and \(T_m\) when no
confusion can arise.
\end{paperdefinition}

By contrast, a fixed infinite-spectrum system keeps the dynamics and schedule
fixed and sends only the terminal time to infinity.

\begin{assumption}[Regularly varying schedule path]
\label{ass:rv_schedule_path}
Consider either one fixed infinite-spectrum system or a width--time triangular
sequence in \cref{def:rv_width_time_triangular_sequence}; omit the width index
in the fixed system.
A schedule path satisfies the required regularity if:
\begin{enumerate}[label=(\alph*)]
\item For some \(\vartheta\in\mathbb R\), \(1/r_m\) is eventually positive
and monotone, and
\begin{equation}
\frac{r_m(T_t)}{r_m(xT_t)}\longrightarrow x^{-\vartheta}
\label{eq:rv_uniform_schedule}
\end{equation}
for every \(x>0\).

\item For every sufficiently small \(\epsilon>0\), there is a finite
constant \(C_\epsilon\), independent of the width and horizon, such that
eventually
\begin{equation}
\frac{r_m(T_t)}{r_m(xT_t)}
\leq C_\epsilon
\max\{x^{-\vartheta-\epsilon},x^{-\vartheta+\epsilon}\},
\qquad x>0.
\label{eq:rv_uniform_schedule_potter}
\end{equation}
In a triangular sequence, the convergence in part \emph{(a)} is locally
uniform and eventual monotonicity begins beyond a common intrinsic time.
\end{enumerate}
These conditions permit width-dependent slow variation but exclude hidden
oscillations or bursts; polynomial width amplitudes must be tracked
separately.
\end{assumption}

\begin{lemma}[Uniform Karamata consequence]
\label{lem:rv_uniform_karmata}
Under \cref{ass:rv_schedule_path}, if \(\vartheta<1\), then
\begin{equation*}
\int_0^{xT_t}\frac{\mathrm du}{r_m(u)}
\sim \frac{x^{1-\vartheta}}{1-\vartheta}
\frac{T_t}{r_m(T_t)},
\end{equation*}
locally uniformly for \(x>0\).
\end{lemma}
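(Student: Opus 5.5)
The plan is a uniform version of Karamata's theorem for the regularly varying function \(g_m:=1/r_m\), whose index is \(-\vartheta>-1\). Substituting \(u=vT_t\) gives
\[
\frac{r_m(T_t)}{T_t}\int_0^{xT_t}\frac{\mathrm du}{r_m(u)}
=\int_0^{x}\frac{r_m(T_t)}{r_m(vT_t)}\,\mathrm dv .
\]
It therefore suffices to show that the right-hand side converges to \(\int_0^x v^{-\vartheta}\,\mathrm dv=x^{1-\vartheta}/(1-\vartheta)\), locally uniformly in \(x>0\). By part~(a) of \cref{ass:rv_schedule_path}, the integrand converges pointwise to \(v^{-\vartheta}\) for each \(v>0\), and this convergence is locally uniform in the triangular case. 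The argument then reduces to dominated convergence on \((0,x]\), plus a uniformity upgrade.

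First, the small-\(v\) endpoint. Fix \(\epsilon>0\) with \(\vartheta+\epsilon<1\). The Potter bound in part~(b) gives
\[
\frac{r_m(T_t)}{r_m(vT_t)}\le C_\epsilon\max\{v^{-\vartheta-\epsilon},v^{-\vartheta+\epsilon}\},
\]
and this envelope is integrable on \((0,X]\) for any fixed \(X\), since \(\vartheta+\epsilon<1\). Second, uniformity on compacts. For \(x\in[x_-,x_+]\), split \(\int_0^x\) into \(\int_0^\delta\) and \(\int_\delta^x\). The first piece is at most \(C_\epsilon\delta^{1-\vartheta-\epsilon}/(1-\vartheta-\epsilon)\) uniformly in \(m\), and the limit integral on \((0,\delta]\) is equally small. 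On the second piece, locally uniform convergence on \([\delta,x_+]\) makes the integrands converge uniformly, so the error is at most \(x_+\sup_{[\delta,x_+]}|\cdot|\to0\), uniformly in \(x\le x_+\). Sending \(m\to\infty\) and then \(\delta\downarrow0\) finishes the argument.

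The main obstacle is the lower endpoint near \(u=0\), that is, intrinsic times below the common threshold where monotonicity and the asymptotics begin. The Potter bound is stated for all \(x>0\) "eventually", which already covers this region. If one instead reads it only beyond a fixed \(u_0\), then the contribution \(\int_0^{u_0}du/r_m(u)\) must be shown negligible compared with \(T_t/r_m(T_t)\). This quantity is regularly varying of index \(1-\vartheta>0\), so it tends to infinity. The negligibility therefore follows once \(\int_0^{u_0}du/r_m\) is bounded uniformly in \(m\), which I would take from boundedness of \(1/r_m\) (the admissible ratio-peak bound) on the initial segment. Eventual monotonicity of \(1/r_m\) is what justifies the pointwise-to-locally-uniform step in the fixed-system case, via the standard uniform convergence theorem for regularly varying functions.
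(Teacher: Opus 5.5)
Your argument is correct and is essentially the paper's own proof: rescale \(u=vT_t\), use the Potter bound of part~(b) with \(\vartheta+\epsilon<1\) as an integrable envelope on \((0,x]\), and apply dominated convergence. Your \(\delta\)-split gives the local uniformity; alternatively, both sides are nondecreasing in \(x\) and the limit \(x^{1-\vartheta}/(1-\vartheta)\) is continuous, which suffices. Your fallback treatment of the lower endpoint is unnecessary, since part~(b) already holds for all \(x>0\). It would also be shaky in the triangular setting, where \(T_t/r_m(T_t)\) need not diverge once \(r_m\) carries width-dependent amplitudes.
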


\begin{proof}
Choose \(\epsilon<1-\vartheta\) in
\cref{eq:rv_uniform_schedule_potter}.  After the change of variables
\(u=yT_t\), the Potter bound supplies an integrable envelope, so dominated
convergence gives the result.
\end{proof}

We use the following two-time notation throughout this section.  For the
schedule coordinates in \cref{eq:joint_sgd_schedule_coordinates}, let
\begin{equation*}
H_{t,s}:=T_t-T_{s+1},
\qquad
\Psi_{t,s}:=
\sum_j\widehat\lambda_j^2Q_{s+1,t}(\widehat\lambda_j).
\end{equation*}
Here \(H_{t,s}\) is the age of the injection at time \(s\), while
\(\Psi_{t,s}\) is its surviving variance before weighting by the injected
noise; thus \(K_{t,s}=(T_{s+1}-T_s)\Psi_{t,s}/r_s\).
We use the piecewise-constant interpolation \(r(u)=r_s\) for
\(T_s\leq u<T_{s+1}\), and write
\(h_t:=\max_{s<t}(T_{s+1}-T_s)\) for the largest intrinsic-time step.

\subsection{From the empirical spectral tail to two-time memory}
\label{app:rv_spectral_memory_transfer}

The schedule-transfer results use the following two-time kernel condition.
The theorem below derives it from the empirical low-spectrum tail.

\begin{assumption}[Long-memory two-time kernel control]
\label{ass:rv_long_memory_kernel_control}
Consider either one fixed infinite-spectrum system or a width--time triangular
sequence in \cref{def:rv_width_time_triangular_sequence}; omit the width index
in the fixed system.
The two-time kernel has the following properties for some
\(0<q_{\mathcal K}<1\):
\begin{enumerate}[label=(\alph*)]
\item The reference profile satisfies
\begin{equation}
k(v)\sim v^{-q_{\mathcal K}}L_{\mathcal K}(v),
\label{eq:rv_kernel_rv}
\end{equation}
where \(L_{\mathcal K}\) is eventually positive and slowly varying.

\item For every fixed \(\varepsilon>0\), the bulk-age kernel satisfies
\begin{equation}
\sup_{\substack{s<t:\ H_{t,s}\geq\varepsilon T_t}}
\left|\frac{\Psi_{t,s}}{k(H_{t,s})}-1\right|
\longrightarrow0.
\label{eq:rv_sharp_bulk_kernel}
\end{equation}

\item There are nonincreasing envelopes \(G_m\) such that
\begin{equation}
0\leq\Psi_{t,s}\leq G_m(H_{t,s}),
\qquad
\sup_mG_m(0)<\infty,
\label{eq:rv_kernel_envelope}
\end{equation}
and, for every sufficiently small \(\xi>0\), there is a finite constant
\(C_\xi\), independent of \(m,t\), such that
\begin{equation}
\limsup_{m,t\to\infty}
\frac{\int_0^{\varepsilon T_t}G_m(v)\,\mathrm dv}
{T_tk(T_t)}
\leq C_\xi\varepsilon^{1-q_{\mathcal K}-\xi},
\qquad 0<\varepsilon<\frac12.
\label{eq:rv_integrated_envelope}
\end{equation}

\item The intrinsic mesh satisfies
\begin{equation}
h_t=o\!\left(T_tk(T_t)\right).
\label{eq:rv_mesh_condition}
\end{equation}
\end{enumerate}
\end{assumption}

Since \(Tk(T)\) has positive index \(1-q_{\mathcal K}\), it diverges, so a
uniformly bounded learning rate satisfies \cref{eq:rv_mesh_condition}.  These
are the main text's precise fine-mesh long-memory conditions.

The disjoint age intervals and monotonicity of \(G_m\) give the repeatedly used
recent-endpoint bound
\begin{equation}
\sum_{H_{t,s}\leq x}(T_{s+1}-T_s)G_m(H_{t,s})
\leq h_tG_m(0)+\int_0^xG_m(v)\,\mathrm dv.
\label{eq:rv_recent_cell_envelope}
\end{equation}

The next conditional, forward theorem transfers one-time empirical tails to
two-time memory; global Potter control is needed at recent injection.

\begin{theorem}[Sharp empirical spectral transfer to two-time memory]
\label{thm:rv_spectral_memory_transfer}
Condition on a deterministic empirical width--time sequence indexed by \(m\),
with terminal index \(t=t_m\) and \(T_t\to\infty\).  Fix
\(0<q_{\mathcal K}<1\) and an eventually positive slowly varying function
\(L_{\mathcal K}\).  Suppose, locally uniformly for \(c>0\),
\begin{equation}
\sum_{0<\widehat\lambda_j\leq c/T_t}\widehat\lambda_j^2
\sim
\frac{2^{q_{\mathcal K}}}{\Gamma(q_{\mathcal K}+1)}
c^{q_{\mathcal K}}T_t^{-q_{\mathcal K}}L_{\mathcal K}(T_t),
\label{eq:rv_spectral_memory_input}
\end{equation}
and assume the corresponding global Potter/no-escape bound
\cref{eq:noisy_43_uniform_potter_envelope} for every sufficiently small
slack \(\xi\in(0,\min\{q_{\mathcal K},1-q_{\mathcal K}\})\), with \(T=T_t\)
and the normalization in \cref{eq:rv_spectral_memory_input}.  Under part
\emph{(a)} of \cref{ass:td_lrs_joint_schedule_stability} and
\cref{eq:rv_mesh_condition}, for every fixed
\(\varepsilon>0\),
\begin{equation}
\sup_{\substack{s<t:\ H_{t,s}\geq\varepsilon T_t}}
\left|
\frac{\Psi_{t,s}}
{H_{t,s}^{-q_{\mathcal K}}L_{\mathcal K}(H_{t,s})}-1
\right|
\longrightarrow0.
\label{eq:rv_spectral_memory_transfer}
\end{equation}
If, in addition,
\(\sup_m\sum_j\widehat\lambda_j^2<\infty\) on the conditioned sequence,
then there are envelopes satisfying
\cref{eq:rv_kernel_envelope,eq:rv_integrated_envelope} for the profile in
\cref{eq:rv_kernel_rv}.  Hence
\cref{ass:rv_long_memory_kernel_control} holds under the additional trace
bound.
\end{theorem}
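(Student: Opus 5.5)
The plan reduces the two-time kernel to a one-time Laplace-type transform in the elapsed intrinsic age, and then applies the uniform Tauberian equivalence of \cref{thm:noisy_43_uniform_tauberian}. Set \(A_m(x):=\nu_{\bm W}^{\mathcal K}((0,x])\), so that
\[
\Psi_{t,s}=\int_{(0,\infty)}Q_{s+1,t}(\lambda)\,\mathrm dA_m(\lambda).
\]
The schedule enters only through \(Q_{s+1,t}\). Under part \emph{(a)} of \cref{ass:td_lrs_joint_schedule_stability}, \cref{lem:td_lrs_propagator} gives the two-sided bound
\[
e^{-C\lambda H}\leq Q_{s+1,t}(\lambda)\leq e^{-c\lambda H},\qquad H=H_{t,s}.
\]
Its critical-scale refinement gives \(Q_{s+1,t}(\lambda)=\exp\{-2\lambda H+o(1)\}\) uniformly for \(\lambda H\) in compacts, provided \(h_t/H\to0\). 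On bulk ages \(H\geq\varepsilon T_t\) this holds because \cref{eq:rv_mesh_condition} and \(k(T_t)\lesssim1\) give \(h_t=o(T_t)\). If \cref{ass:td_lrs_regular_spectral_mode_decay} is not explicitly available, I would use the elementary bound \(-\log q_s\asymp1-q_s\) from the stability margin, which suffices on the small-\(\lambda\) window that carries the mass.

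\textbf{Step 1: bulk ages.} Write \(H=cT_t\) with \(c\in[\varepsilon,1]\). Rescale by \(\lambda=y/T_t\), so \(B_m(y):=A_m(y/T_t)/a_m\), with \(a_m\) given by the normalization in \cref{eq:rv_spectral_memory_input}. The input \cref{eq:rv_spectral_memory_input} says \(B_m(y)\to y^{q_{\mathcal K}}\) locally uniformly. I then repeat the forward half of the proof of \cref{thm:noisy_43_uniform_tauberian}, with the filter \(Q_{s+1,t}(y/T_t)\) in place of \(e^{-2cT h_m}\):
\begin{itemize}
\item on \([\delta,A]\) the filter converges uniformly (in \(s\) and in \(c\in[\varepsilon,1]\)) to \(e^{-2cy}\);
\item the part near \(0\) is \(O(\delta^{q_{\mathcal K}-\xi})\) by the Potter envelope;
\item the tail beyond \(A\) is killed by the upper bound \(e^{-c\varepsilon yc_0}\) together with integration by parts against the envelope.
\end{itemize}
This yields
\[
\Psi_{t,s}/a_m\to\Gamma(q_{\mathcal K}+1)(2c)^{-q_{\mathcal K}}
\]
uniformly over bulk \(s\). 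With the stated constant this equals \(c^{-q_{\mathcal K}}T_t^{-q_{\mathcal K}}L_{\mathcal K}(T_t)\). Uniform convergence of slowly varying functions replaces \(L_{\mathcal K}(T_t)\) by \(L_{\mathcal K}(H)\) uniformly for \(c\in[\varepsilon,1]\), which proves \cref{eq:rv_spectral_memory_transfer}. Applying the same argument to the constant schedule identifies \(k(v)\sim v^{-q_{\mathcal K}}L_{\mathcal K}(v)\), i.e.\ \cref{eq:rv_kernel_rv}, and hence \cref{eq:rv_sharp_bulk_kernel}.

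\textbf{Step 2: envelopes.} I would take
\[
G_m(v):=\int e^{-c\lambda v}\,\mathrm dA_m(\lambda),
\]
which is nonincreasing in \(v\), dominates \(\Psi_{t,s}\) by the upper propagator bound, and satisfies \(G_m(0)=\sum_j\widehat\lambda_j^2\). The extra trace hypothesis is exactly what makes \(\sup_mG_m(0)<\infty\).

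For \cref{eq:rv_integrated_envelope}, apply Fubini:
\[
\int_0^{\varepsilon T_t}G_m(v)\,\mathrm dv=\int\frac{1-e^{-c\lambda\varepsilon T_t}}{c\lambda}\,\mathrm dA_m(\lambda).
\]
After rescaling, this equals \(T_ta_m\int\phi(y)\,\mathrm dB_m(y)\), where \(\phi(y)\lesssim\min\{\varepsilon,1/y\}\). I then split at \(y=1/\varepsilon\):
\begin{itemize}
\item on \((0,1/\varepsilon]\), the Potter bound gives \(\varepsilon B_m(1/\varepsilon)\lesssim\varepsilon^{1-q_{\mathcal K}-\xi}\);
\item on \((1/\varepsilon,\infty)\), a dyadic sum of \(B_m(2^{j+1}/\varepsilon)\,\varepsilon2^{-j}\) gives the same order, since \(q_{\mathcal K}+\xi<1\) makes the series converge, exactly as in the proof of \cref{thm:noisy_43_aggregate_noise_criterion}.
\end{itemize}
Finally, \(T_ta_m\asymp T_tk(T_t)\) by Step 1. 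This leaves hypotheses (a)–(c) of \cref{ass:rv_long_memory_kernel_control}; (d) is assumed.

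\textbf{Main obstacle.} The hardest part is the uniformity in Step 1. The propagator asymptotic must hold simultaneously over all bulk pairs \((s,t)\) with a varying schedule, and the error bound in \cref{lem:td_lrs_propagator} must be controlled by \(\lambda^2\sum\eta_u^2\leq\lambda^2Hh_t\). I would verify that \(h_t/T_t\to0\) follows from the mesh condition, and keep the argument on compact \(y\)-windows only, relying on the exponential upper bound elsewhere.
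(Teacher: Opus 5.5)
Your proposal is correct and takes essentially the paper's route. The bulk-age asymptotic uses the one-sided stability bound $Q_{s+1,t}(\lambda)\le e^{-\delta\lambda H}$, the critical-band expansion $Q_{s+1,t}(\lambda)=e^{-2\lambda H}(1+o(1))$ (with $h_t=o(T_t)$ from the mesh condition), and the Laplace--Stieltjes limit with Potter control at both endpoints. The envelope is $G_m(v)=\sum_j\widehat\lambda_j^2e^{-\delta v\widehat\lambda_j}$, and the trace bound gives $\sup_mG_m(0)<\infty$.

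The differences are cosmetic. You prove uniformity directly in $c=H/T_t\in[\varepsilon,1]$, where the paper argues by contradiction along a bad subsequence. You also integrate $G_m$ first (Fubini plus a dyadic split), where the paper first proves the pointwise bound $G_m(v)\lesssim k(T)(T/v)^{q_{\mathcal K}+\xi}$ and then integrates. Your fallback of using only the one-sided stability bound globally, instead of the two-sided bound of \cref{lem:td_lrs_propagator} that would need \cref{ass:td_lrs_regular_spectral_mode_decay}, is exactly what the paper does.
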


\begin{proof}
Transfer the cutoff over bulk ages, then construct the recent-endpoint
envelope.  Write \(H=H_{t,s}\).  Since
\(H/T_t\in[\varepsilon,1]\),
\cref{eq:rv_spectral_memory_input}, the uniform convergence theorem for
slowly varying functions, and the global Potter bound give, locally uniformly
for \(c>0\),
\[
\sum_{0<\widehat\lambda_j\leq c/H}\widehat\lambda_j^2
\sim
\frac{2^{q_{\mathcal K}}}{\Gamma(q_{\mathcal K}+1)}
c^{q_{\mathcal K}}H^{-q_{\mathcal K}}L_{\mathcal K}(H),
\]
uniformly over all such ages.

Joint pointwise stability gives
\(0\leq Q_{s+1,t}(\lambda)\leq e^{-\delta H\lambda}\) for its fixed margin
\(\delta>0\).
Moreover, \cref{eq:rv_mesh_condition} and \(k(T)\to0\) imply
\(h_t/T_t\to0\).  On every compact critical band
\(a\leq H\lambda\leq b\), expanding the one-step factors gives
\(Q_{s+1,t}(\lambda)=e^{-2H\lambda}(1+o(1))\) uniformly.  If the supremum in
\cref{eq:rv_spectral_memory_transfer} did not
converge to zero, there would be a sequence of injection ages along which the
normalized error remained bounded away from zero.  Along this sequence, the
critical-band and cutoff limits give the Laplace--Stieltjes limit on compact
rescaled intervals, while Potter control and the exponential bound remove
both endpoints as in \cref{thm:noisy_43_uniform_tauberian}.  Hence
\[
\frac{\Psi_{t,s}}{H^{-q_{\mathcal K}}L_{\mathcal K}(H)}
\longrightarrow
\frac{2^{q_{\mathcal K}}}{\Gamma(q_{\mathcal K}+1)}
\Gamma(q_{\mathcal K}+1)2^{-q_{\mathcal K}}=1,
\]
a contradiction, proving \cref{eq:rv_spectral_memory_transfer} and, with
\cref{eq:rv_kernel_rv}, \cref{eq:rv_sharp_bulk_kernel}.

For the endpoint, take
\(G_m(v):=\sum_j\widehat\lambda_j^2e^{-\delta v\widehat\lambda_j}\).
It is nonincreasing, bounds \(\Psi_{t,s}\), and is uniformly bounded at zero.
The global Potter envelope yields
\(G_m(v)\lesssim k(T)(T/v)^{q_{\mathcal K}+\xi}\) for
\(0<v\leq\varepsilon T\): integration by parts writes
\(G_m(v)=\delta v\int_0^\infty e^{-\delta vx}A_m(x)\,\mathrm dx\), where
\(A_m(x)=\sum_{0<\widehat\lambda_j\leq x}\widehat\lambda_j^2\), and the
two branches of the Potter bound give the displayed estimate.  Integration,
using
\(q_{\mathcal K}+\xi<1\), yields
\[
\int_0^{\varepsilon T}G_m(v)\,\mathrm dv
\lesssim \varepsilon^{1-q_{\mathcal K}-\xi}Tk(T).
\]
This is \cref{eq:rv_integrated_envelope}; the remaining envelope and mesh
conditions were assumed.
\end{proof}

For PLRF with \(d/m\to c>1\) and \(1/4<\alpha<1/2\), the endpoint trace is
tight because
\[
\mathbb E_{\bm W}\operatorname{tr}(\widehat{\bm H}^{\,2})
=
\left(1+\frac1m\right)\operatorname{tr}(\bm\Lambda^2)
+\frac{(\operatorname{tr}\bm\Lambda)^2}{m}=O(1).
\]
Thus \(\operatorname{tr}(\widehat{\bm H}^{\,2})=O_{\mathbb P}(1)\), excluding
polynomial endpoint blowup.

\subsection{Examples and counterexamples for the transfer assumptions}
\label{app:condition_examples}

The spectral criterion concerns weighted mass near zero rather than a
polynomial formula for individual eigenvalues.  The following table collects
two component-power examples and two elementary failures.

\noindent

\paragraph{Stretched-exponential spectra.}
We examine how stretched-exponential spectral decay affects the forcing and
memory component asymptotics.  The calligraphic components below are
specialization-specific resolvent-DE quantities.

\begin{paperdefinition}[Stretched-exponential spectral specialization]
\label{def:stretched_exponential_specialization}
Fix \(0<\gamma<1\), \(c>1\), and \(\beta\in\mathbb{R}\), let
\(d=\lceil cm\rceil\), and set
\[
\lambda_j=e^{-j^\gamma},
\qquad
\lvert\theta_j^\star\rvert\asymp j^{-\beta}.
\]
\end{paperdefinition}

The eigenvalue sequence in
\cref{def:stretched_exponential_specialization} is not regularly varying in
the index \(j\).
Since \(\gamma<1\), the ratio of successive eigenvalues satisfies
\(\lambda_{j+1}/\lambda_j\to1\).  Its small-eigenvalue cumulative energies have
the same order as regularly varying reference functions, so training retains
power laws with logarithmic corrections.  For the constant schedule below,
write \(T:=\eta t\).

\begin{theorem}[Stretched-exponential forcing and memory asymptotics]
\label{thm:stretched_exponential_scaling}
Under \cref{def:stretched_exponential_specialization}, let the constant
schedule \(\eta_t\equiv\eta\), \(B_t\equiv B\) satisfy
\cref{ass:noisy_43_constant_schedule_stability}, with \(\eta\asymp1\).
Uniformly on every strict intermediate subwindow
\[
1\ll T,
\qquad
\log T\leq(1-\varepsilon)m^\gamma
\]
with fixed \(\varepsilon>0\), the aligned pure-point components satisfy
\begin{align}
\mathcal F_{pp}(t,m)
&\asymp
T^{-1}(\log T)^{(1-\gamma-2\beta)/\gamma},
\label{eq:stretched_exponential_Fpp}\\
\frac1{\eta}\mathcal K_{pp}(t,m)
&\asymp
\frac{\eta}{B}T^{-2}(\log T)^{(1-\gamma)/\gamma},
\label{eq:stretched_exponential_Kpp}\\
\sum_{s=t}^{\infty}\mathcal K_{pp}(s,m)
&\asymp
\frac{\eta}{B}T^{-1}(\log T)^{(1-\gamma)/\gamma}.
\label{eq:stretched_exponential_kernel_tail}
\end{align}
\end{theorem}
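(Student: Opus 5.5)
The plan follows the PLRF pure-point calculation behind \cref{cor:noisy_43_direct_plrf_43_exponents}, with the polynomial index cutoff replaced by a logarithmic one. I will assume, as in the PLRF case, that the aligned pure-point components are the population sums
\[
\mathcal F_{pp}(t,m)\asymp\sum_{j\leq m}\lambda_j(\theta_j^\star)^2q_\eta(\lambda_j)^t,
\qquad
\mathcal K_{pp}(t,m)\asymp\frac{\eta^2}{B}\sum_{j\leq m}\lambda_j^2q_\eta(\lambda_j)^t,
\]
taken over the indices that the deterministic equivalent resolves, roughly \(j\lesssim m\). The first step replaces each filter by an exponential. Pointwise contraction and \(\eta\asymp1\) give constants \(0<a_0\leq a_1\) with \(e^{-a_1T\lambda}\leq q_\eta(\lambda)^t\leq e^{-a_0T\lambda}\), exactly as in Step~2 of the proof of \cref{lem:td_lrs_de_spectral_filters}. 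It therefore suffices to estimate the Laplace sums \(\sum_{j\leq m}j^{-2\beta}e^{-j^\gamma}e^{-cTe^{-j^\gamma}}\) and \(\sum_{j\leq m}e^{-2j^\gamma}e^{-cTe^{-j^\gamma}}\) up to constants, uniformly for \(c\) in a compact set. Because the target orders are regularly varying in \(T\), they are stable under \(T\mapsto cT\).

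The second step computes the sums with the moving cutoff. Set \(x=j^\gamma\), so \(j=x^{1/\gamma}\) and \(\mathrm dj=\gamma^{-1}x^{1/\gamma-1}\mathrm dx\). Since \(\lambda_{j+1}/\lambda_j\to1\), the sum is comparable to this integral. The filter \(e^{-cTe^{-x}}\) is essentially the indicator of \(\{x\geq\log T\}\): for \(x\leq\log T-A\) it is doubly exponentially small, and for \(x\geq\log T\) it is bounded below. The forcing sum is then
\[
\int_{\log T}^{m^\gamma}x^{(1-\gamma-2\beta)/\gamma}e^{-x}\,\mathrm dx\asymp T^{-1}(\log T)^{(1-\gamma-2\beta)/\gamma},
\]
because the integrand is dominated by its lower endpoint: the polynomial factor varies only by \(1+O(1/\log T)\) over unit windows. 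The strict-window hypothesis \(\log T\leq(1-\varepsilon)m^\gamma\) keeps the upper limit a factor \(m^\gamma-\log T\geq\varepsilon m^\gamma\to\infty\) beyond the cutoff, so truncation at \(m\) costs only a constant. The memory sum uses \(e^{-2x}\) and gives \(T^{-2}(\log T)^{(1-\gamma)/\gamma}\). The transition layer \(x\in[\log T-A,\log T]\) contributes the same order, since there \(e^{-x}\asymp T^{-1}\) and \(e^{-cTe^{-x}}\) is bounded below. This gives the lower bound, and the region \(x<\log T-A\) is negligible.

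The third step handles the tail \eqref{eq:stretched_exponential_kernel_tail}. Summing over time first, as in \cref{eq:noisy_43_terminal_noise_exact}, gives \(\sum_{s\geq t}\mathcal K_{pp}(s,m)\asymp(\eta/B)\sum_j\lambda_j q_\eta(\lambda_j)^t\), which is the same Laplace sum with weight \(e^{-x}\). It evaluates to \(T^{-1}(\log T)^{(1-\gamma)/\gamma}\) times \(\eta/B\).

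The main obstacle is uniformity near the cutoff. The filter's transition in \(x\) has \(O(1)\) width, while the power of \(x\) must stay comparable across it, and the lower endpoint must remain separated from the truncation \(m^\gamma\) uniformly in \(T\) and \(m\). I expect to settle this by splitting the integral at \(\log T\pm A\) and bounding each piece with elementary incomplete-gamma estimates, in the spirit of \cref{eq:td_lrs_incomplete_gamma_comparison}. A secondary point is justifying the identification of \(\mathcal F_{pp},\mathcal K_{pp}\) with the population sums over \(j\lesssim m\); I would take this from the pure-point definition, as in the PLRF setting.
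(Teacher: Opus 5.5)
Your proposal is correct and follows essentially the paper's route: Laplace sums over the stretched-exponential spectrum with the logarithmic cutoff \(j_T\asymp(\log T)^{1/\gamma}\), sum--integral comparison justified by the diverging local index width, the modal geometric sum for the terminal kernel tail, and the strict window \(\log T\leq(1-\varepsilon)m^\gamma\) to make truncation at \(m\) harmless. The only difference is one of precision. You sandwich \(q_\eta(\lambda)^t\) between two exponentials and obtain two-sided orders. The paper instead uses \(q_\eta(\lambda)^t=e^{-2T\lambda}(1+o(1))\) on the active shell together with the Gamma-integral Laplace-sum estimate of \cref{lem:stretched_exponential_sums} to extract sharp constants, which the \(\asymp\) statement does not require.
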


The proof uses the following tail and Laplace-sum estimates.

\paragraph{Stretched-exponential sum calculus.}

\begin{lemma}[Tail and Laplace sums]
\label{lem:stretched_exponential_sums}
For fixed \(q>0\) and \(a\in\mathbb{R}\),
\begin{align*}
\sum_{j\geq n}j^{-a}e^{-qj^\gamma}
&\sim
\frac{1}{\gamma q}n^{1-\gamma-a}e^{-qn^\gamma}.
\\
\sum_{j\geq1}j^{-a}e^{-qj^\gamma}
\exp\{-2u e^{-j^\gamma}\}
&\sim
\frac{\Gamma(q)}{\gamma(2u)^q}
\bigl(\log(2u)\bigr)^{(1-a)/\gamma-1}.
\end{align*}
\end{lemma}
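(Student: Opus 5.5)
Both asymptotics come from the same change of variables, $y=x^\gamma$ in the index, together with a sum-to-integral comparison that works because $\gamma<1$. The $y$-mesh is $(j+1)^\gamma-j^\gamma\sim\gamma j^{\gamma-1}\to0$, so successive terms of either sum have ratio tending to one on the relevant index range. I would therefore first replace each sum by the corresponding integral over $x\in[n,\infty)$ or $[1,\infty)$. The relative error is $o(1)$: on any block of indices where $x^\gamma$ varies by $o(1)$, the summand is comparable to its integral. The dominant contributions come from indices with $j\to\infty$, namely $j\geq n\to\infty$ in the first display and $j^\gamma\approx\log(2u)\to\infty$ in the second. On that range a monotone-block (Euler--Maclaurin type) comparison shows that sum and integral agree to first order.

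\emph{Tail sum.} After $y=x^\gamma$,
\[
\int_n^\infty x^{-a}e^{-qx^\gamma}\,\mathrm dx
=\frac1\gamma\int_{n^\gamma}^\infty y^{(1-a)/\gamma-1}e^{-qy}\,\mathrm dy .
\]
This is an upper incomplete gamma integral, whose standard asymptotic is
\[
\frac1\gamma\,(n^\gamma)^{(1-a)/\gamma-1}\,\frac{e^{-qn^\gamma}}{q}\,(1+o(1))
=\frac{1}{\gamma q}\,n^{1-\gamma-a}e^{-qn^\gamma}\,(1+o(1)).
\]
To see it, substitute $y=n^\gamma+w$ and apply dominated convergence, using $(1+w/n^\gamma)^{c}\to1$. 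Combined with the sum--integral step, this gives the first display.

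\emph{Laplace sum.} Write $c:=(1-a)/\gamma-1$. The same substitution gives
\[
\frac1\gamma\int y^{c}e^{-qy}\exp\{-2ue^{-y}\}\,\mathrm dy .
\]
Now put $v=2ue^{-y}$ and $L:=\log(2u)$. Then $e^{-qy}=(v/2u)^q$, $y=L-\log v$ and $\mathrm dy=-\mathrm dv/v$. The integral becomes
\[
\frac{(2u)^{-q}}{\gamma}L^{c}\int_0^{2ue^{-y_0}}\Bigl(1-\frac{\log v}{L}\Bigr)^{c}v^{q-1}e^{-v}\,\mathrm dv .
\]
The bracket tends to $1$ pointwise, and $\int_0^\infty v^{q-1}e^{-v}\,\mathrm dv=\Gamma(q)$, which yields the stated constant $\Gamma(q)\gamma^{-1}(2u)^{-q}L^{c}$.

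\emph{Main obstacle: domination and endpoints.} I would split the $v$-range at $v=L^{1/2}$, say. For $v\leq L^{1/2}$ we have $|\log v|/L\leq\frac12$ once $v\ge e^{-L/2}$. There the bracket is bounded by $(1+|\log v|)^{|c|}$ times a constant, which is integrable against $v^{q-1}e^{-v}$. The region $v<e^{-L/2}$ corresponds to large $y$, where $(1+|\log v|/L)^{c}\lesssim(|\log v|/L)^{|c|}$ is again dominated. For $v\geq L^{1/2}$, that is small $y$ and small $j$, I would not use the bracket at all. Instead I would bound the original summand directly: $j^{-a}e^{-qj^\gamma}\lesssim1$ and $\exp\{-2ue^{-j^\gamma}\}\leq e^{-L^{1/2}}$. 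This gives a contribution $O\bigl(L^{(1-a)/\gamma}e^{-\sqrt L}\bigr)$ in absolute terms, and it must be compared with the main term of order $(2u)^{-q}L^{c}$. This comparison fails when the cut is at $L^{1/2}$, so I would move it to $v\geq(q+1)L$. That retains dominated convergence on $[0,(q+1)L]$, since there $|\log v|/L=o(1)$. The excised part is then bounded by $u^{O(1)}e^{-(q+1)L}=o\bigl((2u)^{-q}\bigr)$, taking care that the polynomial-in-index prefactor, at most $L^{O(1)}$, is harmless. The sum--integral comparison also has to be checked only on the retained range, where $j^\gamma\geq L-\log((q+1)L)\to\infty$ and the mesh is $o(1)$ in $y$. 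I expect this bookkeeping of the small-index endpoint against the polynomially small main term to be the delicate step.
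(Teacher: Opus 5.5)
Your proposal is correct and follows essentially the paper's route. Both arguments use a sum--integral comparison, reduce the two displays to incomplete-Gamma and Gamma integrals via the substitutions $y=x^\gamma$ and $v=2ue^{-y}$, and conclude by dominated convergence with endpoint control. The one difference is at the small-index endpoint: you excise $v\geq(q+1)L$ and bound the raw terms there directly, whereas the paper keeps the integral and notes that on $[\sqrt u,2u/e]$ the normalized logarithmic factor grows at most polylogarithmically and is killed by $e^{-v}$.

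One small caution concerns your claim that an $o(1)$ mesh in $y$ makes successive summands comparable. Near $v\asymp L$ this fails once $\gamma\geq 1/2$, because the log-ratio of successive terms there is of order $L^{2-1/\gamma}$. On that stretch the sum--integral step should instead rest on monotonicity of the summand, together with the fact that its maximum is $O\bigl(L^{-(1-\gamma)/\gamma}\bigr)$ relative to the main term; your monotone-block remark already points to this.
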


\begin{proof}
For the tail, eventual monotonicity and variation scale
\(n^{1-\gamma}\to\infty\) permit sum--integral comparison; \(y=qx^\gamma\)
gives
\[
\int_n^\infty x^{-a}e^{-qx^\gamma}\,dx
=
\frac{q^{(a-1)/\gamma}}{\gamma}
\int_{qn^\gamma}^\infty y^{(1-a)/\gamma-1}e^{-y}\,dy.
\]
The incomplete-Gamma tail expansion proves the first claim.

For the second, the summand is concentrated at
\(x\asymp(\log u)^{1/\gamma}\) on a window of width
\(\asymp(\log u)^{(1-\gamma)/\gamma}\to\infty\), so sum--integral comparison
applies.  With \(x^\gamma=z\) and \(y=2ue^{-z}\), the continuous proxy becomes
\begin{align*}
\int_1^\infty x^{-a}e^{-qx^\gamma}e^{-2ue^{-x^\gamma}}\,\mathrm dx
&=
\frac{(2u)^{-q}}{\gamma}
\int_0^{2u/e}
y^{q-1}e^{-y}
\left(\log\frac{2u}{y}\right)^{(1-a)/\gamma-1}\,dy.
\end{align*}
After division by \((\log(2u))^{(1-a)/\gamma-1}\), the logarithmic factor
tends to one on compact subsets of \(y>0\).  Near zero, \(q>0\) absorbs the
remaining logarithmic factor.  On \(1\leq y\leq\sqrt u\), the normalized
factor is uniformly bounded, while on
\([\sqrt u,2u/e]\) the exponential tail dominates its at-most-polylogarithmic
growth.  Dominated convergence therefore yields \(\Gamma(q)\).
\end{proof}

The condition \(\gamma<1\) both makes the local index width
\(j^{1-\gamma}\) diverge, validating the integral approximation, and gives
\(\lambda_{j+1}/\lambda_j
=\exp\{-[(j+1)^\gamma-j^\gamma]\}\to1\).
At \(\gamma=1\), geometric lacunarity can introduce log-periodic corrections
and destroy regular variation.

\begin{proof}[Proof of \cref{thm:stretched_exponential_scaling}]
We first compute the cumulative forcing and memory energies, then apply the
Laplace-sum estimate on the active spectral shell.  Finally, the finite-width
cutoff verifies uniformity on the stated intermediate subwindow.

\medskip
\noindent\textit{Cumulative energies and component transforms.}

Let \(x\downarrow0\) and set
\(n_x=(\log(1/x))^{1/\gamma}\).  Applying the first part of
\cref{lem:stretched_exponential_sums} gives
\begin{align*}
\sum_{\lambda_j\leq x}\lambda_j(\theta_j^\star)^2
&\asymp
x\left(\log\frac1x\right)^{(1-\gamma-2\beta)/\gamma}
\\
\frac{\eta^2}{B}\sum_{\lambda_j\leq x}\lambda_j^2
&\sim
\frac{\eta^2}{2\gamma B}x^2
\left(\log\frac1x\right)^{(1-\gamma)/\gamma}.
\end{align*}
Thus the kernel cumulative energy is regularly varying at zero with index
\(2\).  Under the stated two-sided target comparison, the forcing cumulative
energy is only comparable to an index-one regularly varying reference; it is
itself regularly varying when
\(\theta_j^\star\sim C_\theta j^{-\beta}\), in which case its first display
above strengthens to
\[
\sum_{\lambda_j\leq x}\lambda_j(\theta_j^\star)^2
\sim
\frac{C_\theta^2}{\gamma}x
\left(\log\frac1x\right)^{(1-\gamma-2\beta)/\gamma}.
\]
Moreover,
\[
\sum_j\lambda_j<\infty,
\qquad
\sum_j\lambda_j^2<\infty,
\qquad
\sum_j\lambda_j(\theta_j^\star)^2<\infty
\]
for every fixed \(\beta\).  Hence the covariance is trace class and
\(\eta\asymp1\), whereas \(\bm\theta^\star\in\ell^2\) iff \(\beta>1/2\).

On the main shell \(\lambda\asymp T^{-1}\), the remainder in
\(t\log q_\eta(\lambda)\) is \(O(T\lambda^2)=o(1)\), so
\(q_\eta(\lambda)^t=\exp\{-2T\lambda\}(1+o(1))\).
In \cref{lem:stretched_exponential_sums},
\((a,q)=(2\beta,1)\) proves \cref{eq:stretched_exponential_Fpp} and
\((a,q)=(0,2)\) proves \cref{eq:stretched_exponential_Kpp}.
If \(\theta_j^\star\sim C_\theta j^{-\beta}\), the corresponding leading
constants are
\begin{align*}
\mathcal F_{pp}(t,m)
&\sim
\frac{C_\theta^2}{2\gamma}T^{-1}
\bigl(\log(2T)\bigr)^{(1-\gamma-2\beta)/\gamma},\\
\mathcal K_{pp}(t,m)
&\sim
\frac{\eta^2}{4\gamma B}T^{-2}
\bigl(\log(2T)\bigr)^{(1-\gamma)/\gamma}.
\end{align*}

The modal geometric sum gives the terminal kernel tail directly:
\[
\sum_{s=t}^\infty\mathcal K_{pp}(s,m)
=
\frac{\eta}{B}\sum_j
\frac{\lambda_j}{2-\eta(1+1/B)\lambda_j}
q_\eta(\lambda_j)^t.
\]
The denominator tends to \(2\) on the active shell, so the Laplace-sum
formula yields
\begin{equation*}
\sum_{s=t}^\infty\mathcal K_{pp}(s,m)
\sim
\frac{\eta}{4\gamma B}T^{-1}
\bigl(\log(2T)\bigr)^{(1-\gamma)/\gamma},
\end{equation*}
which proves \cref{eq:stretched_exponential_kernel_tail}.

\medskip
\noindent\textit{Finite-width cutoff.}

The critical learned index solves
\(T e^{-j_T^\gamma}\asymp1\).  Thus
\begin{align*}
j_T&\asymp(\log T)^{1/\gamma},&
1\ll j_T\ll m&\Longleftrightarrow
1\ll T,\ \log T\ll m^\gamma,\\
T_{\rm term}&\asymp\lambda_m^{-1}=e^{m^\gamma},&
\log T&\leq(1-\varepsilon)m^\gamma
\quad\text{on a uniform strict subwindow}.
\end{align*}

For completeness, for fixed \(q>0\) and \(a\in\mathbb{R}\), the truncated
continuous proxy satisfies
\begin{align*}
\int_1^m x^{-a}e^{-qx^\gamma}e^{-2Te^{-x^\gamma}}\,\mathrm dx
&=
\frac{(2T)^{-q}}{\gamma}
\int_{2T e^{-m^\gamma}}^{2T/e}
y^{q-1}e^{-y}
\left(\log\frac{2T}{y}\right)^{(1-a)/\gamma-1}\,dy.
\end{align*}
The logarithmic factor is kept on this real-valued domain.  After it is
extracted at the required asymptotic precision, the exponential tail permits
the remaining Gamma-type integral to be extended to infinity.
Thus \(2T e^{-m^\gamma}\to0\), \(=\Theta(1)\), and \(\to\infty\) give,
respectively, the intermediate power law, finite-width crossover, and exit
from the regularly varying window.  At fixed \(m\), eventual convergence is
geometric.

On the strict subwindow in the theorem,
\(T e^{-m^\gamma}\leq e^{-\varepsilon m^\gamma}\to0\).  Hence the preceding
sum asymptotics hold uniformly there, and
\cref{eq:stretched_exponential_Fpp,eq:stretched_exponential_Kpp,eq:stretched_exponential_kernel_tail} complete the proof.
\end{proof}

\paragraph{Irregular targets with the canonical forcing and memory exponents.}
Fix \(\alpha>1/4\), let \(p:=2\alpha+2\beta-1>0\), and set
\(\lambda_j=j^{-2\alpha}\) and
\(\lvert\theta_j^\star\rvert^2=j^{-2\beta}c_j\).  Suppose \(c_j\geq0\)
and, with \(C_n:=\sum_{j\leq n}c_j\), assume \(C_n/n\to1\).  For
\(N_x:=\lceil x^{-1/(2\alpha)}\rceil\), summation by parts gives
\[
\sum_{\lambda_j\leq x}\lambda_j\lvert\theta_j^\star\rvert^2
=\sum_{j\geq N_x}j^{-(p+1)}c_j
\sim \frac{1}{p}N_x^{-p}
\sim \frac{1}{p}x^{p/(2\alpha)}.
\]
Indeed, for \(a_j=j^{-(p+1)}\),
\[
\sum_{j=N}^{\infty}c_j a_j
=-C_{N-1}a_N+\sum_{j=N}^{\infty}C_j(a_j-a_{j+1})
\sim -N^{-p}+\frac{p+1}{p}N^{-p}
=\frac{1}{p}N^{-p}.
\]
Thus the forcing tail has the same leading power and constant as for the
canonical choice \(c_j\equiv1\), although the target coordinates need not
obey any fixed power envelope.  Three examples satisfy the averaging
condition:
\[
\begin{array}{lll}
c_{2k}=2,\quad c_{2k-1}=(2k-1)^{-2\varepsilon}, & \varepsilon>0,
& \text{alternating two-rate};\\
c_{k^2}=2k-1,\quad c_j=0\ \text{otherwise},
&& \text{mass-compensated sparse};\\
c_j=Z_j^2,\quad Z_j\overset{\mathrm{i.i.d.}}{\sim}\mathcal N(0,1),
&& \text{frozen random amplitude}.
\end{array}
\]
For the first example, the even terms contribute \(n+O(1)\) to \(C_n\)
and the odd terms contribute \(o(n)\).  For the second,
\(C_n=\lfloor\sqrt n\rfloor^2\).  For the third, \(C_n/n\to1\) almost
surely by the strong law; the infinite sequence is drawn once and then held
fixed.  The first construction has different powers on its even and odd
subsequences, the second vanishes off the squares, and in the third the
amplitudes are almost surely neither bounded above nor bounded away from zero.
Consequently, each construction violates
\(\lvert\theta_j^\star\rvert\asymp j^{-\rho}\) for every fixed \(\rho\),
almost surely in the random case.

These population tails can also be realized as exact empirical tails by a
deterministic aligned truncation.  Take \(d=2m\),
\(\bm W_m=(I_m,0)^{\!\top}\), and truncate one infinite target sequence to
its first \(2m\) coordinates.  Then
\(\widehat{\bm H}_m=\operatorname{diag}(\lambda_1,\ldots,\lambda_m,
0,\ldots,0)\).  If \(T\to\infty\) with \(T=o(m^{2\alpha})\), set
\(N_T:=\lceil T^{1/(2\alpha)}\rceil\).  Then
\[
\sum_{N_T\leq j\leq m}j^{-(p+1)}c_j
\sim \frac{1}{p}T^{-p/(2\alpha)},
\qquad
\sum_{N_T\leq j\leq m}j^{-4\alpha}
\sim \frac{1}{4\alpha-1}T^{-(4\alpha-1)/(2\alpha)}.
\]
The omitted tails beyond \(m\) are lower order, and the same estimates hold
locally uniformly at constant-factor cutoffs.  Regular variation of the
infinite tails gives the Potter envelope, and truncation can only decrease the
cutoff mass.  Hence, for a fixed constant schedule and some fixed
\(\delta>0\), if \(\eta(1+1/B)\leq2-\delta\), these tails satisfy the stable
no-escape hypotheses of
\cref{thm:noisy_43_effective_spectral_criterion}.  The resulting forcing
asymptotic matches the canonical diagonal choice \(c_j\equiv1\); the memory response is
unchanged because \(K_{\bm W_m}\) is target-independent.  The random-case
claims hold almost surely with respect to the single frozen target draw.

\begin{proposition}[Power-law spectrum with faster-than-polynomial forcing]
\label{prop:rapid_target_counterexample}
Fix \(\alpha>1/4\) and let
\[
\lambda_j=j^{-2\alpha},
\qquad
\lvert\theta_j^\star\rvert^2=e^{-j}.
\]
The target-weighted spectral tail satisfies
\[
\sum_{\lambda_j\leq x}
\lambda_j\lvert\theta_j^\star\rvert^2
\asymp
x\exp\!\left(-x^{-1/(2\alpha)}\right),
\qquad x\downarrow0.
\]

For the deterministic aligned truncations, take \(d=2m\),
\(\bm W_m=(I_m,0)^{\!\top}\), and truncate the displayed spectrum and target
to their first \(2m\) coordinates.  Fix a constant schedule
\(\eta_t\equiv\eta\), \(B_t\equiv B\geq1\), satisfying
\[
\eta\left(1+\frac1B\right)\leq2-\delta
\]
for some \(\delta>0\).  Along any joint limit \(m,t\to\infty\) such that
\(T=\eta t\to\infty\) and \(T=o(m^{2\alpha+1})\), the exact learnable
forcing obeys
\[
-\log F_{\bm W_m,>0}(t)
\sim
\frac{2\alpha+1}{2\alpha}
(4\alpha)^{1/(2\alpha+1)}T^{1/(2\alpha+1)}.
\]
Consequently,
\[
F_{\bm W_m,>0}(t)=o(T^{-q})
\qquad\text{for every }q>0.
\]
\end{proposition}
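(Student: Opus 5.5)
The plan has three parts: the population tail is a direct geometric-type estimate, and the forcing asymptotic is a discrete Laplace (saddle-point) evaluation over the index \(j\).

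\emph{Tail estimate.} Set \(N_x:=\lceil x^{-1/(2\alpha)}\rceil\). Then
\[
\sum_{\lambda_j\leq x}\lambda_j\lvert\theta_j^\star\rvert^2=\sum_{j\geq N_x}j^{-2\alpha}e^{-j}.
\]
Consecutive terms have ratio at most \(e^{-1}\), so the sum lies between its first term \(N_x^{-2\alpha}e^{-N_x}\) and \((1-e^{-1})^{-1}\) times that term. Since \(N_x^{-2\alpha}\asymp x\) and the ceiling changes \(e^{-N_x}\) by at most a factor \(e\), this gives \(\asymp x\exp(-x^{-1/(2\alpha)})\).

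\emph{Reduction of the forcing.} For the aligned truncation, \(\widehat{\bm H}_m=\operatorname{diag}(\lambda_1,\dots,\lambda_m,0,\dots)\) with eigenvectors the standard basis. The learnable forcing is therefore exactly
\[
F_{\bm W_m,>0}(t)=\sum_{j\leq m}j^{-2\alpha}e^{-j}\,q_\eta(\lambda_j)^t .
\]
I will use the two-sided bound \(-t\log q_\eta(\lambda)=2T\lambda+O(T\lambda^2)\) for small \(\lambda\). For all \(\lambda\leq 1\), stability gives the cruder bound \(-\log q_\eta(\lambda)\geq c\lambda\). This reduces the problem to the exponent
\[
\phi_T(j):=j+2Tj^{-2\alpha}.
\]
Its minimizer is \(j_\ast=(4\alpha T)^{1/(2\alpha+1)}\), with minimum value \(\phi_T(j_\ast)=j_\ast(1+1/(2\alpha))\), which is exactly the claimed constant times \(T^{1/(2\alpha+1)}\).

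The hypothesis \(T=o(m^{2\alpha+1})\) is used precisely to guarantee \(j_\ast=o(m)\), so the minimizer lies inside the truncation. At \(j\asymp j_\ast\), the second-order error is \(T\lambda_j^2\asymp j_\ast^{1-2\alpha}\), which is \(o(j_\ast)\). That suffices, since only the logarithmic asymptotic is needed.

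\emph{Lower bound.} Keep the single term \(j=\lfloor j_\ast\rfloor\). Its logarithm is \(-\phi_T(j_\ast)+O(\log T)+o(j_\ast)\).

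\emph{Upper bound.} Split the indices into three ranges.
\begin{itemize}
\item For \(j>2j_\ast\), bound \(q^t\leq1\). The remaining sum is at most \(Ce^{-2j_\ast}\), which is negligible because \(2>1+1/(2\alpha)\) is false only when \(\alpha\leq1/2\). In that case I instead note that \(\phi_T\) is increasing for \(j>j_\ast\), use \(e^{-\phi_T(j)(1-o(1))}\), and sum the geometric-like tail beyond \(2j_\ast\) against \(e^{-j}\) together with the bound \(\phi_T(j)\geq\phi_T(j_\ast)\) on \([j_\ast,2j_\ast]\). Concretely: for \(j\ge Mj_\ast\), with \(M\) a large fixed constant, \(e^{-j}\) alone is \(\leq e^{-Mj_\ast}\ll e^{-\phi_T(j_\ast)}\).
\item For \(j\leq\varepsilon j_\ast\), use \(q^t\leq e^{-cT j^{-2\alpha}}\). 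Here \(cTj^{-2\alpha}\geq c\varepsilon^{-2\alpha}j_\ast/(4\alpha)\), which exceeds \(\phi_T(j_\ast)\) once \(\varepsilon\) is small.
\item The middle range \(\varepsilon j_\ast\leq j\leq Mj_\ast\) has \(O(j_\ast)\) terms, each at most \(\exp(-\phi_T(j_\ast)+o(j_\ast))\).
\end{itemize}
Summing the three ranges gives \(-\log F\geq\phi_T(j_\ast)(1-o(1))\). The final claim \(F=o(T^{-q})\) is then immediate, because \(T^{1/(2\alpha+1)}\gg\log T\).

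\emph{Main obstacle.} The delicate step is the uniform control of \(t\log q_\eta\) away from the critical shell. For small \(j\) the eigenvalue is not small, so the expansion \(2T\lambda+O(T\lambda^2)\) is unavailable. I will handle this only through the crude stability bound \(-\log q_\eta\geq c\lambda\) (from \(1-q_\eta\geq\delta\eta\lambda\)), together with the choice of \(\varepsilon\). That combination avoids needing any uniformity of the expansion on that range.
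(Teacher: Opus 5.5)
Your proposal is correct and follows essentially the same route as the paper. Both arguments use the exact diagonal reduction and a discrete Laplace evaluation of $j+2Tj^{-2\alpha}$ at $j_\ast=(4\alpha T)^{1/(2\alpha+1)}$, which is where $T=o(m^{2\alpha+1})$ enters. Both take a single nearest-integer term for the lower bound on $F_{\bm W_m,>0}$ and split below, near and above $j_\ast$ for the upper bound.

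The only difference is technical. The paper sandwiches $-\log q_\eta(\lambda)$ between $(2\pm\varepsilon)\eta\lambda$ beyond a fixed index, discards the finitely many earlier terms using the stability margin, and lets $\varepsilon\downarrow0$. You instead keep the coefficient $2$ and absorb the remainder $O(T\lambda^2)=O(j_\ast^{1-2\alpha})=o(j_\ast)$ on the critical shell. Your $2j_\ast$ detour in the first bullet is unnecessary, since the cutoff $Mj_\ast$ with $M>1+1/(2\alpha)$ already works for every $\alpha$.
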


\begin{proof}
If \(N_x:=\lceil x^{-1/(2\alpha)}\rceil\), then
\(\sum_{\lambda_j\leq x}\lambda_j\lvert\theta_j^\star\rvert^2
=\sum_{j\geq N_x}j^{-2\alpha}e^{-j}\).
Successive summands have ratio at most \(e^{-1}\), so this tail is comparable
to its first term:
\[
\sum_{j\geq N_x}j^{-2\alpha}e^{-j}
\asymp N_x^{-2\alpha}e^{-N_x}
\asymp x e^{-x^{-1/(2\alpha)}}.
\]

For the aligned truncation,
\[
F_{\bm W_m,>0}(t)
=\sum_{j=1}^{m}j^{-2\alpha}e^{-j}q_\eta(j^{-2\alpha})^t.
\]
Since the schedule is fixed,
\((-\log q_\eta(\lambda))/(\eta\lambda)\to2\) as \(\lambda\downarrow0\).
Thus, for every \(0<\varepsilon<2\), there is a fixed
\(J_\varepsilon\) such that, for \(j\geq J_\varepsilon\),
\[
(2-\varepsilon)\eta j^{-2\alpha}
\leq-\log q_\eta(j^{-2\alpha})
\leq(2+\varepsilon)\eta j^{-2\alpha}.
\]
The stability margin makes the finitely many terms \(j<J_\varepsilon\)
exponentially small in \(T\).

For \(A>0\), an elementary discrete Laplace estimate gives
\[
-\log\sum_{j=J_\varepsilon}^{m}
j^{-2\alpha}\exp\!\left(-j-ATj^{-2\alpha}\right)
\sim
\frac{2\alpha+1}{2\alpha}
(2\alpha A)^{1/(2\alpha+1)}T^{1/(2\alpha+1)}
\]
whenever \(T^{1/(2\alpha+1)}=o(m)\).  Indeed, the exponent
\(x+ATx^{-2\alpha}\) is minimized at
\(x=(2\alpha AT)^{1/(2\alpha+1)}\), where its value is
\[
\frac{2\alpha+1}{2\alpha}
(2\alpha A)^{1/(2\alpha+1)}T^{1/(2\alpha+1)}.
\]
A nearest integer gives the matching lower bound, while splitting the sum
below, near, and above this scale gives the upper bound; the polynomial factor
contributes only \(O(\log T)\) to the logarithm.

Applying this estimate with \(A=2+\varepsilon\) and
\(A=2-\varepsilon\) yields
\[
\frac{2\alpha+1}{2\alpha}
\bigl(2\alpha(2-\varepsilon)\bigr)^{1/(2\alpha+1)}
\leq
\liminf\frac{-\log F_{\bm W_m,>0}(t)}{T^{1/(2\alpha+1)}}
\]
and
\[
\limsup\frac{-\log F_{\bm W_m,>0}(t)}{T^{1/(2\alpha+1)}}
\leq
\frac{2\alpha+1}{2\alpha}
\bigl(2\alpha(2+\varepsilon)\bigr)^{1/(2\alpha+1)}.
\]
Letting \(\varepsilon\downarrow0\) proves the stated constant.  Since
\(T^{1/(2\alpha+1)}/\log T\to\infty\), the forcing is smaller than every
inverse power of \(T\).  The final memory statement follows directly from the
definition of \(K_{\bm W_m}\).
\end{proof}

The stretched-exponential row is understood on the strict intermediate window
of \cref{thm:stretched_exponential_scaling}.  Width-dependent no-escape is a
separate issue: target-weighted mass can move toward zero with width and change
the forcing-transform constant, as shown in
\cref{prop:three_spectral_boundaries}.  Beyond frozen dynamics,
\citet{barkeshli2026origin} find dataset- and model-size scaling laws on
Erd\H{o}s--R\'enyi random-walk data without explicit power-law structure,
showing that loss scaling need not directly inherit a raw-data power law.

The following deterministic empirical operators separate the spectral and
feedback hypotheses.

For \(1/2<a<1\), \(b>1\), and width \(m\), consider the deterministic
spectrum \(\widehat\lambda_j=j^{-a}\), \(1\leq j\leq m\), and denote its
target-alignment weights by
\(w_j:=|\langle\widehat{\bm u}_j,\bm\Lambda^{1/2}\bm\theta^\star\rangle|^2
=j^{-b}\).
For this example, write
\[
q_{\mathcal K}:=2-\frac1a\in(0,1),
\qquad
c_{\mathcal K}:=
\frac{\Gamma(q_{\mathcal K}+1)}
{2^{q_{\mathcal K}}(2a-1)},
\qquad
L_{\mathcal K}(T)\equiv c_{\mathcal K}.
\]

\begin{proposition}[A deterministic long-memory example]
\label{prop:diagonal_long_memory_example}
Along every sequence \(T=T_m\to\infty\) with \(T=o(m^a)\), locally uniformly
for \(c>0\),
\begin{align*}
\sum_{0<\widehat\lambda_j\leq c/T}w_j
&\sim
\frac{c^{(b-1)/a}}{b-1}T^{-(b-1)/a},\\
\sum_{0<\widehat\lambda_j\leq c/T}\widehat\lambda_j^2
&\sim
\frac{c^{2-1/a}}{2a-1}T^{-(2-1/a)}.
\end{align*}
Both tails satisfy the uniform Potter/no-escape envelope.  Consequently,
bounded pointwise-stable steps satisfying \cref{eq:rv_mesh_condition} verify
the pure-power kernel asymptotic and the two-time conditions of
\cref{thm:rv_discrete_transfer} through
\cref{thm:rv_spectral_memory_transfer}, with the preceding
\(q_{\mathcal K}\) and \(L_{\mathcal K}\).
By contrast, a constant schedule with \(\eta/B\) bounded away from zero has
no row-stability margin uniform in \(m\).
\end{proposition}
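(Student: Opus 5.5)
The proof has three parts: the two cumulative tails, the Potter/no-escape envelope, and the stability contrast.

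\emph{Tails.} Because the operator is diagonal, the cutoff \(\widehat\lambda_j=j^{-a}\le c/T\) is the index range \(j\ge N:=\lceil (T/c)^{1/a}\rceil\), truncated at \(j\le m\). Since \(T=o(m^a)\), we have \(N=o(m)\) locally uniformly in \(c\) on compacts. The tail sums are then comparable to integrals:
\[
\sum_{N\le j\le m}j^{-b}\sim\frac{N^{1-b}-m^{1-b}}{b-1}\sim\frac{N^{1-b}}{b-1},\qquad
\sum_{N\le j\le m}j^{-2a}\sim\frac{N^{1-2a}}{2a-1}.
\]
The replacement is valid because the summands are monotone and \(N\to\infty\). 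Discarding the \(m\)-terms uses \(N/m\to0\) together with \(b>1\) and \(2a>1\). Substituting \(N^{-1}\sim(c/T)^{1/a}\) gives the exponents \((b-1)/a\) and \(2-1/a\), with the stated constants. Local uniformity in \(c\) follows because \(N\) depends continuously and monotonically on \(c\) and the limits \(c^{\kappa}\) are continuous.

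\emph{Envelope.} The Potter/no-escape bound \eqref{eq:noisy_43_uniform_potter_envelope} must hold for all \(y>0\), not only on compacts. I would split into two ranges.
\begin{itemize}
\item For \(y\le y_0\) with \(y_0\) small, the cutoff index is \(N_y\asymp (T/y)^{1/a}\). Here the truncation at \(m\) can only reduce mass, so the untruncated tail bound \(\lesssim N_y^{-\kappa a}\) gives exactly \(y^{\kappa}\), up to the ratio \(A_m(y/T)/a_m\). If \(N_y>m\) the mass is zero.
\item For \(y\ge y_0\), the mass is at most the full tail starting from \(j=1\) whenever \(y/T\ge1\). It is therefore bounded by a constant times \(a_m^{-1}\cdot O(1)\), and \(a_m^{-1}\asymp T^{\kappa}\le y^{\kappa}\) because \(y\ge T\). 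For \(y_0\le y\le T\), the infinite-tail formula again gives \(\lesssim y^{\kappa}\).
\end{itemize}
Hence the envelope holds with \(\epsilon\) arbitrary and \(C\) uniform.

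\emph{Stability.} With the mass and envelope established, \cref{thm:rv_spectral_memory_transfer} applies to a bounded, pointwise-stable step satisfying \eqref{eq:rv_mesh_condition}. Its extra trace hypothesis holds because \(\sum_j j^{-2a}<\infty\) uniformly in \(m\). The memory iff in \cref{thm:noisy_43_effective_spectral_criterion} then supplies the pure-power kernel with \(L_{\mathcal K}\equiv c_{\mathcal K}\). The normalization \(c_A=2^{q_{\mathcal K}}/\Gamma(q_{\mathcal K}+1)\) matches \(1/(2a-1)\) precisely through the choice of \(c_{\mathcal K}\). From there the two-time conditions of \cref{ass:rv_long_memory_kernel_control} follow as in that theorem.

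For the contrast, the row mass of a constant schedule is \(\kappa_{\bm W}=\frac{\eta}{B}\sum_j\widehat\lambda_j/(2-(1+1/B)\eta\widehat\lambda_j)\ge\frac{\eta}{2B}\sum_{j\le m}j^{-a}\asymp\frac{\eta}{B}m^{1-a}\). Since \(a<1\), this tends to infinity when \(\eta/B\) is bounded below, so no uniform \(\kappa<1\) exists.

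The main obstacle is the all-\(y\) envelope near \(y\asymp T\) and beyond, where truncation and the top eigenvalue interact. I would handle this range through the crude total-mass bound above.
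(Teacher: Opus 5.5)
Your proposal is correct and takes essentially the paper's route: integral comparison at the active index \(\lceil (T/c)^{1/a}\rceil\), discarding the truncation at \(m\) because \(T^{1/a}=o(m)\), the same comparisons for the Potter envelope and for the uniform trace bound that feeds \cref{thm:rv_spectral_memory_transfer}, and the row-mass lower bound \(\frac{\eta}{2B}\sum_{j\le m}j^{-a}\asymp\frac{\eta}{B}m^{1-a}\). Your envelope argument is more explicit than the paper's, and the split at \(y_0\) is not needed: the untruncated tail bound already covers every \(y\le T\), and the total-mass bound covers \(y\ge T\).
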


\begin{proof}
Integral comparison, with the active index
\(n_x=\lceil x^{-1/a}\rceil\), gives
\[
\sum_{j\geq n_x}j^{-b}
\sim\frac{x^{(b-1)/a}}{b-1},
\qquad
\sum_{j\geq n_x}j^{-2a}
\sim\frac{x^{2-1/a}}{2a-1}.
\]
Since \(T^{1/a}=o(m)\), tails beyond \(m\) are negligible at \(x=c/T\)
uniformly on compact \(c\)-sets.  The same integral comparisons give
\cref{eq:noisy_43_uniform_potter_envelope} and
\(\sup_m\sum_{j\leq m}\widehat\lambda_j^2<\infty\).

For the last claim, let
\(q(\lambda)=1-2\eta\lambda+(1+1/B)\eta^2\lambda^2\).  Under constant
pointwise-stable \(\eta,B\), the limiting row mass satisfies
\begin{align*}
\lim_{t\to\infty}\sum_{s<t}K_{t,s}
&=
\frac{\eta^2}{B}\sum_{j\leq m}
\frac{\widehat\lambda_j^2}{1-q(\widehat\lambda_j)}
\geq
\frac{\eta}{2B}\sum_{j\leq m}\widehat\lambda_j
\asymp m^{1-a}.
\end{align*}
Thus two-time long memory does not imply constant-ratio row stability; one
must additionally shrink \(\eta/B\) or raise the ratio path.
\end{proof}

\begin{proposition}[Stable no-escape for a non-power spectrum]
\label{prop:stretched_exponential_no_escape_example}
Let
\[
\widehat{\bm H}_m=\operatorname{diag}_{1\leq j\leq m}(e^{-\sqrt j}),
\qquad
w_j=e^{-\sqrt j},
\qquad B=1,
\qquad \eta=\frac14.
\]
If \(T_m=\eta t_m\to\infty\) and
\(\log T_m\leq(1-\rho)\sqrt m\) for fixed \(\rho\in(0,1)\), then, locally
uniformly for \(c>0\),
\begin{align}
\sum_{\substack{j\leq m\\e^{-\sqrt j}\leq c/T_m}}w_j
&\sim 2cT_m^{-1}\log T_m,
\label{eq:stretched_example_forcing_tail}\\
\sum_{\substack{j\leq m\\e^{-\sqrt j}\leq c/T_m}}e^{-2\sqrt j}
&\sim c^2T_m^{-2}\log T_m.
\label{eq:stretched_example_memory_tail}
\end{align}
Both branches satisfy stable no-escape, and
\[
F_{\bm W_m,>0}(t_m)
\sim T_m^{-1}\log T_m,
\qquad
K_{\bm W_m}(t_m)
\sim\frac{\eta^2}{2}T_m^{-2}\log T_m.
\]
The pointwise and row-stability margins are uniform in \(m\).
\end{proposition}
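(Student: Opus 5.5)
The plan is to reduce everything to the two cumulative spectral tails and then apply \cref{thm:noisy_43_effective_spectral_criterion} with $q_{\mathcal F}=1$ and $q_{\mathcal K}=2$. Because $\widehat{\bm H}_m$ is diagonal, both $\nu^{\mathcal F}_{\bm W_m}$ and $\nu^{\mathcal K}_{\bm W_m}$ are explicit sums over the indices $j\le m$ with $\sqrt j\ge\log(T_m/c)$.

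\emph{Step 1: the tails.} For $x=c/T_m$, put $n_x:=\lceil(\log(1/x))^2\rceil$. The first estimate of \cref{lem:stretched_exponential_sums} with $\gamma=1/2$ and $a=0$ gives, for $q=1$ and $q=2$,
\[
\sum_{j\ge n}e^{-q\sqrt j}\sim \frac{2}{q}\sqrt n\,e^{-q\sqrt n}.
\]
Rounding to an integer changes $\sqrt n$ by $O(1/\sqrt n)\to0$. Hence $e^{-\sqrt{n_x}}\sim c/T_m$ and $\sqrt{n_x}\sim\log T_m$, locally uniformly in $c$. This yields
\begin{itemize}
\item $2cT_m^{-1}\log T_m$ for the forcing tail, which is \cref{eq:stretched_example_forcing_tail};
\item $c^2T_m^{-2}\log T_m$ for the memory tail, which is \cref{eq:stretched_example_memory_tail}.
\end{itemize}
The indices $j>m$ must be removed, since the sums stop at $m$. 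That excess is at most a constant times $\sqrt m\,e^{-\sqrt m}$. The window $\log T_m\le(1-\rho)\sqrt m$ gives $e^{-\sqrt m}\le T_m^{-1/(1-\rho)}$, so the excess is $o(T_m^{-1}\log T_m)$, and its square analogue is $o(T_m^{-2}\log T_m)$.

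\emph{Step 2: stable no-escape.} Next I verify \cref{ass:noisy_43_uniform_stable_spectral_window} for $a_m=T_m^{-q}L(T_m)$, with $L(T)=2\log T$ in the forcing branch and $L(T)=\tfrac12\log T$ in the memory branch. The latter choice uses $c_A=2^2/\Gamma(3)=2$. The same tail bound gives $A_m(y/T)\lesssim (y/T)^q\log(T/y)$ for $y\le T$, and $A_m$ is bounded by its total mass otherwise.
\begin{itemize}
\item For $y\le1$, use $\log(T/y)\le\log T+\log(1/y)\le C\,y^{-\epsilon}\log T$. Below the truncation scale $y<Te^{-\sqrt m}$ the mass is zero, so this bound only helps.
\item For $y\ge1$, $\log(T/y)\le\log T$ when $y\le T$. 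When $y>T$, the total mass is at most $C\le C y^{q+\epsilon}T^{-q}\log T$.
\end{itemize}
Together these give $A_m(y/T)/a_m\le C\max\{y^{q-\epsilon},y^{q+\epsilon}\}$. I expect this step to be the main obstacle, namely checking that the constant $C$ is uniform in $m$ across the truncation regime and the large-$y$ regime. I would handle it by splitting $y$ at $1$ and at $T$ as above.

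\emph{Step 3: stability margins.}
\begin{itemize}
\item Pointwise contraction: $\eta(1+1/B)\lambda_{\max}=\tfrac12e^{-1}\le2-\delta$.
\item Row stability: by the exact formula for $\kappa_{\bm W}$,
\[
\kappa_{\bm W}=\tfrac14\sum_j\frac{e^{-\sqrt j}}{2-\tfrac12e^{-\sqrt j}}\le\tfrac14\cdot\frac{\sum_{j\ge1}e^{-\sqrt j}}{2-\tfrac12e^{-1}},
\]
and $\sum_{j\ge1}e^{-\sqrt j}\le\int_0^\infty e^{-\sqrt x}\,\mathrm dx=2$. This gives $\kappa_{\bm W}<1$ uniformly in $m$.
\end{itemize}

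\emph{Step 4: conclusion.} Apply \cref{thm:noisy_43_effective_spectral_criterion}. In the forcing branch,
\[
F_{\bm W_m,>0}\sim\Gamma(2)(2T)^{-1}\cdot2\log T=T^{-1}\log T.
\]
In the memory branch,
\[
K_{\bm W_m}\sim\frac{\eta^2}{B}T^{-2}\cdot\tfrac12\log T=\frac{\eta^2}{2}T^{-2}\log T.
\]
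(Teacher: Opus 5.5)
Your proposal is correct and follows the paper's route: the $\gamma=1/2$ tail sums of \cref{lem:stretched_exponential_sums} evaluated at $n=(\log(T_m/c))^2$, negligibility of the mass beyond $m$ from the strict window, a no-escape envelope from the infinite-tail bound plus bounded total mass, and the Tauberian transfer with $q_{\mathcal F}=1$ and $q_{\mathcal K}=2$. The differences are cosmetic. You certify row stability through the exact kernel mass, $\kappa_{\bm W}\le\tfrac14\cdot 2/(2-\tfrac12e^{-1})$, whereas the paper uses the trace certificate $5/(4e)$. In Step 2 the tail bound should read $(y/T)^q(1+\log(T/y))$, or simply the total mass for $T/e<y\le T$, since $(y/T)^q\log(T/y)$ fails near $y=T$; your total-mass argument already covers that range.
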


\begin{proof}
The tail estimates
\[
\sum_{j\geq n}e^{-\sqrt j}\sim2\sqrt n\,e^{-\sqrt n},
\qquad
\sum_{j\geq n}e^{-2\sqrt j}\sim\sqrt n\,e^{-2\sqrt n}
\]
are the \(\gamma=1/2\) cases of
\cref{lem:stretched_exponential_sums}.  Substituting
\(n=(\log(T_m/c))^2\) gives
\cref{eq:stretched_example_forcing_tail,eq:stretched_example_memory_tail}.
The strict window makes mass beyond \(m\) negligible, since its two
omitted-to-leading ratios are bounded by
\[
\frac{T_m\sqrt m\,e^{-\sqrt m}}{\log T_m}
\quad\text{and}\quad
\frac{T_m^2\sqrt m\,e^{-2\sqrt m}}{\log T_m},
\]
and vanish.  Infinite-tail Potter control and bounded mass away from zero give
finite-width no-escape; transform constants follow from
\cref{thm:noisy_43_uniform_tauberian}.

Finally, \(\lambda_{\max}=e^{-1}\), so the pointwise condition holds with
\(\delta=1\).  Monotone integral comparison yields
\(\operatorname{tr}(\widehat{\bm H}_m)
\leq e^{-1}+\int_1^\infty e^{-\sqrt x}\,\mathrm dx=5/e\).
The certificate \cref{eq:td_lrs_joint_row_mass_certificate} is therefore at
most \(5/(4e)<1\), uniformly in \(m\).
\end{proof}

The next proposition separates three spectral-transfer failures.

\begin{proposition}[Three distinct spectral boundaries]
\label{prop:three_spectral_boundaries}
The following conditioned systems isolate three failures.
\begin{enumerate}[label=(\roman*)]
\item If every positive eigenvalue is at least \(\lambda_\star>0\), fixed
\(\eta>0,B\) with a pointwise margin give exponentially decaying forcing and
memory, rather than a nondegenerate low-spectrum power law.

\item For \(\lambda_j=e^{-j}\) and target-weighted mass
\(w_j=(1-e^{-\tau})e^{-\tau j}\), \(\tau>0\), the cutoff tail is bounded
above and below by constant multiples of \(x^\tau\), but is not regularly
varying.

\item There is a triangular target-weighted spectral array for the forcing
branch in which every fixed constant-factor cutoff has the candidate
power-law limit, yet a moving packet violates the no-escape envelope and
changes the training-transform constant.
\end{enumerate}
\end{proposition}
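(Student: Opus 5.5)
The plan is to treat the three items separately, each by a direct, essentially explicit construction on a diagonal conditioned system. For this I would take \(\bm W_m\) to be an aligned truncation as in \cref{prop:rapid_target_counterexample}, so that \(\widehat{\bm H}\) is diagonal and the forcing and memory measures are explicit weighted sums of atoms.

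For item (i), I will use the constant-schedule identities. Every positive atom satisfies \(q_\eta(\lambda)\le 1-\delta\eta\lambda\le 1-\delta\eta\lambda_\star\) by the pointwise margin, as computed in the proof of \cref{thm:noisy_43_aggregate_noise_criterion}. Hence
\[
F_{\bm W,>0}(t)\le \nu_{\bm W}^{\mathcal F}((0,\infty))\,e^{-\delta\lambda_\star T}
\quad\text{and}\quad
K_{\bm W}(t)\le (\eta^2/B)\,\nu_{\bm W}^{\mathcal K}((0,\infty))\,e^{-\delta\lambda_\star T}.
\]
On the spectral side, \(\nu((0,x])=0\) for \(x<\lambda_\star\). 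So neither side can match \(T^{-q}L(T)\) with \(q>0\); the only possible reading is the degenerate zero measure. This item is routine.

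For item (ii), I will compute the cutoff tail exactly. Let \(N_x=\lceil\log(1/x)\rceil\). Then the tail is \(\sum_{j\ge N_x}(1-e^{-\tau})e^{-\tau j}=e^{-\tau N_x}\). Writing \(N_x=\log(1/x)+\{\cdot\}\) gives \(A(x)=x^{\tau}e^{-\tau\theta(x)}\) with \(\theta(x)\in[0,1)\), which proves the two-sided bound \(e^{-\tau}x^\tau\le A(x)\le x^\tau\). To show that regular variation fails, I will take \(x_k=e^{-k}\) and \(x_k'=e^{-k-1/2}\). The ratio \(A(cx)/A(x)\) at \(c=e^{1/2}\) then oscillates between \(e^{-\tau/2}\) and \(e^{\tau/2}\)-type values, rather than converging to \(c^\tau\). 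The limit therefore depends on the fractional part (it is log-periodic), and no index \(\tau\) is obtained. I will remark that by \cref{thm:noisy_43_uniform_tauberian} the transform is then also only order-\(T^{-\tau}\) and not asymptotically pure.

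Item (iii) is the main obstacle, because it needs a genuinely triangular array that satisfies the cutoff limit \eqref{eq:noisy_43_uniform_tauberian_cutoff} for every fixed compact \(c\)-set yet breaks \eqref{eq:noisy_43_uniform_potter_envelope}. I would start from a canonical array with \(A_m^0(x)\sim x^{q}\) at the scale \(T_m^{-1}\), for instance the diagonal truncation above. To it I add one extra atom of mass \(M\,T_m^{-q}\) located at \(\lambda_m=y_m/T_m\), where \(y_m\to\infty\) slowly, for example \(y_m=\log\log T_m\), so the atom stays inside the truncated spectrum. For each fixed \(c\) the atom eventually lies above \(c/T_m\), so \(A_m(c/T_m)/a_m\to c^q\) locally uniformly. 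Its transform contribution, however, is \(M e^{-2c y_m(1+o(1))}\), which vanishes, so a fixed-mass packet is not enough. I would therefore let the packet mass grow: take mass \(M e^{2y_m}T_m^{-q}\) and keep the argument at \(c=1\). The transform then picks up an extra \(M\) at \(c=1\), changing the constant away from \(\Gamma(q+1)2^{-q}\). The envelope is violated because \(A_m(y_m/T_m)/a_m\ge Me^{2y_m}\gg y_m^{q+\epsilon}\). The remaining check is that finite target energy and pointwise stability still hold, which follows since \(e^{2y_m}T_m^{-q}\to0\). The locally uniform statement in \(c\) then follows by monotonicity in \(c\).
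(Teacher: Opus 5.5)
Your proposal is correct and follows the paper's proof item by item. In (i) you use the uniform contraction $q_\eta(\lambda)\le 1-\delta\eta\lambda_\star$ below a spectral gap. In (ii) you compute the exact tail $A(x)=e^{-\tau\lceil\log(1/x)\rceil}$ and exhibit its log-periodic oscillation. In (iii) you use a moving atom at $y_m/T_m$, $y_m\to\infty$, of mass $a_m e^{2y_m}$, which escapes every fixed cutoff, violates the Potter envelope, and adds a fixed extra constant to the transform at $c=1$ ($1$ in the paper, $M$ for you). The paper uses an explicit $k/n^2$ background with $x^\tau$ increments and $y_n=(\tau/4)\log n$, where you use a power-law truncation and $\log\log T_m$.

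One point should be made explicit for the exact SGD filter: the packet's exponent error must be additive, $T_m h_\eta(y_m/T_m)=y_m+O(y_m^2/T_m)=y_m+o(1)$. A merely multiplicative $(1+o(1))$ in the exponent would not survive multiplication by $e^{2y_m}$.
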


\begin{proof}
For (i), pointwise stability gives
\[
q_\eta(\lambda)
=1-\eta\lambda\left[2-\left(1+\frac1B\right)\eta\lambda\right]
\leq1-\delta\eta\lambda_\star=:\varrho<1.
\]
Thus, when the corresponding total weighted masses are bounded,
\(F_{\bm W,>0}(t)\lesssim\varrho^t\) and
\(K_{\bm W}(t)\lesssim\varrho^t\), whereas both cutoff masses
vanish for \(x<\lambda_\star\).  A gap therefore violates the nondegenerate
tail asymptotic, not the no-escape upper envelope.

For (ii), for all sufficiently small \(x\),
\[
A(x):=\sum_{\lambda_j\leq x}w_j
=e^{-\tau\lceil\log(1/x)\rceil},
\qquad
e^{-\tau}x^\tau\leq A(x)\leq x^\tau.
\]
But, with \(T_n=e^{n+\phi}\) and \(0<\phi<1\),
\[
\frac{A(1/T_n)}{T_n^{-\tau}}=e^{-\tau(1-\phi)}.
\]
The limit depends on the logarithmic phase \(\phi\), so
\(T^\tau A(1/T)\) is log-periodic rather than slowly varying.  This is the
geometric-lacunarity boundary \(\gamma=1\) of
\cref{thm:stretched_exponential_scaling}.

For (iii), fix \(\tau>0\), set \(T_n=n\), \(a_n=n^{-\tau}\), and define
background eigenvalues and target-alignment weights
\[
\lambda_{n,k}=\frac{k}{n^2},
\qquad
w_{n,k}=\left(\frac{k}{n^2}\right)^\tau
-\left(\frac{k-1}{n^2}\right)^\tau,
\qquad 1\leq k\leq n^2.
\]
Add one mode at
\[
y_n=\frac{\tau}{4}\log n,
\qquad
\lambda_n^\star=\frac{y_n}{n},
\qquad
w_n^\star=n^{-\tau/2}.
\]
Writing \(A_n\) for the cumulative weighted mass, every compact
\(K\subset(0,\infty)\), for all sufficiently large \(n\), satisfies
\[
\sup_{c\in K}
\left|\frac{A_n(c/n)}{a_n}-c^\tau\right|
=
\sup_{c\in K}
\left|\left(\frac{\lfloor cn\rfloor}{n}\right)^\tau-c^\tau\right|
\longrightarrow0,
\]
because the moving mode lies above \(c/n\).  At its own scale, however,
\[
\frac{A_n(y_n/n)}{a_n}\geq n^{\tau/2},
\]
which exceeds every bound allowed by the Potter envelope,
\(Cy_n^{\tau+\epsilon}=O((\log n)^{\tau+\epsilon})\).  Moreover,
\[
\frac1{a_n}\int e^{-2n\lambda}\,\mathrm dA_n(\lambda)
\longrightarrow
\Gamma(\tau+1)2^{-\tau}+1:
\]
the background gives the Tauberian constant, while the packet contributes
\(a_n^{-1}w_n^\star e^{-2y_n}=1\).  For the exact stable-SGD version, take
\(B=1\), \(\eta=1/2\), and \(t_n=2n\), so \(T_n=\eta t_n=n\) and the
pointwise margin can be chosen uniformly.  The background constant follows
from \cref{thm:noisy_43_uniform_tauberian}; for the packet,
\[
q_\eta(\lambda_n^\star)^{t_n}
=e^{-2n h_\eta(\lambda_n^\star)},
\qquad
h_\eta(\lambda):=-\frac{\log q_\eta(\lambda)}{2\eta}
=\lambda+O(\lambda^2),
\]
and \(nh_\eta(\lambda_n^\star)=y_n+o(1)\), again giving the extra \(1\).
\end{proof}

\begin{proposition}[Pointwise contraction does not imply row stability]
\label{prop:pointwise_not_row_stable}
Even a one-mode constant schedule can satisfy the pointwise condition with a
fixed margin while its Volterra rows exceed one.  In particular, take
\[
B=1,
\qquad \widehat\lambda=1,
\qquad \eta=\frac7{10}.
\]
Then part \emph{(a)} of \cref{ass:td_lrs_joint_schedule_stability} holds with
\(\delta=1/2\), but
\[
\sum_{s<t}K_{t,s}\longrightarrow\frac76>1,
\]
and, for every \(\sigma^2>0\), the exact noisy risk grows exponentially.
\end{proposition}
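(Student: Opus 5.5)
We verify the two claims by direct computation on the single mode. With \(B=1\), \(\widehat\lambda=1\), and \(\eta=7/10\), the one-step factor is
\[
q=1-2\eta+2\eta^2=1-\tfrac{7}{5}+\tfrac{49}{50}=\tfrac{29}{50}\in(0,1).
\]
Pointwise contraction, part \emph{(a)} of \cref{ass:td_lrs_joint_schedule_stability}, is the inequality \(\eta(1+1/B)\widehat\lambda=7/5\leq2-\delta\), which holds with \(\delta=1/2\) (indeed with \(\delta=3/5\)).

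For the rows we use the constant-schedule kernel \(K_{t,s}=\eta^2q^{t-1-s}\), so
\[
\sum_{s<t}K_{t,s}=\eta^2\,\frac{1-q^t}{1-q}\;\longrightarrow\;\frac{\eta^2}{1-q}=\frac{49/100}{21/50}=\frac{7}{6}.
\]
Equivalently, the limit equals the exact mass formula \(\kappa_{\bm W}=(\eta/B)\widehat\lambda/(2-(1+1/B)\eta\widehat\lambda)=(7/10)/(3/5)=7/6\). Row stability therefore fails even though pointwise contraction holds with a fixed margin.

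For the growth claim, the exact Volterra equation \cref{iclr:eq:exact_volterra} is a renewal equation with nonnegative kernel \(K(h)=\eta^2q^h\) of total mass \(7/6>1\). Its generating-function form is
\[
\widetilde R_\sigma(w)=\frac{\widetilde F(w)+\sigma^2w\widetilde K(w)/(1-w)}{1-w\widetilde K(w)},\qquad \widetilde K(w)=\frac{\eta^2}{1-qw}.
\]
The denominator \(1-w\widetilde K(w)\) vanishes where \(w\eta^2=1-qw\), that is, at \(w_0=1/(q+\eta^2)=1/(29/50+49/100)=100/107<1\). This pole lies strictly inside the unit disk and is simple, and the numerator is positive there because \(\sigma^2>0\) and all coefficients are nonnegative. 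Hence \(R_{\sigma,t}\) grows like \((107/100)^t\), with a positive prefactor.

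A more elementary route avoids generating functions. In the one-mode case, \(\rho^2\) is the whole error, and the modal recursion \cref{eq:td_lrs_joint_mode_recursion} gives
\[
\mathbb E\rho^2_{t+1}=(q+\eta^2)\,\mathbb E\rho_t^2+\eta^2\sigma^2=\tfrac{107}{100}\,\mathbb E\rho_t^2+\eta^2\sigma^2.
\]
The driving term \(\eta^2\sigma^2>0\) keeps the sequence positive, so it grows at least like \((107/100)^t\). No step is a real obstacle; the only point needing care is to add \(\eta^2\) to \(q\), because the single mode carries the entire error \(\|\bm e_t\|^2=\rho_t^2\).
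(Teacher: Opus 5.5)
Your proposal is correct and essentially the paper's proof: you compute the same factor \(q=29/50\) and the same row mass \(\tfrac76[1-(29/50)^t]\), and you reduce the one-mode dynamics to the same scalar recursion \(R_{\sigma,t+1}=\tfrac{107}{100}R_{\sigma,t}+\tfrac{49}{100}\sigma^2\). Your generating-function pole at \(w_0=100/107\) is a valid but unneeded second route, and the paper additionally records the general thresholds \(x<2B/(B+1)\) (pointwise) versus \(x<2B/(B+2)\) (row stability), which your numerical case does not require.
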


\begin{proof}
Writing \(x=\eta\widehat\lambda\), the one-mode decay factor and limiting row
mass are
\[
q=1-2x+\left(1+\frac1B\right)x^2,
\qquad
\lim_{t\to\infty}\sum_{s<t}K_{t,s}
=\frac{x/B}{2-(1+1/B)x}.
\]
Pointwise contraction requires \(x<2B/(B+1)\), whereas row stability requires
the strictly stronger \(x<2B/(B+2)\).  At the displayed numerical choice,
\[
q=\frac{29}{50},
\qquad
\eta\left(1+\frac1B\right)\widehat\lambda
=\frac75\leq\frac32=2-\frac12,
\qquad
\sum_{s<t}K_{t,s}
=\frac76\left[1-\left(\frac{29}{50}\right)^t\right].
\]
The exact recursion \cref{eq:td_lrs_joint_mode_recursion} reduces to
\[
R_{\sigma,t+1}
=\frac{107}{100}R_{\sigma,t}+\frac{49}{100}\sigma^2,
\]
so the full second moment diverges whenever
\(R_{\sigma,0}+\sigma^2>0\), although every homogeneous mode contracts.
\end{proof}

Thus a gap removes the low-spectrum mechanism, lacunarity removes regular
variation, and spectral mass that shifts with width can invalidate the uniform
comparison in the joint width--time limit; pointwise and row stability
respectively control modes and accumulated feedback.

\paragraph{Finite-width realizations of the six constructions.}
We complement the preceding componentwise results with literal finite-width
minibatch-SGD trajectories on fresh Gaussian covariates.  The left column of
each figure reports the population excess risk, evaluated exactly from the
parameter error, while the middle and right columns report the corresponding
finite-spectrum forcing and one-injection memory together with their cumulative
spectral masses.  We use intrinsic time \(T=\eta t\) and normalize every
target so that
\(\sum_j\lambda_j\lvert\theta_j^\star\rvert^2=1\).  Multiple widths in the
first two rows expose the finite-width cutoff, and multiple batch sizes in the
rapid-target row probe the memory-driven tail.  At any fixed width, all
positive modes eventually relax exponentially; the power-law descriptions
below concern the displayed pre-cutoff width--time window.

For the mass-compensated sparse target, set \(c_{k^2}=2k-1\) for
\(k\geq1\) and \(c_j=0\) otherwise.  The four rows of
\cref{fig:spectral_six_constructions_powerlike} have the displayed risk forms
\begin{align*}
\lambda_j=j^{-0.8},\quad
\lvert\theta_j^\star\rvert^2\propto j^{-0.6}
&\quad\Longrightarrow\quad R(T)\asymp T^{-1/2},\\
\lambda_j=j^{-0.8},\quad
\lvert\theta_j^\star\rvert^2\propto j^{-0.6}c_j
&\quad\Longrightarrow\quad R(T)\asymp T^{-1/2},\\
\lambda_j=e^{1-\sqrt j},\quad
\lvert\theta_j^\star\rvert^2\propto j^{-3/2}
&\quad\Longrightarrow\quad
R(T)\asymp T^{-1}(\log T)^{-2},\\
\lambda_j=e^{-1.4(j-1)},\quad
\lvert\theta_j^\star\rvert^2\propto\lambda_j^{0.75}
&\quad\Longrightarrow\quad
R(T)=T^{-7/4}\Phi(\log T),
\end{align*}
where \(\Phi\) is bounded, nonconstant, and log-periodic.  Thus all four risk
curves show an overall power-law decrease on the resolved finite-width window,
although geometric spacing prevents the last row from having a pure-power
asymptotic.

\begin{figure}[p]
\centering
\includegraphics[width=\linewidth]{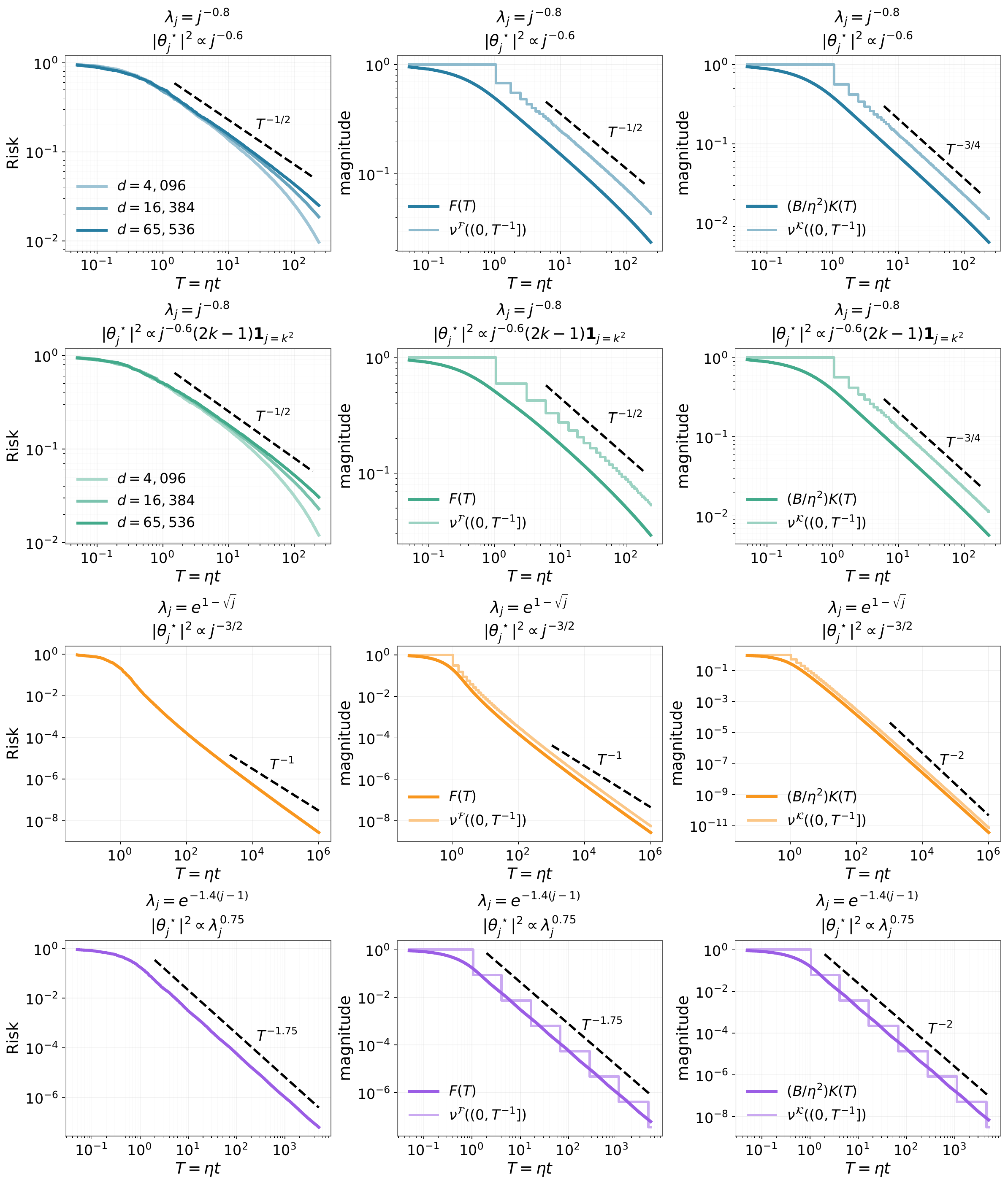}
\caption{Finite-width realizations of minibatch-SGD excess risk.  Across the
four constructions, the risk curves exhibit an overall power-law decay over
the displayed finite-width window, including logarithmic corrections and
log-periodic modulation.}
\label{fig:spectral_six_constructions_powerlike}
\end{figure}

The two rows of \cref{fig:spectral_six_constructions_boundaries} isolate the
remaining boundary mechanisms:
\[
\lambda_j=j^{-0.8},\quad
\lvert\theta_j^\star\rvert^2\propto e^{-j}
\quad\Longrightarrow\quad
F(T)=\exp\{-\Theta(T^{5/9})\},\qquad
R(T)\asymp T^{-3/4}.
\]
For the ideal gapped construction, set
\[
(\lambda_j,\lvert\theta_j^\star\rvert^2)
=
\begin{cases}
\bigl(10^{-(j-1)/63},Cj^{-1}\bigr), & 1\leq j\leq64,\\
(0,0), & j>64.
\end{cases}
\]
Its risk satisfies \(R(T)=O(e^{-cT})\) for some \(c>0\).  The first row
removes slow-direction target energy without
removing the target-independent memory tail, whereas the second removes the
low-spectrum mechanism itself.  Together, the two figures illustrate that the
cumulative weighted spectral masses in \cref{iclr:thm:spectral_iff}, rather
than the pointwise form of either the spectrum or the target alone, determine
the long-time curve shape.

\begin{figure}[htbp]
\centering
\includegraphics[width=\linewidth]{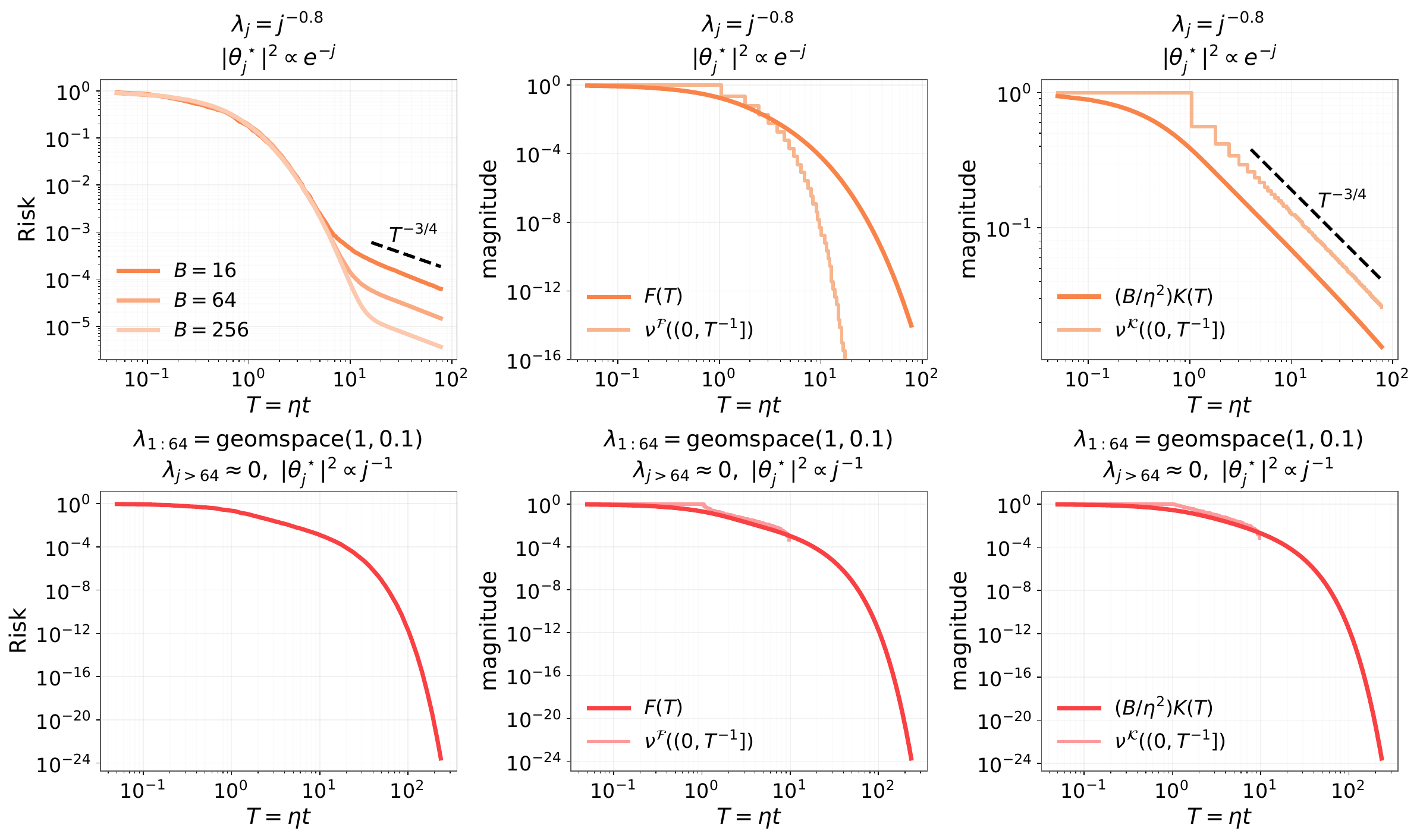}
\caption{Finite-width realizations of minibatch-SGD excess risk.  The rapidly
decaying target develops a memory-driven \(T^{-3/4}\) tail, whereas the
spectrally gapped construction relaxes exponentially.}
\label{fig:spectral_six_constructions_boundaries}
\end{figure}

Together, \cref{fig:spectral_six_constructions_powerlike,%
fig:spectral_six_constructions_boundaries} isolate the component-level laws
for \(F\) and \(K\); the next subsection shows how schedule-dependent
injection and Volterra feedback transform these component laws into the
observed risk.

\subsection{Long-memory schedule transfer with feedback}
\label{app:rv_schedule_transfer}

Having separated the spectral and stability assumptions, we now derive the
response to label noise.

Define the first label-noise injection and the exact label-noise excess by
\(N_t:=\sigma^2\sum_{s<t}K_{t,s}\) and
\(Z_t:=R_{\sigma,t}-R_{0,t}\), respectively.
Subtracting the clean and noisy versions of
\cref{eq:td_lrs_joint_volterra} gives the exact identity
\begin{equation}
Z_t=N_t+\sum_{s<t}K_{t,s}Z_s.
\label{eq:rv_noise_excess_volterra}
\end{equation}

For the continuum comparison used below, define the counterpart of \(N_t\) by
\begin{equation}
N_{\mathrm c}(T):=\sigma^2\int_0^T
\frac{k(T-u)}{r(u)}\,\mathrm du.
\label{eq:rv_continuum_comparison}
\end{equation}

First isolate a feedback estimate shared by forward transfer and converse.

\begin{lemma}[Lower-order feedback below the memory ceiling]
\label{lem:rv_feedback_invisibility}
Assume \cref{eq:rv_kernel_rv,eq:rv_sharp_bulk_kernel,eq:rv_noise_excess_volterra}
with \(K_{t,s}\geq0\).
For \(0<q<q_{\mathcal K}\), if either \(N_t\) or \(Z_t\) is asymptotic to
\(T_t^{-q}L(T_t)\), where \(L\) is eventually positive and slowly varying,
then
\begin{equation*}
Z_t\sim N_t\sim T_t^{-q}L(T_t).
\end{equation*}
\end{lemma}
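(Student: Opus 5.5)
The plan is to show that the feedback term $\sum_{s<t}K_{t,s}Z_s$ in \cref{eq:rv_noise_excess_volterra} is $o(T_t^{-q}L(T_t))$ whenever $Z_s$ (or $N_s$) decays like $T_s^{-q}L(T_s)$ with $q<q_{\mathcal K}$. Both directions then follow. If $Z_t\sim T_t^{-q}L(T_t)$, then $N_t=Z_t-\sum_s K_{t,s}Z_s$ has the same asymptotic. If instead $N_t\sim T_t^{-q}L(T_t)$, we first need an a priori bound $Z_t\lesssim T_t^{-q}L(T_t)$. Once that bound is in hand, the same feedback estimate gives $Z_t=N_t+o(N_t)$.

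\textbf{Key feedback estimate.} Fix $\varepsilon\in(0,1/2)$. Let $v$ be a positive sequence with $v_s\lesssim T_s^{-q}L(T_s)$ for $T_s\geq T_0$ and $v_s$ bounded for $T_s<T_0$. We split $\sum_{s<t}K_{t,s}v_s$ into three ranges.

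\emph{Bulk ages, $H_{t,s}\geq\varepsilon T_t$.} Here $\Psi_{t,s}\sim k(H_{t,s})$ by \cref{eq:rv_sharp_bulk_kernel}, and $k$ is regularly varying by \cref{eq:rv_kernel_rv}. Injections at times $T_{s}\geq\varepsilon T_t$ then contribute at most
\[
k(\varepsilon T_t)\,T_t^{-q}L(T_t)\sum_s \Delta T_s/r_s.
\]
This is of relative order $k(T_t)\int_0^{T_t}r^{-1}\,\mathrm du$. We need that factor to vanish, or at least be small. The natural route is the observation that $N_t\asymp\sigma^2\int_0^{T_t}k(T_t-u)r(u)^{-1}\,\mathrm du$ is itself comparable to $T_t^{-q}L(T_t)$, which tends to zero. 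Alternatively we can use the identity $q_{\mathcal N}=\vartheta+q_{\mathcal K}-1$: the hypothesis $q<q_{\mathcal K}$ forces $\vartheta<1$, so $k(T)\,T/r(T)\asymp T^{-q}\to0$.

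\emph{Early injections, $T_s\leq\varepsilon T_t$.} These are handled the same way, using $\sum_{T_s\leq\varepsilon T_t}(\Delta T_s/r_s)\,v_s$. Because $q<q_{\mathcal K}\le 1$ and $\vartheta<1$, Karamata's theorem (\cref{lem:rv_uniform_karmata}) shows that this piece is bounded by a constant multiple of $\varepsilon^{1-\vartheta-q}\,T_t\,r(T_t)^{-1}\,T_t^{-q}L$ times $k(T_t)$. That is again $o(T_t^{-q}L(T_t))$.

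\emph{Recent injections, $H_{t,s}<\varepsilon T_t$.} Here $v_s\asymp T_t^{-q}L(T_t)$. The kernel mass is bounded by \cref{eq:rv_recent_cell_envelope} together with the envelope \cref{eq:rv_integrated_envelope}, giving at most
\[
r^{-1}\bigl(h_tG(0)+C\varepsilon^{1-q_{\mathcal K}-\xi}T_tk(T_t)\bigr)\to0.
\]

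In every range the kernel row mass over the relevant window is of order $T_t\,k(T_t)/r(T_t)$, which is $\asymp T_t^{-q}L(T_t)\to0$. So the feedback is $o(1)$ times the scale $T_t^{-q}L(T_t)$.

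\textbf{A priori bound for the direction starting from $N$.} When the hypothesis is on $N_t$, I will get $Z_t\lesssim T_t^{-q}L(T_t)$ by a bootstrap. Set $M_t:=\sup_{s\le t}Z_s/(T_s^{-q}L(T_s))$, taking the supremum beyond a burn-in time. The row bound just proved gives $M_t\le C+o(1)M_t$, so $M_t$ stays bounded. The burn-in segment contributes a bounded $Z_s$ weighted by the early-injection kernel, and that weight is $o$ of the scale.

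\textbf{Main obstacle.} The hardest step is making the three-range split uniform. In particular, the smallness of the kernel row mass must be expressed as $k(T)\,T/r(T)\to0$ and justified from the given hypotheses alone. The lemma assumes only \cref{eq:rv_kernel_rv,eq:rv_sharp_bulk_kernel}, without an explicit recent-age envelope. I would therefore first try to derive $\vartheta<1$ and the smallness of that factor from the hypothesized asymptotic of $N_t$ or $Z_t$ via the continuum comparison \cref{eq:rv_continuum_comparison}, importing the envelope conditions of \cref{ass:rv_long_memory_kernel_control} where needed.
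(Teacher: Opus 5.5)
\textbf{There is a genuine gap.} Your argument proves the lemma only under hypotheses the lemma does not have. The statement assumes just \cref{eq:rv_kernel_rv,eq:rv_sharp_bulk_kernel,eq:rv_noise_excess_volterra} with $K_{t,s}\geq0$. Your three-range split additionally uses:
\begin{itemize}
\item a regularly varying ratio $r(u)\sim u^{\vartheta}L_r(u)$;
\item Karamata's theorem (\cref{lem:rv_uniform_karmata}, which needs \cref{ass:rv_schedule_path});
\item the recent-age envelopes \cref{eq:rv_recent_cell_envelope,eq:rv_integrated_envelope} and the mesh condition;
\item the forward formula $q_{\mathcal N}=\vartheta+q_{\mathcal K}-1$, to force $\vartheta<1$.
\end{itemize}
All of these are available inside \cref{thm:rv_discrete_transfer}, so your split could serve there. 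But the lemma is also the first step of the converse \cref{thm:rv_exact_discrete_identification}. There $1/r$ is only eventually nonincreasing, and its regular variation is the conclusion. Your fallback of deriving $\vartheta<1$ from the hypothesized asymptotic via the continuum comparison is exactly that converse (\cref{lem:rv_output_driven_transparency,prop:rv_continuum_converse}). That converse itself starts from this lemma, so the route is circular.

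\textbf{The missing idea.} You never need to compute the row mass through $r$. By definition $N_t=\sigma^2\sum_{s<t}K_{t,s}$, so the row mass \emph{is} $N_t/\sigma^2$. Split the feedback at a fixed index $S$, not at a fraction of $T_t$.
\begin{itemize}
\item For $s<S$: sharp bulk convergence and regular variation of $k$ give $\sum_{s<S}K_{t,s}Z_s=O_S(k(T_t))$. This is $o(T_t^{-q}L(T_t))$ because $q<q_{\mathcal K}$.
\item For $s\geq S$: $\sum_{S\le s<t}K_{t,s}Z_s\leq(\sup_{s\geq S}Z_s)\,N_t/\sigma^2$.
\end{itemize}
Hence $0\le Z_t-N_t\le O_S(k(T_t))+\sigma^{-2}(\sup_{s\ge S}Z_s)N_t$.

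If $Z_t\sim T_t^{-q}L(T_t)$, positivity gives $N_t\le Z_t$ and $Z_s\to0$. Divide by $Z_t$, let $t\to\infty$ and then $S\to\infty$, to get $Z_t-N_t=o(Z_t)$.

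If instead $N_t\sim T_t^{-q}L(T_t)$, the row sum tends to zero. It is therefore eventually at most $1/2$, and induction on the Volterra equation bounds $Z$ uniformly. Then $Z_t\le N_t(1+\sup_sZ_s/\sigma^2)=O(N_t)\to0$, and the same inequality divided by $N_t$ finishes the proof.

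This needs no weighted bootstrap, no schedule regularity, and no recent-age envelope.
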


\begin{proof}
Positivity in \cref{eq:rv_noise_excess_volterra} gives
\(0\leq N_t\leq Z_t\).  For fixed \(S\), sharp bulk convergence and regular
variation of \(k\) give
\(\sum_{s<S}K_{t,s}Z_s=O_S(k(T_t))\).
The remaining feedback satisfies
\begin{equation}
0\leq Z_t-N_t
\leq O_S(k(T_t))
+\frac{\sup_{s\geq S}Z_s}{\sigma^2}N_t.
\label{eq:rv_feedback_prefix_tail}
\end{equation}

If \(Z_t\sim T_t^{-q}L(T_t)\), then \(Z_t\to0\), \(k(T_t)=o(Z_t)\), and
\(N_t\leq Z_t\).  Divide \cref{eq:rv_feedback_prefix_tail} by \(Z_t\), then
send \(t\to\infty\) and \(S\to\infty\), obtaining \(Z_t-N_t=o(Z_t)\).

If instead the equivalent holds for \(N_t\), its row sum tends to zero; after
a finite prefix it is at most \(1/2\), so induction bounds \(Z\), and
\(Z_t\leq N_t+(N_t/\sigma^2)\sup_s Z_s=O(N_t)\).
Thus \(Z_t\to0\), and division of \cref{eq:rv_feedback_prefix_tail} by
\(N_t\), using \(k(T_t)=o(N_t)\), finishes the proof.
\end{proof}

For an eventually positive ratio \(r\) that is regularly varying with
exponent \(\vartheta\), write its slowly varying factor and cumulative
injection as \(L_r(u):=u^{-\vartheta}r(u)\),
\(V(T):=\int_0^T r(u)^{-1}\,\mathrm du\).  For the fixed-system endpoint
branches below, also write \(V_\infty:=\lim_{T\to\infty}V(T)\) and denote the
extended-valued endpoint amplitude by
\[
C:=
\sum_{s\geq0}\frac{T_{s+1}-T_s}{r_s}
\bigl(\sigma^2+Z_s\bigr)\in[0,\infty].
\]

\begin{theorem}[Sharp discrete schedule transfer]
\label{thm:rv_discrete_transfer}
Fix \(\sigma^2>0\), assume
\cref{ass:rv_schedule_path,ass:rv_long_memory_kernel_control}, and retain
\(r(u)=u^\vartheta L_r(u)\) and the preceding \(V\).  In a fixed
infinite-spectrum system, also retain \(V_\infty\) and \(C\).

If \(\vartheta<1\), then
\begin{equation}
N_t
\sim
\sigma^2\operatorname{Beta}(1-\vartheta,1-q_{\mathcal K})
\frac{T_tk(T_t)}{r(T_t)}
\sim
\sigma^2\operatorname{Beta}(1-\vartheta,1-q_{\mathcal K})
T_t^{-(\vartheta+q_{\mathcal K}-1)}
\frac{L_{\mathcal K}(T_t)}{L_r(T_t)}.
\label{eq:rv_direct_beta_law}
\end{equation}
For a fixed infinite-spectrum system, if \(\vartheta=1\) and
\(V(T)\to\infty\), then
\begin{equation}
N_t\sim\sigma^2k(T_t)V(T_t).
\label{eq:rv_theta_one_direct}
\end{equation}
For such a fixed system, if instead \(V_\infty<\infty\), including every
\(\vartheta>1\),
\begin{equation}
N_t\sim\sigma^2V_\infty k(T_t).
\label{eq:rv_integrable_direct}
\end{equation}

For the full response, assume in addition part \emph{(b)} of
\cref{ass:td_lrs_joint_schedule_stability}.
If \(1-q_{\mathcal K}<\vartheta<1\), then
\begin{equation}
Z_t\sim N_t.
\label{eq:rv_full_beta_law}
\end{equation}
For a fixed infinite-spectrum system, the same conclusion holds when
\(\vartheta=1\) and \(V(T)\to\infty\).  For such a fixed system, if
\(V_\infty<\infty\), including every \(\vartheta>1\), then
\begin{equation}
Z_t\sim C k(T_t),
\qquad C<\infty.
\label{eq:rv_saturation_constant}
\end{equation}
Thus every fixed-system branch with \(\vartheta\geq1\) has noise exponent
\(q_{\mathcal K}\), although at \(\vartheta=1\) its slowly varying factor
depends on whether \(V\) converges.
\end{theorem}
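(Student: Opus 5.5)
The plan is to treat the direct injection \(N_t\) first, as a Riemann-type sum, and then pass to the full response \(Z_t\) through the exact identity \cref{eq:rv_noise_excess_volterra}. Write \(N_t=\sigma^2\sum_{s<t}(T_{s+1}-T_s)\Psi_{t,s}/r_s\), and fix \(\varepsilon\in(0,1/2)\). Split the injection times into three groups. The bulk set is \(\{s:\ \varepsilon T_t\le T_s,\ H_{t,s}\ge\varepsilon T_t\}\). The early set is \(T_s<\varepsilon T_t\), and the recent set is \(H_{t,s}<\varepsilon T_t\).

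\emph{Bulk.} On the bulk set, \cref{eq:rv_sharp_bulk_kernel} replaces \(\Psi_{t,s}\) by \(k(H_{t,s})\) with relative error \(o(1)\). The mesh condition \cref{eq:rv_mesh_condition} gives \(h_t/T_t\to0\). Together with local uniform regular variation of \(k\) and of \(r\) (\cref{ass:rv_schedule_path}(a)), this lets the sum be rewritten after the substitution \(u=yT_t\) as
\[
\frac{T_tk(T_t)}{r(T_t)}\int_\varepsilon^{1-\varepsilon}y^{-\vartheta}(1-y)^{-q_{\mathcal K}}\,\mathrm dy\,(1+o(1)).
\]
Eventual monotonicity of \(1/r\) controls the piecewise-constant interpolation error.

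\emph{Endpoints for \(\vartheta<1\).} On the early set, \(k(H)\asymp k(T_t)\) holds by regular variation, since \(H\ge T_t/2\). The Potter bound \cref{eq:rv_uniform_schedule_potter} and \cref{lem:rv_uniform_karmata} then bound the contribution by \(O(\varepsilon^{1-\vartheta-\epsilon})\,T_tk(T_t)/r(T_t)\). On the recent set, \(r_s\asymp r(T_t)\) by Potter. The envelope \cref{eq:rv_kernel_envelope} with the recent-cell bound \cref{eq:rv_recent_cell_envelope} and \cref{eq:rv_integrated_envelope} gives \(O(h_t+\varepsilon^{1-q_{\mathcal K}-\xi}T_tk(T_t))/r(T_t)\). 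Letting \(t\to\infty\) and then \(\varepsilon\downarrow0\) yields the Beta law \cref{eq:rv_direct_beta_law}.

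\emph{Endpoints for \(\vartheta\ge1\) (fixed system).} Here the early injections dominate. For \(s\) with \(T_s\le\varepsilon T_t\), the factor \(\Psi_{t,s}/k(T_t)\to1\) uniformly as \(\varepsilon\to0\), giving \(\sigma^2k(T_t)V(\varepsilon T_t)\). If \(V_\infty<\infty\), this gives \cref{eq:rv_integrable_direct}. If \(\vartheta=1\) and \(V\to\infty\), then \(V\) is slowly varying, so \(V(\varepsilon T_t)\sim V(T_t)\). The middle bulk then contributes \(k(T_t)\bigl[V(T_t)-V(\varepsilon T_t)\bigr]=o(k(T_t)V(T_t))\). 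The recent part is \(O\bigl(T_tk(T_t)/r(T_t)\bigr)\), which is \(o(k(T_t)V(T_t))\) because \(T/r(T)\) is slowly varying and \(o(V(T))\) by Karamata's theorem at index \(-1\). This gives \cref{eq:rv_theta_one_direct}.

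\emph{Full response.} For \(1-q_{\mathcal K}<\vartheta<1\), the exponent \(q=\vartheta+q_{\mathcal K}-1\) lies in \((0,q_{\mathcal K})\). Applying \cref{lem:rv_feedback_invisibility} to \(N_t\) then gives \(Z_t\sim N_t\). The \(\vartheta=1\) divergent case uses the same prefix/tail splitting from that lemma's proof. There \(N_t\to0\), which needs \(k(T)V(T)\to0\); this follows because \(V\) is slowly varying and \(q_{\mathcal K}>0\). Row stability bounds \(Z\), so the feedback is \(o(N_t)\).

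\emph{Integrable case.} When \(V_\infty<\infty\), I rerun the direct argument with the injection weights \((T_{s+1}-T_s)(\sigma^2+Z_s)/r_s\) in place of \((T_{s+1}-T_s)\sigma^2/r_s\). Part (b) of \cref{ass:td_lrs_joint_schedule_stability} makes \(Z\) bounded, so \(C\le(\sigma^2+\sup Z)V_\infty<\infty\). The early-time part converges to \(Ck(T_t)\). For the late injections, I will show \(Z_s\to0\): first \(Z_s\le N_s+(\sup Z)\sum_{u<s}K_{s,u}\), and the row sum tends to zero because \(V_\infty<\infty\). With \(Z_s\to0\), the late-injection contribution is again \(o(k(T_t))\), as in the direct case.

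The main obstacle is the recent endpoint when \(\vartheta\ge1\). There \(r\) grows, so the comparison \(T_tk(T_t)/r(T_t)=o(k(T_t))\) has to be verified carefully using \(h_t\) and the integrated envelope, and at \(\vartheta=1\) it rests on Karamata's index-\(-1\) theorem. I would try to settle it first via the bound \(\int_0^{\varepsilon T}G\lesssim\varepsilon^{1-q_{\mathcal K}-\xi}Tk(T)\) combined with \(\sup_{u\ge(1-\varepsilon)T}1/r(u)\asymp1/r(T)\).
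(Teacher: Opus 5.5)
Your proposal is correct and follows essentially the same route as the paper. It uses the three-region split with the Beta Riemann limit in the bulk, Potter/Karamata at the early endpoint, and the recent-cell envelope at the late endpoint. It then applies \cref{lem:rv_feedback_invisibility} for the interior full response. For \(\vartheta\ge1\) it uses the prefix--tail argument together with \(T/r(T)=o(V(T))\) (or \(T/r(T)\to0\) when \(V_\infty<\infty\)) via Karamata at index \(-1\).

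The differences are cosmetic. In the saturated case you split at \(\varepsilon T_t\) rather than at a fixed prefix index. Your extra step \(Z_s\to0\) is harmless but not needed, because boundedness of \(Z\) already makes the late injections \(o(k(T_t))\).
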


\begin{proof}
For \(\vartheta<1\), split \(N_t\) at \(T=T_t\) into
\[
T_{s+1}\leq\varepsilon T,
\qquad
\varepsilon T<T_{s+1}<(1-\varepsilon)T,
\qquad
T_s\geq(1-\varepsilon)T.
\]
On the middle region, sharp bulk transfer, schedule convergence, and fine mesh
give
\begin{equation}
\frac{r(T)}{Tk(T)}
\sum_{\varepsilon T<T_{s+1}<(1-\varepsilon)T}
\frac{T_{s+1}-T_s}{r_s}\Psi_{t,s}
\longrightarrow
\int_\varepsilon^{1-\varepsilon}
x^{-\vartheta}(1-x)^{-q_{\mathcal K}}\,\mathrm dx.
\label{eq:rv_bulk_riemann_limit}
\end{equation}

For the early region, \cref{eq:rv_sharp_bulk_kernel} and the uniform Potter
bounds give
\[
\sum_{T_{s+1}\leq\varepsilon T}
\frac{T_{s+1}-T_s}{r_s}\Psi_{t,s}
\lesssim
k(T)\int_0^{\varepsilon T}\frac{\mathrm du}{r(u)}.
\]
\Cref{lem:rv_uniform_karmata} yields, for
every sufficiently small \(\xi>0\),
\begin{equation}
\limsup_{t\to\infty}
\frac{r(T)\,\textnormal{early}}{Tk(T)}
\lesssim\varepsilon^{1-\vartheta-\xi}.
\label{eq:rv_early_endpoint}
\end{equation}

On the recent region, uniform regular variation and eventual monotonicity give
\(r_s^{-1}\lesssim r(T)^{-1}\).  Applying
\cref{eq:rv_recent_cell_envelope} with \(x=\varepsilon T\) therefore yields
\begin{equation}
\limsup_{t\to\infty}
\frac{r(T)\,\textnormal{recent}}{Tk(T)}
\lesssim\varepsilon^{1-q_{\mathcal K}-\xi},
\label{eq:rv_recent_endpoint}
\end{equation}
with the mesh contribution vanishing by \cref{eq:rv_mesh_condition}.  Sending
first \(t\to\infty\), then \(\varepsilon\downarrow0\), in
\cref{eq:rv_bulk_riemann_limit,eq:rv_early_endpoint,eq:rv_recent_endpoint}
gives
\[
\int_0^1x^{-\vartheta}(1-x)^{-q_{\mathcal K}}\,\mathrm dx
=\operatorname{Beta}(1-\vartheta,1-q_{\mathcal K}),
\]
proves \cref{eq:rv_direct_beta_law}.

For \(1-q_{\mathcal K}<\vartheta<1\),
\cref{eq:rv_direct_beta_law} has exponent
\(-q\), \(q=\vartheta+q_{\mathcal K}-1\in(0,q_{\mathcal K})\), so
\cref{lem:rv_feedback_invisibility} gives \cref{eq:rv_full_beta_law}.

For the remaining fixed-system branches, first let \(\vartheta=1\).  The cumulative injection
\(V(T)=\int_0^T\mathrm du/r(u)\) is slowly varying.  If \(V(T)\to\infty\), boundary
Karamata theory gives
\[
\frac{T}{r(T)V(T)}\longrightarrow0,
\qquad
\frac{V(\varepsilon T)}{V(T)}\longrightarrow1.
\]
In the same split, the early part normalized by \(k(T)V(T)\) tends to one as
\(\varepsilon\downarrow0\); bulk and recent parts vanish because
\(T/[r(T)V(T)]\to0\).  This proves \cref{eq:rv_theta_one_direct}.  The
boundedness argument of \cref{lem:rv_feedback_invisibility}, followed by
\cref{eq:rv_feedback_prefix_tail}, gives \(Z_t\to0\) and \(Z_t\sim N_t\)
because \(k(T_t)=o(N_t)\).

If \(V_\infty<\infty\), the following prefix--tail argument with \(Z_s=0\)
gives \cref{eq:rv_integrable_direct}.  Row stability bounds \(Z\), so
\((T_{s+1}-T_s)(\sigma^2+Z_s)/r_s\) is summable.  For every fixed
\(s\), \cref{eq:rv_sharp_bulk_kernel} gives
\(\Psi_{t,s}/k(T_t)\to1\).  For the remaining tail, first keep
\(T_{s+1}\leq(1-\varepsilon)T_t\), where Potter bounds multiply its small
mass by an \(\varepsilon\)-dependent constant; the recent endpoint is
\(o(k(T_t))\) since \(T/r(T)\to0\).  Prefix--tail limits followed by
\(\varepsilon\downarrow0\) prove \cref{eq:rv_saturation_constant}, also for
\(\vartheta>1\).
\end{proof}

At \(\vartheta=1-q_{\mathcal K}\), \cref{eq:rv_direct_beta_law} is slowly
varying and dominates positive-power clean decay unless a triangular
normalization vanishes.  Below it, the row mass \(N_t/\sigma^2\) grows
polynomially, contradicting part \emph{(b)} of
\cref{ass:td_lrs_joint_schedule_stability}; this is inadmissible, not a stable
growing-loss regime.

\subsection{Integrable-memory schedule transfer}
\label{app:rv_integrable_memory}

For \(q_{\mathcal K}>1\), recent injections see total memory mass while early
ones see the tail of \(k\).  We prove the sharp continuum endpoint asymptotic, then
transfer its order to the exact gap.  The discrete step requires all-age
memory because low-spectrum tails do not determine bounded-age mass; these are
the quantified conditions underlying the integrable-memory part of
\cref{thm:rv_joint_schedule}.

For a nonnegative locally bounded memory profile \(k\) and injection density
\(1/r\), write their total masses as
\(\bar k:=\int_0^\infty k(v)\,\mathrm dv\) and
\(V_\infty:=\int_0^\infty\mathrm du/r(u)\),
allowing the value \(+\infty\) before integrability is assumed.
The proposition gives exact constants for \(N_{\mathrm c}\); the discrete
theorem uses only its order consequence.

\begin{proposition}[Two-endpoint direct response under integrable memory]
\label{prop:rv_integrable_convolution}
Let \(k\geq0\) be locally bounded and eventually positive, and let \(r\) be
positive with \(1/r\) locally bounded and eventually positive.  Suppose \(k\)
is eventually nonincreasing,
\[
k(T)\sim T^{-q_{\mathcal K}}L_{\mathcal K}(T),
\qquad q_{\mathcal K}>1,
\qquad \bar k\in(0,\infty),
\]
and suppose \(1/r\) is eventually monotone and regularly varying with
exponent \(-\vartheta\).  Then
\begin{equation}
N_{\mathrm c}(T)\sim\sigma^2
\begin{cases}
\bar k/r(T),&\vartheta<q_{\mathcal K},\\[0.25em]
\bar k/r(T)+V_\infty k(T),&\vartheta=q_{\mathcal K},\\[0.25em]
V_\infty k(T),&\vartheta>q_{\mathcal K},
\end{cases}
\label{eq:rv_integrable_direct_law}
\end{equation}
where \(V_\infty<\infty\) in the last two branches.  In particular,
\begin{equation*}
N_{\mathrm c}(T)\asymp
\frac{1}{r(T)}+k(T)\int_0^T\frac{\mathrm du}{r(u)}.
\end{equation*}
Thus the direct continuum response has power exponent
\(\min\{\vartheta,q_{\mathcal K}\}\).  At \(\vartheta=0\), the direct response has
zero power exponent.  If \(\vartheta<0\), it grows with a positive power.
\end{proposition}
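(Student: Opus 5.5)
The plan is to split the convolution \(N_{\mathrm c}(T)=\sigma^2\int_0^T k(T-u)r(u)^{-1}\,\mathrm du\) at \(u=T/2\) into a \emph{recent} part \(u\in[T/2,T]\) and an \emph{early} part \(u\in[0,T/2]\). Each piece is evaluated by dominated convergence. All proportionality constants are fixed by the two total masses \(\bar k\) and \(V_\infty\).

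\emph{Recent endpoint.} Substitute \(v=T-u\), so the recent part is \(\int_0^{T/2}k(v)\,r(T-v)^{-1}\,\mathrm dv\). Since \(1/r\) is eventually monotone and regularly varying, \(r(T)/r(T-v)\to1\) for each fixed \(v\). Uniformly on \(v\le T/2\), this ratio is bounded by a constant via the uniform convergence theorem on \([1/2,1]\). Dominated convergence with the integrable majorant \(Ck\) then gives recent \(\sim\bar k/r(T)\). Local boundedness of \(k\) handles small \(v\), and \(\bar k<\infty\) handles the tail.

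\emph{Early endpoint.} For \(u\le T/2\), \(T-u\in[T/2,T]\), and eventual monotonicity plus regular variation of \(k\) give \(k(T-u)/k(T)\to(1-x)^{-q_{\mathcal K}}\) with \(x=u/T\), uniformly bounded by a constant. There are then two cases.
\begin{itemize}
\item If \(V_\infty<\infty\), dominated convergence against the finite measure \(\mathrm du/r(u)\) gives early \(\sim V_\infty k(T)\); the mass beyond \(T/2\) is \(o(V_\infty)\).
\item If \(V_\infty=\infty\), which forces \(\vartheta\le1<q_{\mathcal K}\), Karamata (\cref{lem:rv_uniform_karmata}) or its boundary version at \(\vartheta=1\) gives early \(\lesssim k(T)V(T)\). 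Here \(V(T)\lesssim T/r(T)\) for \(\vartheta<1\), and \(V\) is slowly varying at \(\vartheta=1\).
\end{itemize}

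\emph{Comparison of the two endpoints.} The ratio of early to recent is of order \(r(T)k(T)V(T)\), which is regularly varying with index \(\vartheta-q_{\mathcal K}\) when \(V_\infty<\infty\). So early is \(o(\text{recent})\) when \(\vartheta<q_{\mathcal K}\), and recent is \(o(\text{early})\) when \(\vartheta>q_{\mathcal K}\). In the divergent case the index is \(\le 1-q_{\mathcal K}<0\) (times a slowly varying factor at \(\vartheta=1\)), so early is negligible there too. At \(\vartheta=q_{\mathcal K}>1\), \(V_\infty<\infty\) and both asymptotics add, since each piece has its own sharp equivalent. This yields \eqref{eq:rv_integrable_direct_law}.

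The \(\asymp\) summary follows from the same three branches, because \(\int_0^T\mathrm du/r\) is either \(\sim V_\infty\) or dominated as above. The exponent is \(\min\{\vartheta,q_{\mathcal K}\}\). The \(\vartheta=0\) claim is that \(\bar k/r(T)\) is slowly varying, and \(\vartheta<0\) makes \(1/r\) grow polynomially.

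\emph{Main obstacle.} The delicate point is the uniform domination in the early region when \(V_\infty=\infty\) and \(\vartheta\) is near \(1\), and at the boundary \(\vartheta=1\) (needed, for example, when \(q_{\mathcal K}\) is close to \(1\)). I would handle it with the Potter bound on \(1/r\), splitting off \(u\le\varepsilon T\), and use only the order bound rather than the exact constant there, since that piece is negligible anyway.
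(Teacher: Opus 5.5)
Your proposal is correct and follows essentially the paper's route. You split \(N_{\mathrm c}\) at \(u=T/2\), identify the recent endpoint as \(\bar k/r(T)\) and the old endpoint as \(V_\infty k(T)\) (or bound it by \(k(T)V(T)\) when \(V_\infty=\infty\)), and compare the two through Karamata and the index \(\vartheta-q_{\mathcal K}\); your dominated-convergence treatment of each half is just a more compact version of the paper's bounded-age piece, Potter-controlled middle, and role-exchange argument for \(\vartheta>q_{\mathcal K}\). The ``main obstacle'' you flag is not really one: domination on the early half needs only the uniform boundedness of \(k(T-u)/k(T)\) for \(T-u\in[T/2,T]\), which already gives early \(\lesssim k(T)V(T)\) without any Potter bound on \(1/r\) or \(\varepsilon T\) split.
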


\begin{proof}
Split the direct convolution into recent and old endpoints.  In age
\(v=T-u\), fix \(A>0\); uniform
regular variation gives
\[
\int_0^A \frac{k(v)}{r(T-v)}\,\mathrm dv
\sim \frac{\bar k_A}{r(T)},
\qquad
\bar k_A:=\int_0^A k(v)\,\mathrm dv.
\]
On \(A\leq v\leq T/2\), Potter bounds and integrability of \(k\) give an
upper bound of order
\(r(T)^{-1}\int_A^\infty k(v)\,\mathrm dv\).  On the old-injection half,
regular variation of \(k\) gives
\[
\int_0^{T/2}\frac{k(T-u)}{r(u)}\,\mathrm du
\lesssim k(T)\int_0^{T/2}\frac{\mathrm du}{r(u)}.
\]
For \(\vartheta<1\), Karamata and \(Tk(T)\to0\) make this
\(o(1/r(T))\).  At \(\vartheta=1\), it is
\(T^{1-q_{\mathcal K}}\) times a slow factor and still vanishes.  For
\(1<\vartheta<q_{\mathcal K}\), \(V_\infty<\infty\) and
\(k(T)=o(1/r(T))\).  Letting \(A\to\infty\) proves the first branch of
\cref{eq:rv_integrable_direct_law}.

For \(\vartheta>q_{\mathcal K}\), exchanging \((u,1/r,V_\infty)\) with
\((v,k,\bar k)\) gives \(V_\infty k(T)\).  At equality both integrable
endpoints remain, while Potter bounds remove the middle, giving
\(\bar k/r(T)+V_\infty k(T)\).  Multiplication by \(\sigma^2\) proves
\cref{eq:rv_integrable_direct_law}; the order-level display follows by the
same three cases.

At \(\vartheta=0\), \cref{eq:rv_integrable_direct_law} gives zero power
exponent, while for \(\vartheta<0\) it makes
\(N_{\mathrm c}(T)/\sigma^2\), the continuum row mass, diverge.
\end{proof}

We now quantify the discrete integrable-memory part of
\cref{thm:rv_joint_schedule}, using
\((H_{t,s},\Psi_{t,s},N_t,Z_t)\) defined above.
For an eventually positive regularly varying ratio of index \(\vartheta\),
retain the slowly varying factor \(L_r(u):=u^{-\vartheta}r(u)\).

\begin{assumption}[Integrable-memory all-age kernel control]
\label{ass:rv_integrable_memory_kernel_control}
Consider either one fixed limiting dynamics or a width--time triangular
sequence in \cref{def:rv_width_time_triangular_sequence}.  The two-time kernel
has the following properties:
\begin{enumerate}[label=(\alph*)]
\item For some \(q_{\mathcal K}>1\), the reference profile
\(k:[0,\infty)\to(0,\infty)\) is locally bounded and nonincreasing, with
\[
k(T)\sim T^{-q_{\mathcal K}}L_{\mathcal K}(T),
\qquad
0<\int_0^\infty k(v)\,\mathrm dv<\infty.
\]
Here \(L_{\mathcal K}\) is eventually positive and slowly varying.

\item For constants \(0<c<C<\infty\),
\begin{equation}
c\,k(H_{t,s})
\leq \Psi_{t,s}\leq
C\,k(H_{t,s}),
\qquad 0\leq s<t.
\label{eq:rv_integrable_all_age_comparison}
\end{equation}

\item The intrinsic mesh is bounded:
\begin{equation}
h_t=\max_{s<t}(T_{s+1}-T_s)\leq h_\star<\infty.
\label{eq:rv_integrable_bounded_mesh}
\end{equation}
\end{enumerate}
In a triangular sequence, \(c,C\), and \(h_\star\) are uniform over the width
and horizon.
\end{assumption}

\begin{theorem}[Exact discrete transfer under integrable memory]
\label{thm:rv_discrete_integrable_transfer}
Fix \(\sigma^2>0\), assume
\cref{ass:rv_schedule_path,ass:rv_integrable_memory_kernel_control}, and use
the slowly varying factor \(L_r\) defined immediately above.  Assume also that
\(1/r\) is locally bounded.  For the exponent conclusions below,
additionally require
\[
\frac{\log(r_m(T_t)^{-1})}{\log T_t}\longrightarrow-\vartheta,
\qquad
\frac{\log\!\int_0^{T_t}\mathrm du/r_m(u)}{\log T_t}
\longrightarrow\max\{1-\vartheta,0\}
\]
along a triangular sequence; no extra normalization is needed for a fixed
system.  Finally assume part \emph{(b)} of
\cref{ass:td_lrs_joint_schedule_stability}, with its constant \(\kappa<1\).
Then, with all comparisons uniform in the triangular setting,
\begin{equation}
Z_t
\asymp N_t
\asymp
\sigma^2\left[
\frac{1}{r(T_t)}
+k(T_t)\int_0^{T_t}\frac{\mathrm du}{r(u)}
\right]
\asymp
\sigma^2\left[
\frac{T_t^{-\vartheta}}{L_r(T_t)}
+T_t^{-q_{\mathcal K}}L_{\mathcal K}(T_t)
\int_0^{T_t}\frac{\mathrm du}{r(u)}
\right].
\label{eq:rv_discrete_integrable_order}
\end{equation}
Consequently, for a fixed system, and along a triangular sequence under the
displayed normalization conditions,
\[
Z_t=T_t^{-\min\{\vartheta,q_{\mathcal K}\}+o(1)},
\qquad \vartheta>0.
\]
At \(\vartheta=0\) the exact noisy--clean gap has no positive
power exponent.  For a fixed system, \(\vartheta<0\) makes the row mass grow and
contradicts part \emph{(b)} of
\cref{ass:td_lrs_joint_schedule_stability}; the same holds along a triangular
sequence whenever its width-dependent normalization does not cancel that
growth.
\end{theorem}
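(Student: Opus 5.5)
The plan has three steps. First, reduce the exact gap to its first injection \(N_t\). Second, show that the discrete \(N_t\) has the order of the continuum response of \cref{prop:rv_integrable_convolution}. Third, read off exponents.

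For the reduction, subtract the clean and noisy recursions to get \cref{eq:rv_noise_excess_volterra}, \(Z=N+KZ\) with \(K\geq0\). Positivity gives \(N_t\leq Z_t\). For the upper bound, I will show \((K^2)_{t,s}\leq C'K_{t,s}\) and then run a Neumann argument as in \cref{lem:td_lrs_de_volterra_reduction}. The square bound follows from the all-age comparison \cref{eq:rv_integrable_all_age_comparison}, which gives \(K_{t,s}\asymp(\Delta T_s/r_s)k(H_{t,s})\). Split the intermediate age at \(H/2\): on each half one factor of \(k\) is at most a constant times \(k(H)\), by monotonicity and regular variation, while the other factor integrates to at most \(\bar k\sup r^{-1}\). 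If the constant \(C'\) does not fall below one, I will instead use row stability with constant \(\kappa\) directly. Forward induction gives \(\sup_s Z_s\leq\sigma^2\kappa/(1-\kappa)\), hence \(Z_t\leq N_t+(\sup Z)\sum_s K_{t,s}\). That alone only gives \(Z_t\lesssim N_t+\)row mass, so the square-kernel estimate is what closes the argument: \(KZ\leq\sum_n K^n N\) together with \(K^n N\lesssim\) geometric\(\,\times KN\) and \(KN\lesssim N\). I will obtain \(KN\lesssim N\) by the same two-endpoint splitting.

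For the discrete-to-continuum comparison, write \(N_t/\sigma^2=\sum_{s<t}(\Delta T_s/r_s)\Psi_{t,s}\asymp\sum_s(\Delta T_s/r_s)k(H_{t,s})\). With bounded mesh \cref{eq:rv_integrable_bounded_mesh}, monotone \(k\) and piecewise-constant \(r\), this sum is comparable to \(\int_0^{T}k(T-u)/r(u)\,du\), up to recent cells of bounded age. Those recent cells contribute \(\asymp 1/r(T)\), using \(k(0)<\infty\) and eventual monotonicity of \(r\); locally bounded \(1/r\) handles the early prefix. \Cref{prop:rv_integrable_convolution} then gives the order \(1/r(T)+k(T)\int_0^T du/r\). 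I will rederive only this order-level display in the triangular case, since the Potter bounds of \cref{ass:rv_schedule_path} are uniform, so the three-case split goes through with uniform constants. The second \(\asymp\) in \cref{eq:rv_discrete_integrable_order} is just substitution of \(r=u^\vartheta L_r\) and \(k\sim T^{-q_{\mathcal K}}L_{\mathcal K}\).

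For the exponents, take logarithms. The first term gives \(-\vartheta\). The second gives \(-q_{\mathcal K}+\max\{1-\vartheta,0\}\), using the displayed normalization hypotheses, or Karamata for a fixed system. Since \(q_{\mathcal K}>1\), we have \(-q_{\mathcal K}+1-\vartheta<-\vartheta\), so the overall exponent is \(-\min\{\vartheta,q_{\mathcal K}\}\). At \(\vartheta=0\), \(1/r(T)\) is slowly varying, so there is no power decay. For \(\vartheta<0\), the row mass \(\sum_s K_{t,s}\gtrsim 1/r(T_t)\to\infty\), contradicting \(\kappa<1\). I expect the main obstacle to be the uniform feedback bound \(Z\lesssim N\) at the recent endpoint: there \(N_t\asymp 1/r(T)\) may be the dominant term, so I must check that feedback cannot inflate it by more than \((1-\kappa)^{-1}\). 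This should follow from the \(K^2\leq C'K\) estimate combined with smallness of \(\sup r^{-1}\), which is implied by row stability.
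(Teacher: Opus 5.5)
Your overall architecture matches the paper's proof. You compare \(N_t\) cell by cell with \(\sigma^2\int_0^{T_t}k(T_t-u)\,\mathrm du/r(u)\) using bounded mesh and monotone \(k\). You then invoke \cref{prop:rv_integrable_convolution} for a fixed system, or a Potter split at \(T_t/2\) in the triangular case, and read off exponents. The paper does the cell comparison uniformly over all cells via \(k(v)\le C_\star k(v+h)\) for \(0\le h\le h_\star\), so your separate recent-cell bookkeeping is unnecessary but harmless.

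The genuine gap is the feedback step \(Z_t\lesssim N_t\). You write \(N_t/\sigma^2=\sum_{s<t}(\Delta T_s/r_s)\Psi_{t,s}\), but you miss that this is exactly the row mass \(\sum_{s<t}K_{t,s}\). So the bound you dismiss as insufficient is already the whole argument:
\[
Z_t=\sum_{s<t}K_{t,s}(\sigma^2+Z_s)\le\Bigl(\sigma^2+\frac{\sigma^2\kappa}{1-\kappa}\Bigr)\sum_{s<t}K_{t,s}=\frac{N_t}{1-\kappa}.
\]
Together with \(Z_t\ge N_t\), this is precisely the paper's sandwich. It holds row by row, so the recent-endpoint worry in your last paragraph does not arise.

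The route you substitute instead does not close as stated. The estimate \((K^2)_{t,s}\le C'K_{t,s}\) is true. However, \(C'\) is of order \(\bar k\sup_s r_s^{-1}\) multiplied by the ratio of the all-age comparison constants \(c,C\), the cell constant, and a factor of order \(2^{q_{\mathcal K}}\) from \(k(H/2)\lesssim k(H)\). Row stability controls \(\sup_s r_s^{-1}\) only up to those same constants, so \(\kappa<1\) does not force \(C'<1\). Without \(C'<1\), the series \(\sum_nK^n\) is not dominated by a multiple of \(K\).

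If you want a Neumann-type argument, the valid version again uses \(N=\sigma^2K\mathbf 1\). Positivity and row stability give \(K^n\mathbf 1\le\kappa^n\mathbf 1\), hence \(K^nN=\sigma^2K(K^n\mathbf 1)\le\kappa^nN\) pointwise.

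A minor point in the exponent step: for \(\vartheta\ge1\) the second term has exponent \(-q_{\mathcal K}\). The relevant comparison there is \(-\vartheta\) versus \(-q_{\mathcal K}\), not the inequality you wrote, though the conclusion \(-\min\{\vartheta,q_{\mathcal K}\}\) is unaffected.
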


\begin{proof}
Compare the cell sum with the continuum convolution, sandwich the exact gap,
then read off the exponent.  Regular variation and positivity of nonincreasing
\(k\) give \(C_\star<\infty\) such that
\(k(v)\leq C_\star k(v+h)\) for \(v\geq0\) and
\(0\leq h\leq h_\star\).
because the ratio tends uniformly to one at infinity and positivity controls
compact intervals.  For \(u\in[T_s,T_{s+1})\), age \(T_t-u\) lies between
\(H_{t,s}\) and \(H_{t,s}+h_\star\).  Since \(1/r\) is constant on this
cell, summing the resulting upper and lower bounds gives
\begin{align*}
\sum_{s<t}\frac{T_{s+1}-T_s}{r_s}k(H_{t,s})
&\asymp
\int_0^{T_t}\frac{k(T_t-u)}{r(u)}\,\mathrm du.
\end{align*}
Together with \cref{eq:rv_integrable_all_age_comparison}, this proves
\begin{equation*}
N_t
\asymp
\sigma^2\int_0^{T_t}\frac{k(T_t-u)}{r(u)}\,\mathrm du.
\end{equation*}
For a fixed system, \cref{prop:rv_integrable_convolution} gives the second
comparison in \cref{eq:rv_discrete_integrable_order}.  Uniformly in the
triangular setting, splitting at \(T_t/2\) and applying common Potter bounds
gives
\begin{equation*}
\int_0^{T_t}\frac{k(T_t-u)}{r(u)}\,\mathrm du
\lesssim
\frac{1}{r(T_t)}+k(T_t)\int_0^{T_t}\frac{\mathrm du}{r(u)}.
\end{equation*}
A fixed recent-age interval gives the \(1/r(T_t)\) lower bound.  If
\(1/r\) decreases, the old half gives the second lower bound since
\(\int_0^{T_t/2}\mathrm du/r(u)\gtrsim\int_0^{T_t}\mathrm du/r(u)\).  If it increases, then
\(\vartheta\leq0\), and \cref{lem:rv_uniform_karmata} with \(Tk(T)\to0\)
absorbs the second term into the first.  Constants are width-independent.

For the exact gap, \cref{eq:rv_noise_excess_volterra} and positivity give
\(Z_t\geq N_t\).  Let
\(C_0:=\kappa\sigma^2/(1-\kappa)\).
An induction using part \emph{(b)} of
\cref{ass:td_lrs_joint_schedule_stability} gives
\(0\leq Z_t\leq C_0\) for every \(t\).  Hence
\begin{equation*}
Z_t
=\sum_{s<t}K_{t,s}(\sigma^2+Z_s)
\leq
(\sigma^2+C_0)\sum_{s<t}K_{t,s}
=\frac1{1-\kappa}N_t.
\end{equation*}
Thus \(N_t\leq Z_t\leq N_t/(1-\kappa)\), proving the first comparison.

For \(\vartheta>0\), recent injection leads below
\(q_{\mathcal K}\), old injection above it, and both remain at equality,
giving exponent \(\min\{\vartheta,q_{\mathcal K}\}\).  At \(\vartheta=0\)
the recent term excludes positive-power decay; below zero it grows and
contradicts the row bound under the stated normalization.
\end{proof}

If the centered clean loss has exponent \(q_0>0\), positivity of
\(R_\sigma-R_{\mathrm{app}}=(R_0-R_{\mathrm{app}})+Z\) and
\cref{eq:rv_discrete_integrable_order} give total exponent
\(\min\{q_0,q_{\mathcal N}(\vartheta)\}\).  This is the integrable-memory
input used in \cref{thm:rv_preserve_change_destroy}.

Under the corresponding row-stable fixed-dynamics pure-power hypotheses, the
following table collects the schedule boundaries.
\par\noindent

For the PLRF \(\mathrm{IM}\) regime, where \(\alpha>1/2\), the DE theorem for
the LM and IM regimes supplies the all-age profile
\(k(v)=(1+v)^{-2+1/(2\alpha)}\)
at the resolvent-DE level.  Applying the exact finite-width result additionally
requires the fixed-sequence empirical-to-DE transfer in
\cref{ass:uniform_subregime_plrf}, together with bounded mesh and empirical row
stability.

\subsection{From the exact noisy--clean gap back to
\texorpdfstring{\(B/\eta\)}{B/eta}}
\label{app:rv_schedule_identification}

Return to \(0<q_{\mathcal K}<1\) and remove schedule regular variation from
the reverse direction.

If needed, extend \(k\) nonnegatively and locally integrably over bounded ages,
without changing its tail, and use \cref{eq:rv_continuum_comparison}.

With \cref{lem:rv_feedback_invisibility}, the next output-driven lemma supplies
the discrete step, without assuming regular variation of \(1/r\).

\begin{lemma}[Discrete-to-continuous convolution comparison from the output asymptotics]
\label{lem:rv_output_driven_transparency}
Work with a fixed limiting sequence and assume
\cref{ass:rv_long_memory_kernel_control}, with \(T_t\uparrow\infty\).
Let \(r\) be positive and suppose its step interpolation \(1/r\) is
eventually nonincreasing.  If, for some
\(0<q<q_{\mathcal K}\) and eventually positive slowly varying \(L\),
\begin{equation}
N_t\sim T_t^{-q}L(T_t),
\label{eq:rv_discrete_output_assumption}
\end{equation}
then the continuum comparison in \cref{eq:rv_continuum_comparison} satisfies
\begin{equation}
N_{\mathrm c}(T)\sim T^{-q}L(T)
\qquad(T\to\infty).
\label{eq:rv_continuum_from_discrete}
\end{equation}
\end{lemma}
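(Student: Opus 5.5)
The plan is to compare the two sums cell by cell. The bulk ages, where the kernel is sharply controlled, should match exactly. The only real work is the recent-injection endpoint, where we have neither bulk-kernel asymptotics nor regular variation of \(1/r\). There the output asymptotic \cref{eq:rv_discrete_output_assumption}, together with monotonicity of \(1/r\), has to substitute for the missing schedule regularity.

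\emph{Step 1: bulk ages.} Write \(T=T_t\), fix \(\varepsilon\in(0,1/2)\), and split both \(N_t/\sigma^2=\sum_{s<t}(\Delta T_s/r_s)\Psi_{t,s}\) and \(N_{\mathrm c}(T)/\sigma^2\) at injection time \((1-\varepsilon)T\).
\begin{itemize}
\item On the old part \(H_{t,s}\geq\varepsilon T\), \cref{eq:rv_sharp_bulk_kernel} gives \(\Psi_{t,s}=k(H_{t,s})(1+o(1))\) uniformly.
\item For \(u\in[T_s,T_{s+1})\), the age \(T-u\) lies in \([H_{t,s},H_{t,s}+h_t]\). The mesh condition \cref{eq:rv_mesh_condition} together with \(k(T)\to0\) gives \(h_t=o(T)\).
\item Hence the uniform convergence theorem for the regularly varying \(k\) yields \(k(T-u)/k(H_{t,s})\to1\) uniformly on this range.
\item Since \(1/r\) is constant on each cell, the discrete and continuum old parts agree up to a factor \(1+o(1)\).
\end{itemize}
It therefore suffices to show that both recent parts, over \(u\in[(1-\varepsilon)T,T]\), are at most \(C\varepsilon^{1-q_{\mathcal K}-\xi}T^{-q}L(T)\) in the limit. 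Sending \(t\to\infty\) and then \(\varepsilon\downarrow0\) then gives \cref{eq:rv_continuum_from_discrete}.

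\emph{Step 2: a monotone ratio bound from the output.} For any intrinsic time \(S\), take an index \(t'\) with \(T_{t'}\approx S\), which is possible up to \(o(S)\) by fine mesh.
\begin{itemize}
\item Because \(1/r\) is eventually nonincreasing, every injection before \(S\) has \(1/r_s\geq1/r(S)\), up to a bounded prefix.
\item Keeping only injections with \(T_{s+1}\leq S/2\), whose ages lie in \([S/2,S]\), and using \cref{eq:rv_sharp_bulk_kernel} with regular variation of \(k\), we get
\[
N_{t'}\gtrsim\sigma^2\frac{1}{r(S)}\,S\,k(S).
\]
\item Apply this at \(S=(1-\varepsilon)T\). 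By \cref{eq:rv_discrete_output_assumption} and slow variation of \(L\), \(N_{t'}\sim(1-\varepsilon)^{-q}T^{-q}L(T)\). Also \(k(S)\asymp k(T)\). Hence
\[
\frac{1}{r((1-\varepsilon)T)}\lesssim\frac{T^{-q}L(T)}{T\,k(T)},
\]
with a constant uniform in \(\varepsilon\leq1/2\).
\end{itemize}

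\emph{Step 3: recent endpoints.} On the recent region, monotonicity gives \(1/r_s\leq1/r((1-\varepsilon)T)\).
\begin{itemize}
\item \emph{Discrete sum.} The cell envelope \cref{eq:rv_recent_cell_envelope} and the integrated envelope \cref{eq:rv_integrated_envelope} bound it by
\[
\frac{h_tG_m(0)+C_\xi\varepsilon^{1-q_{\mathcal K}-\xi}T\,k(T)}{r((1-\varepsilon)T)}\lesssim\varepsilon^{1-q_{\mathcal K}-\xi}T^{-q}L(T)+o\bigl(T^{-q}L(T)\bigr).
\]
The mesh term is negligible by \cref{eq:rv_mesh_condition}.
\item \emph{Continuum integral.} The same bound holds using \(\int_0^{\varepsilon T}k\lesssim\varepsilon^{1-q_{\mathcal K}-\xi}Tk(T)\). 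This follows from regular variation of \(k\) with \(q_{\mathcal K}<1\); the nonnegative locally integrable extension of \(k\) at bounded ages contributes only \(O(1)\cdot T^{-q}L(T)/(Tk(T))=o(T^{-q}L(T))\), because \(Tk(T)\to\infty\).
\end{itemize}
In particular, the recent endpoint of \(N_t\) is negligible, so the old part alone carries \(N_t\sim T^{-q}L(T)\). Step 1 transfers this to the continuum old part, and the continuum recent bound finishes the proof.

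I expect the main obstacle to be Step 2, bounding \(1/r\) near the terminal time without assuming regular variation of the schedule. The plan is to evaluate the output at the earlier time \((1-\varepsilon)T\) rather than at \(T\), so that monotonicity of \(1/r\) controls every recent injection. The hypothesis \(q<q_{\mathcal K}\) is consistent with this route: it makes \(T^{-q}L(T)\) dominate bounded-age contributions of size \(k(T)\). A small additional point is matching \((1-\varepsilon)T\) to a grid time, which fine mesh and slow variation of \(L\) handle.
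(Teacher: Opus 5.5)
You stop at grid times. You fix $T=T_t$ at the outset, and Steps 1--3 only ever compare $N_t$ with $N_{\mathrm c}(T_t)$. So what you prove is $N_{\mathrm c}(T_t)\sim T_t^{-q}L(T_t)$ along the grid. But \cref{eq:rv_continuum_from_discrete} asserts the asymptotic as a continuous $T\to\infty$, and that is what the Laplace/Karamata argument in \cref{prop:rv_continuum_converse} needs. The extension is not automatic, because $N_{\mathrm c}$ is not monotone: between grid points a new injection adds mass while old ones decay.

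The paper closes this with a separate final paragraph, and your estimates suffice for it:
\begin{itemize}
\item For $T_t\le U<T_{t+1}$, the mesh condition gives $T_{t+1}-T_t\le h_{t+1}=o(T_{t+1}k(T_{t+1}))=o(T_{t+1})$, so $U/T_t\to1$ uniformly.
\item On ages at least $\varepsilon T_t$, $k(U-u)/k(T_t-u)\to1$ uniformly, so the old parts match.
\item The new piece $r_t^{-1}\int_0^{U-T_t}k$ and the shifted recent piece are both at most $r((1-\varepsilon)T_t)^{-1}\int_0^{2\varepsilon T_t}k$. By your Step~2 this is $\lesssim\varepsilon^{1-q_{\mathcal K}-\xi}T_t^{-q}L(T_t)$.
\item Hence $\sup_{T_t\le U<T_{t+1}}|N_{\mathrm c}(U)/N_{\mathrm c}(T_t)-1|\to0$ after $\varepsilon\downarrow0$, and slow variation of $L$ moves the grid asymptotic to every $T$.
\end{itemize}
With this step added, the proof is complete.

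The rest of your argument is essentially the paper's. You match old ages cell by cell through the sharp bulk kernel and the uniform convergence theorem. You remove both recent endpoints with an output-driven monotone bound on $1/r$ plus the recent-cell and integrated envelopes. In Step~2 you get that bound from the output at an earlier grid time $T_{t'}\approx(1-\varepsilon)T$. The paper instead stays at the current time, uses the cells with $T_s\in[T/4,T/2]$, and applies $1/r(T/2)\ge 1/r((1-2\varepsilon)T)$; both routes work.

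One bookkeeping point in Step~1. Splitting the continuum exactly at $(1-\varepsilon)T$ puts the fragment $[T_{s^*},(1-\varepsilon)T)$ of the crossing cell into the continuum old part, with no discrete counterpart. It is at most $(1+o(1))\,h_t\,k(\varepsilon T)/r((1-\varepsilon)T)\lesssim\varepsilon^{-q_{\mathcal K}}(h_t/T)\,T^{-q}L(T)$ by your Step~2. That is $o(T^{-q}L(T))$, so it is harmless, but you should say so.
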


\begin{proof}
Use the output to bound recent schedule height, remove both recent endpoints,
compare old cells with the continuum integral, then interpolate.  Write
\(\widetilde N_t=N_t/\sigma^2\), \(T=T_t\); since \(k(T)\to0\), the mesh condition gives
\(h_t/T_t\to0\).  For \(0<\varepsilon<1/8\), cells with
\(T_s\in[T/4,T/2]\) have length \(T/4+o(T)\).  There,
\cref{eq:rv_sharp_bulk_kernel} and the uniform convergence theorem for
regular variation give \(\Psi_{t,s}\geq c k(T)\) for some \(c>0\).
Eventual monotonicity yields the output-driven bound
\begin{equation}
\frac{T k(T)}{r((1-2\varepsilon)T)}\leq C_0 \widetilde N_t.
\label{eq:rv_output_schedule_bound}
\end{equation}

Let \(\widetilde N_t^{\mathrm{rec}}\) be the contribution of the exact cells with
\(H_{t,s}<\varepsilon T\).  Their left endpoints eventually exceed
\((1-2\varepsilon)T\), so
\cref{eq:rv_recent_cell_envelope} with \(x=2\varepsilon T\) gives
\[
\widetilde N_t^{\mathrm{rec}}
\leq\frac{1}{r((1-2\varepsilon)T)}
\left\{h_tG_m(0)+\int_0^{2\varepsilon T}G_m(v)\,\mathrm dv\right\}.
\]
Combining \cref{eq:rv_output_schedule_bound,eq:rv_integrated_envelope,%
eq:rv_mesh_condition}, for every sufficiently small \(\xi>0\), gives
\begin{equation*}
\limsup_{t\to\infty}\frac{\widetilde N_t^{\mathrm{rec}}}{\widetilde N_t}
\leq C_\xi\varepsilon^{1-q_{\mathcal K}-\xi}.
\end{equation*}
Karamata integration of \(k\), together with
\cref{eq:rv_output_schedule_bound}, gives the same estimate for the recent
part of the continuum comparison:
\begin{equation}
\limsup_{t\to\infty}
\frac{\int_{(1-\varepsilon)T}^{T}k(T-u)\,\mathrm du/r(u)}{\widetilde N_t}
\leq C_\xi\varepsilon^{1-q_{\mathcal K}-\xi}.
\label{eq:rv_continuum_recent_small}
\end{equation}

For old parts, sharp bulk convergence gives
\(\Psi_{t,s}/k(H_{t,s})\to1\) uniformly when
\(H_{t,s}\geq\varepsilon T\), and
\begin{equation*}
\sup_{H_{t,s}\geq\varepsilon T}
\left|
\frac{\int_{T_s}^{T_{s+1}}k(T-u)\,\mathrm du}
{(T_{s+1}-T_s)k(H_{t,s})}-1
\right|\longrightarrow0,
\end{equation*}
because \(h_t/T\to0\) and convergence is uniform away from zero age.
Multiplication by \(1/r_s\geq0\) makes the old-part difference
\(o(\widetilde N_t)\), except at most one crossing cell.  The remaining
contribution from this cell is bounded by
\[
\frac{h_t}{r((1-2\varepsilon)T)}
\sup_{v\in[\varepsilon T-h_t,\varepsilon T+h_t]}k(v)
=o(\widetilde N_t)
\]
for fixed \(\varepsilon\), by the output bound, mesh condition, and regular
variation.  The two recent estimates followed by
\(\varepsilon\downarrow0\) prove
\begin{equation}
N_{\mathrm c}(T_t)\sim N_t.
\label{eq:rv_grid_convolution_transparency}
\end{equation}

For \(T_t\leq U<T_{t+1}\), the mesh condition gives
\[
T_{t+1}-T_t\leq h_{t+1}
=o\!\left(T_{t+1}k(T_{t+1})\right)=o(T_{t+1}),
\]
so \(U/T_t\to1\); on ages at least \(\varepsilon T_t\),
\(k(U-u)/k(T_t-u)\to1\) uniformly.  The new interval is bounded by
\[
\frac{1}{r_t}\int_0^{U-T_t}k(v)\,\mathrm dv
\leq
\frac{1}{r((1-2\varepsilon)T_t)}
\int_0^{2\varepsilon T_t}k(v)\,\mathrm dv
\leq C_\xi\varepsilon^{1-q_{\mathcal K}-\xi}\widetilde N_t,
\]
using \cref{eq:rv_output_schedule_bound}.  The other recent pieces obey the
same estimate as \cref{eq:rv_continuum_recent_small}.  Therefore
\[
\lim_{t\to\infty}\ \sup_{T_t\leq U<T_{t+1}}
\left|\frac{N_{\mathrm c}(U)}{N_{\mathrm c}(T_t)}-1\right|=0.
\]
Combining this with \cref{eq:rv_discrete_output_assumption,%
eq:rv_grid_convolution_transparency} and uniform convergence of the slowly
varying factor proves \cref{eq:rv_continuum_from_discrete}.
\end{proof}

\begin{proposition}[Gap-to-ratio Tauberian converse]
\label{prop:rv_continuum_converse}
Let \(k\geq0\) be locally integrable, have finite Laplace transform for every
positive argument, and satisfy
\(k(T)\sim T^{-q_{\mathcal K}}L_{\mathcal K}(T)\), where \(0<q_{\mathcal K}<1\) and \(L_{\mathcal K}\) is
eventually positive and slowly varying.  Let \(r\) be positive, with \(1/r\)
locally integrable and eventually nonincreasing, and suppose \(1/r\) has a
finite Laplace transform for every positive argument.  Retain the continuum response
\(N_{\mathrm c}\) from \cref{eq:rv_continuum_comparison}.
For \(0<q<q_{\mathcal K}\) and eventually positive slowly varying \(L\),
\begin{equation*}
N_{\mathrm c}(T)\sim T^{-q}L(T)
\end{equation*}
if and only if
\begin{equation}
\frac{1}{r(T)}
\sim
\frac{1}{\sigma^2\operatorname{Beta}(q_{\mathcal K}-q,1-q_{\mathcal K})}
T^{q_{\mathcal K}-1-q}\frac{L(T)}{L_{\mathcal K}(T)}.
\label{eq:rv_continuum_inverse}
\end{equation}
Equivalently,
\begin{equation}
r(T)
\sim
\sigma^2\operatorname{Beta}(q_{\mathcal K}-q,1-q_{\mathcal K})
T^{1-q_{\mathcal K}+q}\frac{L_{\mathcal K}(T)}{L(T)}.
\label{eq:rv_continuum_ratio_inverse}
\end{equation}
At the ceiling \(q=q_{\mathcal K}\), the converse fails.
\end{proposition}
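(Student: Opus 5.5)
The plan is to pass to Laplace transforms and reduce both directions to Karamata's Tauberian theorem. Writing \(g:=1/r\), the continuum response \(N_{\mathrm c}=\sigma^2\,k*g\) is an ordinary convolution on \([0,\infty)\). Since both \(k\) and \(g\) have finite Laplace transforms for every positive argument, we have
\[
\widehat N_{\mathrm c}(s)=\sigma^2\,\widehat k(s)\,\widehat g(s),\qquad s>0 .
\]
Because \(0<q_{\mathcal K}<1\) and \(k\) is locally integrable and regularly varying, Karamata's Abelian theorem gives
\[
\widehat k(s)\sim\Gamma(1-q_{\mathcal K})\,s^{q_{\mathcal K}-1}L_{\mathcal K}(1/s),\qquad s\downarrow0 .
\]
This asymptotic holds with no Tauberian hypothesis on \(k\). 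The two sides of the claimed equivalence are power-regular-variation statements for \(N_{\mathrm c}\) and for \(g\), and for each we pass through the small-\(s\) asymptotics of the respective transform.

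For the direction from \(r\) to the gap, assume \eqref{eq:rv_continuum_inverse}. Then \(g\) is regularly varying with index \(q_{\mathcal K}-1-q\in(-1,0)\), so \(g\) is locally integrable with a nonintegrable tail. We can argue directly exactly as in the proof of \eqref{eq:rv_direct_beta_law} in \cref{thm:rv_discrete_transfer}. Split the convolution at \(\varepsilon T\) and \((1-\varepsilon)T\). The bulk part converges to \(\int_0^1 x^{q_{\mathcal K}-1-q}(1-x)^{-q_{\mathcal K}}\,\mathrm dx=\operatorname{Beta}(q_{\mathcal K}-q,1-q_{\mathcal K})\), and the two endpoints are controlled by Potter bounds because both exponents exceed \(-1\). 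The constants then combine to give \(N_{\mathrm c}(T)\sim T^{-q}L(T)\). Alternatively, the Abelian theorem applied to \(\widehat g\) gives the transform product, and the converse argument below recovers the same conclusion.

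For the direction from the gap to \(r\), assume \(N_{\mathrm c}\sim T^{-q}L(T)\) with \(0<q<q_{\mathcal K}<1\). First, \(N_{\mathrm c}\) is regularly varying with index \(-q>-1\), so the Abelian theorem gives \(\widehat N_{\mathrm c}(s)\sim\Gamma(1-q)s^{q-1}L(1/s)\). Dividing by the transform of \(k\), we get
\[
\widehat g(s)\sim\frac{\Gamma(1-q)}{\sigma^2\Gamma(1-q_{\mathcal K})}\,s^{q-q_{\mathcal K}}\,\frac{L(1/s)}{L_{\mathcal K}(1/s)} .
\]
Karamata's Tauberian theorem for the measure \(g(u)\,\mathrm du\) then gives \(\int_0^T g\sim c\,T^{q_{\mathcal K}-q}L/L_{\mathcal K}\divide\Gamma(1+q_{\mathcal K}-q)\). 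The eventual monotonicity of \(g\) supplies the monotone density theorem, which turns this into \(g(T)\sim c(q_{\mathcal K}-q)T^{q_{\mathcal K}-q-1}L/L_{\mathcal K}/\Gamma(1+q_{\mathcal K}-q)\). Collecting Gamma factors uses \(\Gamma(1-q)(q_{\mathcal K}-q)/[\Gamma(1-q_{\mathcal K})\Gamma(1+q_{\mathcal K}-q)]=\Gamma(1-q)/[\Gamma(1-q_{\mathcal K})\Gamma(q_{\mathcal K}-q)]=1/\operatorname{Beta}(q_{\mathcal K}-q,1-q_{\mathcal K})\), which yields \eqref{eq:rv_continuum_inverse}. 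Inverting gives \eqref{eq:rv_continuum_ratio_inverse}.

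The main obstacle is the step where we divide transforms. Small-\(s\) asymptotics of \(\widehat k\) reflect only the tail of \(k\), and local integrability ensures that bounded ages contribute \(O(1)\). This is negligible relative to \(s^{q_{\mathcal K}-1}\to\infty\), so the division is legitimate. The monotone density step is exactly where the eventual monotonicity of \(1/r\) is needed, and I would handle the nonmonotone initial segment by noting that it contributes a bounded integral that the growing \(\int_0^Tg\) absorbs.

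For the ceiling \(q=q_{\mathcal K}\), I would exhibit two schedules with the same output law but different \(r\). Any \(g\) with \(\int_0^\infty g<\infty\) yields \(N_{\mathrm c}\sim\sigma^2V_\infty k\) by \eqref{eq:rv_integrable_direct} in its continuum form. For example, take \(g=(1+u)^{-2}\) and \(g=(1+u)^{-3}\), normalized to equal \(V_\infty\). These have the same gap asymptotic but different ratio exponents, so the converse fails there.
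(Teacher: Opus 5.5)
Your proposal is correct and follows the paper's proof essentially step for step. The direct implication is beta convolution, which you spell out with the \(\varepsilon\)-split and Potter bounds. The converse uses the same chain: the exact identity \(\widehat N_{\mathrm c}=\sigma^2\widehat k\,\widehat{(1/r)}\), Karamata's Abelian theorem for \(\widehat N_{\mathrm c}\) and \(\widehat k\), the Laplace--Stieltjes Tauberian theorem for \(V(T)=\int_0^T\mathrm du/r(u)\), the monotone-density theorem, and \(\Gamma(1+x)=x\Gamma(x)\) to produce the Beta constant. Your ceiling counterexample is the paper's unit-mass family \(r_a(u)^{-1}=(a-1)(1+u)^{-a}\) at \(a=2,3\); one small correction is that \(N_{\mathrm c}\sim\sigma^2V_\infty k\) does not hold for arbitrary integrable \(1/r\), and it relies on the standing eventual monotonicity of \(1/r\), which gives \(1/r(T)=o(1/T)\) and hence a negligible recent endpoint.
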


\begin{proof}
The reverse implication is beta convolution.  Conversely, positivity and
Tonelli give, with hats denoting Laplace transforms,
\[
\widehat N_{\mathrm c}(s)
=\sigma^2\widehat{(1/r)}(s)\widehat k(s).
\]
Karamata's Laplace theorem gives, as \(s\downarrow0\),
\begin{align*}
\widehat N_{\mathrm c}(s)
&\sim \Gamma(1-q)s^{q-1}L(1/s),\\
\widehat k(s)
&\sim \Gamma(1-q_{\mathcal K})s^{q_{\mathcal K}-1}L_{\mathcal K}(1/s).
\end{align*}
Consequently,
\[
\widehat{(1/r)}(s)
\sim
\frac{\Gamma(1-q)}{\sigma^2\Gamma(1-q_{\mathcal K})}
s^{q-q_{\mathcal K}}\frac{L(1/s)}{L_{\mathcal K}(1/s)}.
\]
For \(V(T)=\int_0^T\mathrm du/r(u)\), the
Laplace--Stieltjes Tauberian theorem yields
\[
V(T)
\sim
\frac{\Gamma(1-q)}
{\sigma^2\Gamma(1-q_{\mathcal K})\Gamma(1+q_{\mathcal K}-q)}
T^{q_{\mathcal K}-q}\frac{L(T)}{L_{\mathcal K}(T)}.
\]
The monotone-density theorem then gives
\cref{eq:rv_continuum_inverse}; the beta identity gives
\cref{eq:rv_continuum_ratio_inverse}.  These standard regular-variation
results are collected in
\citet[Theorems~1.7.1--1.7.2]{bingham1989regular}.

At \(q=q_{\mathcal K}\), the family
\(r_a(u)^{-1}=(a-1)(1+u)^{-a}\), \(a>1\), has unit total injection and
distinct tails but \(N_{\mathrm c,a}(T)\sim\sigma^2k(T)\); the ceiling is not
identifiable.
\end{proof}

\begin{theorem}[Exact fixed-limit discrete gap-to-ratio identification]
\label{thm:rv_exact_discrete_identification}
Fix one infinite-spectrum limiting sequence with
\(T_t\uparrow\infty\), \(0<T_{s+1}-T_s<\infty\), and \(\sigma^2>0\).  Assume
\cref{ass:rv_long_memory_kernel_control,eq:rv_noise_excess_volterra},
with the width index suppressed and all limits taken as \(t\to\infty\).
Assume \(K_{t,s}\geq0\), and let \(r\) be a positive step interpolation such
that \(1/r\) is finite, locally integrable, and eventually nonincreasing.  For
\(0<q<q_{\mathcal K}\) and eventually positive slowly varying \(L\),
\begin{equation}
Z_t\sim T_t^{-q}L(T_t)
\quad\Longleftrightarrow\quad
\frac{1}{r(T)}\sim
\frac{1}{\sigma^2\operatorname{Beta}(q_{\mathcal K}-q,1-q_{\mathcal K})}
T^{q_{\mathcal K}-1-q}\frac{L(T)}{L_{\mathcal K}(T)}.
\label{eq:rv_exact_discrete_inverse}
\end{equation}
\end{theorem}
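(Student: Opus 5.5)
The plan is to chain the three preceding ingredients: \cref{lem:rv_feedback_invisibility} (passing between $Z_t$ and $N_t$ below the ceiling), \cref{lem:rv_output_driven_transparency} (from the discrete output $N_t$ to the continuum response $N_{\mathrm c}$), and \cref{prop:rv_continuum_converse} (the Laplace--Tauberian inversion between $N_{\mathrm c}$ and $1/r$). Each implication of \cref{eq:rv_exact_discrete_inverse} runs this chain in one direction or the other.

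\emph{Forward direction} ($\Rightarrow$). Assume $Z_t\sim T_t^{-q}L(T_t)$ with $0<q<q_{\mathcal K}$.
\begin{enumerate}
\item Since $K_{t,s}\ge0$, \cref{eq:rv_noise_excess_volterra} holds, and \cref{eq:rv_kernel_rv,eq:rv_sharp_bulk_kernel} are part of \cref{ass:rv_long_memory_kernel_control}, \cref{lem:rv_feedback_invisibility} gives $N_t\sim T_t^{-q}L(T_t)$.
\item With $1/r$ eventually nonincreasing and the long-memory kernel control, \cref{lem:rv_output_driven_transparency} upgrades this to $N_{\mathrm c}(T)\sim T^{-q}L(T)$ along all real $T\to\infty$.
\item The converse half of \cref{prop:rv_continuum_converse} then yields the displayed asymptotic for $1/r$.
\end{enumerate}
Before step~3, I must check the hypotheses of \cref{prop:rv_continuum_converse}.
\begin{itemize}
\item Local integrability of $k$ is arranged by the permitted extension of $k$ over bounded ages.
\item $k$ has a finite Laplace transform because $k$ is regularly varying with negative index, hence polynomially bounded at infinity.
\item For $1/r$, local integrability is assumed. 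Eventual monotone nonincreasing with $1/r>0$ makes $1/r$ eventually bounded, so its Laplace transform is finite for every positive argument.
\end{itemize}

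\emph{Reverse direction} ($\Leftarrow$). Assume the asymptotic for $1/r$. Then $r$ is regularly varying with index $\vartheta=1-q_{\mathcal K}+q\in(1-q_{\mathcal K},1)$, and $1/r$ is eventually monotone.
\begin{enumerate}
\item Verify \cref{ass:rv_schedule_path}. For a fixed system, part (a) is exactly regular variation, and part (b) is the Potter theorem \citep[Theorem~1.5.6]{bingham1989regular}.
\item \Cref{thm:rv_discrete_transfer}, in its direct Beta law \cref{eq:rv_direct_beta_law}, gives
\[
N_t\sim\sigma^2\operatorname{Beta}(1-\vartheta,1-q_{\mathcal K})\,T_tk(T_t)/r(T_t).
\]
Substituting $1-\vartheta=q_{\mathcal K}-q$ and the assumed form of $1/r$, this reduces to $N_t\sim T_t^{-q}L(T_t)$. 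The $\sigma^2$ and the Beta factor cancel exactly.
\item Conclude $Z_t\sim N_t$ by \cref{lem:rv_feedback_invisibility}, not by the full-response part of \cref{thm:rv_discrete_transfer}. That part would need row stability, which is not assumed here. The feedback lemma needs only positivity and the kernel asymptotics, and its $N_t$-branch supplies the boundedness argument.
\end{enumerate}

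\emph{Main obstacle.} The delicate point is hypothesis bookkeeping rather than new estimates.
\begin{itemize}
\item The forward direction must not presuppose regular variation of $r$. This is why it routes through the output-driven \cref{lem:rv_output_driven_transparency} instead of \cref{thm:rv_discrete_transfer}.
\item One must confirm that \cref{lem:rv_feedback_invisibility} does not secretly need row stability. I would re-read its $N_t$-branch to ensure the finite-prefix argument suffices: the row sums tend to zero, so beyond some index they are at most $1/2$.
\item In the reverse direction, one must confirm that the direct Beta law in \cref{thm:rv_discrete_transfer} uses only \cref{ass:rv_schedule_path,ass:rv_long_memory_kernel_control} and not part~(b) of \cref{ass:td_lrs_joint_schedule_stability}.
\end{itemize}
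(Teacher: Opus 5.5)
Your proposal is correct and matches the paper's proof essentially step for step. The forward direction chains \cref{lem:rv_feedback_invisibility}, then \cref{lem:rv_output_driven_transparency}, then the Laplace--Tauberian converse in \cref{prop:rv_continuum_converse}; the reverse direction uses regular variation of $r$ to invoke only the direct Beta law of \cref{thm:rv_discrete_transfer} and returns to $Z_t$ through the $N_t$-branch of \cref{lem:rv_feedback_invisibility}, which avoids row stability exactly as the paper's argument does.
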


\begin{proof}
If the equivalent holds for \(Z_t\),
\cref{lem:rv_feedback_invisibility,lem:rv_output_driven_transparency} transfer
it first to \(N_t\), then to \(N_{\mathrm c}(T)\).  Local integrability, the
kernel tail, and eventual schedule monotonicity give finite Laplace transforms,
so \cref{prop:rv_continuum_converse} yields the displayed asymptotic for \(1/r\).

Conversely, this asymptotic makes \(r\) regularly varying with exponent
\(1-q_{\mathcal K}+q\in(1-q_{\mathcal K},1)\), supplying the fixed-sequence
uniform convergence, Potter, and Karamata inputs to
\cref{thm:rv_discrete_transfer}.  Its direct part and the beta identity give
\(N_t\sim T_t^{-q}L(T_t)\), and
\cref{lem:rv_feedback_invisibility} gives the same for \(Z_t\).
\end{proof}

\begin{remark}[Why gap-to-ratio identification is regime-qualified]
\label{rem:rv_inverse_qualifiers}
Eventual monotonicity is essential: sparse bounded spikes can destroy
pointwise regular variation of \(1/r\) while adding only
\(O(k(T))=o(T^{-q})\).  So is the fixed limit: a triangular diagonal
\(r_m(u)^{-1}=a_m f(u/T_m)\), with arbitrary positive decreasing \(f\), can be
normalized to the same terminal response.  Triangular identification needs locally
uniform output at all constant-factor subhorizons.
\end{remark}

\subsection{Full preserve--change--destroy classification}
\label{app:rv_total_loss}

Combining the gap estimates with the clean-risk asymptotics gives the
preserve/change/destroy classification.

For the classification below, abbreviate \(T_t\) by \(T\).  For a regularly
varying ratio of index \(\vartheta\), retain
\(r(T)=T^\vartheta L_r(T)\) and
\(V(T)=\int_0^T r(u)^{-1}\,\mathrm du\) from the preceding notation.

\begin{theorem}[Full regular-variation preserve--change--destroy classification]
\label{thm:rv_full_preserve_change_destroy}
Fix one infinite-spectrum dynamics and suppose
\[
R_0(T)-R_{\mathrm{app}}\sim T^{-q_0}L_0(T),
\qquad q_0>0,
\]
where \(L_0\) is eventually positive and slowly varying.

In long memory, assume the hypotheses of
\cref{thm:rv_discrete_transfer}.  For
\(1-q_{\mathcal K}<\vartheta<1\),
\[
R_\sigma(T)-R_0(T)
\sim
\sigma^2\operatorname{Beta}(1-\vartheta,1-q_{\mathcal K})
T^{-(\vartheta+q_{\mathcal K}-1)}
\frac{L_{\mathcal K}(T)}{L_r(T)}.
\]
For \(\vartheta=1\), the gap is asymptotic to
\(\sigma^2k(T)V(T)\) if \(V(T)\to\infty\), and to \(Ck(T)\), with the
positive constant \(C\) from \cref{eq:rv_saturation_constant}, if
\(V_\infty<\infty\); the latter also holds for \(\vartheta>1\).  Thus
\[
q_{\mathcal N}(\vartheta)=
\begin{cases}
\vartheta+q_{\mathcal K}-1,
&1-q_{\mathcal K}<\vartheta<1,\\
q_{\mathcal K},&\vartheta\geq1.
\end{cases}
\]

In integrable memory, assume the hypotheses of
\cref{thm:rv_discrete_integrable_transfer}.  For \(\vartheta>0\),
\[
R_\sigma(T)-R_0(T)
\asymp
\sigma^2\left[
\frac1{r(T)}+k(T)\int_0^T\frac{\mathrm du}{r(u)}
\right],
\qquad
q_{\mathcal N}(\vartheta)=\min\{\vartheta,q_{\mathcal K}\}.
\]

In every decaying branch,
\[
R_\sigma(T)-R_{\mathrm{app}}
=T^{-\min\{q_0,q_{\mathcal N}(\vartheta)\}+o(1)}.
\]
Hence the schedule changes, preserves, or is critical according as
\(q_{\mathcal N}(\vartheta)\) is below, above, or equal to \(q_0\).
At equality positivity prevents cancellation; in branches with exact
equivalents, the slow factors add.  At LM-interior equality
\(q_0=q_{\mathcal N}(\vartheta)\), for example,
\[
R_\sigma(T)-R_{\mathrm{app}}
\sim T^{-q_0}\left[
L_0(T)
+\sigma^2\operatorname{Beta}(1-\vartheta,1-q_{\mathcal K})
\frac{L_{\mathcal K}(T)}{L_r(T)}
\right].
\]
At an IM equality, the available order-level statement is
\[
R_\sigma(T)-R_{\mathrm{app}}
\asymp T^{-q_0}L_0(T)
+\sigma^2\left[
\frac1{r(T)}+k(T)\int_0^T\frac{\mathrm du}{r(u)}
\right].
\]
At the LM edge \(\vartheta=1-q_{\mathcal K}\) and the IM edge
\(\vartheta=0\), no positive-power gap decay is possible, so positive-power
convergence of the total centered loss to \(R_{\mathrm{app}}\) is destroyed.
Below either edge, row stability fails.
\end{theorem}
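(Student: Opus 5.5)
The proof assembles the gap asymptotics already established and adds the positive clean part. Write
\[
R_\sigma(T)-R_{\mathrm{app}}=\bigl(R_0(T)-R_{\mathrm{app}}\bigr)+Z(T),
\qquad Z(T):=R_\sigma(T)-R_0(T)\geq0,
\]
where nonnegativity of \(Z\) follows from the exact identity \cref{eq:rv_noise_excess_volterra}, the positivity \(K_{t,s}\geq0\), and forward induction. Both summands are therefore nonnegative, so no cancellation can occur. The order of the sum is the maximum of the two orders, and when exact equivalents are available their slowly varying factors simply add.

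\textbf{Long memory.} For \(1-q_{\mathcal K}<\vartheta<1\), we invoke \cref{thm:rv_discrete_transfer}. Its direct Beta law \cref{eq:rv_direct_beta_law}, together with \(Z_t\sim N_t\) from \cref{eq:rv_full_beta_law}, gives the displayed gap equivalent. The exponent of this gap is \(\vartheta+q_{\mathcal K}-1\in(0,q_{\mathcal K})\), which is exactly the range where \cref{lem:rv_feedback_invisibility} applies.

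For \(\vartheta=1\) with \(V\to\infty\), we use \cref{eq:rv_theta_one_direct} together with the fixed-system conclusion \(Z\sim N\). For \(V_\infty<\infty\), which covers every \(\vartheta>1\), we use \cref{eq:rv_saturation_constant}. The constant \(C\) is positive because \(\sigma^2>0\) and the injection weights are eventually positive, and it is finite by row stability. In both of these cases the slowly varying factors \(V\) and \(C\) do not alter the power, so \(q_{\mathcal N}=q_{\mathcal K}\).

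\textbf{Integrable memory.} For \(\vartheta>0\), we quote \cref{thm:rv_discrete_integrable_transfer} directly. It gives both the order statement \cref{eq:rv_discrete_integrable_order} and the exponent \(\min\{\vartheta,q_{\mathcal K}\}\).

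\textbf{Total loss.} In each decaying branch we now combine the two pieces. Since \(R_0-R_{\mathrm{app}}=T^{-q_0+o(1)}\) and \(Z=T^{-q_{\mathcal N}+o(1)}\), the sum of two nonnegative terms is \(T^{-\min\{q_0,q_{\mathcal N}\}+o(1)}\). This immediately yields the change/preserve/critical trichotomy. At equality of exponents in the LM interior, both summands have exact equivalents with a common power \(T^{-q_0}\), and adding them gives the displayed sum of slow factors. At an IM equality, only order comparisons are available, and these add as displayed.

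\textbf{Edges and unstable range.} At \(\vartheta=1-q_{\mathcal K}\) in LM, the Beta law still applies, since \(\vartheta<1\). The gap exponent is then zero, so the gap is slowly varying. In particular it is \(T^{o(1)}\) and cannot decay polynomially, and since \(Z\) is nonnegative the total centered loss inherits this lack of positive-power decay. The IM edge \(\vartheta=0\) is handled the same way through the \(1/r(T)\) recent-injection term of \cref{eq:rv_discrete_integrable_order}.

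Below either edge, the row mass \(N_t/\sigma^2\) grows polynomially. In LM this follows from the Beta computation, which needs only \(\vartheta<1\); in IM it follows from \cref{prop:rv_integrable_convolution} transferred to the discrete setting. In both cases this growth contradicts part (b) of \cref{ass:td_lrs_joint_schedule_stability}.

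\textbf{Main obstacle.} The edge \(\vartheta=1-q_{\mathcal K}\) is where the argument needs the most care. There \cref{lem:rv_feedback_invisibility} does not apply, because \(q=0\). I would not attempt a sharp equivalent for \(Z\) at this edge. Instead I would use only the lower bound \(Z\geq N\), which requires no feedback analysis, and combine it with the upper bound \(Z\leq N/(1-\kappa)\) from row stability. Together these show that \(Z\) is slowly varying up to constants, which is enough to rule out positive-power decay.
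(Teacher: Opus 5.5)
Your proposal is correct and follows the paper's proof essentially step for step. Both use the positive decomposition \(R_\sigma-R_{\mathrm{app}}=(R_0-R_{\mathrm{app}})+Z\), take the gap laws from \cref{thm:rv_discrete_transfer} and \cref{thm:rv_discrete_integrable_transfer}, and at the edges rely only on the lower bound \(Z_t\geq N_t\), with polynomial growth of the row mass \(N_t/\sigma^2\) below them contradicting row stability. Your extra upper bound \(Z_t\leq N_t/(1-\kappa)\) is valid under row stability but, as you note, is not needed for the destroy conclusion.
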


The long-memory interior equivalent and the integrable-memory order comparison
extend to width--time triangular sequences under the uniform conditions of
\cref{thm:rv_discrete_transfer,thm:rv_discrete_integrable_transfer}.  The
long-memory endpoint equivalents for \(\vartheta\geq1\) are fixed-system
statements.  An exponent-only triangular classification additionally requires
subpower diagonal normalization of the clean factor and of the relevant slow
factor; without it, width-dependent amplitudes must remain explicit.

\begin{proof}[Proof of \cref{thm:rv_full_preserve_change_destroy}]

Write \(Z(T_t):=Z_t\), with all asymptotics on the grid \(T=T_t\).  For common
clean/noisy floor \(R_{\mathrm{app}}\), suppose
\[
R_0(T)-R_{\mathrm{app}}\sim T^{-q_0}L_0(T),
\qquad q_0>0,
\]
with \(L_0\) eventually positive and slowly varying.  The positive decomposition
\[
R_\sigma(T)-R_{\mathrm{app}}=R_0(T)-R_{\mathrm{app}}+Z(T),
\]
reduces the result to exponent comparison and the stability boundary.

For \(1-q_{\mathcal K}<\vartheta<1\), write
\(r(u)=u^\vartheta L_r(u)\), and set
\(q_{\mathcal N}(\vartheta):=\vartheta+q_{\mathcal K}-1\).  By
\cref{eq:rv_direct_beta_law,eq:rv_full_beta_law},
\[
Z(T)
\sim
\sigma^2\operatorname{Beta}(1-\vartheta,1-q_{\mathcal K})
T^{-q_{\mathcal N}(\vartheta)}\frac{L_{\mathcal K}(T)}{L_r(T)}.
\]
Thus the noisy exponent is \(\min\{q_0,q_{\mathcal N}(\vartheta)\}\): it
changes below \(q_0\), preserves the clean equivalent above, and at equality
\[
R_\sigma(T)-R_{\mathrm{app}}
\sim T^{-q_0}
\left[
L_0(T)
+\sigma^2\operatorname{Beta}(1-\vartheta,1-q_{\mathcal K})
\frac{L_{\mathcal K}(T)}{L_r(T)}
\right].
\]
preserves the exponent, with a renormalized slow factor if the gap matters.

For \(\vartheta=1\), put \(V(T)=\int_0^T r(u)^{-1}\,\mathrm du\).  The two
branches are \(Z(T)\sim\sigma^2k(T)V(T)\) if \(V(T)\to\infty\), and
\(Z(T)\sim Ck(T)\) if \(V_\infty<\infty\), by
\cref{eq:rv_theta_one_direct,eq:rv_full_beta_law,eq:rv_saturation_constant}.
Both have exponent \(q_{\mathcal K}\); e.g.\ \(r(u)\sim cu\) gives
\(V(T)\sim c^{-1}\log T\), changing only normalization.  For
\(\vartheta>1\), \cref{eq:rv_saturation_constant} gives the same exponent.
Thus throughout \(\vartheta\geq1\), the centered total exponent is
\(\min\{q_0,q_{\mathcal K}\}\), giving change, criticality, or preservation
according as \(q_{\mathcal K}<q_0\), \(q_{\mathcal K}=q_0\), or
\(q_{\mathcal K}>q_0\).

At \(\vartheta=1-q_{\mathcal K}\), \cref{eq:rv_direct_beta_law} becomes
\[
N_t
\sim
\sigma^2\operatorname{Beta}(q_{\mathcal K},1-q_{\mathcal K})
\frac{L_{\mathcal K}(T_t)}{L_r(T_t)}.
\]
Since \(Z_t\geq N_t\), this slow lower bound excludes positive-power decay;
below the boundary, polynomial growth of
\(N_t/\sigma^2=\sum_{s<t}K_{t,s}\) contradicts row stability for the fixed
dynamics considered here.  By the positive decomposition, the same lower bound
rules out positive-power convergence of the total centered loss to
\(R_{\mathrm{app}}\); this is the destroy boundary.

For \(q_{\mathcal K}>1\), \cref{eq:rv_discrete_integrable_order} gives
\[
Z(T)=T^{-q_{\mathcal N}(\vartheta)+o(1)},
\qquad
q_{\mathcal N}(\vartheta)=\min\{\vartheta,q_{\mathcal K}\},
\qquad \vartheta>0.
\]
The same positive decomposition therefore gives total exponent
\(\min\{q_0,q_{\mathcal N}(\vartheta)\}\), and hence the stated change,
criticality, and preservation alternatives.  At \(\vartheta=0\), recent
injection rules out positive-power decay; for \(\vartheta<0\), the row mass
grows and contradicts row stability.  The positive decomposition transfers
this obstruction to the total centered loss.  Together with the long-memory
argument above, this proves \cref{thm:rv_full_preserve_change_destroy} and its
pure-power main-text specialization
\cref{thm:rv_preserve_change_destroy}.
\end{proof}

\Cref{fig:rv_preserve_change_destroy_continuum} illustrates the
classification in a finite-width analytic-spectrum continuum: increasing
\(\vartheta\) moves the response from the unstable side of the destroy
boundary, through schedule-controlled change, and toward preservation of the
clean-loss decay.

\begin{figure}[t]
\centering
\includegraphics[width=\linewidth]{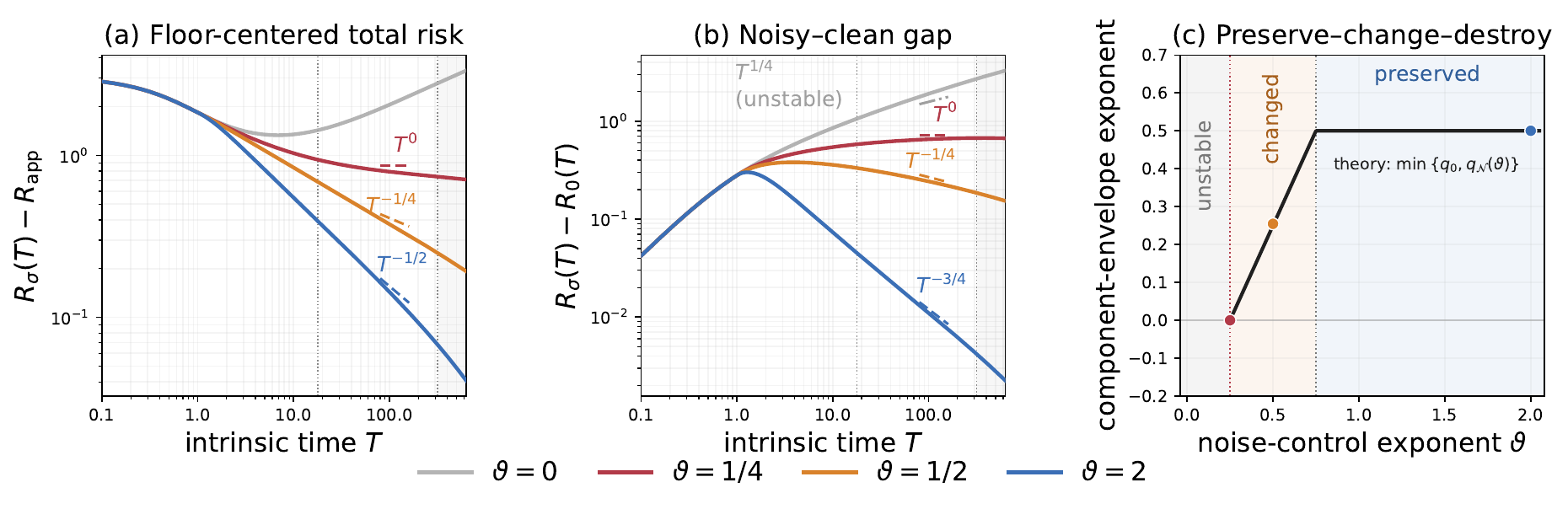}
\caption{Floor-centered risk, noisy--clean gap, and predicted exponents across
four noise-control schedules.  The transition from growth through destruction
and change to clean-law preservation supports
\cref{thm:rv_preserve_change_destroy}.}
\label{fig:rv_preserve_change_destroy_continuum}
\end{figure}

\begin{proposition}[Clean masking and memory saturation]
\label{prop:rv_total_loss_sharpness}
Work in the fixed-limit setting above, and suppose the schedules being
compared have centered clean losses with the same leading asymptotic
\(R_{0,i}(T)-R_{\mathrm{app}}\sim T^{-q_0}L_0(T)\), where \(L_0\) is eventually
positive and slowly varying.  The gap-based converse in
\cref{thm:rv_exact_discrete_identification} transfers to total loss without a
clean baseline only in the noise-dominated range \(q<q_0\).  This restriction
is necessary for a uniform converse based on the full leading total-loss
equivalent, while \(q<q_{\mathcal K}\) is necessary for exponent-level
schedule identification.
\begin{enumerate}[label=(\roman*)]
\item If \(q_0<q_{\mathcal K}\), distinct gap exponents in
\((q_0,q_{\mathcal K})\) can be generated by distinct admissible schedules, while
their corresponding clean and total losses share the same leading clean-loss asymptotic.
\item If the clean loss has exponent \(q_0<q_{\mathcal K}\) and
\(Z(T)\sim T^{-q_0}L(T)\) for an eventually positive slowly varying \(L\),
then
\begin{equation}
R_\sigma(T)-R_{\mathrm{app}}
\sim T^{-q_0}\bigl(L_0(T)+L(T)\bigr).
\label{eq:rv_equal_order_total}
\end{equation}
\item At the memory ceiling, integrable regularly varying injections with
different tail exponents all produce stable full gaps with exponent
\(q_{\mathcal K}\).
\end{enumerate}
\end{proposition}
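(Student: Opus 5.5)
The plan is to prove all three items from the positive decomposition
\[
R_\sigma(T)-R_{\mathrm{app}}=\bigl(R_0(T)-R_{\mathrm{app}}\bigr)+Z(T),\qquad Z(T)\geq0,
\]
used in the proof of \cref{thm:rv_full_preserve_change_destroy}. The gap asymptotics themselves come from \cref{thm:rv_discrete_transfer}. The two necessity claims then follow from items (i) and (iii): those items exhibit distinct schedules that no total-loss equivalent, or no gap exponent, can tell apart. The converse itself, for \(q<\min\{q_0,q_{\mathcal K}\}\), is \cref{thm:rv_exact_discrete_identification}. When \(Z\) dominates, \(Z\) is recovered from the total loss up to a factor \(1+o(1)\), since the clean part is \(o(Z)\).

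For (i), fix \(q_0<q_1<q_2<q_{\mathcal K}\) and set \(\vartheta_i:=1-q_{\mathcal K}+q_i\in(1-q_{\mathcal K},1)\). Take eventually monotone pure-power ratio paths \(r_i(u)=c_i(1+u)^{\vartheta_i}\). Scale the amplitudes \(c_i\) large enough that part (b) of \cref{ass:td_lrs_joint_schedule_stability} holds. A sufficient condition is that \(\sup_s r_{i,s}^{-1}\) times the kernel mass on the window is below \(\kappa\), using the row-mass bound as in \cref{lem:td_lrs_joint_row_mass_certificate}.

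By \cref{thm:rv_discrete_transfer}, the gaps satisfy
\[
Z_i(T)\sim\sigma^2\operatorname{Beta}(1-\vartheta_i,1-q_{\mathcal K})\,c_i^{-1}T^{-q_i}L_{\mathcal K}(T).
\]
These are distinct exponents in \((q_0,q_{\mathcal K})\). Because \(q_i>q_0\), each gap is \(o(T^{-q_0}L_0(T))\). Hence every total loss satisfies \(R_{\sigma,i}-R_{\mathrm{app}}\sim T^{-q_0}L_0(T)\), which is the common clean equivalent assumed in the hypothesis. So the full leading total-loss equivalent carries no information about \(q_i\). This proves (i) and the necessity of \(q<q_0\).

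The delicate point is that the clean loss itself depends on the factorization \((\eta,B)\). I would handle this by invoking the proposition's standing hypothesis that the compared schedules share the clean equivalent, rather than proving it. If a construction is desired, I would keep \(\eta\) common and vary only \(B\). The fixed-batch clean recursion differs only through \(q_s\), which agrees with its limit at leading order on the critical shell by \cref{lem:td_lrs_propagator}, and through the feedback term. That feedback is lower order in case (i) of \cref{thm:noisy_43_aggregate_volterra_transfer}.

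For (ii), add the two equivalents in the positive decomposition. Both terms are nonnegative and of order \(T^{-q_0}\) times a slowly varying factor, so no cancellation occurs and the slowly varying factors add, giving \cref{eq:rv_equal_order_total}.

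For (iii), take \(r_a(u)^{-1}=(a-1)(1+u)^{-a}\) with \(a>1\), scaled small enough for row stability. Then \(V_\infty<\infty\), and the saturation branch \cref{eq:rv_saturation_constant} of \cref{thm:rv_discrete_transfer} gives \(Z_a(T)\sim C_a k(T)\) with \(C_a\in(0,\infty)\). Every such schedule therefore has gap exponent exactly \(q_{\mathcal K}\), even though the tail exponents \(a\) differ. This mirrors the continuum ceiling example at the end of \cref{prop:rv_continuum_converse}, and it shows that \(q<q_{\mathcal K}\) is necessary for exponent-level identification.
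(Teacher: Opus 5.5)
Your proposal is correct and follows the paper's proof essentially step for step. The paper builds (i) from pure-power ratio paths of index $1-q_{\mathcal K}+q_i$ with row-stable amplitudes; it phrases this through the converse, whose reverse direction is exactly your forward use of \cref{thm:rv_discrete_transfer}. It proves (ii) by adding the two nonnegative equivalents, and (iii) from the saturation branch \cref{eq:rv_saturation_constant} for integrable injections, just as you do.

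One small correction concerns your row-stability certificate in (i). In long memory the one-injection kernel is not integrable, so both the $\operatorname{tr}(\widehat{\bm H})$ bound of \cref{lem:td_lrs_joint_row_mass_certificate} and any bound of the form ``$\sup_s r_s^{-1}$ times kernel mass'' are infinite. Justify part \emph{(b)} of \cref{ass:td_lrs_joint_schedule_stability} instead as follows: each row sum equals $c_i^{-1}\sum_{s<t}\Delta T_s(1+T_s)^{-\vartheta_i}\Psi_{t,s}$. This sum is bounded uniformly in $t$ via the envelope $G$, because $\vartheta_i>1-q_{\mathcal K}$, so taking $c_i$ large gives the required margin.
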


\begin{proof}
For (i), choose distinct \(q_i\in(q_0,q_{\mathcal K})\).
Applying \cref{eq:rv_exact_discrete_inverse} with constant slow factors gives ratios of
power \(1-q_{\mathcal K}+q_i\) and gaps
\(Z_i(T)\sim a_iT^{-q_i}\), with \(a_i>0\) chosen small enough for uniform
stability.  Since \(q_i>q_0\), \(Z_i=o(R_{0,i}-R_{\mathrm{app}})\), hence
\(R_{\sigma,i}-R_{\mathrm{app}}\sim R_{0,i}-R_{\mathrm{app}}\sim
T^{-q_0}L_0(T)\) for both schedules.

Adding the positive equivalents proves \cref{eq:rv_equal_order_total} and the
three cases in (ii).  For example, when \(L_0\equiv1\),
\(L_i(T)=(\log T)^{-i}\), \(i=1,2\), yield different ratio tails through
\cref{eq:rv_exact_discrete_inverse}, but both leading total-risk asymptotics equal the
clean-loss asymptotic by the common clean-asymptotic assumption.

For (iii), choose distinct integrable tail exponents
\(\vartheta_i>1\) with row-stable amplitudes.
From \cref{eq:rv_saturation_constant}, we have \(Z_i(T)\sim C_i k(T)\), so both gaps
have exponent \(q_{\mathcal K}\).  The unit-mass family in
\cref{prop:rv_continuum_converse} more strongly gives distinct integrable
tails with the same direct equivalent.
\end{proof}

\subsection{Polynomial schedules in iteration time}
\label{app:rv_physical_time}

\begin{corollary}[Polynomial schedules in iteration time]
\label{cor:rv_physical_regime_map}
Assume the corresponding fixed-system hypotheses of
\cref{thm:rv_full_preserve_change_destroy}, including its regularly varying
clean-loss asymptotic with exponent \(q_0>0\), and impose its row-stability condition only
on the decaying branches.  For \(n\geq1\), let
\[
\eta_n=c_\eta n^{-\ell},
\qquad
B_n=\lceil c_Bn^b\rceil,
\qquad
c_\eta,c_B>0,\quad 0\leq\ell<1,\quad b\geq0.
\]
Then
\[
T_n\asymp n^{1-\ell},
\qquad
r(T_n)=\frac{B_n}{\eta_n}\asymp T_n^\vartheta,
\qquad
\vartheta=\frac{\ell+b}{1-\ell}.
\]
Consequently, on every decaying branch, the noisy--clean gap exponent with
respect to iteration index is
\[
q_{\mathcal N}^{\mathrm{it}}(\ell,b)
:=(1-\ell)
q_{\mathcal N}\!\left(\frac{\ell+b}{1-\ell}\right).
\]

For the PLRF long-memory specialization
\[
\frac14<\alpha<\frac12,
\qquad
q_{\mathcal K}=2-\frac{1}{2\alpha},
\]
the following alternatives hold:
\begin{enumerate}
\item If \(\ell+2\alpha b<1-2\alpha\), the row-stability bound cannot hold.

\item If \(\ell+2\alpha b=1-2\alpha\), the noisy--clean gap has no
positive-power decay.

\item If
\[
\ell+2\alpha b>1-2\alpha,
\qquad
2\ell+b<1,
\]
then
\[
q_{\mathcal N}^{\mathrm{it}}(\ell,b)
=
\frac{\ell+2\alpha b-(1-2\alpha)}{2\alpha}.
\]

\item If \(2\ell+b\geq1\), then
\[
q_{\mathcal N}^{\mathrm{it}}(\ell,b)
=(1-\ell)q_{\mathcal K}.
\]
At \(2\ell+b=1\), the gap additionally carries the logarithmic correction
inherited from the long-memory ceiling.
\end{enumerate}

In integrable memory, where \(q_{\mathcal K}>1\), if \(\ell+b>0\), then
\[
q_{\mathcal N}^{\mathrm{it}}(\ell,b)
=\min\{\ell+b,(1-\ell)q_{\mathcal K}\}.
\]
If \(\ell=b=0\), the noisy--clean gap has no positive-power decay.

The clean-risk asymptotic in \cref{thm:rv_full_preserve_change_destroy} gives
\[
R_0(T_n)-R_{\mathrm{app}}
=n^{-(1-\ell)q_0+o(1)}.
\]
Hence, on every decaying branch,
\[
R_\sigma(T_n)-R_{\mathrm{app}}
=n^{-\min\{(1-\ell)q_0,q_{\mathcal N}^{\mathrm{it}}(\ell,b)\}+o(1)}.
\]
\end{corollary}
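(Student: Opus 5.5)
The plan is to reduce everything to the intrinsic-time classification of \cref{thm:rv_full_preserve_change_destroy} by a change of clock. First compute the schedule coordinates. With $\eta_n=c_\eta n^{-\ell}$ and $0\le\ell<1$, summation gives $T_n=\sum_{s<n}\eta_s\sim c_\eta n^{1-\ell}/(1-\ell)$. In particular $n\asymp T_n^{1/(1-\ell)}$, and the mesh $\eta_n\to0$ or stays bounded, so the fine-mesh condition \cref{eq:rv_mesh_condition} holds. Next, $B_n/\eta_n\asymp n^{b+\ell}$, so $r(T)\asymp T^{\vartheta}$ with $\vartheta=(\ell+b)/(1-\ell)$.

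The ceiling $\lceil\cdot\rceil$ introduces only a bounded multiplicative factor, but \cref{ass:rv_schedule_path} needs genuine regular variation with eventual monotonicity. For $b>0$, $\lceil c_Bn^b\rceil/(c_Bn^b)\to1$, so this holds. For $b=0$, $B_n$ is constant and $r=B/\eta_n$ is an exact power of $n$. Either way the step interpolation $r(u)$ is regularly varying in $u$ with index $\vartheta$. I expect this to be the only delicate point: the step interpolation on the cells $[T_s,T_{s+1})$ has relative width $\eta_s/T_s\to0$, so the Potter bounds \cref{eq:rv_uniform_schedule_potter} transfer from the grid to the continuum. Once that is checked, the intrinsic-time theorem applies verbatim.

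Then translate exponents. On a decaying branch the gap is $T_n^{-q_{\mathcal N}(\vartheta)+o(1)}=n^{-(1-\ell)q_{\mathcal N}(\vartheta)+o(1)}$, which is the definition of $q^{\mathrm{it}}_{\mathcal N}$.

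For PLRF long memory, $1-q_{\mathcal K}=1/(2\alpha)-1$, and the cases follow by comparing $\vartheta$ with this value:
\begin{itemize}
\item Case 1: $\vartheta<1-q_{\mathcal K}$ is equivalent to $2\alpha(\ell+b)<(1-2\alpha)(1-\ell)$, that is, $\ell+2\alpha b<1-2\alpha$. Here the row mass grows, which contradicts row stability.
\item Case 2: equality in the same comparison is the destroy edge, where the gap has no positive-power decay.
\item Case 3: on the interior branch $1-q_{\mathcal K}<\vartheta<1$, multiply $\vartheta+q_{\mathcal K}-1$ by $(1-\ell)$ and simplify to $(\ell+2\alpha b-(1-2\alpha))/(2\alpha)$. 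The condition $\vartheta<1$ is exactly $2\ell+b<1$.
\item Case 4: $\vartheta\ge1$, i.e.\ $2\ell+b\ge1$, gives exponent $q_{\mathcal K}$, with the logarithm at $\vartheta=1$ coming from $V(T)\sim c^{-1}\log T$.
\end{itemize}

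For integrable memory, $(1-\ell)\min\{\vartheta,q_{\mathcal K}\}=\min\{\ell+b,(1-\ell)q_{\mathcal K}\}$. The case $\vartheta=0$ is exactly $\ell=b=0$, which is the IM destroy edge.

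Finally, the clean asymptotic in intrinsic time converts to $n^{-(1-\ell)q_0+o(1)}$. The positive decomposition $R_\sigma-R_{\mathrm{app}}=(R_0-R_{\mathrm{app}})+Z$ then yields the total exponent as the minimum of the two iteration-time exponents.
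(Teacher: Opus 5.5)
Your proposal is correct and follows the paper's proof essentially step for step. Both use discrete Karamata for $T_n\sim c_\eta n^{1-\ell}/(1-\ell)$ and $r(T_n)\asymp n^{\ell+b}$, regular variation of the step interpolation from $T_{n+1}/T_n\to1$, and then rescale each intrinsic-time exponent of \cref{thm:rv_full_preserve_change_destroy} by the factor $1-\ell$, with the same case boundaries $\vartheta=1-q_{\mathcal K}\iff\ell+2\alpha b=1-2\alpha$, $\vartheta=1\iff 2\ell+b=1$, and $\vartheta=0\iff\ell=b=0$. (Eventual monotonicity of $r$ comes from $B_n$ and $1/\eta_n$ both being nondecreasing, not from $\lceil c_Bn^b\rceil/(c_Bn^b)\to1$, but this does not affect the argument.)
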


\begin{proof}
Changing finitely many initial schedule values does not affect the stated
asymptotic exponents, and discrete Karamata gives
\[
T_n
=T_1+\sum_{s=1}^{n-1}c_\eta s^{-\ell}
\sim \frac{c_\eta}{1-\ell}n^{1-\ell}.
\]
Moreover, \(B_n\sim\bar c_Bn^b\), where \(\bar c_B=c_B\) for \(b>0\) and
\(\bar c_B=\lceil c_B\rceil\) for \(b=0\).  Therefore
\[
r(T_n)=\frac{B_n}{\eta_n}
\sim\frac{\bar c_B}{c_\eta}n^{b+\ell}.
\]
Since \(T_{n+1}/T_n\to1\), the piecewise-constant intrinsic-time
interpolation of \(r\) is eventually nondecreasing and regularly varying with
index
\[
\vartheta=\frac{\ell+b}{1-\ell}.
\]

In PLRF long memory,
\[
1-q_{\mathcal K}=\frac{1-2\alpha}{2\alpha}.
\]
Thus
\[
\vartheta=1-q_{\mathcal K}
\quad\Longleftrightarrow\quad
\ell+2\alpha b=1-2\alpha,
\]
which gives the stability/no-decay boundary.  Likewise,
\[
\vartheta=1
\quad\Longleftrightarrow\quad
2\ell+b=1,
\]
which gives the memory ceiling.

On the interior long-memory branch
\(1-q_{\mathcal K}<\vartheta<1\),
\cref{thm:rv_full_preserve_change_destroy} gives intrinsic-time gap exponent
\(\vartheta+q_{\mathcal K}-1\).  Composing with
\(T_n\asymp n^{1-\ell}\) yields
\[
(1-\ell)(\vartheta+q_{\mathcal K}-1)
=(1-\ell)\left(
\frac{\ell+b}{1-\ell}+1-\frac{1}{2\alpha}
\right)
=\frac{\ell+2\alpha b-(1-2\alpha)}{2\alpha}.
\]
For \(\vartheta>1\), the intrinsic-time exponent saturates at
\(q_{\mathcal K}\), giving iteration-time exponent
\((1-\ell)q_{\mathcal K}\).  At \(\vartheta=1\), the polynomial schedule
gives \(r(u)\sim cu\) for some \(c>0\), so
\(V(T)\sim c^{-1}\log T\); the long-memory boundary asymptotic in
\cref{thm:rv_full_preserve_change_destroy} therefore gives the stated ceiling
logarithm.

In integrable memory, the intrinsic-time exponent is
\(\min\{\vartheta,q_{\mathcal K}\}\) whenever \(\vartheta>0\).  Hence
\[
(1-\ell)\min\{\vartheta,q_{\mathcal K}\}
=\min\{\ell+b,(1-\ell)q_{\mathcal K}\}.
\]
When \(\ell=b=0\), one has \(\vartheta=0\), and the gap has no
positive-power decay.

Finally, the regularly varying clean-risk asymptotic in
\cref{thm:rv_full_preserve_change_destroy} composed with
\(T_n\asymp n^{1-\ell}\) gives clean iteration-time exponent
\((1-\ell)q_0\).  Positivity then gives the displayed minimum of the clean and
gap exponents and the change/critical/preserve alternatives.
\end{proof}

\begingroup

\section{Constant learning-rate and batch-size schedules}
\label{sec:constant_schedule_noisy_43}
\label{app:constant_schedule_noisy_proofs}

This appendix analyzes constant learning-rate and batch-size schedules under a
feature-compute budget.  Appendix~\ref{subsec:noisy_43_constant_scaling_inputs}
collects the forcing, memory, and localization inputs needed to formulate the
optimization problem.  Appendix~\ref{subsec:noisy_43_constant_discrete_optimum_proof}
studies the exact discrete-time infinite-width problem and establishes the
scale of its fixed-noise optimum.
Appendix~\ref{subsec:noisy_43_constant_fixed_noise_optima} derives the
source-window-optimal rates below trace class and the conditional plateau
within a fixed finite-time window above trace class, and
Appendix~\ref{subsec:noisy_43_constant_optimum_plateau_proof} proves these
results.  Appendix~\ref{subsec:noisy_43_constant_crossover_proof} compares the
clean and noisy optima and derives the compute and noise scales at which the
transition occurs.

Within the analyzed source and finite-time windows, label noise shifts compute
toward wider models and fewer iterations.

Unless a statement explicitly invokes the exact conditional recursion and
uses Roman risk notation, every calligraphic forcing, kernel, and risk in this
section is a resolvent deterministic equivalent (DE).

Write
\(P_{\sigma,\mathrm{const}}^\star
:=\lim_{\mathfrak f\to\infty}
\mathcal R_{\sigma,\mathrm{const}}^\star(\mathfrak f)\)
for the constant-schedule compute-optimal DE plateau, distinct from the
infinite-width DE curve \(\mathcal R_{\sigma,\infty}(t)\).

\subsection{Scaling inputs and optimization setup}
\label{subsec:noisy_43_constant_scaling_inputs}

Write \(p:=2\alpha+2\beta-1>0\).

Here \(T=\eta t\), and stability gives
\(q_\eta(\lambda)^t\asymp e^{-cT\lambda}\).  Thus
\(\lambda_j=j^{-2\alpha}\) has cutoff
\[
j_T\asymp T^{1/(2\alpha)}:
\qquad
j\ll j_T\ \hbox{is substantially learned},
\qquad
j\gg j_T\ \hbox{is substantially unlearned}.
\]
The power-law window \(1\ll\eta t\lesssim m^{2\alpha}\) is therefore past
the initial transient but before resolution of the smallest feature modes.

The four clean-loss components used below have the following scales.  On the
LM and IM DE branches, assume \(\alpha>1/4\), \(\beta<1+2\alpha\), and stay
away from critical lines; the FB total-risk asymptotic uses the same source
restriction with \(0<\alpha<1/4\) and \(p>0\).  In the window
\(1\ll\eta t\lesssim m^{2\alpha}\), absent components are set to zero and
\begin{align}
\mathcal F_{pp}(t,m)
&\asymp
(\eta t)^{-p/(2\alpha)},
\label{eq:noisy_43_Fpp_scaling}\\
\mathcal F_0(m)
&\asymp
\begin{cases}
m^{-p},&\beta<\frac12,\\
m^{-2\alpha},&\beta>\frac12,
\end{cases}
\notag\\
\mathcal F_{ac}(t,m)
&\asymp
m^{-1}(\eta t)^{-1+1/(2\alpha)}
\quad
\left(\alpha>\frac12,\ \beta>\frac12\right),
\label{eq:noisy_43_Fac_scaling}\\
\frac1{\eta}\mathcal K_{pp}(t,m)
&\asymp
\frac{\eta}{B}(\eta t)^{-2+1/(2\alpha)}
\quad
\left(\alpha>\frac14\right).
\label{eq:noisy_43_Kpp_scaling}
\end{align}
Here \(\mathcal F_{pp}\) is time-limited unlearned aligned-target error,
\(\mathcal F_0\) is width-limited permanent null-space error, and
\(\mathcal F_{ac}\) and \(\mathcal K_{pp}/\eta\) depend on both width and time,
representing finite-width spectral distortion and aligned spectral memory from
transient SGD sampling noise, respectively.  The next corollary shows that
these four terms recover the clean \(4+3\) decomposition.  This closure is not
an additional assumption: the forcing and kernel estimates of
\citet[Appendices~G--H]{paquette20244+}, together with their Volterra
approximation theorem and subexponential kernel estimate, establish the
decomposition in the canonical PLRF model.  We record its translation to the
present notation.

\begin{corollary}[Recovery of the \(4+3\) spectral decomposition]
\label{cor:noisy_43_43_spectral_separation}
Suppose \(\alpha>1/4\), \(2\alpha+2\beta>1\),
\(\beta<1+2\alpha\), with parameters away from the critical lines.  Let the constant
schedule satisfy \cref{ass:noisy_43_constant_schedule_stability}.  Under the
notation map
\[
d_{4+3}=m,
\qquad
v_{4+3}=d,
\qquad
B_{4+3}=B,
\qquad
\gamma_{4+3}=\frac{\eta}{B},
\]
uniformly for \(1\ll\eta t\lesssim m^{2\alpha}\),
\begin{equation}
\label{eq:noisy_43_clean_scaling_decomposition}
\mathcal R_0(t,m)
\asymp
\mathcal F_0(m)+\mathcal F_{pp}(t,m)+\mathcal F_{ac}(t,m)
+\frac1{\eta}\mathcal K_{pp}(t,m).
\end{equation}
\end{corollary}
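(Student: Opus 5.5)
The plan is to read \cref{eq:noisy_43_clean_scaling_decomposition} off the clean DE Volterra equation \cref{eq:noisy_43_clean_volterra_definition}. First I identify its forcing and kernel with the \(4+3\) objects through the stated notation map, and then I show that stable feedback contributes exactly one additional term, namely \(\mathcal K_{pp}/\eta\).

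\emph{Step 1: identify the forcing and kernel.} Under the map \(d_{4+3}=m\), \(v_{4+3}=d\), \(B_{4+3}=B\), \(\gamma_{4+3}=\eta/B\), the functional calculus \cref{eq:noisy_43_de_functional_calculus} identifies the complete DE forcing \(\mathcal F(t,m)\) and kernel \(\mathcal K(t,m)\) of \cref{eq:noisy_43_forcing_definition,eq:noisy_43_kernel_definition} with the complete forcing \(\mathscr F\) and kernel \(\mathscr K\) of \citet{paquette20244+}. This is the same identification already used in Step~1 of the proof of \cref{lem:td_lrs_de_forcing_filters,lem:td_lrs_de_spectral_filters}, with \(B\) now general.

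\emph{Step 2: assemble the component scales.} The decomposition \cref{eq:td_lrs_constant_de_forcing_decomposition}, which relies on Corollary~F.1 of that paper and needs \(\beta<1+2\alpha\), gives
\[
\mathcal F\asymp\mathcal F_0+\mathcal F_{pp}+\mathcal F_{ac}
\]
on the window. Proposition~G.1 gives \(\mathcal K\asymp\mathcal K_{pp}\) there. Together with \cref{eq:noisy_43_Fpp_scaling,eq:noisy_43_Fac_scaling,eq:noisy_43_Kpp_scaling} and \cref{eq:td_lrs_forcing_zero_orders}, this supplies all four component orders. Positivity of the resolvent then gives the lower bound
\[
\mathcal R_0\geq\mathcal F+\mathcal K*\mathcal F\geq\mathcal F+\Bigl(\min_{s\leq t}\mathcal F\Bigr)\,\mathcal S_t .
\]
At times comparable to the window, this contains \((\sum_s\mathcal F_{>0}(s))\,\mathcal K(t)\) in the summable-forcing branches. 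It also contains a term of order \(\mathcal K_{pp}/\eta\) from the \(O(1)\) initial forcing mass: since \(\mathcal F(0)\asymp1\), we get \(\mathcal R_0(t)\gtrsim\mathcal K(t-1)\asymp\mathcal K_{pp}/\eta\) after converting \(\mathcal K(t)\) per step to intrinsic time, using \(\eta\lesssim1\).

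\emph{Step 3: upper bound from feedback.} I would apply \cref{thm:noisy_43_aggregate_volterra_transfer}, using case (i) or case (ii) depending on whether \(\sum_s\mathcal F_{>0}(s)\) diverges. Row stability comes from \cref{ass:noisy_43_constant_schedule_stability}. The long-tail and subexponential relations \((\mathcal K*\mathcal K)\asymp\mathcal K\) and \((\mathcal K*\mathcal F_{>0})\asymp\kappa\mathcal F_{>0}+(\sum\mathcal F_{>0})\mathcal K\) are exactly the subexponential-kernel estimate of \citet[Appendices~G--H]{paquette20244+} in the canonical PLRF model, which is why the closure is quoted rather than assumed. The result is
\[
\mathcal R_0\asymp\mathcal F_0+\mathcal F_{>0}+\Bigl(\sum_s\mathcal F_{>0}(s)\Bigr)\mathcal K .
\]
Since \(\sum_s\mathcal F_{>0}(s)\lesssim\eta^{-1}\) times an \(O(1)\) intrinsic integral, the last term is \(\lesssim\mathcal K_{pp}/\eta\). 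Whenever \(\sum_s\mathcal F_{>0}(s)\) is not of order \(\eta^{-1}\), the kernel term is dominated by the forcing terms. When it diverges (\(q_{\mathcal F}<1\)), case (i) applies and I check directly that \(\mathcal K_{pp}/\eta\lesssim\mathcal F_{pp}\), because \(q_{\mathcal F}<1<q_{\mathcal K}\) or, in LM, \(\eta/B\) times \(T^{-q_{\mathcal K}}\) is below \(T^{-q_{\mathcal F}}\) in the relevant branches. Including the term on both sides is therefore harmless.

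\emph{Main obstacle.} The hard part is Step 3's uniformity in \(m\) below trace class, where \(\eta\asymp m^{2\alpha-1}\) and the kernel mass \(\kappa_m\) must stay bounded below one. The convolution comparisons must also hold uniformly up to the endpoint \(\eta t\asymp m^{2\alpha}\). I would handle this as in \cref{lem:td_lrs_de_volterra_reduction}, using the bound \(\mathcal K^{*2}\leq C\varepsilon_0\mathcal K\) obtained by splitting at half-age, which avoids needing sharp subexponential asymptotics. The remaining branch bookkeeping (LM versus IM, \(\beta\) on either side of \(1/2\)) is then a termwise comparison.
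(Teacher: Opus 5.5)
Your component step matches the paper's; the feedback step takes a different route. The paper's proof is a pure translation. It checks that the map gives \(\gamma_{4+3}B_{4+3}=\eta\) and \(\gamma_{4+3}^2B_{4+3}=\eta^2/B\), quotes \citet[Appendix~H]{paquette20244+} for \(\mathcal F\asymp\mathcal F_0+\mathcal F_{pp}+\mathcal F_{ac}\), and then quotes their complete-kernel comparison and Volterra approximation theorem, which directly output the extra \(\eta^{-1}\mathcal K_{pp}\) term. You instead rebuild the Volterra step from \cref{thm:noisy_43_aggregate_volterra_transfer}. The fourth term appears as the renewal term \(\bigl(\sum_s\mathcal F_{>0}(s)\bigr)\mathcal K(t)\), because the forcing is of order one for about \(1/\eta\) steps, and an exponent check shows this term is dominated when \(q_{\mathcal F}<1\). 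This makes visible where \(\eta^{-1}\mathcal K_{pp}\) comes from and when it matters. It is not more self-contained, however: the subexponential relations and summable domination you feed into case (ii) are the same estimates that underlie the theorem the paper cites.

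Two steps need repair.

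\emph{The single-step lower bound.} The claim ``since \(\mathcal F(0)\asymp1\), \(\mathcal R_0(t)\gtrsim\mathcal K(t-1)\asymp\mathcal K_{pp}/\eta\)'' is off by a factor \(\eta\). One step gives only \(\mathcal K(t-1)\asymp\mathcal K_{pp}(t)\), which is too small by \(\eta\asymp m^{2\alpha-1}\to0\) below trace class; appealing to \(\eta\lesssim1\) points the wrong way. Instead, use two facts: \(\mathcal K(\cdot,m)\) is nonincreasing, since \(0\le q_\eta\le1\) on the support, and \(\mathcal F(s)\gtrsim1\) for \(\eta s\le1\). Together they give \((\mathcal K*\mathcal F)(t)\ge\mathcal K(t)\sum_{s<t}\mathcal F(s)\gtrsim\mathcal K(t)/\eta\) in every branch of the window. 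The lower bound then needs no case split.

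\emph{The \(o(\cdot)\) hypothesis of case (i).} Case (i) of \cref{thm:noisy_43_aggregate_volterra_transfer} requires \(\bigl[\sum_{n\ge1}\mathcal K^{*n}(t)\bigr]\sum_{s\le t}\mathcal F_{>0}(s)=o(\mathcal F_{>0}(t))\). In \(\mathrm{LM}_{1,2}\) that ratio is \(\asymp(\eta/B)T^{1-q_{\mathcal K}}\), which is only \(O(\eta_\star)\) at \(T\asymp m^{2\alpha}\), not vanishing. You therefore need the \(O(\cdot)\) version of the argument.

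Your fallback through \cref{lem:td_lrs_de_volterra_reduction} supplies that version, but it has its own limitation. The bound \(\mathcal K^{*2}\le C\varepsilon_0\mathcal K\) sums only if \(C\varepsilon_0<1\). This is a smallness condition on the step-size constant, stronger than the arbitrary fixed \(\kappa<1\) allowed by \cref{ass:noisy_43_constant_schedule_stability}. To cover every stable schedule as stated, you need a Kesten-type bound \(\mathcal K^{*n}\le C_\epsilon(1+\epsilon)^n\kappa^{n-1}\mathcal K\). That is exactly the subexponential kernel estimate of \citet{paquette20244+} on which the paper's citation relies.
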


\begin{proof}
Under the displayed map,
\(\gamma_{4+3}B_{4+3}=\eta\) and
\(\gamma_{4+3}^2B_{4+3}=\eta^2/B\), matching both the intrinsic clock and
the kernel prefactor.  The forcing estimates of
\citet[Appendix~H]{paquette20244+} give
\(\mathcal F\asymp\mathcal F_0+\mathcal F_{pp}+\mathcal F_{ac}\), including
their error control.  Their complete-kernel comparison and Volterra
approximation theorem give the remaining
\(\eta^{-1}\mathcal K_{pp}\) contribution, proving the claim.
\end{proof}

When \(\alpha<1/4\), the infinite pure-point kernel is not summable.  The
finite-bulk DE kernel and noisy--clean gap are nevertheless controlled by
\cref{prop:td_lrs_joint_de_finite_bulk_kernel}, and the canonical PLRF
constant-schedule risk asymptotic is the specialization of
\cref{prop:td_lrs_joint_de_finite_bulk} for \(p>0\),
\(\beta<1+2\alpha\), off the critical lines.  For comparison, write
\(F_{ac}^{\bm W}\) and \(F_0^{\bm W}\) for the absolutely-continuous and
zero-mode components of the exact empirical forcing.  The corresponding
finite-dimensional proposition of \citet{paquette20244+} gives
\[
F_{ac}^{\bm W}(t,m)=0
\quad\left(\beta<\frac12\right),
\qquad
0\leq F_{ac}^{\bm W}(t,m)
\leq C_{\alpha,\beta,c}F_0^{\bm W}(m)
\quad\left(\beta>\frac12,\ \alpha<\frac12\right).
\]
Here \(c=\lim d/m\in(1,\infty)\), uniformly on strict localized windows.

Label noise is controlled by the cumulative kernel.  For \(\alpha>1/4\), the
PLRF mode estimates and \cref{eq:noisy_43_learning_rate} give
\begin{equation}
\label{eq:noisy_43_cumulative_kernel}
\mathcal S_t(m)
\asymp
\frac{\eta}{B}\sum_{j\leq m}j^{-2\alpha}
\left(1-\exp\{-c_1\eta tj^{-2\alpha}\}\right),
\end{equation}
for a constant \(c_1>0\).  In the trace-class region \(\alpha>1/2\), write
\(\mathcal S_\infty(m):=\lim_{t\to\infty}\mathcal S_t(m)\in(0,\kappa]\).
Then
\begin{equation}
\label{eq:noisy_43_cumulative_kernel_regimes}
\mathcal S_t(m)=
\begin{cases}
\mathcal S_\infty(m)-\Theta\!\left(t^{-1+1/(2\alpha)}\right),
&\alpha>\frac12,\\[0.5em]
\Theta\!\left(
\left(\dfrac{t}{Bm}\right)^{(1-2\alpha)/(2\alpha)}
\right),
&\frac14<\alpha<\frac12,\\[0.9em]
\Theta\!\left(\dfrac{t}{Bm}\right),
&0<\alpha<\frac14,
\end{cases}.
\end{equation}
The last line is the target-independent finite-bulk
DE kernel estimate from \cref{prop:td_lrs_joint_de_finite_bulk_kernel}; the exact empirical analogue is
\cref{lem:td_lrs_fb_bulk_kernel}.  Above trace class, accumulated noise
saturates.  Below trace class, the width-dependent learning rate can drive it
to zero.

One iteration costs order \(Bm\), so total compute is \(\mathfrak f:=tBm\).
For reference, define the full-budget optimum
\begin{equation*}
\mathcal R_{\sigma,\mathrm{const}}^\star(\mathfrak f)
:=
\min_{1\leq m\leq\mathfrak f/B}
\mathcal R_\sigma\left(\frac{\mathfrak f}{Bm},m\right),
\end{equation*}
and let \(m_{\sigma,\mathrm{const}}^\star(\mathfrak f)\) be any minimizer.
To separate a
possible noise floor from the part that still improves, define
\begin{align*}
P_{\sigma,\mathrm{const}}^\star
&:=
\lim_{\mathfrak f\to\infty}\mathcal R_{\sigma,\mathrm{const}}^\star(\mathfrak f),
\\
\Delta_{\sigma,\mathrm{const}}^\star(\mathfrak f)
&:=
\mathcal R_{\sigma,\mathrm{const}}^\star(\mathfrak f)
-P_{\sigma,\mathrm{const}}^\star .
\end{align*}

Below trace class, the power laws hold only in the source window.  We therefore
optimize only over the concrete interior class
\[
\mathcal M_{\mathrm{src}}(\mathfrak f)
:=\left\{1\leq m\leq\frac{\mathfrak f}{B}:
\log\mathfrak f\leq\frac{\eta\mathfrak f}{Bm}
\leq\frac{m^{2\alpha}}{\log\mathfrak f}\right\}.
\]
For sufficiently large \(\mathfrak f\), define
\[
\mathcal R_{\sigma,\mathrm{src}}^\star(\mathfrak f)
:=\min_{m\in\mathcal M_{\mathrm{src}}(\mathfrak f)}
\mathcal R_\sigma\!\left(\frac{\mathfrak f}{Bm},m\right),
\qquad
m_{\sigma,\mathrm{src}}^\star(\mathfrak f)
\in\operatorname*{argmin}_{m\in\mathcal M_{\mathrm{src}}(\mathfrak f)}
\mathcal R_\sigma\!\left(\frac{\mathfrak f}{Bm},m\right).
\]
Set
\(t_{\sigma,\mathrm{src},\mathfrak f}^\star
:=\mathfrak f/(B m_{\sigma,\mathrm{src}}^\star(\mathfrak f))\).
The logarithmic margins select a strict source window; the attaining scales
below lie polynomially far from both boundaries.

In the trace-class region \(\alpha>1/2\), define
\(\mathcal R_{\sigma,\infty}(t):=\lim_{m\to\infty}\mathcal R_\sigma(t,m)\)
and
\begin{equation}
\label{eq:noisy_43_a_sigma}
a_\sigma=
\begin{cases}
p,&\beta<\frac12,\\
1,&\beta>\frac12.
\end{cases}
\end{equation}

\begin{assumption}[Trace-class local optimum and DE expansion]
\label{ass:noisy_43_trace_local_expansion}
Let \(U\subset(1,\infty)\) be compact.
\begin{enumerate}[label=(\alph*)]
\item The continuously interpolated curve \(\mathcal R_{\sigma,\infty}\) has
a unique nondegenerate minimizer
\(t_{\sigma,\mathrm{const}}^\star\) in the interior of \(U\).
\item Uniformly for \(t\in U\),
\begin{equation}
\label{eq:noisy_43_fixed_time_expansion}
\mathcal R_\sigma\!\left(t,\frac{\mathfrak f}{Bt}\right)
=
\mathcal R_{\sigma,\infty}(t)
+\left(\frac{\mathfrak f}{Bt}\right)^{-a_\sigma}G_\sigma(t)
+o(\mathfrak f^{-a_\sigma}),
\qquad
G_\sigma(t_{\sigma,\mathrm{const}}^\star)>0.
\end{equation}
\end{enumerate}
\end{assumption}

Under \cref{ass:noisy_43_trace_local_expansion}, define
\[
\mathcal R_{\sigma,U}^\star(\mathfrak f)
:=\min_{t\in U}\mathcal R_\sigma\!\left(t,\frac{\mathfrak f}{Bt}\right),
\qquad
t_{\sigma,U,\mathfrak f}^\star
\in\operatorname*{argmin}_{t\in U}
\mathcal R_\sigma\!\left(t,\frac{\mathfrak f}{Bt}\right),
\]
and set \(m_{\sigma,U,\mathfrak f}^\star
:=\mathfrak f/(B t_{\sigma,U,\mathfrak f}^\star)\),
\(P_{\sigma,U}^\star
:=\mathcal R_{\sigma,\infty}(t_{\sigma,\mathrm{const}}^\star)\), and
\(\Delta_{\sigma,U}^\star(\mathfrak f)
:=\mathcal R_{\sigma,U}^\star(\mathfrak f)-P_{\sigma,U}^\star\).
\subsection{Discrete infinite-width fixed-noise optimum}
\label{subsec:noisy_43_constant_discrete_optimum_proof}

Before optimizing over continuous compute diagonals, introduce the exact
integer-time objects at infinite width.  For trace-class parameters, let
\[
\kappa_\infty
:=
\frac{\eta}{B}\sum_{j\geq1}
\frac{j^{-2\alpha}}{2-\eta\left(1+\frac1B\right)j^{-2\alpha}}.
\]
For \(|w|<1\), set
\begin{align*}
\widetilde{\mathcal F}_\infty(w)
&:=
\sum_{j\geq1}
\frac{j^{-2(\alpha+\beta)}}{1-q_\eta(j^{-2\alpha})w},
&
\widetilde{\mathcal K}_\infty(w)
&:=
\frac{\eta^2}{B}\sum_{j\geq1}
\frac{j^{-4\alpha}}{1-q_\eta(j^{-2\alpha})w}.
\end{align*}
Thus \(\widetilde{\mathcal K}_\infty(1)=\kappa_\infty\).  Write
\([w^t]\) for formal coefficient extraction and, for \(\sigma^2>0\), define
\begin{align*}
\mathcal R_{\sigma,\infty}^{\mathrm{disc}}(t)
&:=
[w^t]\,
\frac{\widetilde{\mathcal F}_\infty(w)
+\sigma^2w\widetilde{\mathcal K}_\infty(w)/(1-w)}
{1-w\widetilde{\mathcal K}_\infty(w)},
&
\mathcal R_{\sigma,\infty}^{\star,\mathrm{disc}}
&:=
\inf_{t\in\mathbb N,\ t\geq1}
\mathcal R_{\sigma,\infty}^{\mathrm{disc}}(t).
\end{align*}
The next proposition checks this discrete problem.  It shows that its best
fixed-noise risk is uniformly of order \(\sigma^2\).  The proof is in
\cref{subsec:noisy_43_constant_discrete_optimum_proof}.

\begin{proposition}[Uniform scale of the discrete trace-class optimum]
\label{prop:noisy_43_discrete_trace_optimum}
Fix \(\sigma^2>0\), \(\delta\in(0,1)\), \(\underline\eta>0\), and
\(\overline\kappa\in(0,1)\), and restrict to
\[
\alpha\geq\tfrac12+\delta,\qquad
\eta\geq\underline\eta,\qquad
\eta\left(1+\frac1B\right)\leq2-\delta,\qquad
\kappa_\infty\leq\overline\kappa.
\]
Then, uniformly on this strict region,
\[
\sigma^2\frac{\eta^2}{B}\zeta(4\alpha)
\leq
\mathcal R_{\sigma,\infty}^{\star,\mathrm{disc}}
\leq
\frac{\sigma^2\kappa_\infty}{1-\kappa_\infty},
\qquad
\mathcal R_{\sigma,\infty}^{\star,\mathrm{disc}}
\asymp
\sigma^2\frac{\eta^2}{B}\zeta(4\alpha)
\asymp\sigma^2.
\]
\end{proposition}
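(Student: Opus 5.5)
The plan is to read the discrete infinite-width risk as the solution of the positive Volterra recursion \(\mathcal R=\mathcal F_\infty+\mathcal K_\infty*\mathcal R+\sigma^2\mathcal S\). Here the coefficients of \(\widetilde{\mathcal F}_\infty\) and \(\widetilde{\mathcal K}_\infty\) are \(\mathcal F_\infty(t)=\sum_j j^{-2(\alpha+\beta)}q_\eta(j^{-2\alpha})^t\) and \(\mathcal K_\infty(t)=(\eta^2/B)\sum_j j^{-4\alpha}q_\eta(j^{-2\alpha})^t\). The generating-function identity is exactly the coefficient form of \cref{iclr:eq:exact_volterra}, as recorded in the terminal-risk paragraph. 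All coefficients are nonnegative, since \(q_\eta\ge0\) by the factorization \(q_\eta(\lambda)=(1-\eta\lambda)^2+\eta^2\lambda^2/B\). Also \(\sum_t\mathcal K_\infty(t)=\kappa_\infty\le\overline\kappa<1\) by geometric summation, so the feedback resolvent \(\sum_{n\ge0}\mathcal K_\infty^{*n}\) is a positive operator with \(\ell^1\) gain at most \((1-\kappa_\infty)^{-1}\).

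\emph{Upper bound.} Write \(\mathcal R(t)=\mathcal R_{\mathrm{clean}}(t)+Z(t)\), where \(Z=\sigma^2(I-\mathcal K_\infty*)^{-1}\mathcal S\) is the noise part. Since \(\mathcal S_t\le\kappa_\infty\), the same forward induction as in \cref{cor:td_lrs_exact_finite_bulk_gap} gives \(Z(t)\le\sigma^2\kappa_\infty/(1-\kappa_\infty)\) for every \(t\). For the clean part, \(\mathcal F_\infty(t)\to0\) because each mode contracts and \(\sum_j j^{-2(\alpha+\beta)}<\infty\) (dominated convergence). By \cref{thm:noisy_43_aggregate_volterra_transfer} with \(\mathsf F_0=0\), \(\mathcal R\) converges to \(P=\sigma^2\kappa_\infty/(1-\kappa_\infty)\). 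The infimum over \(t\) is therefore at most \(\lim_t\mathcal R(t)=P\). This is the displayed upper bound.

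\emph{Lower bound.} Positivity gives \(\mathcal R(t)\ge\sigma^2\mathcal S_t\), and \(\mathcal S_t\ge\mathcal K_\infty(0)=(\eta^2/B)\zeta(4\alpha)\) for every \(t\ge1\). The case \(t=0\) is excluded from the infimum, which is exactly why it runs over \(t\ge1\). This gives the lower bound immediately, and with it the first display.

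\emph{Two-sided order.} It remains to show that both bounds are \(\asymp\sigma^2\) uniformly on the region. For the lower side, \(\zeta(4\alpha)\in[1,\zeta(2+4\delta)]\) and \(\eta^2/B\) is bounded below: \(\eta\ge\underline\eta\), and \(B\le\eta/(2-\delta-\eta)\) from the stability constraint. We also need \(B\) bounded, which follows from \(\kappa_\infty\ge\eta/(2B)\) only in the wrong direction. The bound on \(B\) therefore has to come from \(\eta(1+1/B)\le 2-\delta\) together with \(\eta\ge\underline\eta\). This constraint actually bounds \(B\) below, not above. So I would instead obtain \(\eta^2/B\gtrsim1\) by comparing with \(\kappa_\infty\): \(\kappa_\infty\le(\eta/B)\zeta(2\alpha)/\delta\). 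Thus \(P\lesssim(\eta/B)\zeta(1+2\delta)\), and \(\eta/B\le\eta^2/(B\underline\eta)\). Hence \(P\lesssim\delta^{-1}\underline\eta^{-1}\zeta(1+2\delta)\,(\eta^2/B)\zeta(4\alpha)/(1-\overline\kappa)\), which gives both \(\asymp\) comparisons with \(\eta^2/B\cdot\zeta(4\alpha)\).

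\emph{Main obstacle.} The step I expect to be hardest is the final \(\asymp\sigma^2\): it requires \(\eta^2/B\) to be bounded above and below by constants. The upper side follows from \(\eta\le2\) and \(B\ge1\). For the lower side, if \(B\) is unbounded, \(\eta^2/B\to0\), so I must check whether the constraints implicitly cap \(B\). If they do not, the last equivalence should be read with constants depending on a batch bound, and I would add \(B\le B_{\max}\) as in \cref{ass:td_lrs_regular_spectral_mode_decay}.
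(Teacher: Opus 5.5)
Your two bounds in the first display are proved the same way as in the paper. Positivity of the resolvent and the first injection, \(\mathcal S_t\ge\mathcal K_\infty(0)=\frac{\eta^2}{B}\zeta(4\alpha)\) for \(t\ge1\), give the lower bound. The terminal value \(\sigma^2\kappa_\infty/(1-\kappa_\infty)\) of the positive Volterra resolvent gives the upper bound. You do not need to cite \cref{thm:noisy_43_aggregate_volterra_transfer}, whose long-tail hypotheses you have not checked. Your induction \(Z(t)\le\sigma^2\kappa_\infty/(1-\kappa_\infty)\), together with \(\mathcal R_{\mathrm{clean}}(t)\to0\), already gives \(\inf_{t\ge1}\mathcal R(t)\le\sigma^2\kappa_\infty/(1-\kappa_\infty)\). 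The paper's extra remark \(\mathbb E\|\widehat{\bm H}-\bm\Lambda\|_{\mathrm{HS}}^2=O(m^{-1})\) only connects finite width to the infinite-width series. Since the proposition concerns the object defined by the generating functions, you may omit it.

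You differ from the paper in the uniform order comparison, and your version is the better one. The paper bounds the ratio \(\mathcal R_{\sigma,\infty}^{\star,\mathrm{disc}}/(\sigma^2\frac{\eta^2}{B}\zeta(4\alpha))\) by \(\overline\kappa/[(1-\overline\kappa)\underline\eta^2]\). To do so it uses \(\kappa_\infty\le\overline\kappa\) and \(\frac{\eta^2}{B}\zeta(4\alpha)\ge\underline\eta^2\), and the latter holds exactly only at \(B=1\). It is rescued up to \(B\)-dependent constants by the standing convention \(B=\Theta(1)\) stated at the start of \cref{sec:noisy_label_scaling_laws}. Your estimate \(\kappa_\infty\le\frac{\eta}{B\delta}\zeta(2\alpha)\le\frac{\eta^2}{B}\cdot\frac{\zeta(1+2\delta)}{\delta\underline\eta}\) gives \(\mathcal R_{\sigma,\infty}^{\star,\mathrm{disc}}\asymp\sigma^2\frac{\eta^2}{B}\zeta(4\alpha)\) with constants independent of \(B\). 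That is strictly stronger.

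Your worry about the last equivalence is also correct. The constraints bound \(B\) only from below. Stability gives \(\eta/B\le2-\delta-\eta\), i.e. \(B\ge\eta/(2-\delta-\eta)\); you first wrote this inequality the wrong way. Now fix \(\eta\) and send \(B\to\infty\): every region constraint continues to hold, yet \(\mathcal R_{\sigma,\infty}^{\star,\mathrm{disc}}\le\sigma^2\kappa_\infty/(1-\kappa_\infty)\to0\). So \(\asymp\sigma^2\) cannot hold uniformly without a batch cap. Adding \(B\le B_{\max}\), equivalently making the paper's \(B=\Theta(1)\) convention an explicit hypothesis, is the right repair. In the write-up, delete the false starts and restore the missing factor \(\sigma^2\) in your bound on \(P\).
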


For genuinely empirical finite-step statements, the Gaussian propagation in
\cref{subsec:td_lrs_joint_exact_dynamics} specializes to
\begin{equation}
\label{eq:noisy_43_mode_recursion}
\mathbb E_t[\rho_j^2(t+1)]
=
q_\eta(\widehat\lambda_j)\rho_j^2(t)
+\frac{\eta^2}{B}\widehat\lambda_j^2
\left(\|\bm e_t\|_2^2+\sigma^2\right).
\end{equation}

\begin{proof}[Proof of
\cref{prop:noisy_43_discrete_trace_optimum}]
For fixed integer time,
\[
\mathbb E\|\widehat{\bm H}-\bm\Lambda\|_{\mathrm{HS}}^2
=
\frac{(\operatorname{tr}\bm\Lambda)^2
+\operatorname{tr}(\bm\Lambda^2)}{m}
=O(m^{-1}).
\]
Finite induction in \cref{eq:noisy_43_mode_recursion} gives the stated
infinite-width series, while positivity gives
\[
\mathcal R_{\sigma,\infty}^{\mathrm{disc}}(t)
\geq\sigma^2\widetilde{\mathcal K}_\infty(0)
=\sigma^2\frac{\eta^2}{B}\zeta(4\alpha).
\]
The kernel geometric series gives \(\kappa_\infty\); hence the positive
Volterra resolvent converges to
\(\sigma^2\kappa_\infty/(1-\kappa_\infty)\), which bounds the integer-time
infimum from above.  Therefore
\[
1\leq
\frac{\mathcal R_{\sigma,\infty}^{\star,\mathrm{disc}}}
{\sigma^2\frac{\eta^2}{B}\zeta(4\alpha)}
\leq
\frac{\overline\kappa}
{(1-\overline\kappa)\underline\eta^2},
\quad\text{and}\quad
\underline\eta^2
\leq\frac{\eta^2}{B}\zeta(4\alpha)
\leq\zeta(2+4\delta),
\]
proving both uniform comparisons.
\end{proof}

\subsection{Source-window rates and trace-class plateau}
\label{subsec:noisy_43_constant_fixed_noise_optima}

The next theorem gives two distinct optimization statements across the eight
open PLRF propagation subregimes of
\cref{def:td_lrs_plrf_open_subregimes}.  Above trace class, it concerns the
optimum within the fixed finite-time window \(U\).  Below trace class, it
concerns the source-window-restricted optimum, which balances
unlearned target error against accumulated label noise.  The proof is in
\cref{subsec:noisy_43_constant_optimum_plateau_proof}; the plateau
consequence is stated separately in
\cref{prop:noisy_43_noise_plateau}.

For fixed \(\sigma^2>0\), denote the relevant loss and width exponents by
\((\bar\rho_\sigma,\bar\xi_\sigma)\), using
\[
\begin{cases}
\mathcal R_{\sigma,\mathrm{src}}^\star(\mathfrak f)
\asymp\mathfrak f^{-\bar\rho_\sigma},\quad
m_{\sigma,\mathrm{src}}^\star(\mathfrak f)
\asymp\mathfrak f^{\bar\xi_\sigma},&\alpha<1/2,\\
\Delta_{\sigma,U}^\star(\mathfrak f)
\asymp\mathfrak f^{-\bar\rho_\sigma},\quad
m_{\sigma,U,\mathfrak f}^\star
\asymp\mathfrak f^{\bar\xi_\sigma},&\alpha>1/2.
\end{cases}
\]
On the trace-class branch, \(t_{\sigma,U,\mathfrak f}^\star\) denotes the
optimizing iteration within \(U\).

\begin{theorem}[Fixed-noise compute rates]
\label{thm:noisy_43_centered_compute_optimal}
Fix \(\sigma^2>0\), suppose \(2\alpha+2\beta>1\), and assume
\((\alpha,\beta)\) lies in an open PLRF propagation subregime.  Assume
\cref{ass:noisy_43_constant_schedule_stability} and the cumulative-memory
asymptotics in \cref{eq:noisy_43_cumulative_kernel_regimes}.

On the LM and IM branches, assume \(\beta<1+2\alpha\), stay off the critical
lines, and use \cref{cor:noisy_43_43_spectral_separation}.  On the FB
branches, assume \(\beta<1+2\alpha\), stay off the critical lines, and use
\cref{prop:td_lrs_joint_de_finite_bulk}; the high-source
\(\mathrm{FB}_2\) extension additionally uses
\cref{ass:td_lrs_high_source_fb2_forcing,cor:td_lrs_joint_de_high_source_fb2}.

For \(\alpha<1/2\), optimize over
\(\mathcal M_{\mathrm{src}}(\mathfrak f)\).  For \(\alpha>1/2\), optimize
over \(U\) and assume \cref{ass:noisy_43_trace_local_expansion}.
As \(\mathfrak f\to\infty\) with fixed \(\sigma^2>0\), these exponents
are well defined and satisfy
\begin{equation}
\label{eq:noisy_43_exponents}
(\bar\rho_\sigma,\bar\xi_\sigma)
=
\begin{cases}
\left(p,1\right),
&\alpha>\frac12,\ \beta<\frac12,\\[0.4em]
(1,1),
&\alpha>\frac12,\ \beta>\frac12,\\[0.4em]
\left(
\dfrac{(1-2\alpha)p}
{2[\alpha(1-2\alpha)+2\beta(1-\alpha)]},
\dfrac{\beta}
{\alpha(1-2\alpha)+2\beta(1-\alpha)}
\right),
&\frac14<\alpha<\frac12,\\[1em]
\left(
\dfrac{\alpha p}
{p(1-\alpha)+2\alpha},
\dfrac{p+2\alpha}
{2[p(1-\alpha)+2\alpha]}
\right),
&0<\alpha<\frac14.
\end{cases}
\end{equation}
For the five sub-trace subregimes, the corresponding loss and width scalings are
\begin{equation}
\label{eq:noisy_43_full_subtrace_laws}
\begin{aligned}
\mathcal R_{\sigma,\mathrm{src}}^\star(\mathfrak f)
&\asymp
\begin{cases}
(\sigma^2)^{\frac{(1-\alpha)p}
{\alpha(1-2\alpha)+2\beta(1-\alpha)}}
\mathfrak f^{-\frac{(1-2\alpha)p}
{2[\alpha(1-2\alpha)+2\beta(1-\alpha)]}},
&\frac14<\alpha<\frac12,\\[0.6em]
(\sigma^2)^{\frac{(1-\alpha)p}{p(1-\alpha)+2\alpha}}
\mathfrak f^{-\frac{\alpha p}{p(1-\alpha)+2\alpha}},
&0<\alpha<\frac14,
\end{cases}\\[1em]
m_{\sigma,\mathrm{src}}^\star(\mathfrak f)
&\asymp
\begin{cases}
(\sigma^2)^{\frac{\alpha}
{\alpha(1-2\alpha)+2\beta(1-\alpha)}}
\mathfrak f^{\frac{\beta}
{\alpha(1-2\alpha)+2\beta(1-\alpha)}},
&\frac14<\alpha<\frac12,\\[1em]
(\sigma^2)^{\frac{\alpha}{p(1-\alpha)+2\alpha}}
\mathfrak f^{\frac{p+2\alpha}{2[p(1-\alpha)+2\alpha]}},
&0<\alpha<\frac14.
\end{cases}
\end{aligned}
\end{equation}
In the three \(\mathrm{IM}\) subregimes, every such choice satisfies
\[
t_{\sigma,U,\mathfrak f}^\star
\longrightarrow t_{\sigma,\mathrm{const}}^\star=\Theta(1),
\qquad
m_{\sigma,U,\mathfrak f}^\star\longrightarrow\infty.
\]
\end{theorem}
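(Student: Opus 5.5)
The proof has two parts, split at trace class. In each part we first reduce the DE risk along the compute diagonal $t=\mathfrak f/(Bm)$ to an explicit sum of power laws, and then balance those power laws. The reduction uses \cref{eq:noisy_43_scaling_reduction}: the noisy risk is sandwiched between $\mathcal R_0+\sigma^2\mathcal S_t$ and $\mathcal R_0+\sigma^2\mathcal S_t/(1-\kappa)$. For the clean part $\mathcal R_0$ we use \cref{cor:noisy_43_43_spectral_separation} on the LM/IM branches and \cref{prop:td_lrs_joint_de_finite_bulk} (or its conditional high-source extension) on FB. For the label-noise part $\mathcal S_t$ we use \cref{eq:noisy_43_cumulative_kernel_regimes}. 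These two inputs are uniform on the strict source window, so on $\mathcal M_{\mathrm{src}}(\mathfrak f)$ we obtain
\[
\mathcal R_\sigma\!\left(\tfrac{\mathfrak f}{Bm},m\right)\asymp \mathcal F_0(m)+(\eta t)^{-p/(2\alpha)}+\mathcal F_{ac}+\eta^{-1}\mathcal K_{pp}+\sigma^2\mathcal S_t(m).
\]
Here $\eta$ follows \cref{eq:noisy_43_learning_rate}, so $\eta t\asymp \mathfrak f m^{2\alpha-2}$ when $\alpha<1/2$.

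\emph{Sub-trace case ($\alpha<1/2$).} We substitute $t=\mathfrak f/(Bm)$ into each term.

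For $1/4<\alpha<1/2$, the noise term is $\sigma^2(\mathfrak f/m^2)^{(1-2\alpha)/(2\alpha)}$. It increases in $m^{-1}$ and clearly dominates $\eta^{-1}\mathcal K_{pp}$, which carries no factor $\sigma^2$ but has a smaller exponent; we check this by direct comparison of exponents. The source term is $(\mathfrak f m^{2\alpha-2})^{-p/(2\alpha)}$, which decreases as $m$ shrinks. Equating source and noise gives
\[
m^{\,2(1-\alpha)p/(2\alpha)+(1-2\alpha)/\alpha}\asymp \sigma^2\,\mathfrak f^{(p+1-2\alpha)/(2\alpha)}=\sigma^2\mathfrak f^{\beta/\alpha}.
\]
Simplifying yields $m\asymp(\sigma^2)^{\alpha/D}\mathfrak f^{\beta/D}$ with $D=\alpha(1-2\alpha)+2\beta(1-\alpha)$, and substituting back gives the stated loss in \cref{eq:noisy_43_full_subtrace_laws}.

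For $\alpha<1/4$, the noise term is $\sigma^2 t/(Bm)=\sigma^2\mathfrak f/(B^2m^2)$, and the analogous balance gives the exponent $p+2\alpha$ over $2[p(1-\alpha)+2\alpha]$.

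In both subcases we then verify that the floor $\mathcal F_0(m)$ (either $m^{-p}$ or $m^{-2\alpha}$) is of lower order at the optimizer. This is where the regime inequalities defining LM$_{1,2,3}$ and FB$_{1,2}$ enter. We also verify that the attaining $m$ lies polynomially inside $\mathcal M_{\mathrm{src}}$. The matching lower bound comes from positivity: for any $m$ in the window, the maximum of the source and noise terms already exceeds the balanced value.

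\emph{Trace-class case ($\alpha>1/2$).} Here $\eta\asymp1$ and $t$ ranges over the compact set $U$, so $m=\mathfrak f/(Bt)\asymp\mathfrak f$; this gives $\bar\xi_\sigma=1$ immediately. Under \cref{ass:noisy_43_trace_local_expansion}, we write
\[
\mathcal R_\sigma\!\left(t,\tfrac{\mathfrak f}{Bt}\right)=\mathcal R_{\sigma,\infty}(t)+\varepsilon\, G_\sigma(t)+o(\varepsilon),\qquad \varepsilon\asymp\mathfrak f^{-a_\sigma}.
\]
Because the minimizer of $\mathcal R_{\sigma,\infty}$ is nondegenerate, a standard perturbation argument shows that $t^\star_{\sigma,U,\mathfrak f}\to t^\star_{\sigma,\mathrm{const}}$ at distance $O(\varepsilon)$. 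It also shows that the minimum value shifts by $\varepsilon G_\sigma(t^\star)+o(\varepsilon)$, and since $G_\sigma(t^\star)>0$ this gives $\Delta^\star_{\sigma,U}\asymp\mathfrak f^{-a_\sigma}$. The value of $a_\sigma$ in \cref{eq:noisy_43_a_sigma} then produces the rows $(p,1)$ and $(1,1)$ of \cref{eq:noisy_43_exponents}.

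The main obstacle is the sub-trace bookkeeping rather than any single estimate. We must confirm that in every open subregime the neglected terms are strictly lower order at the balanced scale: the $\mathcal F_0$ floor, $\mathcal F_{ac}$, and the clean memory term $\eta^{-1}\mathcal K_{pp}$. This amounts to one exponent inequality per subregime. We will first do this check for LM$_3$ and FB$_2$, where the floor is $m^{-2\alpha}$, and confirm that the strict window (which uses $p>2\alpha$) keeps it subdominant.
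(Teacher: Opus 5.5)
Your proposal follows the paper's proof essentially step for step. Below trace class both arguments use the Volterra sandwich \cref{eq:noisy_43_scaling_reduction}, substitute $t=\mathfrak f/(Bm)$, balance $\mathcal F_{pp}$ against $\sigma^2\mathcal S_t$, take the lower bound from positivity, and check the floor, kernel and window at the candidate. Above trace class both use uniform convergence of the diagonal objective on $U$ together with $G_\sigma(t^\star_{\sigma,\mathrm{const}})>0$. Your balance algebra in both sub-trace ranges is correct. Your comparison with $\eta^{-1}\mathcal K_{pp}$ is in fact uniform on the whole window, since the ratio $\eta^{-1}\mathcal K_{pp}/(\sigma^2\mathcal S_t)$ is of order $(\sigma^2B\,\eta t)^{-1}$. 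Two local claims need correcting, though neither changes the structure.

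\emph{The optimizer rate.} The $O(\varepsilon)$ rate for the optimizer location does not follow from \cref{ass:noisy_43_trace_local_expansion}. That expansion is only a uniform $C^0$ statement with an $o(\varepsilon)$ remainder, with no control on derivatives. Even granting quadratic growth at $t^\star_{\sigma,\mathrm{const}}$ and boundedness of $G_\sigma$ on $U$, comparing values gives only $|t^\star_{\sigma,U,\mathfrak f}-t^\star_{\sigma,\mathrm{const}}|=O(\sqrt\varepsilon)$. The paper deliberately claims no rate, and none is needed. Uniform convergence on $U$, continuity of $\mathcal R_{\sigma,\infty}$ and uniqueness of its minimizer give $t^\star_{\sigma,U,\mathfrak f}\to t^\star_{\sigma,\mathrm{const}}$. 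Then $\Delta^\star_{\sigma,U}\ge\varepsilon\,[G_\sigma(t^\star_{\sigma,U,\mathfrak f})+o(1)]$ holds because $\mathcal R_{\sigma,\infty}(t)\ge P^\star_{\sigma,U}$ for every $t\in U$. With continuity of $G_\sigma$ at $t^\star_{\sigma,\mathrm{const}}$, which the paper also uses implicitly, this yields the lower bound.

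\emph{The floor check in $\mathrm{LM}_3$ and $\mathrm{FB}_2$.} Your stated reason is backwards. On $\mathcal M_{\mathrm{src}}(\mathfrak f)$ the window only gives
$(\eta t)^{-p/(2\alpha)}\ge m^{-p}(\log\mathfrak f)^{p/(2\alpha)}$.
\begin{itemize}
\item When $\mathcal F_0\asymp m^{-p}$, that is for $\beta<1/2$ in $\mathrm{FB}_1$ and $\mathrm{LM}_{1,2}$, this makes the floor automatically negligible.
\item When $\beta>1/2$ one has $p>2\alpha$, so $m^{-p}\ll m^{-2\alpha}$. The window then says nothing about the floor $m^{-2\alpha}$; the inequality $p>2\alpha$ is exactly what defeats the window argument.
\end{itemize}
In these two cases you must compare exponents at the balanced point directly. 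In $\mathrm{LM}_3$ the gap is $[\beta(4\alpha-1)+\tfrac12(1-2\alpha)^2]/[\alpha(1-2\alpha)+2\beta(1-\alpha)]$, which is positive because $\alpha>1/4$. In $\mathrm{FB}_2$ the gap is $2\alpha^2/[p(1-\alpha)+2\alpha]>0$.
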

In the \(\mathrm{IM}\) regime, the optimal number of iterations remains finite.
Within the source window, the five \(\mathrm{FB}\)--\(\mathrm{LM}\)
subregimes balance
\(\mathcal F_{pp}\asymp\sigma^2\mathcal S_t\).

\Cref{fig:noisy_43_im_early_stopping} visualizes the finite-time stopping
mechanism in the three \(\mathrm{IM}\) subregimes using finite-width
deterministic-equivalent Volterra dynamics with \(B=1\), \(\eta=0.05\), and
\(d/m=2\).

\begin{figure}[t]
\centering
\includegraphics[width=\linewidth]{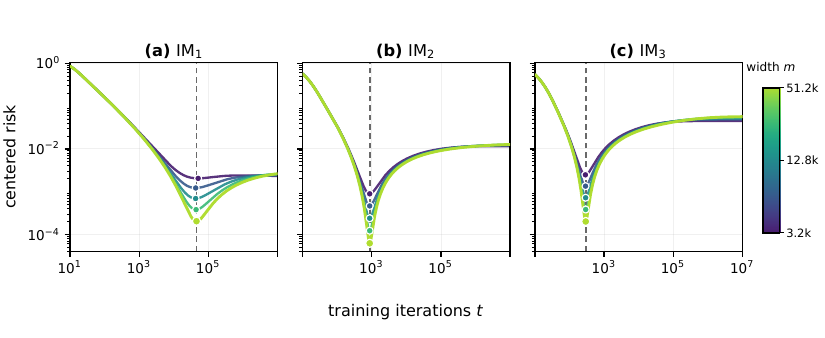}
\caption{Centered finite-width DE risk across widths in the three
integrable-memory subregimes.  The stabilization of the minimizing iteration supports
the finite-time local optimum in
\cref{thm:noisy_43_centered_compute_optimal}.}
\label{fig:noisy_43_im_early_stopping}
\end{figure}
To compare with clean training, define
\[
\mathcal R_{0,\mathrm{const}}^\star(\mathfrak f)
:=
\min_{1\leq m\leq\mathfrak f/B}
\mathcal R_0\left(\frac{\mathfrak f}{Bm},m\right),
\qquad
\mathcal R_{0,\mathrm{const}}^\star(\mathfrak f)
\asymp\mathfrak f^{-\rho_0},
\qquad
m_{0,\mathrm{const}}^\star(\mathfrak f)\asymp\mathfrak f^{\xi_0},
\]
where \(m_{0,\mathrm{const}}^\star(\mathfrak f)\) is any minimizer in the
definition of \(\mathcal R_{0,\mathrm{const}}^\star\).  The clean \(4+3\)
calculation gives the reusable
baseline
\begin{equation}
\label{eq:noisy_43_clean_compute_exponents}
(\rho_0,\xi_0)
=
\begin{cases}
\left(\dfrac{p}{2\alpha+1},\dfrac{1}{2\alpha+1}\right),
&\text{4+3 Phase Ia},\\[0.8em]
\left(\dfrac{p}{2(\alpha+\beta)},\dfrac{\beta}{\alpha+\beta}\right),
&\text{4+3 Phase II},\\[0.8em]
\left(\dfrac{4\alpha-1}{4\alpha},\dfrac12\right),
&\text{4+3 Phase III},\\[0.8em]
\left(\dfrac p2,\dfrac12\right),
&\text{4+3 Phase Ib},\\[0.8em]
\left(
\dfrac{\alpha p}{2\beta-1+3\alpha-2\alpha\beta},
\dfrac{p}{2(2\beta-1+3\alpha-2\alpha\beta)}
\right),
&\text{4+3 Phase Ic},\\[1em]
\left(\alpha,\dfrac12\right),
&\text{4+3 Phase IVa},\\[0.8em]
\left(
\dfrac{(1-2\alpha)p}{2[2\beta(1-\alpha)-\alpha]},
\dfrac{\beta-\alpha}{2\beta(1-\alpha)-\alpha}
\right),
&\text{4+3 Phase IVb}.
\end{cases}
\end{equation}
Combining \cref{eq:noisy_43_clean_compute_exponents,eq:noisy_43_exponents}
gives the following comparisons, with the noisy exponents interpreted as
source-window-restricted below trace class and \(U\)-restricted above trace
class:
\[
\begin{gathered}
\bar\rho_\sigma
\begin{cases}
>\rho_0,&\text{in 4+3 Phases Ia, II, and III},\\
<\rho_0,&\text{in 4+3 Phases Ib, Ic, IVa, and IVb},
\end{cases},\\[0.4em]
\bar\xi_\sigma>\xi_0
\quad\text{in all seven open 4+3 comparison phases}.
\end{gathered}
\]
The larger exponent in 4+3 Phases Ia, II, and III does not mean that noise improves the
uncentered risk.  Clean and noisy losses are centered at different limits.
The larger exponent only describes the large-width DE correction around the
finite-time local optimum.
Since \(\mathfrak f=Bmt\),
\[
t_{0,\mathrm{const},\mathfrak f}^\star
\asymp\mathfrak f^{1-\xi_0},
\qquad
t_{\sigma,\mathrm{src},\mathfrak f}^\star
\asymp\mathfrak f^{1-\bar\xi_\sigma}
\quad(\alpha<1/2).
\]
Thus the displayed noisy branch allocates a larger compute exponent to width
than the clean benchmark.
Above trace class the shift is complete, with
\(t_{\sigma,\mathrm{const}}^\star=\Theta(1)\) within \(U\).  Below trace
class, both width and iteration count still diverge, but noise favors earlier
stopping.

Fixed noise merges the eight propagation subregimes into four exponent regions.
Within the source window below trace class, the balance is
\(\mathcal F_{pp}\asymp\sigma^2\mathcal S_t\); above it the \(U\)-restricted
optimizer stops at finite time.  The following proposition separates its
plateau from the decaying centered correction.

\begin{proposition}[Source-window decay and trace-class plateau]
\label{prop:noisy_43_noise_plateau}
Under the hypotheses of \cref{thm:noisy_43_centered_compute_optimal},
if \(\alpha<1/2\), then
\[
\mathcal R_{\sigma,\mathrm{src}}^\star(\mathfrak f)\longrightarrow0.
\]
If \(\alpha>1/2\), then
\[
P_{\sigma,U}^\star
=\mathcal R_{\sigma,\infty}(t_{\sigma,\mathrm{const}}^\star)>0,
\]
and
\[
\mathcal R_{\sigma,U}^\star(\mathfrak f)
=
P_{\sigma,U}^\star
+\Theta(\mathfrak f^{-a_\sigma}),
\qquad
t_{\sigma,\mathrm{const}}^\star=\Theta(1).
\]
\end{proposition}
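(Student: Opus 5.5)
The two branches are treated separately, and each reduces to quantities already fixed by \cref{thm:noisy_43_centered_compute_optimal} and \cref{ass:noisy_43_trace_local_expansion}.

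\emph{Sub-trace branch \(\alpha<1/2\).} We do not need the sharp constants. It is enough to exhibit one admissible width \(m\in\mathcal M_{\mathrm{src}}(\mathfrak f)\) for which the risk vanishes. Take the attaining scale from \cref{eq:noisy_43_full_subtrace_laws}, namely \(m\asymp(\sigma^2)^{\cdots}\mathfrak f^{\bar\xi_\sigma}\), and set \(t=\mathfrak f/(Bm)\). We check that this pair lies polynomially inside the log-margined window. Then \cref{cor:noisy_43_43_spectral_separation} (in LM) or \cref{prop:td_lrs_joint_de_finite_bulk} (in FB), together with \cref{eq:noisy_43_scaling_reduction}, bounds \(\mathcal R_\sigma\) by \(\mathcal F_0(m)+\mathcal F_{pp}+\mathcal F_{ac}+\eta^{-1}\mathcal K_{pp}+\sigma^2\mathcal S_t/(1-\kappa)\). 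Each term is a negative power of \(\mathfrak f\):
\begin{itemize}
\item the floor \(\mathcal F_0\) decays because \(m\to\infty\);
\item the forcing decays because \(\eta t\to\infty\);
\item the cumulative memory decays by \cref{eq:noisy_43_cumulative_kernel_regimes}, since \(t/(Bm)\) carries the exponent \((1-2\alpha)/(2\alpha)>0\) below trace class, and \(t/(Bm)=o(1)\) on the chosen diagonal (this is exactly the balance \(\mathcal F_{pp}\asymp\sigma^2\mathcal S_t\)).
\end{itemize}
Since \(\mathcal R^\star_{\sigma,\mathrm{src}}\le\mathcal R_\sigma(t,m)\) and the risk is nonnegative, it tends to \(0\). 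Equivalently, this is just \(\bar\rho_\sigma>0\) in \cref{eq:noisy_43_exponents}.

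\emph{Trace-class branch \(\alpha>1/2\).} First, \(P^\star_{\sigma,U}>0\). By positivity of the infinite-width Volterra recursion, \(\mathcal R_{\sigma,\infty}(t)\ge\sigma^2\mathcal S_{t,\infty}\), where \(\mathcal S_{t,\infty}\ge\mathcal K_\infty(0)=\sigma^2(\eta^2/B)\zeta(4\alpha)>0\) for \(t\ge1\), as in the lower bound of \cref{prop:noisy_43_discrete_trace_optimum}. This holds at \(t^\star_{\sigma,\mathrm{const}}\in U\subset(1,\infty)\).

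Next, the expansion. Write \(\Phi_{\mathfrak f}(t)=\mathcal R_{\sigma,\infty}(t)+(\mathfrak f/(Bt))^{-a_\sigma}G_\sigma(t)+o(\mathfrak f^{-a_\sigma})\) uniformly on \(U\).
\begin{itemize}
\item \emph{Upper bound.} Evaluate at \(t=t^\star_{\sigma,\mathrm{const}}\). This gives \(\mathcal R^\star_{\sigma,U}\le P^\star_{\sigma,U}+\Theta(\mathfrak f^{-a_\sigma})\), using \(G_\sigma(t^\star)>0\) and \(t^\star=\Theta(1)\).
\item \emph{Lower bound.} Continuity of \(G_\sigma\) near \(t^\star\) gives \(G_\sigma\ge c>0\) on a neighborhood \(N\). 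Off \(N\), uniqueness of the minimizer and compactness give \(\mathcal R_{\sigma,\infty}\ge P^\star+\delta\), which beats the \(O(\mathfrak f^{-a_\sigma})\) correction (\(G_\sigma\) is bounded on compact \(U\)). On \(N\), \(\mathcal R_{\sigma,\infty}(t)\ge P^\star\) and the second term is \(\ge c'\mathfrak f^{-a_\sigma}\).
\end{itemize}
Hence \(\Delta^\star_{\sigma,U}=\Theta(\mathfrak f^{-a_\sigma})\). The same separation argument shows that any argmin lies in \(N\) eventually; shrinking \(N\) gives \(t^\star_{\sigma,U,\mathfrak f}\to t^\star_{\sigma,\mathrm{const}}=\Theta(1)\).

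The main obstacle is the lower bound on \(N\). Since \(G_\sigma\) is only assumed positive at \(t^\star\), its continuity (or a uniform lower bound near \(t^\star\)) is needed, and I would extract this from the uniformity in \cref{eq:noisy_43_fixed_time_expansion}. Failing that, I would only claim the upper bound \(O(\mathfrak f^{-a_\sigma})\) together with nonnegativity. Nondegeneracy of the minimizer can then be used to show that the correction is not cancelled to lower order.
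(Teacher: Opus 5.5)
Your proposal is correct and follows the paper's proof. Below trace class, the claim is just the positive exponent \(\bar\rho_\sigma\) established in Steps~3--4 of the theorem's proof; above trace class, it combines the upper bound at \(t^\star_{\sigma,\mathrm{const}}\), the lower bound from \(\mathcal R_{\sigma,\infty}\geq P^\star_{\sigma,U}\) plus positivity of \(G_\sigma\) near \(t^\star_{\sigma,\mathrm{const}}\), and plateau positivity from the positive noise resolvent (the paper tacitly uses the same continuity of \(G_\sigma\) when it invokes convergence of minimizers).

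Two small slips, neither of which the main argument depends on. The lag-zero kernel is \((\eta^2/B)\zeta(4\alpha)\), without the extra \(\sigma^2\). Your fallback based on nondegeneracy alone would not exclude a negative \(G_\sigma\) arbitrarily close to the minimizer.
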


\subsection{Proof of \cref{thm:noisy_43_centered_compute_optimal}}
\label{subsec:noisy_43_constant_optimum_plateau_proof}

\begin{proof}[Proof of \cref{thm:noisy_43_centered_compute_optimal}]
\textit{Step 1: noisy Volterra reduction.}
Set \(\eta_s\equiv\eta\), \(B_s\equiv B\) in
\cref{prop:td_lrs_joint_de_recursion}.  In the convolution form
\cref{eq:noisy_43_clean_risk_volterra}, the forcing and kernel from
\cref{eq:noisy_43_forcing_definition,eq:noisy_43_kernel_definition}
propagate initialization and one unit of clean-target DE risk, while labels contribute
\(\sigma^2\mathcal S\).  Subtracting
\cref{eq:noisy_43_clean_volterra_definition}, with \(\bm1=(1,1,\ldots)\), gives
\[
\mathcal R_\sigma-\mathcal R_0
=
\sigma^2\sum_{\ell\geq1}\mathcal K^{*\ell}*\bm1 .
\]
Its first term is \(\mathcal S\), so positivity and
\(\|\mathcal K\|_{\ell^1}\leq\kappa\) give
\[
\mathcal S_t
\leq
\sum_{\ell\geq1}(\mathcal K^{*\ell}*\bm1)(t)
\leq
\mathcal S_t\sum_{\ell\geq0}\kappa^\ell
=\frac{\mathcal S_t}{1-\kappa}.
\]
This proves \cref{eq:noisy_43_scaling_reduction}.

\textit{Step 2: cumulative-kernel asymptotics.}
For \(\alpha>1/4\), \cref{lem:td_lrs_de_spectral_filters} gives, modewise and
uniformly below the stability threshold,
\[
\sum_{s=0}^{t-1}
\frac{\eta^2}{B}\lambda^2q_\eta(\lambda)^s
=
\frac{\eta^2}{B}\lambda^2
\frac{1-q_\eta(\lambda)^t}{1-q_\eta(\lambda)}
\asymp
\eta\lambda
\left(1-e^{-c_2\eta t\lambda}\right)
\]
for some \(c_2>0\).  Inserting \(\lambda_j\asymp j^{-2\alpha}\) proves
\cref{eq:noisy_43_cumulative_kernel}; for \(1/4<\alpha<1/2\), splitting at
\(j_t\asymp(\eta t)^{1/(2\alpha)}\) gives the same order on both sides:
\[
\mathcal S_t
\asymp
\eta(\eta t)^{(1-2\alpha)/(2\alpha)}
\asymp
\left(\frac{t}{Bm}\right)^{(1-2\alpha)/(2\alpha)}.
\]
For \(0<\alpha<1/4\), the LM/IM filter fails at the lower edge, while the
constant-schedule target-independent FB kernel estimate in
\cref{prop:td_lrs_joint_de_finite_bulk_kernel} gives
\[
\mathcal S_t\asymp\frac{t}{Bm}.
\]
For \(\alpha>1/2\), summability gives
\[
\mathcal S_\infty(m)-\mathcal S_t(m)
\asymp
\sum_{j\geq1}j^{-2\alpha}
e^{-c_2\eta tj^{-2\alpha}}
\asymp t^{-1+1/(2\alpha)}.
\]
Together these three cases prove
\cref{eq:noisy_43_cumulative_kernel_regimes}.

\textit{Step 3: optimization below the trace-class line.}
Suppose \(\alpha<1/2\).  We identify the unique corner within
\(\mathcal M_{\mathrm{src}}(\mathfrak f)\).  With
\(t=\mathfrak f/(Bm)\),
\cref{eq:noisy_43_scaling_reduction,eq:noisy_43_Fpp_scaling} and, for
\(1/4<\alpha<1/2\), the noisy contribution give
\begin{align}
\label{eq:noisy_43_Fpp_compute_form}
\mathcal F_{pp}
&\asymp
\mathfrak f^{-(2\alpha+2\beta-1)/(2\alpha)}
m^{(2\alpha+2\beta-1)(1-\alpha)/\alpha},\\
\label{eq:noisy_43_noise_compute_form_mid}
\sigma^2\mathcal S_t
&\asymp
\sigma^2
\mathfrak f^{(1-2\alpha)/(2\alpha)}
m^{-(1-2\alpha)/\alpha}.
\end{align}
Equating these terms and simplifying the two exponents gives
\begin{align*}
m^{[\alpha(1-2\alpha)+2\beta(1-\alpha)]/\alpha}
&\asymp\sigma^2\mathfrak f^{\beta/\alpha},\\
p(1-\alpha)+1-2\alpha
&=\alpha(1-2\alpha)+2\beta(1-\alpha),
\qquad p+1-2\alpha=2\beta,\\
m_{\sigma,\mathrm{src}}^\star
&\asymp
(\sigma^2)^{\frac{\alpha}
{\alpha(1-2\alpha)+2\beta(1-\alpha)}}
\mathfrak f^{\frac{\beta}
{\alpha(1-2\alpha)+2\beta(1-\alpha)}},\\
\mathcal R_{\sigma,\mathrm{src}}^\star
&\asymp
(\sigma^2)^{\frac{(1-\alpha)p}
{\alpha(1-2\alpha)+2\beta(1-\alpha)}}
\mathfrak f^{-\frac{(1-2\alpha)p}
{2[\alpha(1-2\alpha)+2\beta(1-\alpha)]}}.
\end{align*}
The second line is the algebraic simplification; the first relation gives
the third, and substitution into either balanced term gives the fourth.

For \(0<\alpha<1/4\),
\cref{eq:noisy_43_cumulative_kernel_regimes} instead gives the three-step
calculation
\begin{align*}
\sigma^2\mathcal S_t&\asymp\sigma^2\mathfrak f m^{-2},
&m^{[p(1-\alpha)+2\alpha]/\alpha}&\asymp
\sigma^2\mathfrak f^{(p+2\alpha)/(2\alpha)},\\
m_{\sigma,\mathrm{src}}^\star&\asymp
(\sigma^2)^{\frac{\alpha}{p(1-\alpha)+2\alpha}}
\mathfrak f^{\frac{p+2\alpha}{2[p(1-\alpha)+2\alpha]}},
&\mathcal R_{\sigma,\mathrm{src}}^\star&\asymp
(\sigma^2)^{\frac{(1-\alpha)p}{p(1-\alpha)+2\alpha}}
\mathfrak f^{-\frac{\alpha p}{p(1-\alpha)+2\alpha}}.
\end{align*}
The second relation balances the first with
\cref{eq:noisy_43_Fpp_compute_form}; substitution gives the last line.

\textit{Step 4: validity of the sub-trace corners.}
For \(1/4<\alpha<1/2\), the candidate has
\[
\begin{aligned}
m&\asymp
\mathfrak f^{\frac{\beta}
{\alpha(1-2\alpha)+2\beta(1-\alpha)}},
&\qquad
\eta t&\asymp
\mathfrak f^{\frac{\alpha(1-2\alpha)}
{\alpha(1-2\alpha)+2\beta(1-\alpha)}},\\
\frac{m^{2\alpha}}{\eta t}
&\asymp
\mathfrak f^{\frac{\alpha p}
{\alpha(1-2\alpha)+2\beta(1-\alpha)}},
&
t&\asymp
\mathfrak f^{\frac{(1-2\alpha)(\alpha+\beta)}
{\alpha(1-2\alpha)+2\beta(1-\alpha)}}.
\end{aligned}
\]
All exponents are positive, so the candidate lies polynomially inside
\(\mathcal M_{\mathrm{src}}(\mathfrak f)\).  The
\(\mathcal F_0\)-exponent gap is
\(p^2/[2(\alpha(1-2\alpha)+2\beta(1-\alpha))]>0\) in
subregimes~\(\mathrm{LM}_{1,2}\) and
\([2\alpha\beta-\tfrac12(1-2\alpha)p]/
[\alpha(1-2\alpha)+2\beta(1-\alpha)]>0\)
in subregime~\(\mathrm{LM}_3\); its numerator equals \(2\alpha^2\) at
\(\beta=1/2\) and has derivative \(4\alpha-1>0\).  In all three subregimes,
\cref{eq:noisy_43_Kpp_scaling} gives the positive kernel gap
\(\alpha(1-2\alpha)/
[\alpha(1-2\alpha)+2\beta(1-\alpha)]\).

For \(0<\alpha<1/4\), the corresponding candidate satisfies
\[
\begin{aligned}
m&\asymp
\mathfrak f^{\frac{p+2\alpha}{2[p(1-\alpha)+2\alpha]}},
&\qquad
t&\asymp
\mathfrak f^{\frac{p(1-2\alpha)+2\alpha}
{2[p(1-\alpha)+2\alpha]}},\\
\eta t&\asymp
\mathfrak f^{\frac{2\alpha^2}{p(1-\alpha)+2\alpha}},
&
\frac{m^{2\alpha}}{\eta t}
&\asymp
\mathfrak f^{\frac{\alpha p}{p(1-\alpha)+2\alpha}}.
\end{aligned}
\]
All exponents are again positive, so this candidate also lies polynomially
inside \(\mathcal M_{\mathrm{src}}(\mathfrak f)\).  The width-floor gap is
\(p^2/[2(p(1-\alpha)+2\alpha)]>0\) in subregime~\(\mathrm{FB}_1\) and
\(2\alpha^2/[p(1-\alpha)+2\alpha]>0\) in subregime~\(\mathrm{FB}_2\).
In the proved source range of \cref{prop:td_lrs_joint_de_finite_bulk}, these
gaps make the displayed forcing--noise corner govern the finite-bulk DE
formulas.  On the high-source \(\mathrm{FB}_2\) branch, the same deduction
additionally requires
\cref{ass:td_lrs_high_source_fb2_forcing,cor:td_lrs_joint_de_high_source_fb2}.

On the LM branches, \cref{cor:td_lrs_joint_de_explicit_forcing} applies for
\(\beta<1+2\alpha\) off the critical lines.  Positivity gives the matching
in-window lower bound \(\mathcal F_{pp}+\sigma^2\mathcal S_t\); evaluation at
the admissible candidate gives the upper bound.  Thus the
source-window-restricted optima in subregimes~\(\mathrm{LM}_{1,2,3}\)
satisfy
\[
\mathcal F_{pp}
\asymp
\sigma^2\mathcal S_t,
\]
whereas both \(\mathrm{FB}\) subregimes in the proved source range use
\cref{prop:td_lrs_joint_de_finite_bulk}.
Their balanced values tend to zero, proving
\(\mathcal R_{\sigma,\mathrm{src}}^\star(\mathfrak f)\to0\).

\textit{Step 5: optimization above the trace-class line.}
Suppose \(\alpha>1/2\), and let
\(t_{\sigma,U,\mathfrak f}^\star\) minimize the compute-constrained diagonal
family over \(U\).  Its associated width
\(m_{\sigma,U,\mathfrak f}^\star
:=\mathfrak f/(B t_{\sigma,U,\mathfrak f}^\star)\)
tends to infinity uniformly because \(U\subset(1,\infty)\) is compact.
Uniform convergence on \(U\) makes every subsequential
limit minimize \(\mathcal R_{\sigma,\infty}\); uniqueness gives
\(t_{\sigma,U,\mathfrak f}^\star
\to t_{\sigma,\mathrm{const}}^\star\),
with no optimizer-location rate claimed.

For the upper bound, evaluate \cref{eq:noisy_43_fixed_time_expansion} at
\(t_{\sigma,\mathrm{const}}^\star\); for the lower, use nonnegativity of the
infinite-width value difference, convergence of the minimizers, and
\(G_\sigma(t_{\sigma,\mathrm{const}}^\star)>0\).  Thus
\[
\Delta_{\sigma,U}^\star(\mathfrak f)
\asymp
\left(m_{\sigma,U,\mathfrak f}^\star\right)^{-a_\sigma}
\asymp\mathfrak f^{-a_\sigma}.
\]
By \cref{eq:noisy_43_a_sigma}, the exponent is \(2\alpha+2\beta-1\) in
\(\mathrm{IM}_1\) and \(1\) in \(\mathrm{IM}_{2,3}\), proving the
finite-time conclusion over \(U\).

The sub-trace argument gives the last two rows of
\cref{eq:noisy_43_exponents} and
\cref{eq:noisy_43_full_subtrace_laws}.  The trace-class argument gives the
first two rows of \cref{eq:noisy_43_exponents} and the finite-time conclusion,
completing the proof.
\end{proof}

\begin{proof}[Proof of \cref{prop:noisy_43_noise_plateau}]
If \(\alpha<1/2\), Steps~3--4 above show that the source-window optimum has a
strictly positive decay exponent, and hence
\(\mathcal R_{\sigma,\mathrm{src}}^\star(\mathfrak f)\to0\).  If
\(\alpha>1/2\), Step~5 identifies the finite limiting optimizer
\(t_{\sigma,\mathrm{const}}^\star\), and
\cref{eq:noisy_43_fixed_time_expansion} gives
\[
\mathcal R_{\sigma,U}^\star(\mathfrak f)
=\mathcal R_{\sigma,\infty}(t_{\sigma,\mathrm{const}}^\star)
+\Theta(\mathfrak f^{-a_\sigma}).
\]
The positive stable noise resolvent makes the limiting value strictly
positive.  This is the claimed trace-class plateau.
\end{proof}

\Cref{fig:noisy_43_compute_optimal_regime_diagram} places representative
constant-schedule Volterra trajectories from \(\mathrm{FB}_2\) and
\(\mathrm{IM}_1\) on the PLRF propagation map.  Their contrast supports the
predicted continuing forcing--noise balance below trace class and finite-time
noisy stopping within \(U\) above trace class.

\begin{figure}[t]
\centering
\includegraphics[width=\linewidth]{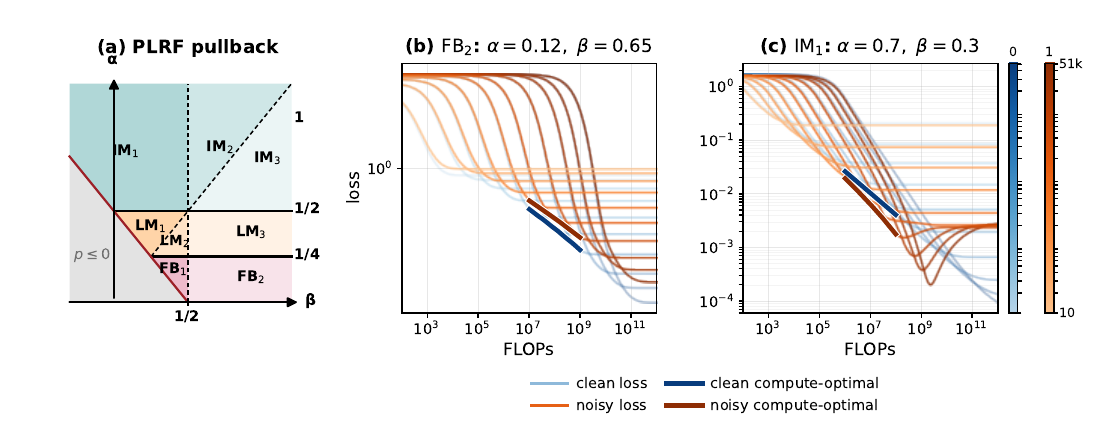}
\caption{The PLRF propagation map and representative centered-risk compute
frontiers in \(\mathrm{FB}_2\) and \(\mathrm{IM}_1\).  Their continuing
forcing--noise balance and finite-time stopping illustrate the phase-dependent
shift of compute from iterations to width.}
\label{fig:noisy_43_compute_optimal_regime_diagram}
\end{figure}

\subsection{Clean-to-noisy crossover scales}
\label{subsec:noisy_43_constant_crossover_proof}

\paragraph{Weak-noise early-stopping prediction in the \(\mathrm{IM}\) regime.}
For the crossover comparison below, the formal large-iteration matching convention
compares the clean infinite-width transient with the unsaturated
terminal-noise deficit as \(\sigma^2\downarrow0\).  With \(B\) and a stable
\(\eta\) held fixed, it predicts
\begin{equation}
\label{eq:noisy_43_weak_noise_early_stopping}
t_{\sigma,\mathrm{const}}^\star
\asymp
\begin{cases}
(\sigma^2)^{-\alpha/\beta},
&\mathrm{IM}_1,\ \beta>0,\\[0.4em]
(\sigma^2)^{-\alpha/\beta},
&\mathrm{IM}_2,\\[0.4em]
(\sigma^2)^{-1},
&\mathrm{IM}_3.
\end{cases}
\end{equation}
For subregime~\(\mathrm{IM}_1\) with \(\beta\leq0\), this balance yields no
diverging weak-noise scale.
These scales are formal order predictions from the weak-noise balance.

\paragraph{Scope of the weak-noise matching.}
For comparison only, first suppose \(\beta>0\) and set
\[
q_{\mathcal S}:=1-\frac{1}{2\alpha}>0,
\qquad
q_{\mathrm{tail}}:=q_{\mathcal S}+\min\left\{\frac{\beta}{\alpha},1\right\}
>q_{\mathcal S}.
\]
The clean infinite-width transient is \(O(t^{-q_{\mathrm{tail}}})\), so the
large-\(t\) inputs and bounded positive Volterra resolvent give
\[
\mathcal R_{\sigma,\infty}(t)
=
\mathcal R_{\sigma,\infty}(\infty)
-c_\sigma\sigma^2t^{-q_{\mathcal S}}
+O(t^{-q_{\mathrm{tail}}}).
\]
The big-\(O\) clean term has no specified leading coefficient or sign.
Locating a weak-noise minimizer would require a uniform signed clean-tail
expansion along a joint limit with \(\sigma^2\downarrow0\).

For \(\beta=0\), the decay powers coincide; for \(\beta<0\),
\(q_{\mathrm{tail}}=q_{\mathcal S}+\beta/\alpha<q_{\mathcal S}\).  Without a
signed clean-tail expansion, neither identifies a diverging weak-noise
minimizer.  Hence the \(\mathrm{IM}_1\) row of
\cref{eq:noisy_43_weak_noise_early_stopping} and the 4+3 Phase Ia crossover
entry in \cref{prop:noisy_43_matched_crossover_compute} are restricted to
\(\beta>0\); the fixed-noise plateau conclusion above is unchanged.

Under the same numerical setting, \cref{fig:noisy_43_im_stopping_dynamics}
varies noise and width to reveal how the finite early-stopping region lies
between terminal training and no training in each \(\mathrm{IM}\) subregime.

\begin{figure}[t]
\centering
\includegraphics[width=\linewidth]{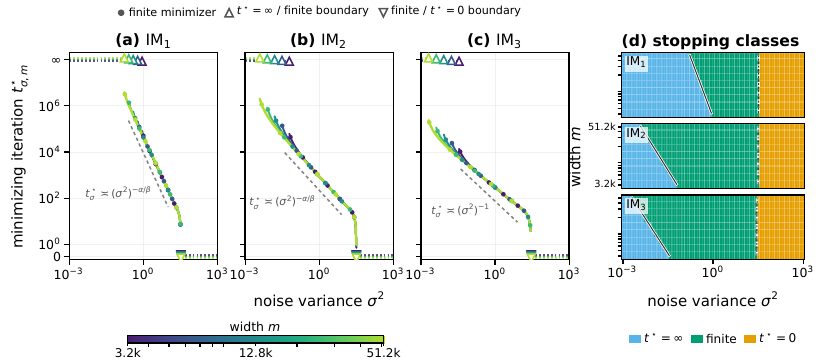}
\caption{The minimizing iteration and stopping class across noise levels and
widths in the three integrable-memory subregimes.  The terminal--finite and
finite--no-training transitions support the predicted subregime dependence of
early stopping.}
\label{fig:noisy_43_im_stopping_dynamics}
\end{figure}
At small but fixed \(\sigma^2\), when does the noisy optimum replace the clean
optimum?  Equivalently, at fixed compute, how large can the label-noise variance
be before the fixed-noise optimum takes over?  The next proposition retains the
clean \(4+3\) indexing because the
clean \(4+3\) crossover rates distinguish branches that merge under the
propagation taxonomy.  These are
crossover scales, not sharp transition points.  The proof is in
\cref{subsec:noisy_43_constant_crossover_proof}.

For any matched compute scale \(\mathfrak f_{\mathrm{cross}}(\sigma)\),
denote by \(\sigma_{\mathrm{cross}}^2(\mathfrak f)\) a fixed-compute inverse
scale satisfying
\[
\mathfrak f_{\mathrm{cross}}(\sigma_{\mathrm{cross}})
\asymp\mathfrak f.
\]

\begin{proposition}[Matched clean-to-noisy compute scales]
\label{prop:noisy_43_matched_crossover_compute}
Fix a small noise variance \(\sigma^2>0\).  Below trace class, match the noisy
term at the clean-optimal width to the clean-optimal loss.  Above trace class,
use the weak-noise large-iteration prediction in
\cref{eq:noisy_43_weak_noise_early_stopping}, and match its noisy optimal
iteration to the clean one.  In 4+3 Phase Ia (our
subregime~\(\mathrm{IM}_1\)), this convention applies only when
\(\beta>0\); for \(\beta\leq0\), it gives no diverging weak-noise minimizer.
The resulting matched compute scales in the clean \(4+3\) comparison
coordinates are
\begin{equation}
\label{eq:noisy_43_matched_crossover_compute}
\mathfrak f_{\mathrm{cross}}(\sigma)
\asymp
\begin{cases}
(\sigma^2)^{-(2\alpha+1)/(2\beta)},
&\text{4+3 Phase Ia},\ \beta>0,\\[0.3em]
(\sigma^2)^{-(\alpha+\beta)/\beta},
&\text{4+3 Phase II},\\[0.3em]
(\sigma^2)^{-2},
&\text{4+3 Phase III},\\[0.3em]
(\sigma^2)^{-2/p},
&\text{4+3 Phase Ib},\\[0.3em]
(\sigma^2)^{-\frac{2\beta-1+3\alpha-2\alpha\beta}{2\alpha^2}},
&\text{4+3 Phase Ic},\\[0.6em]
(\sigma^2)^{-\frac{2\beta(1-\alpha)-\alpha}{\alpha(1-2\alpha)}},
&\text{4+3 Phase IVb}.
\end{cases}
\end{equation}
Equivalently, within the same source/weak-noise scope, any branch written in
\cref{eq:noisy_43_matched_crossover_compute} as
\(\mathfrak f_{\mathrm{cross}}(\sigma)\asymp(\sigma^2)^{-\gamma}\)
has \(\sigma_{\mathrm{cross}}^2(\mathfrak f)\asymp\mathfrak f^{-1/\gamma}\),
with the Phase Ia restriction \(\beta>0\).  Within the source window,
\(1\ll\mathfrak f\ll\mathfrak f_{\mathrm{cross}}(\sigma)\), equivalently
\(\sigma^2\ll\sigma_{\mathrm{cross}}^2(\mathfrak f)\), gives clean-optimal
scaling; reversing both inequalities gives noisy-optimal scaling.
The 4+3 Phase IVa branch within our subregime~\(\mathrm{LM}_3\) has two matched compute scales,
\begin{equation}
\label{eq:noisy_43_IVa_crossover_compute}
\mathfrak f_{\mathrm{onset}}(\sigma)
\asymp
(\sigma^2)^{-1/\alpha},
\qquad
\mathfrak f_{\mathrm{full}}(\sigma)
\asymp
(\sigma^2)^{-
\frac{2\beta(1-\alpha)-\alpha}{\alpha(1-2\alpha)}}.
\end{equation}
Their fixed-compute inverses are
\[
\sigma_{\mathrm{onset}}^2(\mathfrak f)
\asymp \mathfrak f^{-\alpha},
\qquad
\sigma_{\mathrm{full}}^2(\mathfrak f)
\asymp
\mathfrak f^{-\alpha(1-2\alpha)/[2\beta(1-\alpha)-\alpha]}.
\]
Because \(\beta>1/2\),
\(2\beta(1-\alpha)-\alpha>1-2\alpha\), so
\(\sigma_{\mathrm{onset}}^2\ll\sigma_{\mathrm{full}}^2\).  Hence
\[
\begin{aligned}
\sigma^2\ll\sigma_{\mathrm{onset}}^2(\mathfrak f)
&\quad\Longrightarrow\quad \text{clean-optimal scaling},\\
\sigma_{\mathrm{onset}}^2(\mathfrak f)\ll\sigma^2
\ll\sigma_{\mathrm{full}}^2(\mathfrak f)
&\quad\Longrightarrow\quad \text{intermediate kernel/noise balance},\\
\sigma^2\gg\sigma_{\mathrm{full}}^2(\mathfrak f)
&\quad\Longrightarrow\quad
\text{full }\mathcal F_{pp}\text{--noise balance}.
\end{aligned}
\]
All locations are order predictions, with constants possibly depending on
\(\alpha,\beta,B,\eta,c\).
\end{proposition}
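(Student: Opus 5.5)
The matched compute scales follow from explicit exponent balancing rather than from any new estimate. I will take as given the clean baseline \cref{eq:noisy_43_clean_compute_exponents}, namely \(\mathcal R_{0,\mathrm{const}}^\star\asymp\mathfrak f^{-\rho_0}\) with \(m_0^\star\asymp\mathfrak f^{\xi_0}\) and \(t_0^\star\asymp\mathfrak f^{1-\xi_0}\). I will also use the cumulative-memory regimes \cref{eq:noisy_43_cumulative_kernel_regimes} and the noisy sandwich \cref{eq:noisy_43_scaling_reduction}, which shows that the noisy excess at \((t,m)\) is \(\asymp\sigma^2\mathcal S_t(m)\). The convention splits into two cases.

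Below trace class (Phases Ib, Ic, IVa, IVb), I evaluate \(\sigma^2\mathcal S_t\) at the clean-optimal pair \(t=\mathfrak f/(Bm_0^\star)\) and set it \(\asymp\mathfrak f^{-\rho_0}\). For \(1/4<\alpha<1/2\) this reads \(\sigma^2(t/(Bm))^{(1-2\alpha)/(2\alpha)}\asymp\sigma^2\mathfrak f^{(1-2\xi_0)(1-2\alpha)/(2\alpha)}\). For \(\alpha<1/4\) it reads \(\sigma^2\mathfrak f^{1-2\xi_0}\). Solving for \(\mathfrak f\) gives \(\mathfrak f_{\mathrm{cross}}\asymp(\sigma^2)^{-1/(\rho_0+\cdot)}\), and I then check each phase algebraically.
\begin{itemize}
\item Phase Ib (\(\xi_0=1/2\), \(\rho_0=p/2\)): the noise factor is \(\mathfrak f^0\), so \(\sigma^2\asymp\mathfrak f^{-p/2}\), i.e. \((\sigma^2)^{-2/p}\).
\item Phase IVb: \(1-2\xi_0=(2\beta(1-\alpha)-\alpha-2\beta+2\alpha)/D=\alpha(1-2\beta)/D\) with \(D=2\beta(1-\alpha)-\alpha\). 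Adding \(\rho_0\) gives exponent \(\alpha(1-2\alpha)/D\), which reproduces the displayed scale.
\item Phase Ic: the analogous substitution yields \(2\alpha^2/(2\beta-1+3\alpha-2\alpha\beta)\).
\item Phase IVa: the clean optimum is set by the kernel term \(\mathcal K_{pp}/\eta\) rather than by \(\mathcal F_{pp}\). The onset scale comes from noise first matching \(\mathfrak f^{-\alpha}\) at \(m_0^\star\asymp\mathfrak f^{1/2}\), which gives \(\sigma^2\asymp\mathfrak f^{-\alpha}\). The full scale is where the noisy optimum of \cref{thm:noisy_43_centered_compute_optimal}, i.e. the \(\mathcal F_{pp}\)--noise corner, undercuts the kernel-limited value. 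Comparing the \(\mathrm{LM}_3\) formula \cref{eq:noisy_43_full_subtrace_laws} with the clean \(\mathfrak f^{-\alpha}\) and with the IVb expression gives the second exponent. The ordering \(\sigma^2_{\mathrm{onset}}\ll\sigma^2_{\mathrm{full}}\) follows from \(D>1-2\alpha\).
\end{itemize}

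Above trace class (Phases Ia, II, III), I instead match the weak-noise stopping time \cref{eq:noisy_43_weak_noise_early_stopping} to \(t_0^\star\asymp\mathfrak f^{1-\xi_0}\).
\begin{itemize}
\item Phase Ia: \(1-\xi_0=2\alpha/(2\alpha+1)\), so \((\sigma^2)^{-\alpha/\beta}\asymp\mathfrak f^{2\alpha/(2\alpha+1)}\), giving \((\sigma^2)^{-(2\alpha+1)/(2\beta)}\).
\item Phase II: \(1-\xi_0=\alpha/(\alpha+\beta)\), giving \((\sigma^2)^{-(\alpha+\beta)/\beta}\).
\item Phase III: \(1-\xi_0=1/2\) with \(t^\star\asymp(\sigma^2)^{-1}\), giving \((\sigma^2)^{-2}\).
\end{itemize}
The restriction \(\beta>0\) in Ia is inherited from the scope discussion, because for \(\beta\le 0\) there is no diverging minimizer. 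The fixed-compute inverses follow by inverting the power laws, and the regime statements follow from monotonicity: \(\mathfrak f\ll\mathfrak f_{\mathrm{cross}}\) makes the noisy term lower order at the clean optimum.

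The main obstacle is Phase IVa. There I must identify which term is balanced on each side of the two scales, and confirm that the intermediate kernel/noise balance is consistent, meaning the optimal width lies in the source window. I would handle this by writing the four-term objective from \cref{cor:noisy_43_43_spectral_separation} plus \(\sigma^2\mathcal S_t\) in \((\log m,\log\mathfrak f,\log\sigma^2)\) coordinates and reading off where the minimizing corner changes. Since every statement is a matched order prediction, no sharp lower bound is needed.
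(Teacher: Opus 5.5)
The weak-noise matching you use for Phases Ia, II and III is the paper's proof, and your exponent algebra there is correct. The same holds for the rule of evaluating \(\sigma^2\mathcal S_t\) at \(m_0^\star\) and matching to \(\mathfrak f^{-\rho_0}\) in Phases Ib, Ic, IVb and the IVa onset. The gap is the IVa \emph{full} scale. Your stated criterion is that the \(\mathcal F_{pp}\)--noise corner of \cref{eq:noisy_43_full_subtrace_laws} ``undercuts the kernel-limited value,'' checked against the clean \(\mathfrak f^{-\alpha}\). This gives the wrong exponent. Take \(\alpha=2/5\), \(\beta=1\), which lies in IVa. The \(\mathrm{LM}_3\) corner value is \((\sigma^2)^{27/32}\mathfrak f^{-9/64}\), and equating it to \(\mathfrak f^{-2/5}\) yields \(\sigma^2\asymp\mathfrak f^{-83/270}\). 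The claimed scale is \(\sigma^2_{\mathrm{full}}\asymp\mathfrak f^{-\alpha(1-2\alpha)/[2\beta(1-\alpha)-\alpha]}=\mathfrak f^{-1/10}\). The comparison fails because the corner formula omits \(\mathcal K_{pp}/\eta\). Since \(\mathcal R_\sigma\geq\mathcal R_0\) pointwise, a corner value at or below \(\mathfrak f^{-\alpha}\) only shows that the corner is not the true optimum.

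The missing idea is self-consistency of the corner. The paper substitutes the corner's \((m,t)\) into \cref{eq:noisy_43_Kpp_scaling}. Writing \(D=2\beta(1-\alpha)-\alpha\) and \(D'=\alpha(1-2\alpha)+2\beta(1-\alpha)\), this gives \((\mathcal K_{pp}/\eta)/\mathcal F_{pp}\asymp(\sigma^2)^{-D/D'}\mathfrak f^{-\alpha(1-2\alpha)/D'}\). Setting this ratio to order one gives \(\sigma^2_{\mathrm{full}}\). Equivalently, \(\sigma^2_{\mathrm{full}}\) is the triple point \(\mathcal F_{pp}\asymp\mathcal K_{pp}/\eta\asymp\sigma^2\mathcal S_t\). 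The \(\sigma\)-free \(\mathcal F_{pp}\)--\(\mathcal K_{pp}\) crossing value under \(\mathfrak f=Bmt\) is exactly the IVb rate \(\mathfrak f^{-(1-2\alpha)p/(2D)}\). That is why your other comparison, against ``the IVb expression,'' happens to produce the right exponent. You need to state this triple-point reason as the criterion, rather than list it next to a comparison that fails.

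The direction is also reversed. In the log-coordinate min--max picture you propose as a fallback, the optimum equals the largest pairwise balance value. So the \(\mathcal F_{pp}\)--noise corner takes over when its value \emph{rises} to the kernel--noise balance value, which happens for \(\sigma^2\gg\sigma^2_{\mathrm{full}}\), not when it undercuts anything. Carrying out that fallback explicitly would repair the step.
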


The displayed noise ranges invert \cref{eq:noisy_43_IVa_crossover_compute}.

\begin{proof}[Proof of
\cref{prop:noisy_43_matched_crossover_compute}]
For \(\alpha>1/2\), combine
\cref{eq:noisy_43_clean_compute_exponents,eq:noisy_43_weak_noise_early_stopping}
with \(t_{0,\mathrm{const}}^\star=\mathfrak f/(Bm_{0,\mathrm{const}}^\star)\).
Equating clean and weak-noise iteration scales gives
\[
\begin{array}{c|c|c|c}
\text{4+3 phase}&\text{clean }t^\star&\text{weak-noise }t^\star
&\sigma^2_{\rm match}\\ \hline
\mathrm{Ia}&\mathfrak f^{2\alpha/(2\alpha+1)}
&(\sigma^2)^{-\alpha/\beta}&\mathfrak f^{-2\beta/(2\alpha+1)}\\
\mathrm{II}&\mathfrak f^{\alpha/(\alpha+\beta)}
&(\sigma^2)^{-\alpha/\beta}&\mathfrak f^{-\beta/(\alpha+\beta)}\\
\mathrm{III}&\mathfrak f^{1/2}&(\sigma^2)^{-1}&\mathfrak f^{-1/2}.
\end{array}
\]
Inverting the last column gives the three trace-class branches of
\cref{eq:noisy_43_matched_crossover_compute}.

Below trace class, recall \(p:=2\alpha+2\beta-1\) and write
\(\mathfrak d_{\mathrm{Ic}}:=2\beta-1+3\alpha-2\alpha\beta\) and
\(\mathfrak d_{\mathrm{IVb}}:=2\beta(1-\alpha)-\alpha\).
For \(1/4<\alpha<1/2\), evaluating
\cref{eq:noisy_43_noise_compute_form_mid} at the clean-optimal width gives
\[
\sigma^2\mathcal S_t
\asymp
\sigma^2
\mathfrak f^{(1-2\alpha)/(2\alpha)}
\bigl(m_{0,\mathrm{const}}^\star(\mathfrak f)\bigr)^{-(1-2\alpha)/\alpha}.
\]
For \(0<\alpha<1/4\), use instead
\(\sigma^2\mathcal S_t\asymp\sigma^2\mathfrak f m^{-2}\).  Substituting the
clean branches from \cref{eq:noisy_43_clean_compute_exponents} gives, in
Phase Ib for either range,
\(\sigma^2\mathcal S_t\asymp\sigma^2\),
\(\mathcal R_{0,\mathrm{const}}^\star\asymp\mathfrak f^{-p/2}\), and hence
\(\sigma^2\asymp\mathfrak f^{-p/2}\).  In Phase IVa, the same noise order
matched with \(\mathfrak f^{-\alpha}\) gives
\(\sigma^2\asymp\mathfrak f^{-\alpha}\).  The remaining substitutions are
\begin{align*}
\text{4+3 Phase IVb}:\quad
\sigma^2\mathcal S_t
&\asymp\sigma^2\mathfrak f^{-(1-2\alpha)(2\beta-1)/(2\mathfrak d_{\mathrm{IVb}})},
&\mathcal R_{0,\mathrm{const}}^\star
&\asymp\mathfrak f^{-(1-2\alpha)p/(2\mathfrak d_{\mathrm{IVb}})},\\[-0.2em]
&&\sigma^2
&\asymp\mathfrak f^{-\alpha(1-2\alpha)/\mathfrak d_{\mathrm{IVb}}},\\
\text{4+3 Phase Ic}:\quad
\sigma^2\mathcal S_t
&\asymp\sigma^2\mathfrak f^{1-p/\mathfrak d_{\mathrm{Ic}}},
&\mathcal R_{0,\mathrm{const}}^\star
&\asymp\mathfrak f^{-\alpha p/\mathfrak d_{\mathrm{Ic}}},\\[-0.2em]
&&\sigma^2
&\asymp\mathfrak f^{-2\alpha^2/\mathfrak d_{\mathrm{Ic}}}.
\end{align*}
These are the Phase Ib, Ic, IVb, and IVa-onset variance thresholds; inversion
gives their compute thresholds.

For the full-noise scale on the Phase IVa branch of our
subregime~\(\mathrm{LM}_3\), evaluate the \(\mathcal F_{pp}\)--noise corner in
\cref{eq:noisy_43_full_subtrace_laws}, direct substitution into
\cref{eq:noisy_43_Kpp_scaling} gives
\[
\frac{\mathcal K_{pp}/\eta}{\mathcal F_{pp}}
\asymp
(\sigma^2)^{-
\frac{2\beta(1-\alpha)-\alpha}
{\alpha(1-2\alpha)+2\beta(1-\alpha)}}
\mathfrak f^{-
\frac{\alpha(1-2\alpha)}
{\alpha(1-2\alpha)+2\beta(1-\alpha)}}.
\]
This corner becomes self-consistent at
\[
\sigma^2
\asymp
\mathfrak f^{-
\frac{\alpha(1-2\alpha)}{2\beta(1-\alpha)-\alpha}},
\]
which is \(\sigma_{\mathrm{full}}^2(\mathfrak f)\); inversion gives
\(\mathfrak f_{\mathrm{full}}(\sigma)\).  Since \(\beta>1/2\)
implies \(\mathfrak d_{\mathrm{IVb}}>1-2\alpha\), as
\(\sigma^2\downarrow0\),
\(\mathfrak f_{\mathrm{onset}}(\sigma)\ll
\mathfrak f_{\mathrm{full}}(\sigma)\); equivalently, for large \(\mathfrak f\),
\(\sigma_{\mathrm{onset}}^2(\mathfrak f)\ll
\sigma_{\mathrm{full}}^2(\mathfrak f)\), leaving the intermediate kernel/noise
range.  This proves both crossover parametrizations, with all source-window
and order-level qualifications unchanged.
\end{proof}

Constant schedules therefore expose the same mechanism as the general theory:
the spectrum fixes memory, memory accumulates label noise, and the resulting
balance moves compute from iterations to width.  The schedule sections ask how
changing \(B_t/\eta_t\) can control this balance.

\endgroup

\section{End-to-end language-model experiments}
\label{app:end_to_end_lm_experiments}

We use three experiments to test increasingly stronger consequences of the schedule coordinates predicted by our theory.

\noindent {\bf Roadmap:} First, we fix the intrinsic-time horizon and vary the ratio path \(r(T)=B(T)/\eta(T)\), testing
whether different ratio schedules produce different loss trajectories.
Second, we fix both the intrinsic clock and the ratio path, but realize the
same path either by varying learning rate at fixed batch size or by varying
batch size at fixed learning rate.  This tests whether the loss is
approximately invariant to the particular learning-rate--batch-size
factorization.  Third, we adapt the practical seven-parameter functional scaling law of \citet{li2025functional} but fit a forcing-memory surrogate on a single
fixed-batch 8-1-1 trajectory and, without refitting, use it to predict the
alternative factorization and unseen WSD schedules. 

Together, the three experiments test \emph{ratio-path separation}, \emph{factorization collapse}, and \emph{cross-schedule prediction}.  The plain-SGD experiments directly test the coordinates \(T=\sum_t\eta_t\) and \(r=B/\eta\) derived by our theory; the Muon experiment uses separately calibrated coordinates and serves only as an optimizer-specific external-validity check. Code and configurations for reproducing these experiments are available in our \href{https://github.com/yichenblue/spectra-to-joint-schedules-in-pretraining}{GitHub repository}.

\subsection{Testing ratio-path effects at matched intrinsic time}
\label{subsec:llm_power_law_ratio_sweep}

Our theory assigns different roles to two schedule coordinates: intrinsic time
\(T_t=\sum_{s<t}\eta_s\) measures optimization progress, while
\(r_t=B_t/\eta_t\) controls the accumulation of stochastic error.  This
experiment tests whether the ratio path affects end-to-end LLM loss after the
intrinsic-time horizon is fixed.  If intrinsic time were the only relevant
coordinate, different ratio schedules would produce similar loss curves.
The theory instead predicts systematic differences, with diminishing
improvement once the memory ceiling is reached.

We train a 30M-parameter nanoGPT model \citep{Karpathy2022} on
OpenWebText \citep{Gokaslan2019OpenWeb}, with six layers, six
attention heads, embedding width \(384\), and context length \(256\).
Training uses plain SGD in FP32, without momentum, weight decay, or gradient
clipping.  We first train one model for \(196{,}608\) updates at a constant
learning rate.  We then copy this checkpoint into eleven runs and continue
training them on the same ordered data stream.  All runs use batch size \(B=8\);
only the learning-rate schedule differs.

Let \(T_{\mathrm{fork}}\) be the intrinsic time at the shared checkpoint.  For
each tail, the learning rate decays at
\[
  \eta(T)=\eta_0X^{-\vartheta},
  \qquad
  X=1+\frac{T-T_{\mathrm{fork}}}{\tau_s},
  \qquad
  \tau_s=1024\eta_0,
\]
and hence
\[
  \frac{B}{\eta(T)}
  =\frac{B}{\eta_0}X^\vartheta.
\]
Larger \(\vartheta\) therefore means faster learning-rate decay and faster
growth of \(B/\eta\).  We use
\[
\vartheta\in
\{0,0.125,0.25,0.375,0.5,0.75,1,1.25,1.5,1.75,2\}
\]
and stop every run at \(X=15\) to ensure that all runs have the same intrinsic-time horizon, \(T-T_{\mathrm{fork}}=14\tau_s\), but different numbers of updates
and tokens.  We evaluate each trajectory at \(410\) points on the same fixed
set of \(1{,}024\) validation contexts. Note that, at fixed model, starting the same checkpoint, batch size, and intrinsic-time horizon, but not at fixed updates or tokens, the validation loss changes systematically with the ratio-growth exponent.

\begin{figure}[t]
  \centering
  \includegraphics[width=0.5467\linewidth]
  {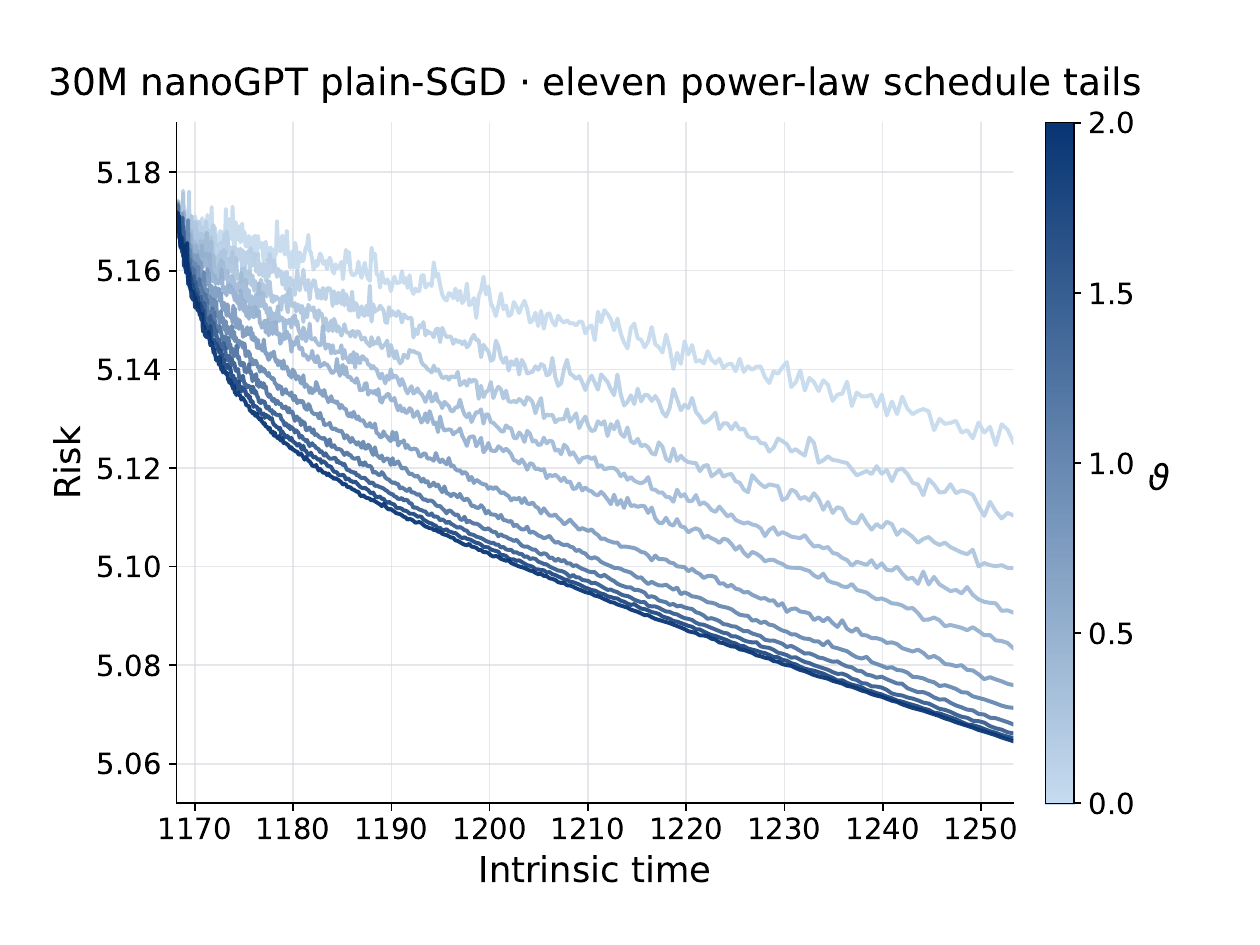}
  \caption{Effect of the ratio path at matched intrinsic time.  All runs start
  from the same checkpoint and stop at the same intrinsic-time horizon, but
  use different numbers of updates and tokens.  Larger \(\vartheta\) means
  faster growth of \(B/\eta\), and darker curves correspond to larger
  \(\vartheta\).  Validation loss decreases with \(\vartheta\), with
  diminishing improvement at large \(\vartheta\).}
  \label{fig:nanogpt30m_theta_schedule_response}
\end{figure}

\Cref{fig:nanogpt30m_theta_schedule_response} shows that the validation curves
are systematically ordered by \(\vartheta\).  At the same intrinsic time,
larger \(\vartheta\) gives lower validation loss, confirming that the ratio
path \(B/\eta\), rather than intrinsic time alone, organizes the schedule
response.  The improvement diminishes at large \(\vartheta\), as expected from
a memory ceiling.

This experiment tests these predictions only qualitatively.  It uses one seed,
measures total validation cross-entropy rather than a noisy--clean gap, and
does not independently estimate \(q_{\mathcal K}\).  It therefore does not
identify the LM/IM boundary or verify the piecewise asymptotic exponent in
\cref{thm:rv_joint_schedule}.

\subsection{Testing factorization invariance at matched ratio paths}

%This experiment tests whether learning rate and batch size affect LLM loss mainly through the intrinsic clock and their ratio path, rather than through their individual values.

This experiment tests whether an end-to-end validation-loss trajectory is
governed mainly by intrinsic time and the ratio path, rather than by a
particular factorization into learning rate and batch size.  We compare two
ratio-path shapes, named by their fixed-batch learning-rate realizations.  WSD
80/20 is constant for the first 80\% of training and decays
exponentially over the final 20\%, whereas 8-1-1 uses the same 80\% prefix
followed by two successive 10\% decay stages.  For each ratio path, one
realization fixes batch size and varies learning rate, while the other fixes
learning rate and varies batch size.  Each pair starts from the
same mature checkpoint, traverses the same future training tape, and is
evaluated on the same fixed probe of \(1024\) validation contexts.

Both experiments use the same \(300\)M-parameter GPT-2-tokenized nanoGPT
architecture \citep{Karpathy2022}: \(20\) layers, \(16\) attention heads,
width \(1024\), and context length \(256\).  Each trajectory is trained on a
frozen OpenWebText stream \citep{Gokaslan2019OpenWeb} for \(6.5\)B tokens.
The first \(5.2\)B tokens form the shared prefix, and the remaining \(1.3\)B
tokens form one schedule tail.  The left pair in
\cref{fig:nanogpt_factorization_collapse} uses plain SGD, while the right pair
uses hybrid Muon.

For plain SGD, the matched clock and ratio are
\[
  T_t=\sum_{s<t}\eta_s,
  \qquad
  r(T_t)=\frac{B_t}{\eta_t}.
\]
For a prescribed ratio path \(r(T)\), the two factorizations are
\[
  \text{fixed batch:}\quad
  B_t=B_0,\quad \eta_t=\frac{B_0}{r(T_t)},
  \qquad
  \text{fixed learning rate:}\quad
  \eta_t=\eta_0,\quad B_t=\eta_0 r(T_t).
\]
The \(300\)M plain-SGD run realizes this match exactly on an integer macro
grid.  At each grid cell, \(\Delta T=0.005\), and the integer schedule value is
\(g\in\{1,\ldots,10\}\) for WSD and
\(g\in\{3,4,9,10\}\) for 8-1-1:
\begin{align*}
  \text{fixed batch:}\quad
  &g\ \text{updates with } B=16,\quad \eta=0.005/g,\\
  \text{fixed learning rate:}\quad
  &1\ \text{update with } B=16g,\quad \eta=0.005.
\end{align*}
Both realizations consume exactly \(16g\) training contexts from the same
contiguous segment of the frozen future tape, advance intrinsic time by the
same \(\Delta T\), and realize \(B/\eta=3200g\) at every completed grid cell.
Each source-schedule value is repeated for \(16\) such cells before the next
value is used.  Hence the match is exact at grid boundaries even though the
optimizer-update counts differ.

For Muon \citep{jordan2024muon,liu2025muon}, a separate short calibration selects the empirical coordinates
\[
  \widetilde T_t=\sum_{s<t}\eta_s^2,
  \qquad
  \widetilde r(\widetilde T_t)=\frac{B_t}{\eta_t^2},
\]
with factorizations
\[
  \text{fixed batch:}\quad
  B_t=B_0,\quad
  \eta_t=\sqrt{\frac{B_0}{\widetilde r(\widetilde T_t)}},
  \qquad
  \text{fixed learning rate:}\quad
  \eta_t=\eta_0,\quad
  B_t=\eta_0^2\widetilde r(\widetilde T_t).
\]
The Muon coordinate is empirical and is not identified with the plain-SGD
ratio in our theory.

\Cref{fig:nanogpt_factorization_collapse} separates the ratio path from its
factorization.  On the optimizer-step axes, paired factorizations follow
different trajectories because they require different numbers of updates to
process the same data.  On their respective intrinsic-time axes, the two
factorizations of each WSD or 8-1-1 ratio path nearly coincide.

\begin{figure}[t]
  \centering
  \includegraphics[width=\linewidth]{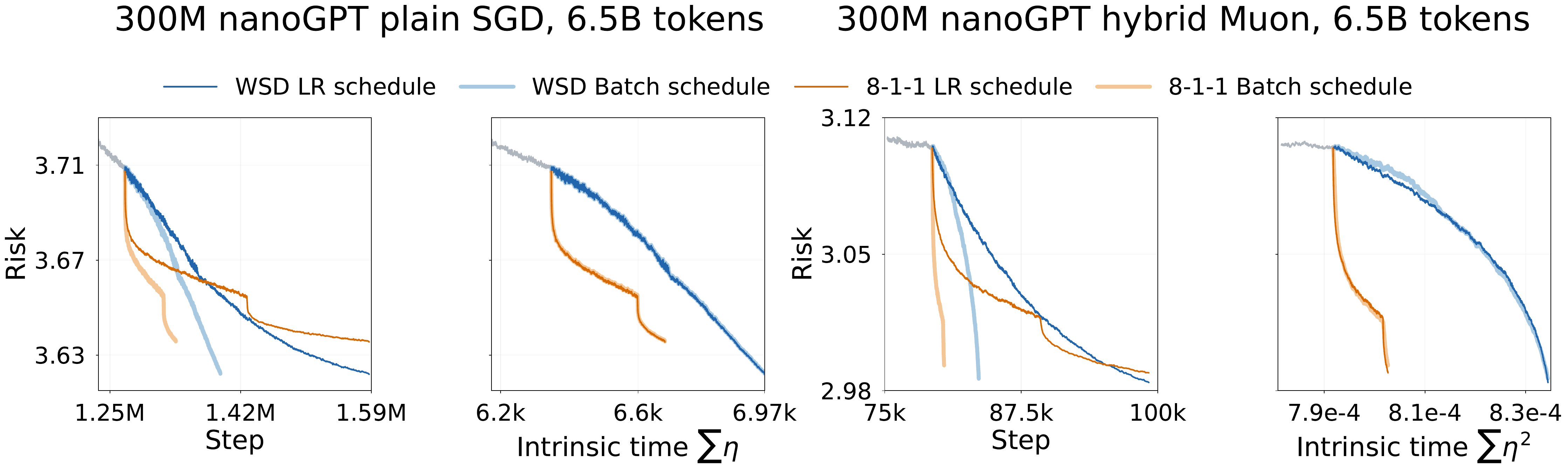}
  \caption{\textbf{Factorization collapse under matched ratio schedules.}
  Fixed-probe validation cross-entropy for the same \(300\)M nanoGPT
  architecture trained for \(6.5\)B OpenWebText tokens with plain SGD (left
  pair) and hybrid Muon (right pair),
  shown against optimizer step and the corresponding intrinsic clock.
  Within each WSD or 8-1-1 schedule family, the LR schedule varies
  \(\eta_t\) at fixed \(B_t\), whereas the batch-size schedule varies \(B_t\)
  at fixed \(\eta_t\).  For plain SGD, these two realizations follow the same
  ratio path \(r_t=B_t/\eta_t\) under \(T_t=\sum_{s<t}\eta_s\); for Muon, they
  follow the same empirically calibrated path
  \(\widetilde r_t=B_t/\eta_t^2\) under
  \(\widetilde T_t=\sum_{s<t}\eta_s^2\).  The paired trajectories differ in
  optimizer step but nearly coincide in the corresponding intrinsic time.
  Blue denotes WSD, orange denotes 8-1-1, and gray denotes the shared prefix.
  }
  \label{fig:nanogpt_factorization_collapse}
\end{figure}

\subsection{A forcing-memory surrogate predicts unseen schedules}

We finally test whether the forcing--memory mechanism is expressive enough to
describe the complete validation-risk trajectory, rather than only the
factorization collapse.  Our starting point is the practical seven-parameter
FSL ansatz of \citet{li2025functional}.  Their fixed-batch parameterization
combines an intrinsic-time clean power law with a learning-rate-decay
correction driven by the increments \(\eta_{i-1}-\eta_i\); they fit its seven
parameters on 8-1-1 and predict unseen WSD and cosine learning-rate schedules.
We retain the intrinsic-time clean-plus-memory structure and the same
fit-then-transfer protocol, but make the stochastic injection depend directly
on the joint ratio \(r(T)=B(T)/\eta(T)\).  This gives the finite-window
surrogate
\[
  \widehat L(T)
  =L_\infty+A_{\mathcal F}(1+T)^{-q_{\mathcal F}}
  +\int_0^T
  \frac{A_0+A_1(1+u)^{-q_{\mathcal F}}}{r(u)}
  \bigl(1+c_{\mathcal K}(T-u)\bigr)^{-q_{\mathcal K}}\,\mathrm{d}u \,.
\]
Its seven parameters
\((L_\infty,A_{\mathcal F},A_0,A_1,c_{\mathcal K},q_{\mathcal F},q_{\mathcal K})\)
replace the seven empirical FSL parameters of \citet{li2025functional} with a
forcing--kernel parameterization adapted to joint schedules.  Their clean
exponent \(s\) corresponds to our \(q_{\mathcal F}\), whereas their
\(\gamma\) parameterizes an integrated learning-rate-decay response rather
than the memory kernel itself; for a pure power-law kernel, the corresponding
exponents satisfy \(\gamma=q_{\mathcal K}-1\).  The ratio form also applies
when learning rate is fixed and batch size varies, a factorization not covered
by the fixed-batch LRS ansatz.

For the OpenWebText run, the seven parameters are fitted only to the
fixed-batch 8-1-1 validation trajectory and are then frozen.  The same
surrogate tracks the alternative
8-1-1 factorization and predicts both WSD trajectories without refitting
(\cref{fig:nanogpt124m_surrogate_transfer}).  Thus it captures both the
long-horizon loss decay and the change induced by a different schedule.  The
fitted exponents are \(q_{\mathcal K}=1.017\) and
\(q_{\mathcal F}=0.374\): the effective memory exponent lies close to the
\(q_{\mathcal K}=1\) boundary, while the forcing exponent \(q_{\mathcal F}<1\).

\begin{figure}[t]
  \centering
  \includegraphics[width=\linewidth]{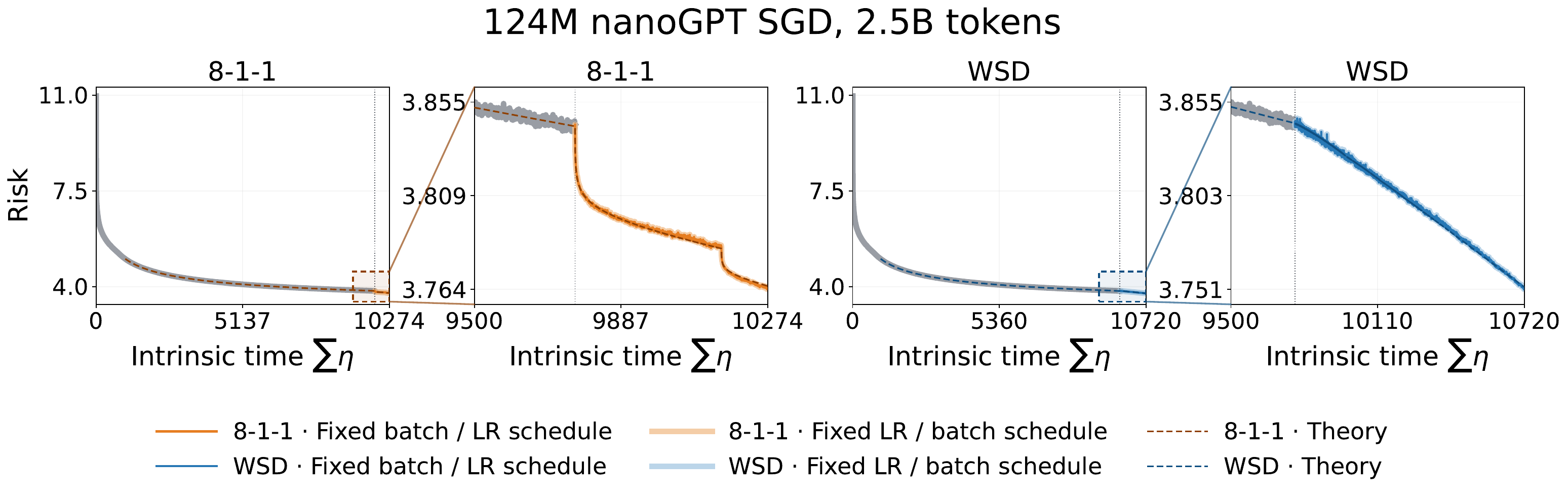}
  \caption{\textbf{A joint-schedule adaptation of the functional scaling law
  fits and transfers across LLM schedules.}
  Following the fit-then-transfer protocol of \citet{li2025functional}, the
  ratio-based surrogate is fitted only to the fixed-batch 8-1-1 validation
  trajectory, after which all parameters are frozen.  It tracks the alternative
  8-1-1 factorization and predicts both WSD trajectories without refitting,
  showing that the same forcing--memory parameterization captures long-horizon
  LLM loss decay across learning-rate and batch-size schedules.  The fitted
  exponents are \(q_{\mathcal K}=1.017\) and \(q_{\mathcal F}=0.374\), placing
  the effective memory response close to the \(q_{\mathcal K}=1\) boundary.
  Risk denotes fixed-probe validation cross-entropy and
  \(T=\sum_t\eta_t\).}
  \label{fig:nanogpt124m_surrogate_transfer}
\end{figure}

To test whether this fitted coordinate is specific to one dataset, we repeat
the identical fit-then-transfer analysis with separately trained \(124\)M
models on \(2.5\)B-token subsets of the \texttt{sample-10BT} configuration of
FineWeb \citep{penedo2024fineweb} and the peS2o V2 \texttt{s2orc} full-text
corpus \citep{soldaini2023pes2o}.  The fits give
\((q_{\mathcal K},q_{\mathcal F})=(0.952,0.405)\) on FineWeb and
\((0.995,0.365)\) on peS2o, compared with \((1.017,0.374)\) on OpenWebText
above.  Across web text and scientific full text, all three independently
fitted values of \(q_{\mathcal K}\) lie within \(0.05\) of one.  This
three-corpus agreement supports \(q_{\mathcal K}\approx1\) as a
dataset-robust effective response coordinate, rather than an artifact of a
particular pretraining corpus.  In both new datasets, the frozen parameters
also track the alternative 8-1-1 factorization and both WSD trajectories
without refitting
(\cref{fig:nanogpt124m_cross_dataset_surrogate_transfer}).

\paragraph{Protocol corresponding to the main-text experiment.}
This protocol uses a separately trained \(300\)M plain-SGD nanoGPT model on a
\(6.5\)B-token OpenWebText corpus.  For
\cref{fig:llm_surrogate_fit_transfer_profile}, we fix
\(q_{\mathcal K}=1\) and fit the remaining six parameters in
\cref{eq:llm_forcing_memory_surrogate} only to the raw fixed-batch 8-1-1
trajectory.  These six parameters are then frozen when predicting WSD.
For the profile in \cref{fig:llm_qk_prediction_error}, we repeat this procedure
for each fixed \(q_{\mathcal K}\): the remaining six parameters are fitted only
on 8-1-1, after which the post-fork WSD prediction error is evaluated without
refitting.  This is a transfer-sensitivity profile rather than an independent
estimate of the LLM memory exponent.

\paragraph{Unrestricted-fit robustness check.}
As a complementary check, we also allow all seven surrogate parameters to vary
when fitting the same raw fixed-batch 8-1-1 trajectory.  This unrestricted fit
gives \(q_{\mathcal K}=0.989\) and \(q_{\mathcal F}=0.291\).  With all seven
parameters frozen, it also predicts the alternative 8-1-1 factorization and
both WSD trajectories without refitting
(\cref{fig:nanogpt300m_sgd_surrogate_transfer}).  The agreement between the
restricted \(q_{\mathcal K}=1\) analysis and the unrestricted estimate
\(q_{\mathcal K}=0.989\) supports using one as an effective finite-window
coordinate, but does not independently identify an asymptotic memory exponent.

\begin{figure}[t]
  \centering
  \includegraphics[width=\linewidth]{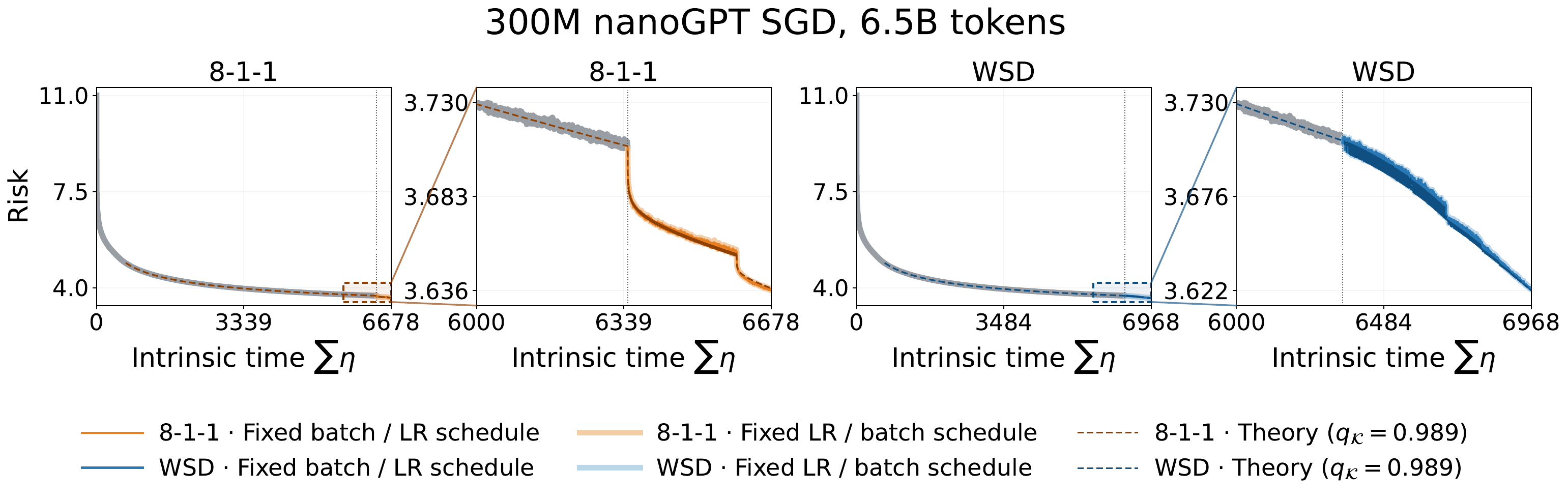}
  \caption{\textbf{The joint-schedule surrogate transfers at \(300\)M scale.}
  The seven parameters are fitted only to the raw fixed-batch 8-1-1 validation
  trajectory.  With those parameters frozen, the surrogate tracks the
  alternative 8-1-1 factorization and predicts both WSD trajectories without
  refitting.  The fitted exponents are \(q_{\mathcal K}=0.989\) and
  \(q_{\mathcal F}=0.291\), again placing the effective surrogate coordinate
  close to \(q_{\mathcal K}=1\).  Risk denotes fixed-probe validation
  cross-entropy and \(T=\sum_t\eta_t\).}
  \label{fig:nanogpt300m_sgd_surrogate_transfer}
\end{figure}

\paragraph{Summary.}
Together, the above three experiments in this section support a simple description of schedule-dependent LLM loss.  At matched intrinsic time, changing the ratio path changes the validation trajectory; when the ratio path is fixed, different
learning-rate--batch-size factorizations produce nearly the same trajectory
after intrinsic-time alignment; and a forcing-memory surrogate fitted on one
ratio path predicts unseen schedules without refitting.  Thus, for the tested
plain-SGD runs, the intrinsic clock \(T=\sum_t\eta_t\) and the ratio path
\(r(T)=B(T)/\eta(T)\) organize the loss more consistently than optimizer step,
learning rate, or batch size alone.  The successful transfer across schedules,
datasets, and model scales provides response-level evidence that the
forcing--memory mechanism remains useful beyond the random-feature proxy.
Across datasets and model scales, the fitted exponents consistently satisfy
\(q_{\mathcal K}\approx1\) and \(q_{\mathcal F}<1\), placing the tested LLMs
near the \(\mathrm{LM}_1/\mathrm{IM}_1\) boundary of our phase map.  The Muon
results similarly suggest optimizer-specific
schedule coordinates, but are not implied by our plain-SGD theory.

\end{document}